\documentclass[11pt]{article}

\usepackage[top=30truemm,bottom=30truemm,left=25truemm,right=25truemm]{geometry}
\usepackage[utf8]{inputenc} 
\usepackage[T1]{fontenc}    

\PassOptionsToPackage{capitalize,noabbrev,nameinlink}{cleveref}
\PassOptionsToPackage{authoryear,round}{natbib}
\PassOptionsToPackage{font=small,labelfont=bf}{caption}
\PassOptionsToPackage{noactivechars}{mathstyle}

\usepackage{microtype}
\usepackage{graphicx}
\usepackage{xcolor}
\usepackage{subcaption}
\usepackage{booktabs}

\usepackage{my_macro}
\usepackage{breqn}
\usepackage{tgtermes}
\usepackage[textsize=tiny]{todonotes}
\usepackage{float}
\usepackage{adjustbox}
\usepackage{titletoc}
\usepackage[noend]{algpseudocode}
\newcommand{\tildeell}{\tilde{\ell}}

\newcommand{\trunc}{\mathrm{trunc}}
\newcommand{\tildeL}{\tilde{L}}

\title{Prediction with Expert Advice under Heavy-Tailed Losses}

\author{
  Yuki Shibukawa\footnote{
    The University of Tokyo and RIKEN;
    \texttt{shibu-yu762@g.ecc.u-tokyo.ac.jp}.
  }
  \and
  Shinji Ito\footnote{
    The University of Tokyo and RIKEN; \texttt{shinji@mist.i.u-tokyo.ac.jp}.
  }
}

\begin{document}
\maketitle

\begin{abstract}
We study prediction with expert advice over $T$ rounds with $K$ experts under heavy-tailed losses whose $(1+\epsilon)$-moment is bounded by $\theta>0$ for some $\epsilon\in(0,1]$.
In this setting, existing algorithms designed for the case $\epsilon=1$ can be suboptimal when $\epsilon<1$, suffering an additional polynomial dependence on $K$.
We characterize the minimax pseudo-regret rate as $\Theta((\theta T)^{\frac{1}{1+\epsilon}}(\log K)^{\frac{\epsilon}{1+\epsilon}})$ and establish gap-dependent lower bounds in a self-bounded environment.
To attain this minimax rate without prior knowledge of either $\epsilon$ or $\theta$, we propose Adaptive Heavy-Tailed FTRL, which uses robust data-dependent scale adaptation together with truncation.
We further develop Adaptive Heavy-Tailed OMD, which likewise requires neither $\epsilon$ nor $\theta$ and achieves near-optimal regret guarantees in both adversarial and self-bounded environments.
Finally, when $\epsilon$ and $\theta$ are known, an FTRL algorithm with Shannon-entropy regularization achieves optimal regret guarantees in both environments.
\end{abstract}

\section{Introduction}
\label{sec:introduction}

We study prediction with expert advice \citep{VOVK1998153_prediction_with_expert_advice,littlestone1994weighted,cesa2006prediction} under heavy-tailed losses.
This is a full-information online learning problem (hereafter referred to as the experts problem) in which a learner repeatedly combines the predictions of multiple experts.
At each round $t$, the learner chooses a distribution $p_t\in\Delta_K=\set{p\in\R^K_{\geq0}:\sum_{i=1}^K p(i)=1}$ over $K$ experts, observes the loss vector $\ell_t\in\R^K$, and incurs the linear loss $\inpr{\ell_t,p_t}$.
The learner aims to minimize its cumulative loss.
We measure its performance using pseudo-regret (hereafter referred to as regret), defined as the difference between its cumulative loss and that of the best fixed expert in expectation:
\begin{equation}
    \label{eq:def_regret}
    \mathcal{R}_T=\max_{i\in[K]}\E\brk*{\sum_{t=1}^T\prn{\inpr{\ell_t,p_t}-\ell_t(i)}}.
\end{equation}
Unlike the classical setting with uniformly bounded losses, we assume only that the $(1+\epsilon)$-moment ($\epsilon\in(0,1]$) of each expert's loss is upper bounded by $\theta>0$.
We refer to $\epsilon$ as the moment parameter and $\theta$ as the moment bound.
This heavy-tailed condition allows losses to be unbounded and permits rare but extremely large observations, which makes standard concentration arguments and learning-rate choices based on bounded losses inapplicable.

One concrete example of the experts problem arises in sequential investment based on multiple trading strategies \citep{AlBaghdadi2020}.
Each trading strategy can be regarded as an expert, and an investor maintains weights over these experts and aggregates their recommendations over time.
Since financial asset returns exhibit heavy tails and extreme fluctuations \citep{Cont2001_financial_heavytail,BRADLEY200335_financial_risk_heavy_tails}, this provides a natural motivation for studying the experts problem under heavy-tailed losses.
Similar heavy-tailed behavior arises in electricity spot prices \citep{Weron2009_elec} and network traffic \citep{Crovella1998HeavyTail}.

The closest existing result is by \citet{NEURIPS2025_Moulin_Expert_HTEXP}, who consider the setting of $\epsilon=1$, which corresponds to assuming that the second moment of each expert's loss is bounded.
By contrast, the literature regarding heavy-tailed bandits commonly adopts the more general assumption that the $(1+\epsilon)$-moment is bounded for some $\epsilon\in(0,1]$ \citep{Bubeck2013BanditsWithHeavyTail,huang2022adaptive,pmlr-v247-genalti24a,chen2025uniinf_HTMAB}.
This naturally raises the question of whether the algorithms of \citet{NEURIPS2025_Moulin_Expert_HTEXP} continue to achieve desirable regret guarantees under this more general moment condition.

Our answer to the above question is negative for a direct application of these algorithms.
For every $\epsilon\in(0,1)$ and sufficiently large $T$, we establish algorithm-specific lower bounds for the Lower-Order-Term-Free (LoOT-Free) OMD and FTRL of \citet{NEURIPS2025_Moulin_Expert_HTEXP} that exhibit a polynomial dependence on $K$ absent from the minimax rate (see \cref{app:subsec:alg_specific_lower_bound} for details).
As shown in \cref{app:subsec:epsilon_known_moulin}, this failure is caused not by their framework itself, but by the mismatch between a quadratic scale and the general moment condition.

However, identifying the limitations of these existing algorithms does not determine the minimax regret rate of the heavy-tailed experts problem.
A direct rescaling of the $\Omega\prn{\sqrt{T\log K}}$ lower bound for the bounded experts problem \citep{cesa1997} yields only $\Omega\prn{\theta^{\frac{1}{1+\epsilon}}\sqrt{T\log K}}$.
Because the exponents of $T$ and $K$ in this bound do not depend on $\epsilon$, it does not reveal how the moment parameter changes the scaling of regret with $T$ and $K$.
For comparison, the regret lower bound for non-heavy-tailed bandits is $\Omega(\sqrt{KT})$ ({\it e.g.}, \citet{bubeck2012regret_mab}), whereas its heavy-tailed counterpart is $\Omega(\theta^{\frac{1}{1+\epsilon}}K^{\frac{\epsilon}{1+\epsilon}}T^{\frac{1}{1+\epsilon}})$ \citep{Bubeck2013BanditsWithHeavyTail}.
This comparison suggests that an $\epsilon$-dependent rate, involving for example $T^{\frac{1}{1+\epsilon}}$, should also arise in the heavy-tailed experts problem.
Prior to this work, however, the correct minimax dependence on $T$, $K$, and $\epsilon$ remained unknown.

Adaptivity to the heavy-tail parameters\footnote{Throughout this paper, unless otherwise specified, adaptivity refers to adaptation to both unknown heavy-tail parameters $\epsilon$ and $\theta$, rather than to the horizon $T$.} also raises an unresolved question in the heavy-tailed experts problem.
For heavy-tailed bandits, \citet{pmlr-v247-genalti24a} showed that, without additional assumptions, no algorithm adaptive to either $\epsilon$ or $\theta$ can achieve a regret upper bound matching the lower bound.
For the heavy-tailed experts problem, by contrast, \citet{NEURIPS2025_Moulin_Expert_HTEXP} constructed algorithms adaptive to $\theta$, but established regret guarantees only for $\epsilon=1$.
When applied to $\epsilon\in(0,1)$, these algorithms are not optimal (\cref{prop:moulin-balanced-sparse-spike}).
Thus, prior to our work, it remained open whether an algorithm adaptive to both $\epsilon$ and $\theta$ exists for the heavy-tailed experts problem.

Beyond minimax optimality in adversarial environments, it is desirable to obtain smaller gap-dependent regret under additional stochastic structure.
Rather than restricting attention to i.i.d.\ stochastic losses, we consider the more general self-bounded environment \citep{pmlr-v89-zimmert19a_bobw,NEURIPS2021_ito_robustness_adversarial_decision}, which includes the standard stochastic setting as a special case.
We refer to an algorithm as best-of-both-worlds (BoBW) if it achieves near-optimal regret guarantees in both adversarial and self-bounded environments, where near-optimal means matching the corresponding optimal rates up to logarithmic factors.
For $\epsilon=1$, \citet{NEURIPS2025_Moulin_Expert_HTEXP} developed a BoBW algorithm for the heavy-tailed experts problem.
However, it remained open whether BoBW guarantees could be extended to general $\epsilon\in(0,1]$. 
A related challenge is to eliminate the additional logarithmic dependence on $T$ while retaining adaptivity to both $\epsilon$ and $\theta$.

Motivated by these observations, we focus on the following three main research questions.
\begin{enumerate}[label=(Q\arabic*), topsep=0.1pt, partopsep=0.5pt, itemsep=0.01cm, parsep=0pt]
    \item \label{q:minimax} What is the minimax regret rate for the heavy-tailed experts problem under the $(1+\epsilon)$-moment condition?
    \item \label{q:adaptivity} Can this minimax rate be attained by an algorithm that requires prior knowledge of neither $\epsilon$ nor $\theta$?
    \item \label{q:bobw} Can we design a BoBW algorithm for general $\epsilon\in(0,1]$?
\end{enumerate}

\subsection{Contributions}
\label{subsec:intro_contributions}

\begin{table*}[t]
    \centering
    \caption{Summary of prior results and our upper and lower regret bounds.
    The self-bounded column considers the environment in \cref{asp:self_bounded} with $C=0$.
    The $\epsilon$-adaptive and $\theta$-adaptive columns indicate whether the algorithm is adaptive to the respective parameters.
    The lower bounds hold under the respective conditions of the cited statements, including conditions on $K$, $T$, and $\Delta_{\min}$.
    \looseness=-1
    }
    \label{tab:upper-lower-bounds}
    \footnotesize
    \setlength{\tabcolsep}{3pt}
    \renewcommand{\arraystretch}{0.8}
    \begin{tabular}{@{}p{0.23\textwidth}p{0.26\textwidth}p{0.26\textwidth}p{0.1\textwidth}p{0.1\textwidth}@{}}
        \toprule
        \raggedright Algorithm
        &
        \raggedright Adversarial
        &
        \raggedright Self-bounded ($C=0$)
        &
        \raggedright $\epsilon$-adaptive
        &
        \raggedright $\theta$-adaptive \tabularnewline
        \midrule
        \raggedright \citep[LoOT-Free FTRL]{NEURIPS2025_Moulin_Expert_HTEXP} ($\epsilon=1$)
        &
        $\displaystyle O\prn{\sqrt{\theta T\log K}}$
        &
        $\displaystyle O\prn{\sqrt{\theta T\log K}}$
        &
        N/A
        &
        Yes
        \\
        \raggedright \citep[LoOT-Free FTRL]{NEURIPS2025_Moulin_Expert_HTEXP}
        &
        $\displaystyle O\prn{(\theta T)^{\frac{1}{1+\epsilon}}K^{\frac{1-\epsilon}{2(1+\epsilon)}}\sqrt{\log K}}$\newline(\cref{cor:moulin-ftrl-componentwise})
        &
        $\displaystyle O\prn{(\theta T)^{\frac{1}{1+\epsilon}}K^{\frac{1-\epsilon}{2(1+\epsilon)}}\sqrt{\log K}}$\newline(\cref{cor:moulin-ftrl-componentwise})
        &
        Yes
        &
        Yes
        \\
        \addlinespace
        \raggedright \cref{alg:adaptive-multiscale-ftrl-componentwise_simple} (ours)
        &
        $\displaystyle O\prn{(\theta T)^{\frac{1}{1+\epsilon}}(\log K)^{\frac{\epsilon}{1+\epsilon}}}$\newline(\cref{thm:upper-bound-multiscale-ftrl-componentwise_simple})
        &
        $\displaystyle O\prn{(\theta T)^{\frac{1}{1+\epsilon}}(\log K)^{\frac{\epsilon}{1+\epsilon}}}$\newline(\cref{thm:upper-bound-multiscale-ftrl-componentwise_simple})
        &
        Yes
        &
        Yes
        \\
        \addlinespace
        \raggedright \cref{alg:parameter-free-heavy-tailed-omd} (ours)
        &
        $\displaystyle O\prn{(\theta T)^{\frac{1}{1+\epsilon}}(\log(KT))^{\frac{\epsilon}{1+\epsilon}}}$\newline(\cref{thm:omd-regret-adversarial})
        &
        $\displaystyle O\prn{\theta^{\frac{1}{\epsilon}}\Delta_{\min}^{-\frac{1}{\epsilon}}\log(KT)}$\newline(\cref{thm:omd-regret-self-bounded})
        &
        Yes
        &
        Yes
        \\
        \addlinespace
        \raggedright \cref{alg:known-parameter-shannon-ftrl} (ours)
        &
        $\displaystyle O\prn{(\theta T)^{\frac{1}{1+\epsilon}}(\log K)^{\frac{\epsilon}{1+\epsilon}}}$\newline(\cref{thm:known-parameter-adversarial})
        &
        $\displaystyle O\prn{\theta^{\frac{1}{\epsilon}}\Delta_{\min}^{-\frac{1}{\epsilon}}\log K}$\newline(\cref{thm:known-parameter-stochastic})
        &
        No
        &
        No
        \\
        \midrule
        \raggedright \citep[LoOT-Free FTRL]{NEURIPS2025_Moulin_Expert_HTEXP}
        &
        $\displaystyle \Omega\prn{(\theta T)^{\frac{1}{1+\epsilon}}K^{\frac{1-\epsilon}{2(1+\epsilon)}}\sqrt{\log K}}$\newline(\cref{prop:moulin-balanced-sparse-spike})
        &
        ---
        &
        ---
        &
        ---
        \\
        \raggedright Any algorithm
        &
        $\displaystyle \Omega\prn{(\theta T)^{\frac{1}{1+\epsilon}}(\log K)^{\frac{\epsilon}{1+\epsilon}}}$\newline(\cref{cor:lower_bound_gap_independent})
        &
        $\displaystyle \Omega\prn{\theta^{\frac{1}{\epsilon}}\Delta_{\min}^{-\frac{1}{\epsilon}}\log K}$\newline(\cref{thm:lower_bound_gap_dependent})
        &
        ---
        &
        ---
        \\
        \bottomrule
    \end{tabular}
\end{table*}

Our main contribution is a minimax characterization of the heavy-tailed experts problem, together with an adaptive algorithm that attains the minimax rate without prior knowledge of either $\epsilon$ or $\theta$.
This result is complemented by two further contributions: algorithm-specific lower bounds showing the limitations of existing LoOT-Free algorithms under the general heavy-tailed setting, and BoBW guarantees.
\cref{tab:upper-lower-bounds} summarizes these results and compares them with the closest existing guarantees.

As a motivating negative result, we establish algorithm-specific lower bounds showing that direct applications of the LoOT-Free algorithms can incur polynomial dependence on $K$ under the weaker moment condition.
When $T^{\frac{\epsilon}{1+\epsilon}}\geq cK^{\frac{1-\epsilon}{2(1+\epsilon)}}\sqrt{\log K}$ for a sufficiently large universal constant $c>0$, these lower bounds are $\Omega\prn{(\theta T)^{\frac{1}{1+\epsilon}}K^{\frac{1-\epsilon}{2(1+\epsilon)}}/\sqrt{\log(KT)}}$ for LoOT-Free OMD and $\Omega\prn{(\theta T)^{\frac{1}{1+\epsilon}}K^{\frac{1-\epsilon}{2(1+\epsilon)}}\sqrt{\log K}}$ for LoOT-Free FTRL (\cref{prop:moulin-balanced-sparse-spike}).
To establish these lower bounds, we construct a loss process that exploits a mismatch between truncation in the update and the quadratic scale used by the LoOT-Free algorithms.
A rare, large spike produces instantaneous regrets that are discarded from the update but still inflate the scale, making subsequent updates overly conservative.

We next establish minimax lower bounds for the heavy-tailed experts problem.
For $K\geq4$, $T\geq\log K$, and minimum gap $\Delta_{\min}>0$, our construction tailored to the heavy-tailed experts problem establishes, for any learner, the gap-dependent lower bound $\Omega(\theta^{\frac{1}{\epsilon}}\Delta_{\min}^{-\frac{1}{\epsilon}}\log K)$ under the additional conditions on $\Delta_{\min}$ and $T$ stated in \cref{thm:lower_bound_gap_dependent}.
Optimizing over the gap parameter yields the gap-independent lower bound $\Omega((\theta T)^{\frac{1}{1+\epsilon}}(\log K)^{\frac{\epsilon}{1+\epsilon}})$ (\cref{cor:lower_bound_gap_independent}).
Furthermore, by adapting the technique of \citet{NEURIPS2021_ito_robustness_adversarial_decision}, we establish a gap-dependent lower bound in a self-bounded environment (\cref{thm:lower_bound_self_bounded_simple}).

Our central algorithmic contribution is Adaptive Heavy-Tailed follow-the-regularized-leader (FTRL), which requires neither $\epsilon$ nor $\theta$ and matches the minimax lower bound.
A natural extension of LoOT-Free FTRL using a $(1+\epsilon)$-moment scale attains the optimal regret bound (see \cref{app:subsec:epsilon_known_moulin} for details), but this construction requires prior knowledge of $\epsilon$.
Our algorithm instead uses the robust increment function $\phi_s(x)=\min\set{x^2/s,\abs{x}}$ for $s>0$, with $\phi_0(x)=\abs{x}$.
Since $\phi_s$ satisfies $\phi_s(x)\leq s^{-\epsilon}\abs{x}^{1+\epsilon}$, the resulting scales and truncation cost can be controlled by the $(1+\epsilon)$-moment bound in the analysis, even though the algorithm does not know $\epsilon$.
Using $\phi_s$, Adaptive Heavy-Tailed FTRL achieves the minimax-optimal regret bound $O((\theta T)^{\frac{1}{1+\epsilon}}(\log K)^{\frac{\epsilon}{1+\epsilon}})$ (\cref{thm:upper-bound-multiscale-ftrl-componentwise_simple}).

Beyond this minimax characterization, we study BoBW guarantees in the self-bounded environment.
Adaptive Heavy-Tailed online mirror descent (OMD) achieves near-optimal regret guarantees: a gap-independent bound of $O((\theta T)^{\frac{1}{1+\epsilon}}(\log(KT))^{\frac{\epsilon}{1+\epsilon}})$ and, under \cref{asp:self_bounded} with $C=0$, a gap-dependent bound of $O(\theta^{\frac{1}{\epsilon}}\Delta_{\min}^{-\frac{1}{\epsilon}}\log(KT))$ (\cref{thm:omd-regret-adversarial,thm:omd-regret-self-bounded}).
To obtain these bounds, we combine the robust increment $\phi_s$ with an exponential doubling trick tailored to the experts setting.
Unlike LoOT-Free OMD, the resulting algorithm requires no prior knowledge of $T$.
When $\epsilon$ and $\theta$ are known, we additionally propose Known-Parameter Shannon-FTRL, which achieves optimal BoBW guarantees (\cref{thm:known-parameter-adversarial,thm:known-parameter-stochastic}).

As an additional result, we develop an algorithm that achieves a near-optimal gap-dependent regret bound under i.i.d.\ loss vectors without prior knowledge of $\epsilon$ or $\theta$, with only an additional doubly logarithmic term (see \cref{app:iid-adaptation} for details).
We also provide an experimental comparison of the proposed algorithms with existing methods in \cref{app:experiments}.
\section{Preliminaries}
\label{sec:preliminaries}
In this section, we provide the notation and problem setting used in this paper.
\paragraph{Notation}
For each positive integer $n$, let $[n]=\set{1,\dots,n}$.
For each $i\in[n]$, let $e_i\in\R^n$ denote the $i$th standard basis vector.
For $x\in\R^n$, we write $x(i)$ for its $i$th coordinate.
For $x,y\in\R^n$, let $\inpr{x,y}=\sum_{i=1}^n x(i)y(i)$ denote their inner product.
For $q\in[1,\infty)$, define $\nrm{x}_q=\prn*{\sum_{i=1}^n\abs{x(i)}^q}^{\frac{1}{q}}$, and define $\nrm{x}_\infty=\max_{i\in[n]}\abs{x(i)}$.
Let $\Delta_K=\set{p\in\R_{\geq0}^K:\sum_{i=1}^Kp(i)=1}$ denote the probability simplex over $K$ experts.
For $p,q\in\Delta_K$, let $\KL(p;q)=\sum_{i=1}^Kp(i)\log\frac{p(i)}{q(i)}$ denote the Kullback--Leibler (KL) divergence, with the standard conventions $0\log(0/q(i))=0$ and $p(i)\log(p(i)/0)=+\infty$ for $p(i)>0$.
We write $\E$ and $\P$ for expectation and probability, respectively, and $\ind\brk{E}$ for the indicator of an event $E$.
Unless stated otherwise, the notation $O(\cdot)$, $\Omega(\cdot)$, and $\Theta(\cdot)$ hides positive constants that may depend on $\epsilon$ but are independent of $T$, $K$, $\theta$, $\Delta_{\min}$, and $C$.
\looseness=-1

\subsection{Setting}
We consider prediction with expert advice over $T$ rounds with $K$ experts.
At the beginning of each round $t\in[T]$, the learner selects a distribution $p_t\in\Delta_K$.
The loss vector $\ell_t\in\R^K$ is then generated arbitrarily and revealed to the learner, who incurs the loss $\inpr{p_t,\ell_t}$.
Let $(\mathcal{F}_t)_{t=0}^T$ denote the filtration, where $\mathcal{F}_{t-1}$ contains all observations and the learner's internal randomization available before $\ell_t$ is revealed.
We require $p_t$ to be $\mathcal{F}_{t-1}$-measurable.
For any random variable $X$, we write $\E_t[X]=\E[X\mid\mathcal{F}_{t-1}]$.

We assume that $\ell_t$ satisfies the following moment condition:
\begin{assumption}
    \label{asp:unknown-epsilon_wise}
    For some $\epsilon\in(0,1]$ and $\theta>0$, the loss sequence $(\ell_t)_{t=1}^T$ satisfies $\E_t\brk{\abs{\ell_t(i)}^{1+\epsilon}}\leq\theta$ for all $i\in[K]$ and $t\in[T]$.
\end{assumption}
This condition is commonly used in the heavy-tailed bandit and experts literature \citep{Bubeck2013BanditsWithHeavyTail,chen2025uniinf_HTMAB,NEURIPS2025_Moulin_Expert_HTEXP}.
In particular, losses satisfying $\abs{\ell_t(i)}\leq B$ almost surely satisfy \cref{asp:unknown-epsilon_wise} with $\theta=B^{1+\epsilon}$.

For each expert $i\in[K]$, define its instantaneous regret by $r_t(i)=\inpr{p_t,\ell_t}-\ell_t(i)$ and its expected cumulative regret by $\mathcal{R}_T(i)=\E\brk{\sum_{t=1}^T r_t(i)}$.
We recall that the regret of the learner is $\mathcal{R}_T=\max_{i\in[K]}\mathcal{R}_T(i)$.

\paragraph{Self-bounded environment}
For the gap-dependent analysis, we additionally consider the following assumption \citep{pmlr-v89-zimmert19a_bobw,NEURIPS2021_ito_robustness_adversarial_decision}.
\begin{assumption}
    \label{asp:self_bounded}
    There exist an expert $i^\ast\in[K]$, a gap parameter $\Delta_{\min}>0$, and a corruption budget $C\geq0$ such that, for any algorithm, $i^\ast$ is the unique minimizer of $\E\brk{\sum_{t=1}^T\ell_t(i)}$ over $i\in[K]$ and $\mathcal{R}_T(i^\ast)\geq\Delta_{\min}\sum_{t=1}^T\E\brk{1-p_t(i^\ast)}-C$.
\end{assumption}
\cref{asp:self_bounded} captures environments in which assigning probability mass to suboptimal experts leads to cumulative regret, while allowing deviations from this behavior through the corruption budget $C$.
If the loss vectors $(\ell_t)_{t=1}^T$ are drawn i.i.d.\ independently of the learner and have a unique best expert $i^\ast$, with $\Delta_{\min}=\min_{i\neq i^\ast}\E\brk{\ell_t(i)-\ell_t(i^\ast)}>0$, then \cref{asp:self_bounded} holds with $C=0$.
The gap-independent regret bounds require only \cref{asp:unknown-epsilon_wise}, whereas the gap-dependent regret bounds also require \cref{asp:self_bounded}.

\section{Related Work}
\label{sec:related_work}

Prediction with expert advice is a classical online learning problem \citep{VOVK1998153_prediction_with_expert_advice,littlestone1994weighted,cesa2006prediction}.
For bounded losses, subsequent work has developed algorithms with second-order regret guarantees \citep{pmlr-v35-gaillard14_second_order_bound,pmlr_v40_Koolen15a_squint}.
Other work has developed loss-range-adaptive and scale-free guarantees \citep{JMLR_2014_Rooij_AdaHedge,ORABONA2018_scale_free_online_learning,pmlr-v99-mhammedi19a_lipschitz_adaptivity}.
Related work has further established algorithms that retain worst-case guarantees while achieving gap-dependent regret that can be independent of $T$ \citep{mourtada2019optimality_expert,NEURIPS2020_amir_prediction_expert,NEURIPS2021_ito_robustness_adversarial_decision}.
We refer to \citet[Section 3]{NEURIPS2025_Moulin_Expert_HTEXP} for a more detailed review of algorithms for the experts problem and the limitations of existing adaptive guarantees under unbounded losses.

To the best of our knowledge, the only previous work that directly studies prediction with expert advice under heavy-tailed losses is \citet{NEURIPS2025_Moulin_Expert_HTEXP}.
They assume a second-moment bound $\E_t[\ell_t(i)^2]\leq\theta$.
They first show that the maximum-loss or maximum-range terms appearing in existing regret bounds adaptive to the loss range can be as large as $\Omega(\sqrt{KT})$ even when $\theta=O(1)$, and may therefore dominate the leading second-order term.
To avoid this dependence, they propose two algorithms based on truncation.
LoOT-Free FTRL achieves a worst-case regret bound of $O(\sqrt{\theta T\log K})$, whereas LoOT-Free OMD achieves a regret bound of $O(\sqrt{\theta T\log(KT)})$ and simultaneously satisfies a gap-dependent bound in a self-bounded environment.
In particular, when the corruption parameter is zero, the latter bound is $O(\theta\Delta_{\min}^{-1}\log(KT))$.
However, under the $(1+\epsilon)$-moment condition with $\epsilon\in(0,1)$, direct applications of these algorithms incur an additional polynomial dependence on $K$.
As detailed in \cref{app:subsec:alg_specific_lower_bound}, this dependence is unavoidable for the original LoOT-Free OMD and FTRL algorithms.
\cref{app:subsec:epsilon_known_moulin} further shows that this limitation stems from the mismatch between the quadratic scale and the weaker moment condition.
When $\epsilon$ is known, replacing the quadratic scale with a $(1+\epsilon)$-moment scale recovers the minimax-optimal regret rate.
\looseness=-1

\paragraph{Lower bounds}
The lower bounds closely related to our work arise from the experts problem and stochastic bandits with heavy-tailed losses.
For the classical experts problem, \citet{cesa1997} established the gap-independent lower bound $\Omega(\sqrt{T\log K})$.
In the stochastic experts problem, \citet{mourtada2019optimality_expert} proved the gap-dependent lower bound $\Omega(\Delta^{-1}\log K)$ by using Fano's inequality relative to a symmetric reference environment.
In addition, \citet{NEURIPS2021_ito_robustness_adversarial_decision} proved a gap-dependent lower bound in a self-bounded environment, which generalizes the gap-dependent lower bound of \citet{mourtada2019optimality_expert}.
For stochastic bandits with heavy-tailed losses, \citet{Bubeck2013BanditsWithHeavyTail} established the gap-independent lower bound $\Omega\prn{\theta^{\frac{1}{1+\epsilon}}K^{\frac{\epsilon}{1+\epsilon}}T^{\frac{1}{1+\epsilon}}}$.
We discuss how our lower-bound constructions differ from these prior arguments in \cref{sec:lower_bound}.

\paragraph{Quadratic-to-linear interpolation in adaptive analyses}
Quadratic-to-linear bounds of the form $\min\set{x^2/s,\abs{x}}$ appear in the analyses of AdaHedge \citep{JMLR_2014_Rooij_AdaHedge} and scale-free FTRL \citep{ORABONA2018_scale_free_online_learning}.
In these works, however, this structure is used only to bound one-step regret or stability terms.
By contrast, our algorithm incorporates this quadratic-to-linear form directly into its update, rather than using it merely as an analytical tool.

We provide additional discussion of related work on heavy-tailed bandits, self-bounded environments, and online convex optimization in \cref{app:relare_work}.

\section{Lower Bounds}
\label{sec:lower_bound}
This section establishes lower bounds that apply to any learner.
Rescaling the lower bound of \citet{cesa1997} yields the lower bound of $\Omega\prn{\theta^{\frac{1}{1+\epsilon}}\sqrt{T\log K}}$.
However, unlike the heavy-tailed bandit lower bound of $\Omega\prn{\theta^{\frac{1}{1+\epsilon}}K^{\frac{\epsilon}{1+\epsilon}}T^{\frac{1}{1+\epsilon}}}$ \citep{Bubeck2013BanditsWithHeavyTail}, this bound does not reflect the effect of $\epsilon$ on its dependence on $T$ and $K$, suggesting that it can be improved.
We derive a tighter lower bound by combining a proof technique of \citet{mourtada2019optimality_expert} developed for the non-heavy-tailed experts problem with an instance tailored to the heavy-tailed setting.
\begin{theorem}
    \label{thm:lower_bound_gap_dependent}
    Fix $\epsilon\in(0,1]$ and $\theta>0$.
    Let $\Delta_{\min}\in(0,8^{-\frac{\epsilon}{1+\epsilon}}\theta^{\frac{1}{1+\epsilon}}]$, $K\geq 4$, and $T\geq\log K$ satisfy $T\geq\frac{1}{8}\theta^{\frac{1}{\epsilon}}\Delta_{\min}^{-\frac{1+\epsilon}{\epsilon}}\log K$.
    Then, for any algorithm, there exist a unique optimal expert $i^\ast\in[K]$ and a distribution such that the losses satisfy \cref{asp:unknown-epsilon_wise}, $\E\brk*{\ell_t(i)-\ell_t(i^\ast)}\geq\Delta_{\min}$ for all $t\in[T]$ and $i\neq i^\ast$, and the regret satisfies
    \[ \mathcal{R}_T\geq\frac{\theta^{\frac{1}{\epsilon}}\log K}{128\Delta_{\min}^{\frac{1}{\epsilon}}}. \]
\end{theorem}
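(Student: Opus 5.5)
We will follow the Fano-type argument of \citet{mourtada2019optimality_expert}, combined with a heavy-tailed two-point loss distribution. The construction uses a rare, large loss: each $\ell_t(i)$ is i.i.d.\ over $t$ and takes the value $M$ with probability $q_i$ and $0$ otherwise. For the reference environment, set $q_i=q$ for all $i$. For environment $j\in[K]$, set $q_j=q/2$, or more generally $q_j=q-\delta$, keeping $q_i=q$ for $i\neq j$. Then the gap is $\Delta_{\min}=(q-q_j)M=qM/2$, and the moment constraint requires $qM^{1+\epsilon}\le\theta$. We take $qM^{1+\epsilon}=\theta$, which gives $M=(\theta/(2\Delta_{\min}))^{1/\epsilon}$ up to constants and $q=2\Delta_{\min}/M$, so that $q\asymp\Delta_{\min}^{(1+\epsilon)/\epsilon}\theta^{-1/\epsilon}$. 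The condition $\Delta_{\min}\le 8^{-\epsilon/(1+\epsilon)}\theta^{1/(1+\epsilon)}$ is exactly what ensures $q\le 1/2$ or a similar bound, so the Bernoulli parameters are valid. All coordinates are independent, so expert $j$ is the unique optimum in environment $j$ with gap at least $\Delta_{\min}$ at every round.

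The key quantity is the KL divergence between environment $j$ and the reference over $n$ rounds. This is $n\cdot\KL(\mathrm{Ber}(q/2);\mathrm{Ber}(q))\le c\,nq$, since the divergence between Bernoullis with parameters $q/2$ and $q$ is $\Theta(q)$. The regret decomposes as $\mathcal{R}_T\ge\Delta_{\min}\sum_t\E_j[1-p_t(j)]$, so it suffices to show that up to horizon $n^\ast\asymp\log K/q\asymp\theta^{1/\epsilon}\Delta_{\min}^{-(1+\epsilon)/\epsilon}\log K$, the learner cannot put mass at least $1/2$ on $j$ for most $j$. The assumption $T\ge\frac18\theta^{1/\epsilon}\Delta_{\min}^{-(1+\epsilon)/\epsilon}\log K$ guarantees $n^\ast\le T$.

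For the round-wise argument, fix $t\le n^\ast$. Under the reference environment, $\sum_j p_t(j)=1$, so at most two indices $j$ have $\E_0[p_t(j)]\ge1/2$; hence for most $j$ we have $\E_0[p_t(j)]\le 1/K'$ with $K'$ large. We then use either Fano's inequality (on the event "$\arg\max p_t=j$") or a per-coordinate data-processing bound of the form $\E_j[p_t(j)]\le \E_0[p_t(j)]\exp(\cdot)$, as in Mourtada--Gaïffas. The averaged mutual information $\frac1K\sum_j\KL(P_j^{t};P_0^{t})\le c\,tq\le c'\log K$ then implies $\frac1K\sum_j\E_j[1-p_t(j)]\ge 1/4$ whenever $tq\le\frac18\log K$. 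Since the regret must hold for some $j$, we average over $j$ and obtain $\max_j\mathcal{R}_T(j)\ge\Delta_{\min}\cdot\frac14\cdot\frac{\log K}{8q}$, which gives $\Delta_{\min}\cdot\theta^{1/\epsilon}\Delta_{\min}^{-(1+\epsilon)/\epsilon}\log K/128=\theta^{1/\epsilon}\Delta_{\min}^{-1/\epsilon}\log K/128$ after tracking constants. The condition $K\ge4$ makes $\log K$ large enough for the Fano-type constant $1-(I+\log2)/\log K$ to stay bounded away from zero, and $T\ge\log K$ handles the regime where $n^\ast$ must be at least $1$.

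The main obstacle is the Fano step with the correct constants. A naive Pinsker bound gives only $\sqrt{tq}$ control, which loses the $\log K$ factor. We will therefore use the KL-to-probability inequality $\mathrm{kl}(\E_0 p_t(j),\E_j p_t(j))\le\KL(P_0^t;P_j^t)$ averaged over $j$, i.e.\ the "generalized Fano" form, together with the convexity of the binary KL. Care is also needed so that the reference measure is used in the correct direction: $\KL(\mathrm{Ber}(q);\mathrm{Ber}(q/2))$ is also $O(q)$, so either direction works. Finally, if we need $\E_t|\ell_t(i)|^{1+\epsilon}\le\theta$ exactly, we scale $M$ slightly down, which only changes the constants.
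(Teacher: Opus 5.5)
Your overall strategy (rare large spikes, a symmetric reference environment, Fano) is the right one, but the instance you chose does not prove the theorem as stated. Because you only halve the spike probability of the optimal expert, the gap is $qM/2$. Solving $qM/2=\Delta_{\min}$ and $qM^{1+\epsilon}=\theta$ gives $q=2^{1+\frac{1}{\epsilon}}\theta^{-\frac{1}{\epsilon}}\Delta_{\min}^{1+\frac{1}{\epsilon}}$. At the upper end of the allowed range, $\Delta_{\min}=8^{-\frac{\epsilon}{1+\epsilon}}\theta^{\frac{1}{1+\epsilon}}$, this equals $2^{\frac{1}{\epsilon}-2}$, which exceeds $1$ when $\epsilon<1/2$. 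So the hypothesis does not guarantee a valid Bernoulli parameter. In fact, since $qM/2=\frac{1}{2}(qM^{1+\epsilon})^{\frac{1}{1+\epsilon}}q^{\frac{\epsilon}{1+\epsilon}}\leq\frac{1}{2}\theta^{\frac{1}{1+\epsilon}}$, the halving instance can never realize a gap above $\frac{1}{2}\theta^{\frac{1}{1+\epsilon}}$, which is below $8^{-\frac{\epsilon}{1+\epsilon}}\theta^{\frac{1}{1+\epsilon}}$ for $\epsilon<1/2$. Even where it is valid, your $q$ is $2^{1+\frac{1}{\epsilon}}$ times larger than necessary, so the indistinguishability horizon of order $\log K/q$, and with it the forced regret, shrinks by this factor. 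The $\epsilon$-free constant $1/128$ is then out of reach for small $\epsilon$. Your bookkeeping reaches it only by substituting $q=\theta^{-\frac{1}{\epsilon}}\Delta_{\min}^{1+\frac{1}{\epsilon}}$, which is not the spike probability of your construction. The fix is the paper's instance: the optimal expert always incurs loss $0$, and every other expert independently incurs $B=(\theta/\Delta_{\min})^{\frac{1}{\epsilon}}$ with probability $q=\theta^{-\frac{1}{\epsilon}}\Delta_{\min}^{1+\frac{1}{\epsilon}}$. Then $qB=\Delta_{\min}$, $qB^{1+\epsilon}=\theta$, and $q\leq1/8$ is exactly the hypothesis on $\Delta_{\min}$.

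With $q_j=0$, however, $\KL(P_0;P_j)=\infty$, so only the direction $\KL(P_j;P_0)=-n\log(1-q)\leq2nq$ is available, and this is also the direction Fano's inequality needs. The inequality you wrote, $\mathrm{kl}(\E_0[p_t(j)],\E_j[p_t(j)])\leq\KL(P_0^t;P_j^t)$, yields after averaging over $j$ and convexity only $\mathrm{kl}(1/K,\bar p_t)\leq c'\log K$, with $\bar p_t=\frac{1}{K}\sum_j\E_j[p_t(j)]$. Since $\mathrm{kl}(1/K,\bar p)\leq\log\frac{1}{1-\bar p}$ for $\bar p\geq1/K$, this only bounds $1-\bar p_t$ below by something of order $K^{-c'}$, not by a constant. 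Your remark that ``either direction works'' concerns the size of the per-round divergence, not which argument of the binary $\mathrm{kl}$ equals $1/K$. Use instead $\mathrm{kl}(\bar p_t,1/K)\leq\frac{1}{K}\sum_j\KL(P_j;P_0)\leq\frac{1}{4}\log K$, with divergences over the first $t-1$ rounds and $t\leq n=\lfloor\log K/(8q)\rfloor$. Combined with $\mathrm{kl}(p,1/K)\geq p\log K-\log 2$, this gives $\bar p_t\leq\frac{1}{4}+\frac{\log 2}{\log K}\leq\frac{3}{4}$ for $K\geq4$. With these two corrections, your per-round argument is a valid and somewhat simpler alternative to the paper's. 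The paper follows \citet{mourtada2019optimality_expert}, sampling $I_t$ from $p_t$ with auxiliary uniform variables and applying Fano to the disjoint events $\{N_i>n/2\}$, to obtain $\frac{1}{K}\sum_{i}\sum_{t\leq n}\E_i[1-p_t(i)]\geq n/8$. Your version gives $n/4$, and with $n\geq\log K/(16q)$ either yields at least the stated bound $\theta^{\frac{1}{\epsilon}}\Delta_{\min}^{-\frac{1}{\epsilon}}\log K/128$.
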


The proof framework of \cref{thm:lower_bound_gap_dependent} largely follows the technique of \citet{mourtada2019optimality_expert}, whereas the lower-bound instance is specific to the heavy-tailed setting.
For a hidden optimal expert $i^\ast$, set $\ell_t(i^\ast)=0$, whereas each other expert independently incurs loss $B=(\theta/\Delta_{\min})^{\frac{1}{\epsilon}}$ with probability $q=\theta^{-\frac{1}{\epsilon}}\Delta_{\min}^{1+\frac{1}{\epsilon}}$.
With this choice, $qB=\Delta_{\min}$ and $qB^{1+\epsilon}=\theta$. 
Consequently, every suboptimal expert has gap $\Delta_{\min}$, and the instance satisfies the $(1+\epsilon)$-moment condition.
Moreover, since all losses are nonnegative and expert $i^\ast$ always incurs zero loss, $i^\ast$ is also a best expert for every realized loss sequence.

The key observation is that a positive loss rules out an expert as optimal, whereas observing zero loss does not distinguish the optimal expert from the others.
For every $n\leq T$, the regret under the environment indexed by $i^\ast$ is $\mathcal{R}_n(i^\ast)=\Delta_{\min}\sum_{t=1}^n\E_{i^\ast}\brk*{1-p_t(i^\ast)}$, where $\E_{i^\ast}$ denotes expectation under that environment.
To quantify the difficulty of identifying $i^\ast$, compare these environments with a symmetric baseline environment in which every expert independently incurs loss $B$ with probability $q$.
Over the first $T_0$ rounds, the KL divergence from each indexed environment to the baseline is at most $2qT_0$.
Taking $T_0$ of order $(\log K)/q$ with a sufficiently small constant factor, Fano's inequality implies that, in at least one environment, the learner assigns a constant fraction of its cumulative probability mass to suboptimal experts.
Consequently, $\mathcal{R}_T(i^\ast)\geq\mathcal{R}_{T_0}(i^\ast)=\Omega(\Delta_{\min}T_0)$.
See \cref{app:subsec:lower_bound_gap_dependent} for the formal proof of \cref{thm:lower_bound_gap_dependent}.

Optimizing the choice of $\Delta_{\min}$ in \cref{thm:lower_bound_gap_dependent} yields the following gap-independent lower bound.
\begin{corollary}
    \label{cor:lower_bound_gap_independent}
    Consider the same setting as in \cref{thm:lower_bound_gap_dependent}.
    For any algorithm, there exists an instance for which the regret satisfies $\mathcal{R}_T= \Omega\prn{\prn{\theta T}^{\frac{1}{1+\epsilon}}\prn{\log K}^{\frac{\epsilon}{1+\epsilon}}}$.
\end{corollary}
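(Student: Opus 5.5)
We obtain the corollary from \cref{thm:lower_bound_gap_dependent} by choosing the gap so that its two side conditions bind simultaneously.

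\textbf{Choice of gap.} Take $\Delta_{\min}$ as large as the horizon condition $T\geq\frac{1}{8}\theta^{\frac{1}{\epsilon}}\Delta_{\min}^{-\frac{1+\epsilon}{\epsilon}}\log K$ allows. Solving this condition with equality gives
\[
\Delta^\star=\prn*{\frac{\theta^{\frac{1}{\epsilon}}\log K}{8T}}^{\frac{\epsilon}{1+\epsilon}}
=\theta^{\frac{1}{1+\epsilon}}\prn*{\frac{\log K}{8T}}^{\frac{\epsilon}{1+\epsilon}} .
\]
Any $\Delta_{\min}\geq\Delta^\star$ satisfies the horizon condition, since its right-hand side is decreasing in $\Delta_{\min}$. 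The other constraint is $\Delta^\star\leq 8^{-\frac{\epsilon}{1+\epsilon}}\theta^{\frac{1}{1+\epsilon}}$. This is equivalent to $\log K\leq T$, which is exactly the standing assumption $T\geq\log K$. Hence $\Delta_{\min}=\Delta^\star$ is admissible, and we also keep $K\geq4$.

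\textbf{Plugging in.} \cref{thm:lower_bound_gap_dependent} then supplies an instance satisfying \cref{asp:unknown-epsilon_wise} with
\[
\mathcal{R}_T\geq\frac{\theta^{\frac{1}{\epsilon}}\log K}{128\,(\Delta^\star)^{\frac{1}{\epsilon}}} .
\]
We compute $(\Delta^\star)^{\frac{1}{\epsilon}}=\theta^{\frac{1}{\epsilon(1+\epsilon)}}(\log K/(8T))^{\frac{1}{1+\epsilon}}$. The power of $\theta$ in the ratio is $\frac{1}{\epsilon}-\frac{1}{\epsilon(1+\epsilon)}=\frac{1}{1+\epsilon}$. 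The power of $\log K$ is $1-\frac{1}{1+\epsilon}=\frac{\epsilon}{1+\epsilon}$. The factor $(8T)^{\frac{1}{1+\epsilon}}$ moves to the numerator. Therefore
\[
\mathcal{R}_T\geq\frac{8^{\frac{1}{1+\epsilon}}}{128}(\theta T)^{\frac{1}{1+\epsilon}}(\log K)^{\frac{\epsilon}{1+\epsilon}} ,
\]
which is the claimed $\Omega$ bound with an absolute constant.

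\textbf{Main obstacle.} The only delicate step is confirming that the optimal gap lies inside the admissible range of \cref{thm:lower_bound_gap_dependent}. This reduces precisely to $T\geq\log K$, so no separate regime is needed. Everything else is exponent bookkeeping.
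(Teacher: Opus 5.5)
Your proposal is correct and takes the same route as the paper. The paper's one-line proof also sets $\Delta_{\min}=\theta^{\frac{1}{1+\epsilon}}(\log K/(8T))^{\frac{\epsilon}{1+\epsilon}}$ and applies \cref{thm:lower_bound_gap_dependent}. Your check that this choice is admissible is right: the gap constraint is equivalent to $T\geq\log K$, and the horizon condition holds with equality. Your exponent computation giving the constant $8^{\frac{1}{1+\epsilon}}/128$ is also correct.
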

\begin{proof}
    Set $\Delta_{\min}=\theta^{\frac{1}{1+\epsilon}}\prn*{\frac{\log K}{8T}}^{\frac{\epsilon}{1+\epsilon}}$ and apply \cref{thm:lower_bound_gap_dependent}.
\end{proof}
When $\epsilon=1$, this lower bound matches the upper bound of LoOT-Free FTRL proposed by \citet{NEURIPS2025_Moulin_Expert_HTEXP}.
When $\epsilon\in(0,1)$ and $T$ satisfies the condition in \cref{prop:moulin-balanced-sparse-spike}, the algorithm-specific lower bound for LoOT-Free FTRL is larger than the minimax lower bound above by a polynomial factor in $K$.

\begin{remark}[Difference from the lower-bound instance for heavy-tailed bandits]
    Both our construction and the heavy-tailed bandit lower bound of \citet[Theorem~2]{Bubeck2013BanditsWithHeavyTail} use rare and large observations.
    In the bandit construction, the probability of a rare reward is perturbed between two arm distributions, and an argument based on the KL divergence between the corresponding Bernoulli distributions captures the need to sample a suboptimal arm sufficiently often.
    In our construction, by contrast, the optimal expert always incurs zero loss, while each other expert independently incurs a rare positive loss.
    Such a bandit argument does not directly apply to the experts problem because the learner observes the losses of all experts at every round.
    We instead compare the $K$ environments corresponding to different hidden optimal experts with a symmetric baseline environment and use Fano's inequality to show that identifying the optimal expert requires accumulating enough evidence.
\end{remark}

In addition, we obtain a lower bound under \cref{asp:self_bounded} by combining \cref{thm:lower_bound_gap_dependent} with the technique of \citet[Theorem~5]{NEURIPS2021_ito_robustness_adversarial_decision}.
For conciseness, we state a simplified version below. 
The more general statement is given in \cref{thm:lower_bound_self_bounded}.
\begin{theorem}
    \label{thm:lower_bound_self_bounded_simple}
    Fix $\epsilon\in(0,1]$, $\theta>0$, $\Delta_{\min}\in(0,8^{-\frac{\epsilon}{1+\epsilon}}\theta^{\frac{1}{1+\epsilon}}]$, $K\geq4$, and $C\geq0$.
    Then, for any integer $T$ satisfying $T\geq \max\set{\frac{1}{8}\theta^{\frac{1}{\epsilon}}\Delta_{\min}^{-\frac{1+\epsilon}{\epsilon}}\log K, C\Delta_{\min}^{-1}}$ and any algorithm, there exists a nonnegative loss instance satisfying \cref{asp:unknown-epsilon_wise,asp:self_bounded} for which
    \[
        \mathcal{R}_T=\Omega\prn[\bigg]{\prn*{\frac{\theta}{\Delta_{\min}}}^{\frac{1}{\epsilon}}\log K+\prn*{\frac{\theta C}{\Delta_{\min}}}^{\frac{1}{1+\epsilon}}\prn*{\log K}^{\frac{\epsilon}{1+\epsilon}}}.
    \]
\end{theorem}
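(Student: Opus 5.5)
The lower bound is a maximum of two terms, so I will establish each one separately and combine them using $\max\{a,b\}\geq(a+b)/2$. The first term, $(\theta/\Delta_{\min})^{1/\epsilon}\log K$, comes directly from \cref{thm:lower_bound_gap_dependent}. The hypotheses on $\Delta_{\min}$, $K$ and $T$ are the same, and $T\geq\log K$ follows from $T\geq\frac18\theta^{1/\epsilon}\Delta_{\min}^{-(1+\epsilon)/\epsilon}\log K$ together with $\Delta_{\min}\leq 8^{-\epsilon/(1+\epsilon)}\theta^{1/(1+\epsilon)}$. That instance is i.i.d. and nonnegative, and it has a unique best expert with gap exactly $\Delta_{\min}$. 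Hence $\mathcal{R}_T(i^\ast)=\Delta_{\min}\sum_t\E[1-p_t(i^\ast)]$, so \cref{asp:self_bounded} holds with $C=0$, and therefore for every $C\geq0$.

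For the corruption term I will follow \citet[Theorem~5]{NEURIPS2021_ito_robustness_adversarial_decision} and build a larger gap inside a prefix. Set $\Delta'=\max\{\Delta_{\min},\ \theta^{1/(1+\epsilon)}(\log K/(8T'))^{\epsilon/(1+\epsilon)}\}$, where $T'\leq T$ is a prefix length chosen so that $\Delta' T'\asymp C$. Solving $\theta^{1/(1+\epsilon)}(\log K)^{\epsilon/(1+\epsilon)}T'^{1/(1+\epsilon)}\asymp C$ gives $T'\asymp C^{1+\epsilon}/(\theta\log^{\epsilon}K)$.

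On rounds $t\leq T'$ I will run the instance of \cref{thm:lower_bound_gap_dependent} with gap $\Delta'$: the hidden expert $i^\ast$ has loss $0$, and every other expert has loss $B'=(\theta/\Delta')^{1/\epsilon}$ with probability $q'=\Delta'/B'$. On rounds $t>T'$ I will run the same construction with gap $\Delta_{\min}$ and the same $i^\ast$.

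The moment condition holds in every round. The instance remains nonnegative, and $i^\ast$ is always the realized best expert. The self-bounded condition holds because
\[
\mathcal{R}_T(i^\ast)=\sum_t \Delta_t\E[1-p_t(i^\ast)]\geq\Delta_{\min}\sum_t\E[1-p_t(i^\ast)]\geq \Delta_{\min}\sum_t\E[1-p_t(i^\ast)]-C,
\]
with $\Delta_t\geq\Delta_{\min}$. In fact no corruption is needed at all. The budget $C$ enters only through the restriction of the large gap to $T'$ rounds, where it gives $\Delta' T'\lesssim C$. I will double-check this against the paper's definition.

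Applying \cref{thm:lower_bound_gap_dependent} to the prefix, with horizon $T'$ and gap $\Delta'$, gives regret $\gtrsim(\theta/\Delta')^{1/\epsilon}\log K$. By the choice of $\Delta'$ this equals $(\theta T')^{1/(1+\epsilon)}(\log K)^{\epsilon/(1+\epsilon)}$, which is of order $(\theta C/\Delta')^{1/(1+\epsilon)}(\log K)^{\epsilon/(1+\epsilon)}$ and hence at least $(\theta C/\Delta_{\min})^{\cdot}$-type order once $\Delta'$ is compared with $\Delta_{\min}$. Regret in the later rounds is nonnegative, since $\mathcal{R}_n(i^\ast)$ is a sum of nonnegative terms.

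The main obstacle is to get the stated form $(\theta C/\Delta_{\min})^{1/(1+\epsilon)}$ rather than $(\theta C/\Delta')^{1/(1+\epsilon)}$. I plan to handle it differently. Keep the gap $\Delta_{\min}$ but take the prefix length $T'=C/\Delta_{\min}\leq T$, which is exactly what the hypothesis $T\geq C\Delta_{\min}^{-1}$ allows. On the prefix, use the worst-case instance of \cref{cor:lower_bound_gap_independent} with horizon $T'$. This gives regret $\Omega((\theta T')^{1/(1+\epsilon)}(\log K)^{\epsilon/(1+\epsilon)})=\Omega((\theta C/\Delta_{\min})^{1/(1+\epsilon)}(\log K)^{\epsilon/(1+\epsilon)})$, provided $T'\geq\log K$; if $T'<\log K$, the first term already dominates.

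The prefix instance has gap $\Delta'\geq\Delta_{\min}$ only when $T'$ is small relative to $\theta^{1/\epsilon}\Delta_{\min}^{-(1+\epsilon)/\epsilon}\log K$. Otherwise I will add a corruption: run the gap-$\Delta_{\min}$ instance on the later rounds, and on the prefix apply the worst-case instance, treating its deviation from the $\Delta_{\min}$-self-bounded inequality as corruption. That deviation is at most $\Delta_{\min}\sum_{t\leq T'}\E[1-p_t(i^\ast)]\leq\Delta_{\min}T'=C$. This is exactly Ito's accounting, and it makes \cref{asp:self_bounded} hold with budget $C$ in every case. Checking that the conditions of \cref{thm:lower_bound_gap_dependent} hold at horizon $T'$ is the remaining routine work.
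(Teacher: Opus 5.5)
Your final plan is correct and is essentially the paper's proof. Under the stated hypotheses only Cases (ii) and (iii) of the Ito-style four-case decomposition occur. The paper likewise uses $\mathsf{I}(\Delta_{\min},i^\ast)$ when $C/\Delta_{\min}$ is small. Otherwise it runs the worst-case instance $\mathsf{I}(\delta_C,i^\ast)$ on a prefix of length $\lfloor C/\Delta_{\min}\rfloor$, followed by $\mathsf{I}(\Delta_{\min},i^\ast)$, and charges the deficit $\tilde{T}(\Delta_{\min}-\delta_C)_+\leq C$ to the corruption budget exactly as you do.

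The differences are cosmetic. The paper splits at $H=\log K/(8q(\Delta_{\min}))$ and shows $R_0=O(R_C)$ within a single instance rather than taking a maximum over two instances, and it uses $\log K/(16\tilde{T})$ to absorb the floor. Your first $\Delta'$-construction can be discarded: an instance with all gaps at least $\Delta_{\min}$ is a $C=0$ instance and cannot certify more than $O(R_0)$.
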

The corruption budget allows the per-round expected gap to be smaller than $\Delta_{\min}$ during an initial prefix while the aggregate self-bounded condition remains satisfied.
This postpones the learner's identification of the optimal expert, giving rise to the $C$-dependent term.
The proof is given in \cref{app:lower_bound_self_bounded}.

\section{Algorithms}
\label{sec:algorithms}
This section presents our algorithms and analyses for the heavy-tailed experts problem.
\subsection{Robust Increment Function}
\label{subsec:robust-scale-increment}

For $s\geq0$ and $x\in\R$, define
\begin{equation*}
    \phi_s(x)=
    \begin{cases}
        \displaystyle\min\set*{\frac{x^2}{s},\abs{x}}, & s>0,\\[1ex]
        \abs{x}, & s=0.
    \end{cases}
\end{equation*}
This function satisfies the following properties, which are the only properties of $\phi_s$ used in our analyses:
\begin{enumerate}[label=(P\arabic*), topsep=0.5pt, partopsep=1pt, itemsep=0.01cm, parsep=0pt]
    \item \label{p:scale-monotonicity} For every $x\in\R$ and $0\leq a\leq b$, we have $0\leq\phi_b(x)\leq\phi_a(x)\leq\abs{x}$.
    \item \label{p:moment-bound} For every $\epsilon\in(0,1]$, $s>0$, and $x\in\R$, we have $\phi_s(x)\leq s^{-\epsilon}\abs{x}^{1+\epsilon}$.
    \item \label{p:regime-lower-bounds} For every $s\geq0$ and $x\in\R$, if $\abs{x}>s$, then $\abs{x}\leq\phi_s(x)$, and if $\abs{x}\leq s$, then $x^2\leq s\phi_s(x)$.
\end{enumerate}
\hyperref[p:scale-monotonicity]{(P1)} ensures that increasing the reference scale does not increase the scale increment.
\hyperref[p:moment-bound]{(P2)} allows the same data-dependent scale to be analyzed under any $(1+\epsilon)$-moment condition without supplying $\epsilon$ to the algorithm.
The two regimes in \hyperref[p:regime-lower-bounds]{(P3)} serve complementary purposes: the linear lower bound charges the contribution removed by truncation, whereas the quadratic lower bound controls the local stability term of the update.
Note that our proofs rely only on \hyperref[p:scale-monotonicity]{(P1)}--\hyperref[p:regime-lower-bounds]{(P3)} and not on the specific functional form of $\phi_s$.
We discuss the tightness of this choice of $\phi_s$ in \cref{rem:robust-scale-increment-tightness}.

\begin{remark}[Why a robust scale is needed]
    If $\epsilon$ were known, the polynomial dependence on $K$ in the regret bound for LoOT-Free FTRL could be removed simply by replacing its cumulative squared-regret scale with a cumulative $(1+\epsilon)$-moment scale based on $\abs{r_t(i)}^{1+\epsilon}$ and taking the power $1/(1+\epsilon)$.
    With $\beta=(\log K)^{\frac{1}{1+\epsilon}}$, this yields the minimax rate $O\prn{(\theta T)^{\frac{1}{1+\epsilon}}(\log K)^{\frac{\epsilon}{1+\epsilon}}}$ (see \cref{app:subsec:epsilon_known_moulin} for details).
    Thus, the difficulty lies not in the FTRL framework itself, but in constructing an appropriate scale without knowing $\epsilon$.
    The robust increment above achieves precisely this through \hyperref[p:moment-bound]{(P2)}.
\end{remark}

\subsection{Adaptive FTRL Algorithm}
\label{subsec:ftrl}
\begin{algorithm}[t]
    \caption{Adaptive Heavy-Tailed FTRL}
    \label{alg:adaptive-multiscale-ftrl-componentwise_simple}
    \begin{algorithmic}[1]
        \Require {Number of experts $K\geq2$.}
        \State {Initialize $G_0\gets0$ and $\Lambda_{\mathrm{F}}\gets1+\log K$.}
        \State {Initialize $\pi(i)\gets1/K$, $\tilde{L}_0(i)\gets0$, $B_0(i)\gets0$, and $M_0(i)\gets0$ for all $i\in[K]$.}
        \For{$t=1,2,\ldots,T$}
            \If{$\max_{i\in[K]}M_{t-1}(i)=0$}
                \State {$p_t\gets\pi$.}
            \Else
                \State {$p_t\gets\argmin_{p\in\Delta_K}\set{\inpr{\tilde{L}_{t-1},p}+D_{t-1}(p;\pi)}$.}
            \EndIf
            \State {Predict $p_t$, incur loss $\inpr{p_t,\ell_t}$, and observe $\ell_t$.}
            \State {$r_t(i)\gets\inpr{p_t,\ell_t}-\ell_t(i)$ for all $i\in[K]$.}
            \State {$g_t\gets\sum_{j=1}^K p_t(j)\phi_{M_{t-1}(j)}\prn{r_t(j)}$ and $d_t(i)\gets\phi_{M_{t-1}(i)}\prn{r_t(i)}$ for all $i\in[K]$.}
            \State {$G_t\gets G_{t-1}+\frac{g_t}{\Lambda_{\mathrm{F}}}$, $B_t(i)\gets B_{t-1}(i)+\frac{d_t(i)}{\Lambda_{\mathrm{F}}}$, and $M_t(i)\gets\max\set*{G_t,B_t(i)}$ for all $i\in[K]$.}
            \State {$\rho_t(i)\gets M_{t-1}(i)/M_t(i)$ if $M_t(i)>0$, and $\rho_t(i)\gets0$ otherwise, for all $i\in[K]$.}
            \State {$\tilde{\ell}_t(i)\gets-r_t(i)\rho_t(i)\ind\brk*{\abs{r_t(i)}\leq M_t(i)}$ and $\tilde{L}_t(i)\gets\tilde{L}_{t-1}(i)+\tilde{\ell}_t(i)$ for all $i\in[K]$.}
        \EndFor
    \end{algorithmic}
\end{algorithm}

\Cref{alg:adaptive-multiscale-ftrl-componentwise_simple} applies the robust increment from \cref{subsec:robust-scale-increment} to the instantaneous regrets and requires no prior knowledge of $T$, $\epsilon$, or $\theta$.
The global scale $G_t$ accumulates the learner-weighted robust increments, whereas $B_t(i)$ accumulates the robust increments associated with expert $i$.
Taking $M_t(i)=\max\set{G_t,B_t(i)}$ yields an effective coordinatewise scale that incorporates both the global and expert-specific scales and is used consistently in truncation, rescaling, and regularization.
The factor $\rho_t(i)=M_{t-1}(i)/M_t(i)$ accounts for the growth of the effective scale and, together with truncation, keeps the surrogate loss $\tilde{\ell}_t(i)$ normalized by $M_{t-1}(i)$ bounded.
Although the algorithm does not know $\epsilon$, \hyperref[p:moment-bound]{(P2)} yields a regret guarantee for every $\epsilon\in(0,1]$ whenever the losses satisfy \cref{asp:unknown-epsilon_wise}.

For the resulting scales $M_t(i)$, define the multi-scale entropic regularizer $\Psi_t(p)=\sum_{i=1}^K M_t(i)p(i)\log p(i)$.
Let $D_t$ denote the Bregman divergence induced by $\Psi_t$, given by $D_t(p;q)=\Psi_t(p)-\Psi_t(q)-\inpr{\nabla\Psi_t(q),p-q}=\sum_{i=1}^K M_t(i)\prn[\big]{p(i)\log\frac{p(i)}{q(i)}-p(i)+q(i)}$.
For every $t\in[T]$, let $\bar{\ell}_t=-r_t$ and write $p_{1:T}=(p_1,\ldots,p_T)$.
For each $i\in[K]$, define $\tilde{\mathcal{R}}_T(e_i)=\sum_{t=1}^T\inpr{p_t-e_i,\tilde{\ell}_t}$, $\trunc(p_{1:T})=\sum_{t=1}^T\inpr{p_t,\bar{\ell}_t-\tilde{\ell}_t}$, and $\trunc(-e_i)=\sum_{t=1}^T\inpr{-e_i,\bar{\ell}_t-\tilde{\ell}_t}$.

\paragraph{Regret bound}
We now establish a regret upper bound for \cref{alg:adaptive-multiscale-ftrl-componentwise_simple}.
\begin{theorem}
    \label{thm:upper-bound-multiscale-ftrl-componentwise_simple}
    Under \cref{asp:unknown-epsilon_wise}, the regret of \cref{alg:adaptive-multiscale-ftrl-componentwise_simple} satisfies
    \begin{equation*}
        \mathcal{R}_T=O\prn[\big]{(\theta T)^{\frac{1}{1+\epsilon}}\prn{\log K}^{\frac{\epsilon}{1+\epsilon}}}.
    \end{equation*}
\end{theorem}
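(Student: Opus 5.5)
We will fix a comparator $i\in[K]$ and use the decomposition already set up after \cref{alg:adaptive-multiscale-ftrl-componentwise_simple}. Since $\sum_t r_t(i)=\sum_t\inpr{p_t-e_i,\bar\ell_t}$, we have
\[
\sum_{t=1}^T r_t(i)=\tilde{\mathcal{R}}_T(e_i)+\trunc(p_{1:T})+\trunc(-e_i).
\]
The plan has three steps. First, bound all three terms pathwise by $O(\Lambda_{\mathrm F}M_T^\star)$, where $M_T^\star=\max_j M_T(j)$ (or a $p$-weighted version of it). Second, bound $\E[M_T(j)]$ through the $(1+\epsilon)$-moment, using \hyperref[p:moment-bound]{(P2)}. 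Third, balance the two bounds.

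\textbf{FTRL term.} We will use the standard FTRL analysis with the time-varying regularizers $\Psi_t$. Since $M_t(i)$ is nondecreasing in $t$, the penalty part is bounded by $\sum_i M_T(i)\,(\log K)$-type terms. We will use the comparator $e_i$ smoothed toward $\pi$ to avoid $\log 0$, or directly use $D_T(e_i;\pi)\le M_T(i)(1+\log K)$, since the extra $-p+q$ terms are controlled by $M_T$. The rescaling factor $\rho_t$ and the truncation guarantee $|\tilde\ell_t(i)|\le M_{t-1}(i)\cdot|r_t(i)|/M_t(i)\le M_{t-1}(i)$. This keeps the local-norm (stability) term of the multi-scale entropy of order
\[
\sum_j p_t(j)\,\tilde\ell_t(j)^2/M_{t-1}(j).
\]
On the truncation event $|r_t(j)|\le M_t(j)$, we split into two regimes. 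If $|r_t(j)|\le M_{t-1}(j)$, then \hyperref[p:regime-lower-bounds]{(P3)} gives $r_t(j)^2\le M_{t-1}(j)\phi_{M_{t-1}(j)}(r_t(j))$, so the term is at most $p_t(j)d_t(j)$. Otherwise $|r_t(j)|\le\phi$, and $|\tilde\ell_t(j)|\le M_{t-1}(j)$ again gives at most $p_t(j)|r_t(j)|\le p_t(j)d_t(j)$. Summing over $j$ gives $g_t=\Lambda_{\mathrm F}(G_t-G_{t-1})$, so the stability term is at most $\Lambda_{\mathrm F}G_T\le\Lambda_{\mathrm F}M_T(i)$. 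The first round(s), with $M=0$, are handled by the $\phi_0=|x|$ convention and $\rho=0$.

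\textbf{Truncation terms.} For each $j$, $\bar\ell_t(j)-\tilde\ell_t(j)$ equals $-r_t(j)$ when $|r_t(j)|>M_t(j)$, and equals $-r_t(j)(1-\rho_t(j))$ otherwise. In the first case, $|r_t(j)|>M_{t-1}(j)$, so \hyperref[p:regime-lower-bounds]{(P3)} gives $|r_t(j)|\le d_t(j)$. In the second case, $|r_t(j)|(1-\rho_t(j))\le M_t(j)-M_{t-1}(j)$, which telescopes. Consequently
\[
|\trunc(p_{1:T})|\le\Lambda_{\mathrm F}G_T+\sum_t\sum_j p_t(j)(M_t(j)-M_{t-1}(j)),
\]
which should be $O(\Lambda_{\mathrm F}\max_j M_T(j))$ once we note that $M_t\ge G_t$ and bound the increments by $d_t/\Lambda_{\mathrm F}$ or $g_t/\Lambda_{\mathrm F}$. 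Likewise, $|\trunc(-e_i)|\le\Lambda_{\mathrm F}B_T(i)+M_T(i)$. \textbf{The main obstacle is this step.} Only $M_T(i)$ for the comparator and the $p_t$-weighted $G_T$ should appear, not $\max_j B_T(j)$. If $\max_j B_T(j)$ entered, its expectation would carry a $K$-factor. The plan is to use $M_t(j)-M_{t-1}(j)\le\max\{G_t-G_{t-1},B_t(j)-B_{t-1}(j)\}$ and, after weighting by $p_t(j)$, bound $\sum_j p_t(j)d_t(j)=g_t$.

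\textbf{Moment step and balancing.} The FTRL term also yields a negative term, $-\sum_j M_T(j)p(j)\log\cdots$, which we will try to use to absorb stray $M_T(j)$ contributions. If that fails, we fall back to a careful bound on $\sum_j p_t(j)\Delta M_t(j)$. Granting this step, we obtain
\[
\sum_t r_t(i)\lesssim\Lambda_{\mathrm F}\bigl(G_T+B_T(i)\bigr).
\]
By \hyperref[p:scale-monotonicity]{(P1)} and \hyperref[p:moment-bound]{(P2)},
\[
\Lambda_{\mathrm F}(B_T(i)-B_{t-1}(i))\le\sum_t M_{t-1}(i)^{-\epsilon}|r_t(i)|^{1+\epsilon}\le M_T^{-\epsilon}\sum_t|r_t(i)|^{1+\epsilon}\cdot O(1),
\]
where the last inequality is a standard self-bounding sum in which $M_t$ grows with the increments. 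This gives $M_T(i)^{1+\epsilon}\lesssim\Lambda_{\mathrm F}^{-1}\sum_t|r_t(i)|^{1+\epsilon}$, and the same holds for $G_T$ with $p_t$-weights, using convexity of $|\cdot|^{1+\epsilon}$. Since
\[
|r_t(i)|^{1+\epsilon}\le 2^\epsilon\bigl(\inpr{p_t,|\ell_t|^{1+\epsilon}}+|\ell_t(i)|^{1+\epsilon}\bigr),
\]
we get $\E\sum_t|r_t(i)|^{1+\epsilon}\le 2^{1+\epsilon}\theta T$. Jensen's inequality then gives $\E M_T(i)\lesssim(\theta T/\Lambda_{\mathrm F})^{1/(1+\epsilon)}$. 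The final bound is $\Lambda_{\mathrm F}\cdot(\theta T/\Lambda_{\mathrm F})^{1/(1+\epsilon)}=(\theta T)^{1/(1+\epsilon)}(\log K)^{\epsilon/(1+\epsilon)}$, and taking the maximum over $i$ of the expectations gives the stated result.
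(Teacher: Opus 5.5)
Your architecture matches the paper's proof. It uses the same three-term decomposition and controls stability through $|\tilde\ell_t(j)|\le M_{t-1}(j)$ and (P3), which gives $\sum_t g_t=\Lambda_{\mathrm F}G_T$. It handles the truncation residual by (P3) plus telescoping scale increments, and finishes with the self-bounding scale recursion from (P1)--(P2) and Jensen. The step you flag as the main obstacle is not one. Your own inequality $M_t(j)-M_{t-1}(j)\le\max\{g_t,d_t(j)\}/\Lambda_{\mathrm F}\le(g_t+d_t(j))/\Lambda_{\mathrm F}$, weighted by $p_t(j)$ and summed, gives $\sum_t\sum_j p_t(j)(M_t(j)-M_{t-1}(j))\le 2G_T$. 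This is exactly the paper's argument, so the truncation terms total $\Lambda_{\mathrm F}(G_T+B_T(i))+3G_T+B_T(i)$, and no other expert's scale appears there.

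The genuine gap is in the penalty term, which is where the other experts' scales actually enter. Your bound $D_T(e_i;\pi)\le M_T(i)(1+\log K)$ is false. Direct evaluation gives $D_T(e_i;\pi)=(\log K-1)M_T(i)+\frac1K\sum_{j=1}^K M_T(j)$. The last term, coming from the $+q$ part of the Bregman divergence, is not controlled by $M_T(i)$ or $G_T$: an expert $j$ with tiny $p_t(j)$ can accumulate a large $B_T(j)$ while barely moving $G_T$. So the pathwise inequality $\sum_t r_t(i)\lesssim\Lambda_{\mathrm F}(G_T+B_T(i))$ that you grant does not follow, and your proposed absorption via negative FTRL drift terms is not worked out.

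That absorption is also unnecessary. Keep the term, bound it by $G_T+\frac1K\sum_j B_T(j)$, and use linearity of expectation: $\E\bigl[\frac1K\sum_j B_T(j)\bigr]\le\max_j\E[B_T(j)]=O\bigl((\theta T/\Lambda_{\mathrm F})^{1/(1+\epsilon)}\bigr)$. The $K$-factor you fear arises only for $\E[\max_j B_T(j)]$, which never appears. This gives the paper's pathwise bound $C_{\mathrm G}G_T+C_{\mathrm B}B_T(i)+\frac1K\sum_j B_T(j)$.

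Separately, your displayed self-bounding chain is invalid as written. Since $M_{t-1}(i)^{-\epsilon}\ge M_T(i)^{-\epsilon}$, the second inequality goes the wrong way; it fails, for example, when a late large $w_t(i)$ meets a small $M_{t-1}(i)$. What you need is the per-step bound $S_t^{1+\epsilon}-S_{t-1}^{1+\epsilon}\le(1+\epsilon)\alpha(1+\alpha)^{\epsilon}V_t$ for $S_t=S_{t-1}+\alpha h_t$ with $h_t\le\min\{V_t^{1/(1+\epsilon)},S_{t-1}^{-\epsilon}V_t\}$. Apply it with $\alpha=1/\Lambda_{\mathrm F}$ separately to $B_t(i)$, using $M_{t-1}(i)\ge B_{t-1}(i)$, and to $G_t$, using $M_{t-1}(j)\ge G_{t-1}$.
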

\begin{proof}
    Choose $i^\ast\in\argmax_{i\in[K]}\mathcal{R}_T(i)$, so that $\mathcal{R}_T=\mathcal{R}_T(i^\ast)$.
    Since $\inpr{p_t,r_t}=0$, the pathwise cumulative regret against $i^\ast$ admits the decomposition $\sum_{t=1}^T r_t(i^\ast)=\tilde{\mathcal{R}}_T(e_{i^\ast})+\trunc(p_{1:T})+\trunc(-e_{i^\ast})$.
    Using \cref{lem:multiscale-truncation-error,lem:ftrl-term-componentwise_simple} gives
    \begin{equation}
        \label{eq:multiscale-ftrl-pathwise-regret_simple}
        \sum_{t=1}^T r_t(i^\ast)\leq C_{\mathrm{G}}G_T+C_{\mathrm{B}}B_T(i^\ast)+\frac{1}{K}\sum_{j=1}^K B_T(j),
    \end{equation}
    where $C_{\mathrm{G}}=3+(2+e/2)\Lambda_{\mathrm{F}}$ and $C_{\mathrm{B}}=2\Lambda_{\mathrm{F}}$.
    Applying Jensen's inequality to the bounds in \cref{lem:known-parameter-centered-moments,lem:multiscale-scale-recursion} with $\mathrm{A}=\mathrm{F}$ yields $\E\brk{G_T},\ \E\brk{B_T(j)}=O\prn{\prn{\theta T}^{\frac{1}{1+\epsilon}}(\log K)^{-\frac{1}{1+\epsilon}}}$ for every $j\in[K]$.
    Taking expectations in \eqref{eq:multiscale-ftrl-pathwise-regret_simple} proves the desired bound.
\end{proof}
Lemmas that we use in the proof of \cref{thm:upper-bound-multiscale-ftrl-componentwise_simple} are stated and proved in \cref{app:subsec:common_analysis,app:subsec:ftrl}.

\subsection{Adaptive OMD Algorithm}
\label{subsec:omd}
\begin{algorithm}[t]
    \caption{Adaptive Heavy-Tailed OMD}
    \label{alg:parameter-free-heavy-tailed-omd}
    \begin{algorithmic}[1]
        \Require {Number of experts $K\geq2$.}
        \State {Initialize $m\gets0$, $H_0\gets K$, and $I_0\gets[K]$.}
        \For{$t=1,\ldots,T$}
            \If{$t\notin I_m$}
                \State {$H_{m+1}\gets KH_m^2$.}
                \State {$I_{m+1}\gets\set{t,\ldots,t+H_{m+1}-1}$.}
                \State {$m\gets m+1$.}
            \EndIf
            \If{$t=\min I_m$}
                \State {Initialize $a_m\gets1/H_m$, $\Lambda_m\gets1+\log(KH_m)$, and $\mathcal{P}_{a_m}\gets\set*{p\in\Delta_K:\min_{i\in[K]}p(i)\geq a_m/K}$.}
                \State {Initialize $G_{t-1}^{(m)}\gets0$, $B_{t-1}^{(m)}(i)\gets0$, $M_{t-1}^{(m)}(i)\gets0$, and $p_t(i)\gets1/K$ for all $i\in[K]$.}
            \ElsIf{$\max_{i\in[K]}M_{t-1}^{(m)}(i)=0$}
                \State {$p_t\gets p_{t-1}$.}
            \Else
                \State {$p_t\!\gets\!\argmin_{p\in\mathcal{P}_{a_m}}\set{\!\inpr{\tilde{\ell}_{t-1},p}+D_{t-1}^{(m)}(p;p_{t-1})\!}$.}
            \EndIf
            \State {Predict $p_t$, incur loss $\inpr{p_t,\ell_t}$, and observe $\ell_t$.}
            \State {$r_t(i)\gets\inpr{p_t,\ell_t}-\ell_t(i)$ for all $i\in[K]$.}
            \State {$d_t(i)\gets\phi_{M_{t-1}^{(m)}(i)}\prn{r_t(i)}$ for all $i\in[K]$, and $g_t\gets\sum_{j=1}^Kp_t(j)d_t(j)$.}
            \State {$G_t^{(m)}\gets G_{t-1}^{(m)}+\frac{g_t}{\Lambda_m}$, $B_t^{(m)}(i)\gets B_{t-1}^{(m)}(i)+\frac{d_t(i)}{\Lambda_m}$, and $M_t^{(m)}(i)\!\gets\!\max\set{G_t^{(m)},B_t^{(m)}(i)}$ for all $i\in[K]$.}
            \State {$\tilde{\ell}_t(i)\!\gets\!-r_t(i)\ind\brk{\abs{r_t(i)}\!\leq\! M_t^{(m)}(i)}$ for all $i\in[K]$.}
        \EndFor
    \end{algorithmic}
\end{algorithm}

Adaptive Heavy-Tailed FTRL is optimal in terms of the worst-case regret, but we do not establish a gap-dependent guarantee for this algorithm.\footnote{It incurs $\Omega(T^{1-\epsilon})$ regret for every $\epsilon\in(0,1)$ and $\Omega(\log T)$ regret for $\epsilon=1$ on self-bounded instances with $C=0$ (see \cref{prop:adaptive-ftrl-self-bounded-counterexample} for details).}
To address this limitation, we propose an OMD-based algorithm that achieves a near-optimal gap-dependent regret bound in the self-bounded environment specified by \cref{asp:self_bounded}.

Within each epoch $m$, the scales, regularizers, and divergences are defined as for FTRL, with an additional epoch index: $\Psi_t^{(m)}(p)=\sum_{i=1}^K M_t^{(m)}(i)p(i)\log p(i)$, and $D_t^{(m)}$ is its Bregman divergence.
Because the one-step mirror update uses $\tilde{\ell}_t$ together with the current divergence $D_t^{(m)}$, truncation at the current scale $M_t^{(m)}(i)$ directly bounds $\abs{\tilde{\ell}_t(i)}/M_t^{(m)}(i)$, so the rescaling factor $\rho_t$ is unnecessary.
Except on the first round of an epoch, the prediction $p_t$ is obtained from the one-step mirror update over $\mathcal{P}_{a_m}$ rather than a cumulative FTRL update over $\Delta_K$.
The constraint $p_t(i)\geq a_m/K$ controls the drift of the time-varying regularizer, and the choice $a_m=1/H_m$ controls comparator smoothing over every prefix of epoch $m$.

\paragraph{Regret bounds}
The following theorems give regret bounds for \cref{alg:parameter-free-heavy-tailed-omd} in the adversarial and self-bounded environments, respectively:
\begin{theorem}
    \label{thm:omd-regret-adversarial}
    Under \cref{asp:unknown-epsilon_wise}, for every integer $T\geq2$, the regret of \cref{alg:parameter-free-heavy-tailed-omd} satisfies
    \[
        \mathcal{R}_T=O\prn[\big]{(\theta T)^{\frac{1}{1+\epsilon}}\prn*{\log(KT)}^{\frac{\epsilon}{1+\epsilon}}}.
    \]
\end{theorem}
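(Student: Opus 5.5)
The plan is to analyze each epoch separately and then sum over epochs, exploiting the exponential doubling $H_{m+1}=KH_m^2$. Fix a comparator $i^\ast\in\argmax_i\mathcal{R}_T(i)$. Within epoch $m$, with rounds $I_m\cap[T]$ of length $n_m\le H_m$, I will write the pathwise regret $\sum_{t\in I_m}r_t(i^\ast)$ as a surrogate OMD regret against a smoothed comparator $u_m=(1-a_m)e_{i^\ast}+a_m\pi\in\mathcal{P}_{a_m}$. To this I add a smoothing cost and the two truncation terms $\trunc(p)$ and $\trunc(-e_{i^\ast})$, as in the proof of \cref{thm:upper-bound-multiscale-ftrl-componentwise_simple}. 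The smoothing cost is $a_m\sum_{t\in I_m}\inpr{e_{i^\ast}-\pi,\tilde{\ell}_t}$. Since $|\tilde{\ell}_t(i)|\le M_t^{(m)}(i)$, which is at most the final scale, this cost is bounded by $a_m n_m$ times that scale, i.e.\ by $O(G^{(m)}+B^{(m)}(i^\ast)+\frac1K\sum_j B^{(m)}(j))$ because $a_m=1/H_m$. If this crude route fails, I will instead control the smoothing cost through \hyperref[p:moment-bound]{(P2)} and the prefix argument mentioned after the algorithm.

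The truncation terms are handled exactly as in \cref{lem:multiscale-truncation-error}. Truncation removes $r_t(i)$ only when $|r_t(i)|>M_t^{(m)}(i)\ge M_{t-1}^{(m)}(i)$. By \hyperref[p:regime-lower-bounds]{(P3)} the removed mass is then at most $d_t(i)=\Lambda_m(B_t^{(m)}(i)-B_{t-1}^{(m)}(i))$, and its $p_t$-average is at most $\Lambda_m(G_t^{(m)}-G_{t-1}^{(m)})$. For the surrogate OMD term I will use the standard one-step analysis with the time-varying entropic divergence $D_t^{(m)}$. This splits into three parts. The first is a stability term $\sum_t\sum_i p_t(i)\tilde{\ell}_t(i)^2/M_t^{(m)}(i)$ times a constant; since $|\tilde{\ell}_t(i)|\le M_t^{(m)}(i)$ keeps the exponent bounded, this gives a factor $e/2$. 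By the quadratic case of (P3), the stability term is at most $O(\sum_t g_t)=O(\Lambda_m G^{(m)})$. The second is a regularizer-drift term $\sum_t\sum_i(M_t^{(m)}(i)-M_{t-1}^{(m)}(i))\cdot(\text{divergence contribution of }u_m(i)\text{ vs }p_t(i))$. Because $p_t(i)\ge a_m/K$, each coordinate's divergence contribution is $O(\log(K/a_m))=O(\Lambda_m)$. The drift therefore telescopes to $O(\Lambda_m\sum_i u_m(i)M^{(m)}(i)+\Lambda_m\cdot\text{mass from }p_t)$, which is in turn $O(\Lambda_m(G^{(m)}+B^{(m)}(i^\ast)))$ up to the $a_m$-weighted average term. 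The third is the initial divergence $D(u_m;\pi)$, which is also absorbed into this telescoped sum.

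Combining these bounds gives $\sum_{t\in I_m}r_t(i^\ast)\le O(\Lambda_m)(G^{(m)}+B^{(m)}(i^\ast)+\frac1K\sum_jB^{(m)}(j))$. Next I take expectations and apply \cref{lem:known-parameter-centered-moments,lem:multiscale-scale-recursion}, invoked with $\mathrm A$ being the epoch-$m$ OMD instance. Via (P2) and Jensen these give $\E[G^{(m)}],\E[B^{(m)}(j)]=O((\theta n_m)^{\frac1{1+\epsilon}}\Lambda_m^{-\frac1{1+\epsilon}})$. The epoch regret is therefore $O((\theta n_m)^{\frac1{1+\epsilon}}\Lambda_m^{\frac{\epsilon}{1+\epsilon}})$. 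Finally I sum over the epochs intersecting $[T]$. Because $H_m$ grows doubly exponentially, there are only $O(\log\log T)$ epochs, and for every such epoch except possibly the first, $H_m\le KH_{m-1}^2\le K T^2$, so that $\Lambda_m=O(\log(KT))$. The $T^{\frac1{1+\epsilon}}$ terms do not simply add up to a $\log\log T$ factor: all but the last epoch have $n_m=H_m$, with $H_{m-1}\le\sqrt{H_m}$. Hence $\sum_{m<m_{\mathrm{last}}}H_m^{\frac1{1+\epsilon}}$ is dominated by a geometric-like sum, which is $O(T^{\frac1{1+\epsilon}})$. I expect this step to be the main obstacle: getting the OMD drift term to depend only on $B^{(m)}(i^\ast)$ and $G^{(m)}$, with just $\log(K/a_m)$ overhead, requires carefully using $p_t\in\mathcal{P}_{a_m}$ together with the monotonicity of $M_t^{(m)}$.
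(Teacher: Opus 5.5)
Your proposal is correct. Its per-epoch core matches \cref{lem:omd-smoothed-comparator}: the smoothed comparator, truncation as in \cref{lem:multiscale-truncation-error}, stability through the quadratic case of (P3), and drift controlled by $p_t\in\mathcal{P}_{a_m}$ together with monotone scales. You depart from the paper in two places.

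\textbf{Smoothing cost.} You use $\abs{\tilde\ell_t(j)}\le M^{(m)}_t(j)$ and $a_m n_m\le 1$, which brings in $\frac1K\sum_j B^{(m)}(j)$. The paper instead uses $\abs{\tilde\ell_t(j)}\le\abs{r_t(j)}$, H\"older with $a_m\abs{J_m}\le1$, and $a_m/K\le p_t(j)$. This gives $(\sum_t\bar w_t)^{\frac{1}{1+\epsilon}}+(\sum_t w_t(i))^{\frac{1}{1+\epsilon}}$. The paper also bounds the drift of coordinates $j\neq i$ by exactly $p_t(j)$, so only $G^{(m)}$ and $B^{(m)}(i)$ survive.

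\textbf{Aggregation.} You take expectations epoch by epoch. You then combine $\max_m\Lambda_m=O(\log(KT))$ with the geometric decay of $H_m^{\frac{1}{1+\epsilon}}$ over completed epochs, whose ratio is $(KH_{m-1})^{-\frac{1}{1+\epsilon}}\le\frac12$. The paper stays pathwise and applies H\"older across epochs with $\sum_m\Lambda_m\le 8\log(KT)$, which follows from $\log(KH_{m+1})=2\log(KH_m)$.

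For this adversarial theorem both routes give the same rate, and yours is slightly more elementary. The paper's route buys the single pathwise inequality of \cref{lem:omd-anytime-pathwise}, which involves only $\bar w_t$ and $w_t(i)$ and is reused for \cref{thm:omd-regret-self-bounded}. In that setting, $\frac1K\sum_j B^{(m)}(j)$ cannot be localized to $1-p_t(i^\ast)$. Moreover, the per-epoch masses $\sum_{t\in J_m}\E[1-p_t(i^\ast)]$ need not grow with epoch length, so your max-$\Lambda$ aggregation would lose a $\log\log T$-type factor there.

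One detail should be made explicit in your drift step. The $p_t$-weighted scale increments must be bounded via $M^{(m)}_t(j)-M^{(m)}_{t-1}(j)\le(g_t+d_t(j))/\Lambda_m$. Together with $\sum_j p_t(j)d_t(j)=g_t$, this yields $\sum_t\sum_j p_t(j)(M^{(m)}_t(j)-M^{(m)}_{t-1}(j))\le 2G^{(m)}$. A uniform per-coordinate $O(\log(K/a_m))$ bound would instead cost a factor of $K$.
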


\begin{theorem}
    \label{thm:omd-regret-self-bounded}
    For any integer $T\geq2$, suppose that \cref{asp:unknown-epsilon_wise,asp:self_bounded} hold.
    Then \cref{alg:parameter-free-heavy-tailed-omd} satisfies
    \[
        \mathcal{R}_T\!=\!O\!\prn[\bigg]{\!\prn[\bigg]{\!\frac{\theta}{\Delta_{\min}}\!}^{\!\frac{1}{\epsilon}}\!\log(KT)+\!\prn[\bigg]{\!\frac{\theta C}{\Delta_{\min}}\!}^{\!\frac{1}{1+\epsilon}}\!\prn*{\log(KT)}^{\frac{\epsilon}{1+\epsilon}}\!}.
    \]
    In particular, when $C=0$, this bound is $O\prn{\theta^{\frac{1}{\epsilon}}\Delta_{\min}^{-\frac{1}{\epsilon}}\log(KT)}$.
\end{theorem}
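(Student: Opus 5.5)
We prove \cref{thm:omd-regret-self-bounded} with the standard self-bounding technique. The key intermediate result is a refined, data-dependent version of the adversarial analysis behind \cref{thm:omd-regret-adversarial}: its leading terms scale with the mass the learner places off the optimal expert, rather than with $T$.

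\textbf{Step 1: a refined bound within one epoch.} Fix an epoch $m$. Following the FTRL proof, we split the pathwise regret against $i^\ast$ into three parts: the OMD regret on the truncated surrogate losses, the truncation terms, and the comparator-smoothing cost from restricting to $\mathcal{P}_{a_m}$. The smoothing cost is $O(1)$ times the scale, because $a_m=1/H_m$. The OMD stability term is handled with the quadratic regime of \hyperref[p:regime-lower-bounds]{(P3)}, and the truncation with the linear regime. As in \eqref{eq:multiscale-ftrl-pathwise-regret_simple}, we aim for
\[
\sum_{t\in I_m} r_t(i^\ast)\lesssim \Lambda_m\bigl(G^{(m)}+B^{(m)}(i^\ast)\bigr)+\text{lower order}.
\]
The crucial refinement is that both $g_t$ and $d_t(i^\ast)$ are small when $p_t$ concentrates on $i^\ast$. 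Since $r_t(i^\ast)=\sum_{j\neq i^\ast}p_t(j)(\ell_t(j)-\ell_t(i^\ast))$, the Jensen step of \cref{lem:known-parameter-centered-moments} combined with \hyperref[p:moment-bound]{(P2)} gives
\[
\E_t\bigl[\phi_s(r_t(i^\ast))\bigr]\le s^{-\epsilon}\,2^{1+\epsilon}\theta\,\bigl(1-p_t(i^\ast)\bigr),
\]
and similarly $\E_t[g_t]\lesssim s^{-\epsilon}\theta(1-p_t(i^\ast))$. The last bound uses $p_t(j)|r_t(j)|^{1+\epsilon}$ for $j\ne i^\ast$ together with $p_t(i^\ast)|r_t(i^\ast)|^{1+\epsilon}$.

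\textbf{Step 2: solve the scale recursion.} Feed these increments into the recursion of \cref{lem:multiscale-scale-recursion}. With $S_m=\sum_{t\in I_m}(1-p_t(i^\ast))$, the recursion $M\mapsto M+\Lambda_m^{-1}M^{-\epsilon}\theta(1-p_t)$ gives
\[
\E[G^{(m)}_{\rm end}],\ \E[B^{(m)}_{\rm end}(i^\ast)]\lesssim \bigl(\theta\,\E S_m/\Lambda_m\bigr)^{\frac1{1+\epsilon}}.
\]
Summing over epochs, using that there are $O(\log\log T)$ epochs with geometrically growing lengths and $\Lambda_m\le 1+\log(KH_m)=O(\log(KT))$, and applying Hölder, we obtain
\[
\mathcal{R}_T\le c\,\bigl(\theta\,\E[S]\bigr)^{\frac1{1+\epsilon}}\bigl(\log(KT)\bigr)^{\frac{\epsilon}{1+\epsilon}}+\text{lower order},
\qquad S=\sum_{t=1}^T(1-p_t(i^\ast)).
\]
The lower-order terms from the first rounds of each epoch and from the smoothing are bounded by the same quantity or by $O((\theta/\Delta_{\min})^{1/\epsilon}\log(KT))$.

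\textbf{Step 3: self-bounding.} \cref{asp:self_bounded} gives $\mathcal{R}_T\ge\Delta_{\min}\E S-C$. Write $\mathcal{R}_T=(1+\lambda)\mathcal{R}_T-\lambda\mathcal{R}_T$ and bound
\[
\mathcal{R}_T\le (1+\lambda)c(\theta\E S)^{\frac1{1+\epsilon}}L^{\frac{\epsilon}{1+\epsilon}}-\lambda\Delta_{\min}\E S+\lambda C,
\qquad L=\log(KT).
\]
Maximizing over $x=\E S$ with Young's inequality (exponents $1+\epsilon$ and $(1+\epsilon)/\epsilon$) gives
\[
O\Bigl((1+\lambda)^{\frac{1+\epsilon}{\epsilon}}\lambda^{-\frac1\epsilon}(\theta/\Delta_{\min})^{\frac1\epsilon}L\Bigr)+\lambda C.
\]
Take $\lambda=1$ when $C$ is small. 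Otherwise take
\[
\lambda\asymp\bigl((\theta/\Delta_{\min})^{\frac1\epsilon}L/C\bigr)^{\frac{\epsilon}{1+\epsilon}},
\]
which balances the terms and yields $(\theta C/\Delta_{\min})^{\frac1{1+\epsilon}}L^{\frac\epsilon{1+\epsilon}}$. This is the claimed bound.

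\textbf{Main obstacle.} The hard step is Step 1–2: establishing that the scales, and not only the regret, are controlled by $1-p_t(i^\ast)$. In particular, $G_t$ contains the term $p_t(i^\ast)\phi(r_t(i^\ast))$, and the truncation cost for $i^\ast$ must also be charged to $S$. We would first try to bound $|r_t(i^\ast)|^{1+\epsilon}\le (1-p_t(i^\ast))\sum_{j\ne i^\ast}\frac{p_t(j)}{1-p_t(i^\ast)}|\ell_t(j)-\ell_t(i^\ast)|^{1+\epsilon}$ by convexity. This gives $\E_t|r_t(i^\ast)|^{1+\epsilon}\le 2^{1+\epsilon}\theta(1-p_t(i^\ast))$. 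For $j\ne i^\ast$, the factor $p_t(j)$ already supplies the needed mass.
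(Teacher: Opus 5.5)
Your overall route is the paper's: a per-epoch pathwise bound of order $\Lambda_m\bigl(G^{(m)}+B^{(m)}(i^\ast)\bigr)$ plus a smoothing term, then conversion of the scales into $\bigl(\sum_t\bar{w}_t\bigr)^{\frac{1}{1+\epsilon}}$ and $\bigl(\sum_t w_t(i^\ast)\bigr)^{\frac{1}{1+\epsilon}}$ via \cref{lem:multiscale-scale-recursion}. After that come H\"older across epochs, the localized bound $\E_t[\bar{w}_t+w_t(i^\ast)]\le c\,\theta(1-p_t(i^\ast))$, and the $(1+\lambda)\mathcal{R}_T-\lambda\mathcal{R}_T$ self-bounding step. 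Your convexity argument for $w_t(i^\ast)$, together with splitting $\bar{w}_t$ into $j\neq i^\ast$ and $j=i^\ast$, does give the localized bound.

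The epoch summation, however, does not deliver the stated rate as you justify it. With $O(\log\log T)$ epochs and only the per-epoch bound $\Lambda_m=O(\log(KT))$, H\"older gives $\bigl(\sum_m\Lambda_m\bigr)^{\frac{\epsilon}{1+\epsilon}}=O\bigl((\log(KT)\log\log T)^{\frac{\epsilon}{1+\epsilon}}\bigr)$. After self-bounding you would therefore get $(\theta/\Delta_{\min})^{\frac{1}{\epsilon}}\log(KT)\log\log T$. The missing observation is that the $\Lambda_m$ themselves grow geometrically. From $H_{m+1}=KH_m^2$ we get $\log(KH_{m+1})=2\log(KH_m)$, so $\sum_{m=0}^M\Lambda_m\le2\sum_{m=0}^M\log(KH_m)\le4\log(KH_M)$. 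Since $T>H_{M-1}$ for the last started epoch, $\log(KH_M)\le2\log(KT)$, and hence $\sum_m\Lambda_m\le8\log(KT)$. Note that the epoch lengths grow doubly exponentially, not geometrically. With genuinely geometric lengths you would pay an extra $\log T$, as in \cref{rem:additional-logarithmic-factor-geometric-doubling}.

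Two smaller points. First, Step~2 should not be run as a recursion on conditional expectations of the increments. A bound on $\E_t[\phi_s(r_t)]$ alone does not close the recursion, because $s^{-\epsilon}$ is large exactly when the current scale is small. \cref{lem:multiscale-scale-recursion} uses both $\phi_s(x)\le\abs{x}$ and $\phi_s(x)\le s^{-\epsilon}\abs{x}^{1+\epsilon}$ to give $(G^{(m)})^{1+\epsilon}\le c\,\Lambda_m^{-1}\sum_t\bar{w}_t$ on every path. Only after that should you take expectations, apply Jensen, and localize $\E_t\bar{w}_t$ and $\E_t w_t(i^\ast)$.

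Second, the smoothing cost localizes only because $p_t(j)\ge a_m/K$. This constraint lets H\"older bound $\frac{a_m}{K}\sum_{t\in J_m}\sum_j\abs{r_t(j)}$ by $(a_m\abs{J_m})^{\frac{\epsilon}{1+\epsilon}}\bigl(\sum_{t\in J_m}\bar{w}_t\bigr)^{\frac{1}{1+\epsilon}}$, with $a_m\abs{J_m}\le1$. The choice $a_m=1/H_m$ by itself is not enough.
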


Within each epoch, the OMD regret bound depends on the global scale and the scale of expert $i^\ast$.
Combining the resulting moment bounds across epochs yields a bound in terms of $\sum_{t=1}^T\E[1-p_t(i^\ast)]$, to which \cref{asp:self_bounded} applies over the full horizon.
In contrast, the FTRL penalty contains the term $K^{-1}\sum_{j=1}^K B_T(j)$, which prevents the same argument.
The complete proofs are given in \cref{app:subsec:omd}.

\paragraph{Doubling trick.}
We use a doubling trick to remove the need for prior knowledge of $T$.
Our choice of the exponential doubling trick is motivated by \citet{besson2018doublingtrickscantmultiarmed}, who showed that the geometric doubling trick does not generally preserve logarithmic bounds in multi-armed bandits. 
The analysis across epochs is also closely related to the horizon-removal argument of \citet[Section 5.3]{pmlr-v134-ito21a_parameter_free_bandit}. 
Both analyses aggregate data-dependent quantities accumulated over the executed prefix of each epoch.
\citet{pmlr-v134-ito21a_parameter_free_bandit} uses the Cauchy--Schwarz inequality for second-order quantities, whereas we use H\"older's inequality for $(1+\epsilon)$-moment quantities.

\subsection{Known-Parameter BoBW Algorithm}
\label{subsec:non-adaptive-bobw}

The algorithms proposed in \cref{subsec:ftrl,subsec:omd} are adaptive in that they require no prior knowledge of $\epsilon$ or $\theta$.
However, neither adaptive algorithm achieves optimal BoBW guarantees, as Adaptive Heavy-Tailed FTRL lacks an optimal gap-dependent bound and Adaptive Heavy-Tailed OMD incurs an additional $\log T$ factor.
This raises the natural question of whether prior knowledge of $\epsilon$ and $\theta$ can be exploited to achieve optimal regret bounds in both adversarial and self-bounded environments.

\begin{algorithm}[t]
    \caption{Known-Parameter Shannon-FTRL}
    \label{alg:known-parameter-shannon-ftrl}
    \begin{algorithmic}[1]
        \Require {Number of experts $K\geq2$ and known parameters $\epsilon\in(0,1]$ and $\theta>0$.}
        \State {Initialize $\pi(i)\gets1/K$ and $\tildeL_0(i)\gets0$ for every $i\in[K]$.}
        \For{$t=1,2,\ldots,T$}
            \State {$\beta_t\gets\prn{\theta t}^{\frac{1}{1+\epsilon}}(1+\log K)^{-\frac{1}{1+\epsilon}}$.}
            \State {$p_t\gets\argmin_{p\in\Delta_K}\set{\inpr{\tildeL_{t-1},p}+\beta_t\KL(p;\pi)}$.}
            \State {Predict $p_t$, incur loss $\inpr{p_t,\ell_t}$, and observe $\ell_t$.}
            \State {$r_t(i)\gets\inpr{p_t,\ell_t}-\ell_t(i)$ for every $i\in[K]$.}
            \State {$\tildeell_t(i)\gets-r_t(i)\ind\brk*{\abs{r_t(i)}\leq\beta_t}$ and $\tildeL_t(i)\gets\tildeL_{t-1}(i)+\tildeell_t(i)$ for every $i\in[K]$.}
        \EndFor
    \end{algorithmic}
\end{algorithm}

Motivated by the HTINF algorithm of \citet{huang2022adaptive}, which uses prior knowledge of $\epsilon$ and $\theta$ to achieve optimal regret bounds for heavy-tailed bandits, we propose Known-Parameter Shannon-FTRL (\cref{alg:known-parameter-shannon-ftrl}).
The proposed algorithm uses Shannon-entropy FTRL with a time-varying regularization scale determined by $\epsilon$ and $\theta$, and truncates the instantaneous regrets at the same scale to control the effect of heavy-tailed losses.

\paragraph{Regret bounds}
The above algorithm achieves the following regret bounds:
\begin{theorem}
    \label{thm:known-parameter-adversarial}
    Under \cref{asp:unknown-epsilon_wise}, \cref{alg:known-parameter-shannon-ftrl} achieves $\mathcal{R}_T=O\prn{(\theta T)^{\frac{1}{1+\epsilon}}(\log K)^{\frac{\epsilon}{1+\epsilon}}}$.
\end{theorem}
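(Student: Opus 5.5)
We decompose the regret as in the proof of \cref{thm:upper-bound-multiscale-ftrl-componentwise_simple}. Fix $i^\ast\in\argmax_i\mathcal{R}_T(i)$. Since $\inpr{p_t,r_t}=0$ and $\bar\ell_t=-r_t$, we write
\[
\sum_{t=1}^T r_t(i^\ast)=\sum_{t=1}^T\inpr{p_t-e_{i^\ast},\tilde\ell_t}+\sum_{t=1}^T\inpr{p_t-e_{i^\ast},\bar\ell_t-\tilde\ell_t}.
\]
We then bound the FTRL term and the truncation term separately, in expectation, using only the moment condition with known $\epsilon,\theta$.

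\textbf{FTRL term.} We apply the standard analysis of Shannon-entropy FTRL with nondecreasing learning scale $\beta_t$. Because $\beta_t$ is nondecreasing and $\KL(e_{i^\ast};\pi)=\log K$, the penalty part is at most $\beta_{T}\log K$ up to a constant. The stability part per round is at most $\sum_i p_t(i)\beta_t\,\xi(\tilde\ell_t(i)/\beta_t)$ with $\xi(x)=e^{-x}+x-1$. Truncation gives $|\tilde\ell_t(i)|\le\beta_t$, so $\xi(x)\le (e/2)x^2$ applies. Hence the stability term is at most $\frac{e}{2}\sum_i p_t(i)\tilde\ell_t(i)^2/\beta_t$. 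We then use $x^2\ind[|x|\le\beta_t]\le\beta_t^{1-\epsilon}|x|^{1+\epsilon}$, which bounds it by $\frac e2\beta_t^{-\epsilon}\sum_ip_t(i)|r_t(i)|^{1+\epsilon}$. The centered-moment bound (\cref{lem:known-parameter-centered-moments}) gives $\E_t|r_t(i)|^{1+\epsilon}\le c\theta$, since $|r_t(i)|^{1+\epsilon}\le 2^\epsilon(|\inpr{p_t,\ell_t}|^{1+\epsilon}+|\ell_t(i)|^{1+\epsilon})$ and Jensen applies because $p_t$ is $\mathcal F_{t-1}$-measurable. So the expected stability term is $O(\theta\sum_t\beta_t^{-\epsilon})$.

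\textbf{Truncation term.} On the event $|r_t(i)|>\beta_t$ we have $|\bar\ell_t(i)-\tilde\ell_t(i)|=|r_t(i)|\le\beta_t^{-\epsilon}|r_t(i)|^{1+\epsilon}$. Hence both $\inpr{p_t,\bar\ell_t-\tilde\ell_t}$ and $-(\bar\ell_t-\tilde\ell_t)(i^\ast)$ have conditional expectation at most $c\theta\beta_t^{-\epsilon}$. The total expected truncation cost is therefore again $O(\theta\sum_t\beta_t^{-\epsilon})$.

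\textbf{Combining.} With $\beta_t=(\theta t)^{\frac1{1+\epsilon}}(1+\log K)^{-\frac1{1+\epsilon}}$ we get
\[
\theta\sum_{t\le T}\beta_t^{-\epsilon}=\theta^{\frac1{1+\epsilon}}(1+\log K)^{\frac\epsilon{1+\epsilon}}\sum_t t^{-\frac\epsilon{1+\epsilon}}\le(1+\epsilon)(\theta T)^{\frac1{1+\epsilon}}(1+\log K)^{\frac\epsilon{1+\epsilon}},
\]
and $\beta_T\log K\le(\theta T)^{\frac1{1+\epsilon}}(\log K)^{\frac{\epsilon}{1+\epsilon}}$. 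Adding these bounds and taking expectations (valid because $\beta_t$ is deterministic) gives $\mathcal R_T=O((\theta T)^{\frac1{1+\epsilon}}(\log K)^{\frac\epsilon{1+\epsilon}})$ for $K\ge2$.

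\textbf{Main obstacle.} The delicate step is the stability bound for FTRL with time-varying $\beta_t$. One must handle the case where negative surrogate losses grow the weights: $\tilde\ell_t$ can have either sign. The fix is to use $|\tilde\ell_t(i)|/\beta_t\le1$ together with monotonicity of $\beta_t$, so that the one-step comparison between $p_t$ and the unnormalized update stays within a constant factor. I would follow the standard local-norm lemma for FTRL with bounded normalized losses, and check that the $\beta_t$ versus $\beta_{t+1}$ mismatch costs only a constant.
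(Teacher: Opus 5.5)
Your proposal is correct and follows essentially the same route as the paper. It uses the same split into surrogate FTRL regret plus truncation residual, the stability bound $\frac{e}{2}\sum_j p_t(j)\tilde{\ell}_t(j)^2/\beta_t\leq\frac{e}{2}\beta_t^{-\epsilon}\bar{w}_t$, the residual bound $\beta_t^{-\epsilon}(\bar{w}_t+w_t(i))$, the centered-moment lemma, and the same summation of $t^{-\frac{\epsilon}{1+\epsilon}}$. The time-varying-$\beta_t$ FTRL step that you leave as ``standard'' is handled in the paper with the mix-loss potential $\Phi_\beta(y)=-\beta\log\sum_j\pi(j)e^{-y(j)/\beta}$: monotonicity of $\beta_t$ makes the mismatch term $\Phi_{\beta_t}(\tilde{L}_t)-\Phi_{\beta_{t+1}}(\tilde{L}_t)$ nonpositive, so the mismatch costs nothing rather than a constant.
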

\begin{theorem}
    \label{thm:known-parameter-stochastic}
    Suppose that \cref{asp:unknown-epsilon_wise,asp:self_bounded} hold.
    Then \cref{alg:known-parameter-shannon-ftrl} satisfies
    \[
        \mathcal{R}_T=O\prn[\bigg]{\prn[\bigg]{\frac{\theta}{\Delta_{\min}}}^{\frac{1}{\epsilon}}\log K+\prn[\bigg]{\frac{\theta C}{\Delta_{\min}}}^{\frac{1}{1+\epsilon}}(\log K)^{\frac{\epsilon}{1+\epsilon}}}.
    \]
    In particular, when $C=0$, this bound is $O\prn{\theta^{\frac{1}{\epsilon}}\Delta_{\min}^{-\frac{1}{\epsilon}}\log K}$.
\end{theorem}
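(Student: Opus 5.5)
We prove the self-bounded bound for \cref{alg:known-parameter-shannon-ftrl} by the standard self-bounding technique. First we obtain a pathwise regret bound against $i^\ast$ that is \emph{refined}: every term is multiplied by the mass $1-p_t(i^\ast)$. Then we combine it with \cref{asp:self_bounded}.

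We begin, as in the proof of \cref{thm:upper-bound-multiscale-ftrl-componentwise_simple}, by decomposing $\sum_t r_t(i^\ast)$. Since $\inpr{p_t,r_t}=0$, it equals the FTRL regret on the surrogate losses $\tilde\ell_t$ plus the truncation errors $\trunc(p_{1:T})+\trunc(-e_{i^\ast})$.

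\textbf{FTRL part.} We use the standard bound for Shannon-entropy FTRL with nondecreasing $\beta_t$ and $\abs{\tilde\ell_t(i)}\le\beta_t$:
\[
\sum_t\inpr{p_t-e_{i^\ast},\tilde\ell_t}\le\sum_t(\beta_{t+1}-\beta_t)h(p_{t+1})+\beta_1\log K+O\Bigl(\sum_t\beta_t^{-1}\sum_i p_t(i)\tilde\ell_t(i)^2\Bigr),
\]
where $h$ is the entropy relative to $\pi$. The penalty term is controlled by $h(p)\le(1-p(i^\ast))\log\frac{eK}{1-p(i^\ast)}$. For the stability term, we shift $\tilde\ell_t$ by the constant $\tilde\ell_t(i^\ast)$; this is harmless because $\tilde\ell_t(i^\ast)$ is also truncated at $\beta_t$. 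This gives the bound $\beta_t^{-1}\sum_i p_t(i)(\tilde\ell_t(i)-\tilde\ell_t(i^\ast))^2$, which is at most $4\beta_t^{-1}\sum_{i\ne i^\ast}p_t(i)(\tilde\ell_t(i)^2+\tilde\ell_t(i^\ast)^2)$ plus a $p_t(i^\ast)$-weighted term. That remaining term is the \textbf{main obstacle}: $p_t(i^\ast)(\tilde\ell_t(i^\ast)-\tilde\ell_t(i))^2$ for $i=i^\ast$ vanishes, so in fact only the $i\ne i^\ast$ terms survive. The squared terms are then bounded in conditional expectation via truncation: $\E_t[\tilde\ell_t(i)^2]\le\beta_t^{1-\epsilon}\E_t\abs{r_t(i)}^{1+\epsilon}=O(\beta_t^{1-\epsilon}\theta)$. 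Here $\E_t\abs{r_t(i)}^{1+\epsilon}\le 2^{1+\epsilon}\theta$ by convexity and Jensen applied to $\inpr{p_t,\ell_t}$. Hence the stability contributes $O(\theta\beta_t^{-\epsilon})(1-p_t(i^\ast))$ per round.

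\textbf{Truncation part.} The same factorisation is needed here. Writing $\bar\ell_t-\tilde\ell_t=-r_t\ind[\abs{r_t}>\beta_t]$, we get $\E_t[\abs{r_t(i)}\ind[\abs{r_t(i)}>\beta_t]]\le\beta_t^{-\epsilon}\E_t\abs{r_t(i)}^{1+\epsilon}$. The combination $\inpr{p_t-e_{i^\ast},\cdot}$ contributes $\sum_{i\ne i^\ast}p_t(i)(\cdot)-(1-p_t(i^\ast))(\cdot)_{i^\ast}$, and so is also $O(\theta\beta_t^{-\epsilon}(1-p_t(i^\ast)))$. To make this rigorous we rewrite $\trunc(p_{1:T})+\trunc(-e_{i^\ast})=\sum_t\inpr{p_t-e_{i^\ast},\bar\ell_t-\tilde\ell_t}$ and use $\abs{r_t(i)}\le\abs{\ell_t(i)}+\abs{\inpr{p_t,\ell_t}}$.

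\textbf{Combining.} With $x_t=1-p_t(i^\ast)$ and $\beta_t\asymp(\theta t)^{\frac1{1+\epsilon}}(\log K)^{-\frac1{1+\epsilon}}$, both $\theta\beta_t^{-\epsilon}$ and $(\beta_{t+1}-\beta_t)\log K$ are of order $a_t=\theta^{\frac1{1+\epsilon}}(\log K)^{\frac{\epsilon}{1+\epsilon}}t^{-\frac{\epsilon}{1+\epsilon}}$. We therefore expect
\[
\mathcal R_T\le\E\Bigl[\sum_t a_t x_t\log\tfrac{eK}{x_t}\Bigr]+O\bigl(\beta_1\log K\bigr).
\]
The $\log(1/x_t)$ factor may need removing, either by a sharper entropy argument or by absorbing it. Writing $\mathcal R_T=(1+\lambda)\mathcal R_T-\lambda\mathcal R_T$ and using \cref{asp:self_bounded}, we get
\[
\mathcal R_T\le\sum_t\E\bigl[(1+\lambda)a_tx_t-\lambda\Delta_{\min}x_t\bigr]+\lambda C.
\]
Maximising each summand over $x_t\in[0,1]$ leaves only the rounds where $(1+\lambda)a_t>\lambda\Delta_{\min}$. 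This holds exactly when $t\lesssim((1+\lambda)/\lambda)^{\frac{1+\epsilon}{\epsilon}}\theta^{\frac1\epsilon}\Delta_{\min}^{-\frac{1+\epsilon}{\epsilon}}\log K$, and summing $a_t$ over these rounds gives $O(((1+\lambda)/\lambda)^{\frac1\epsilon}(\theta/\Delta_{\min})^{\frac1\epsilon}\log K)$. Finally we optimise $\lambda$: taking $\lambda=\Theta(1)$ when $C$ is small, and otherwise balancing $\lambda C$ against $\lambda^{-1/\epsilon}$, which yields the $(\theta C/\Delta_{\min})^{\frac1{1+\epsilon}}(\log K)^{\frac\epsilon{1+\epsilon}}$ term. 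To achieve $\log K$ rather than $\log T$, we must treat the $\log(1/x_t)$ factor carefully, and this is the step we expect to be most delicate.
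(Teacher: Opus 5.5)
\textbf{The gap: the entropy-drift term.} You identify the term $\sum_t(\beta_{t+1}-\beta_t)H(p_{t+1})$ as the delicate one but never bound it. In the combining step you simply drop the factor $\log(1/x_t)$, and also the additive $\beta_1\log K$. Your threshold computation (only rounds with $(1+\lambda)a_t>\lambda\Delta_{\min}$ contribute) is valid only for that log-free expression.

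The bookkeeping is also off. Since $\beta_{t+1}-\beta_t$ is of order $a_t/\log K$, the drift term is of order $a_t x_{t+1}\bigl(1+\log(1/x_{t+1})/\log K\bigr)$, not $a_t x_t\log(eK/x_t)$. Taken literally, your expression would cost extra powers of $\log K$. This is exactly the step that decides $\log K$ versus $\log T$: the crude bound $H(p_{t+1})\le\log K$ only returns $\beta_T\log K$. So the proposal as written does not prove the theorem.

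\textbf{How the paper closes it.} It first uses concavity of $x\mapsto x\log(eK/x)$ and Jensen to pass to the deterministic $z_t=\E[1-p_{t+1}(i^\ast)]$. It then applies a dyadic peeling lemma. Let $Z=\sum_t z_t$ and $I_m=\{t: 2^{-(m+1)}<z_t\le 2^{-m}\}$. Then $\abs{I_m}\le 2^{m+1}Z$, and $z_t\log(1/z_t)\le (m+1)2^{-m}\log 2$ on $I_m$. Because the weights decrease, $\sum_{t\in I_m}t^{-\epsilon/(1+\epsilon)}\le(1+\epsilon)\abs{I_m}^{1/(1+\epsilon)}$. Since $\sum_m(m+1)2^{-m\epsilon/(1+\epsilon)}<\infty$, this gives $\sum_t t^{-\epsilon/(1+\epsilon)}z_t\log(1/z_t)=O_\epsilon(Z^{1/(1+\epsilon)})$. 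Everything then aggregates into $\mathcal{R}_T\le a(\mathcal{R}_T+C)^{1/(1+\epsilon)}$, with $\beta_1\log K$ absorbed using $S_T\ge 1-1/K\ge 1/2$. Young's inequality finishes the proof.

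\textbf{Alternative fix within your scheme.} You can keep the round-by-round self-bounding and retain the log term. For $c,d>0$ one has $\sup_{x\in[0,1]}\{c\,x\log(1/x)-dx\}=c\,e^{-1-d/c}$. Past the threshold $t_\lambda$, the net linear coefficient satisfies $d\ge\lambda\Delta_{\min}/2$ and $c$ is of order $(1+\lambda)a_t/\log K$. Hence $d/c$ is at least of order $\log K\,(t/t_\lambda)^{\epsilon/(1+\epsilon)}$, and those rounds contribute only $O_\epsilon(\lambda\Delta_{\min}t_\lambda)$, the same order as the earlier rounds. The $\beta_1\log K$ term can be absorbed into round $1$ via $x_1=1-1/K\ge 1/2$.

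\textbf{What is sound and where it differs.} You obtain the factor $1-p_t(i^\ast)$ in the stability term via shift invariance of Shannon-FTRL. In the truncation term you get it via $\inpr{p_t-e_{i^\ast},v}=\sum_{i\neq i^\ast}p_t(i)v(i)-(1-p_t(i^\ast))v(i^\ast)$ together with the uniform bound $\E_t\abs{r_t(i)}^{1+\epsilon}\le 2^{1+\epsilon}\theta$. The paper instead bounds both terms by $\beta_t^{-\epsilon}(\bar{w}_t+w_t(i^\ast))$ and localizes these moments through Jensen on $r_t(i)=\sum_k p_t(k)(\ell_t(k)-\ell_t(i))$. Both routes work.
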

These regret bounds avoid the extra $\log T$ factor that appears in \cref{thm:omd-regret-adversarial,thm:omd-regret-self-bounded} and are optimal in both adversarial and self-bounded environments.
The key is that prior knowledge of $\epsilon$ and $\theta$ allows us to choose $\beta_t$ to balance the regularization term $\beta_T\log K$ against the stability and clipping terms of order $\theta\sum_{t=1}^T\beta_t^{-\epsilon}$ at the optimal rate.
The same choice also enables the self-bounded analysis to achieve the optimal rate without an additional $\log T$ factor.
The formal statements and proofs are given in \cref{app:subsec:non-adaptive}.
\looseness=-1

\paragraph{Computational aspects.} 
The updates of \cref{alg:adaptive-multiscale-ftrl-componentwise_simple,alg:parameter-free-heavy-tailed-omd} can be computed efficiently via a line search, such as bisection, following the corresponding FTRL and OMD implementations of \citet{NEURIPS2025_Moulin_Expert_HTEXP}, respectively.
\cref{alg:known-parameter-shannon-ftrl} admits the standard closed-form exponential-weights update.

\section{Conclusion}
\label{sec:conclusion}
Our results answer the three research questions.
For \hyperref[q:minimax]{Q1}, we established the minimax regret rate $(\theta T)^{\frac{1}{1+\epsilon}}(\log K)^{\frac{\epsilon}{1+\epsilon}}$ under the $(1+\epsilon)$-moment condition.
For \hyperref[q:adaptivity]{Q2}, we showed that Adaptive Heavy-Tailed FTRL (\cref{alg:adaptive-multiscale-ftrl-componentwise_simple}) attains this rate without knowing $\epsilon$ or $\theta$.
For \hyperref[q:bobw]{Q3}, we developed BoBW algorithms for every $\epsilon\in(0,1]$: Adaptive Heavy-Tailed OMD (\cref{alg:parameter-free-heavy-tailed-omd}) is near-optimal without knowing the parameters, whereas Known-Parameter Shannon-FTRL (\cref{alg:known-parameter-shannon-ftrl}) is optimal when they are known.

An important direction for future work is to close the logarithmic gap in adaptive BoBW regret bounds.
In particular, it remains open whether an algorithm can attain the optimal adversarial and gap-dependent rates simultaneously without knowing $\epsilon$ or $\theta$, or whether the additional $\log T$ factor is an unavoidable cost of such adaptivity.

\bibliography{reference_HTEXP}
\bibliographystyle{plainnat}

\clearpage
\appendix
\startcontents[appendices]
\section*{Contents of the Appendix}
\printcontents[appendices]{}{1}{}
\clearpage
\section{Additional Related Work}
\label{app:relare_work}
In this section, we provide additional discussion of related work that complements \cref{sec:related_work}.

\paragraph{Heavy-tailed Bandits}

An important related problem is heavy-tailed multi-armed bandits, which differ from our setting in that the learner observes only the loss of the selected arm.
Unlike elsewhere in the paper, in this paragraph, we refer to an algorithm as BoBW if it achieves near-optimal regret in both adversarial and stochastic environments.
The stochastic heavy-tailed bandit problem under a finite $(1+\epsilon)$-moment condition was introduced by \citet{Bubeck2013BanditsWithHeavyTail}, who established an optimal gap-dependent regret bound and a gap-independent regret bound that is optimal up to a logarithmic factor.
\citet{Lai2021MinimaxPolicyHTEXP} subsequently obtained a minimax-optimal regret bound in the stochastic setting.
Moreover, \citet{huang2022adaptive} developed an algorithm that attains optimal regret rates in both stochastic and adversarial environments when the heavy-tail parameters are known, under the truncated nonnegativity assumption on the optimal arm.
Here, a random variable $X$ is truncated nonnegative if $\E\brk*{X\ind\brk*{\abs{X}>M}}\geq0$ for every $M\geq0$.
\citet{NEURIPS2024_Cheng_HeavyTailed_bandits} further established near-optimal high-probability BoBW regret guarantees without the truncated nonnegativity assumption when the heavy-tail parameters are known.
Adaptation to unknown heavy-tail parameters has also been studied extensively.
\citet{huang2022adaptive}, \citet{pmlr-v247-genalti24a}, and \citet{chen2025uniinf_HTMAB} developed algorithms adaptive to the heavy-tail parameters, and \citet{chen2025uniinf_HTMAB} in particular proposed a parameter-free BoBW algorithm attaining near-optimal regret rates in both stochastic and adversarial environments up to logarithmic factors.
These guarantees, however, rely on the truncated nonnegativity assumption, and \citet{pmlr-v247-genalti24a} showed that without additional assumptions one cannot in general obtain the same regret rates as the algorithms that use the heavy-tail parameters.
More recently, \citet{genalti2026parameterfreeheavytailedbandits} characterized the optimal regret trade-offs for parameter-free heavy-tailed bandits and developed algorithms attaining the resulting adaptation frontiers up to logarithmic factors.

\paragraph{Self-bounded environment}
\citet{Lykouris2018StochasticBanditsRobustToAdversarialCorruptions} and \citet{pmlr-v99-gupta19a_stochastic_bandits_adversarial_corruptions} studied adversarial corruptions in stochastic multi-armed bandits. 
\citet{NEURIPS2020_amir_prediction_expert} subsequently considered an analogous corrupted stochastic setting for the experts problem. 
Rather than explicitly specifying a corruption process, \citet{pmlr-v89-zimmert19a_bobw} introduced, for multi-armed bandits, the more general adversarial regime with a self-bounded constraint, which subsumes stochastic bandits with adversarial corruptions.
\citet{NEURIPS2021_ito_robustness_adversarial_decision} studied such self-bounded environments for both multi-armed bandits and the experts problem.
For the experts problem, the same self-bounded constraint is used in \cref{asp:self_bounded} and by \citet{NEURIPS2025_Moulin_Expert_HTEXP}.

\paragraph{Heavy-tailed OCO}
In addition to heavy-tailed bandits, online convex optimization (OCO), which includes prediction with expert advice as a special case, has also been studied under heavy-tailed feedback \citep{NEURIPS2022_Zhang_ParameterFreeRegret_HTOCO,pmlr-v313-liu26a_oco_heavy_tails}. 
In particular, \citet{pmlr-v313-liu26a_oco_heavy_tails} shows that some existing OCO algorithms continue to enjoy regret guarantees in a heavy-tailed setting.
However, their setting assumes a $(1+\epsilon)$-moment bound on the Euclidean norm of the centered stochastic gradient, whereas we assume only componentwise moment bounds on the loss vector.
Deriving their norm-based assumption from our condition generally incurs a dependence on the number of experts.
Consequently, directly applying their OCO results to our setting does not yield the logarithmic dependence on the number of experts that we aim for.

\section{Additional Discussion of LoOT-Free Algorithms}
\label{app:limitations_existing_algorithms}
In this section, we present three results concerning the algorithms of \citet{NEURIPS2025_Moulin_Expert_HTEXP}: we establish their suboptimality under the general $(1+\epsilon)$-moment condition, derive the upper bound achieved by their LoOT-Free FTRL, and show that a natural extension of LoOT-Free FTRL achieves the optimal regret bound when $\epsilon$ is known.

\subsection{Algorithm-Specific Lower Bounds for LoOT-Free OMD and FTRL}
\label{app:subsec:alg_specific_lower_bound}
\begin{algorithm}[t]
    \caption{LoOT-Free OMD and FTRL \citep[Algorithms~1 and~2]{NEURIPS2025_Moulin_Expert_HTEXP}}
    \label{alg:loot-free-variants-lower-bound}
    \begin{algorithmic}[1]
        \Require {Number of experts $K\geq 2$, variant $\mathrm{A}\in\{\mathrm{O},\mathrm{F}\}$, where $\mathrm{O}$ denotes OMD and $\mathrm{F}$ denotes FTRL.}
        \State {Initialize $p_1(i)\gets1/K$, $V_0(i)\gets0$, and $b_0(i)\gets0$ for every $i\in[K]$, and $\bar V_0\gets0$.}
        \State {For $\mathrm{A}=\mathrm{O}$, initialize $\beta_{\mathrm{A}}\gets\sqrt{\log(KT)}$, $a\leftarrow 1/T$ and $\mathcal P_{\mathrm{A}}\gets\set*{p\in\Delta_K:p(i)\geq a/K\text{ for every }i\in[K]}$.}
        \State {For $\mathrm{A}=\mathrm{F}$, initialize $\beta_{\mathrm{A}}\gets\sqrt{\log K}$ and $\mathcal P_{\mathrm{A}}\gets\Delta_K$.}
        \For{$t=1,2,\ldots,T$}
            \State {Predict $p_t$, incur loss $\inpr{p_t,\ell_t}$, and observe $\ell_t$.}
            \State {$r_t(i)\gets\inpr{p_t,\ell_t}-\ell_t(i)$ and $v_t(i)\gets r_t(i)^2$ for every $i\in[K]$.}
            \State {$\bar v_t\gets\sum_{i=1}^Kp_t(i)v_t(i)$, $V_t(i)\gets V_{t-1}(i)+v_t(i)$ for every $i\in[K]$, and $\bar V_t\gets\bar V_{t-1}+\bar v_t$.}
            \State {$b_t(i)\gets\max\set*{\bar V_t,V_t(i)}^{\frac{1}{2}}$ for every $i\in[K]$.}
            \If{$\bar V_t=0$}
                \State {$\tilde\ell_t\gets0$ and $p_{t+1}\gets p_t$.}
            \Else
                \State {$\eta_{t,i}\gets\beta_{\mathrm{A}}/b_t(i)$ for every $i\in[K]$.}
                \If{$\mathrm{A}=\mathrm{O}$}
                    \State {$\tilde\ell_t(i)\gets-r_t(i)\ind\brk*{\abs{r_t(i)}\leq1/\eta_{t,i}}$ for every $i\in[K]$.}
                    \State {$p_{t+1}\gets\argmin_{p\in\mathcal P_{\mathrm{A}}}\set{\inpr{p,\tilde\ell_t}+D_t(p;p_t)}$.}
                \Else
                    \State {$\tilde\ell_t(i)\gets-r_t(i)\frac{b_{t-1}(i)}{b_t(i)}\ind\brk*{\abs{r_t(i)}\leq1/\eta_{t,i}}$ for every $i\in[K]$.}
                    \State {$p_{t+1}\gets\argmin_{p\in\mathcal P_{\mathrm{A}}}\set{\sum_{s=1}^t\inpr{p,\tilde\ell_s}+D_t(p;p_1)}$.}
                \EndIf
            \EndIf
        \EndFor
    \end{algorithmic}
\end{algorithm}
For completeness, we provide the pseudocode of LoOT-Free OMD and FTRL (\cref{alg:loot-free-variants-lower-bound}).
Throughout this subsection, we set the learning-rate parameters to $\beta_{\mathrm{O}}=\sqrt{\log(KT)}$ for LoOT-Free OMD and $\beta_{\mathrm{F}}=\sqrt{\log K}$ for LoOT-Free FTRL.
They share the same instantaneous regrets and second-order scales, whereas their parameter choices, domains, surrogate losses, and online-learning updates are specified by the two branches below.
For $t\geq1$, write $v_t(i)=r_t(i)^2$, $\bar v_t=\sum_{i=1}^Kp_t(i)v_t(i)$, $V_t(i)=\sum_{s=1}^tv_s(i)$, $\bar V_t=\sum_{s=1}^t\bar v_s$, and $b_t(i)=\max\set*{\bar V_t,V_t(i)}^{\frac{1}{2}}$, and define
\[
    D_t(p;q)=\sum_{i=1}^K\frac{1}{\eta_{t,i}}\brk*{p(i)\log\frac{p(i)}{q(i)}-p(i)+q(i)}.
\]

Thus, a zero-scale round is skipped: the prediction and all cumulative scales remain unchanged.
The OMD specialization uses the simplex-truncation parameter $1/T$, whereas the FTRL specialization uses the rescaling factor $b_{t-1}(i)/b_t(i)$.

\begin{proposition}
    \label{prop:moulin-balanced-sparse-spike}
    Fix $\epsilon\in(0,1]$, $\theta>0$, $K\geq16$, and $T\geq4$, and let $S=(\theta T)^{\frac{1}{1+\epsilon}}K^{\frac{1-\epsilon}{2(1+\epsilon)}}$.
    There exists a loss process satisfying \cref{asp:unknown-epsilon_wise} with a unique optimal expert such that LoOT-Free OMD with simplex-truncation parameter $1/T$ and learning-rate parameter $\beta_{\mathrm{O}}=\sqrt{\log(KT)}$ incurs
    \[ \mathcal{R}_T^{\mathrm{OMD}}=\Omega\prn*{\min\set*{\theta^{\frac{1}{1+\epsilon}}T,\frac{S}{\sqrt{\log(KT)}}}}, \]
    whereas LoOT-Free FTRL with learning-rate parameter $\beta_{\mathrm{F}}=\sqrt{\log K}$ incurs
    \[ \mathcal{R}_T^{\mathrm{FTRL}}=\Omega\prn*{\min\set*{\theta^{\frac{1}{1+\epsilon}}T,\frac{ST}{\sqrt{\log(KT)}},S\sqrt{\log K}}}. \]
    In particular, if $T^{\frac{\epsilon}{1+\epsilon}}\geq cK^{\frac{1-\epsilon}{2(1+\epsilon)}}\sqrt{\log K}$ for a sufficiently large universal constant $c>0$, then these bounds simplify, respectively, to $\mathcal{R}_T^{\mathrm{OMD}}=\Omega\prn[\Big]{\frac{S}{\sqrt{\log(KT)}}}$ and $\mathcal{R}_T^{\mathrm{FTRL}}=\Omega\prn{S\sqrt{\log K}}$.
\end{proposition}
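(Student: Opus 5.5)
We will build one explicit \emph{balanced sparse spike} loss process and analyze both LoOT-Free variants on it. The intuition from the introduction is that a rare, huge spike is discarded by truncation yet still enters the quadratic scale $V_t(i)$ or $\bar V_t$ through $r_t(i)^2$. This makes $b_t(i)$, and hence $1/\eta_{t,i}$, far larger than the $(1+\epsilon)$-moment condition warrants, so that afterwards the learner moves too slowly to track a constant gap.

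\textbf{Construction.} Fix a magnitude $A$ and a probability $q$ with $qA^{1+\epsilon}\asymp\theta$. On a single early round (or a short block of rounds), with probability $q$ a spike of size $A$ is placed on half of the experts in a balanced way, roughly $K/2$ of them at $+A$ and the rest at $0$. Then $\inpr{p_t,\ell_t}\approx A/2$ under a near-uniform $p_t$, and $|r_t(i)|\approx A/2$ for every $i$. Each coordinate's $(1+\epsilon)$-moment on that round is $qA^{1+\epsilon}\le\theta$. On the remaining rounds we use a small deterministic gap: one expert $i^\ast$ has loss $0$ and the others have loss $\Delta\le\theta^{1/(1+\epsilon)}$. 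This makes $i^\ast$ uniquely optimal, with per-round regret $\Delta(1-p_t(i^\ast))$.

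To get the $K^{\frac{1-\epsilon}{2(1+\epsilon)}}$ factor, we exploit that the quadratic scale is aggregated via $\bar V_t$ across all experts. We will instead spread spikes over $\Theta(K)$ disjoint coordinates at independent random rounds. Each coordinate spikes rarely, but $\bar V_t$ collects them all, so $\E[\bar V_T]\gtrsim K\cdot(\text{per-coordinate }qA^2)$ while each coordinate still only pays $\theta$ in the moment. Optimizing $A,q$ subject to $\sum_t q_t A^{1+\epsilon}\le\theta$ per round per coordinate should yield $\sqrt{\bar V_T}\gtrsim S=(\theta T)^{\frac{1}{1+\epsilon}}K^{\frac{1-\epsilon}{2(1+\epsilon)}}$, up to the exact constant and timing.

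\textbf{Step 1: spikes are truncated but inflate the scale.} We will show that, with constant probability, each spike round satisfies $|r_t(i)|>1/\eta_{t,i}=b_t(i)/\beta_{\mathrm A}$. This requires choosing $A\gtrsim\beta_{\mathrm A}\sqrt{\bar V}$ at the time of the spike, which holds while the accumulated scale is still below $A^2$. For those rounds, $\tilde\ell_t(i)=0$, so $p$ is unchanged, yet $b_t(i)$ jumps to at least about $S$ after enough spikes.

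\textbf{Step 2: slow learning afterwards.} With $1/\eta_{t,i}\ge S/\beta_{\mathrm A}$ for all $i$, exponential-weights–type stability implies the log-ratio $\log(p_t(i^\ast)/p_t(j))$ grows by at most $\eta\Delta$ per round. So $p_t(i^\ast)$ stays below $1/2$ for $\tau\asymp\log K\cdot S/(\beta_{\mathrm A}\Delta)$ rounds, giving regret $\gtrsim\Delta\min\{T,\tau\}$. For FTRL ($\beta_{\mathrm F}=\sqrt{\log K}$) and $\Delta=\theta^{1/(1+\epsilon)}$, this gives $\min\{\theta^{\frac1{1+\epsilon}}T,S\sqrt{\log K}\}$. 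The $ST/\sqrt{\log(KT)}$ term accounts for the regime where the spikes themselves take time to arrive. For OMD, the one-step update and the floor $a=1/T$ mean the log-ratio can be at most $\log(KT)$ and the rate is $\beta_{\mathrm O}/S$. This gives $\Omega(\min\{\theta^{\frac1{1+\epsilon}}T,S/\sqrt{\log(KT)}\}$ after choosing $\Delta$ to balance against $\beta_{\mathrm O}$. The simplification under $T^{\frac{\epsilon}{1+\epsilon}}\ge cK^{\frac{1-\epsilon}{2(1+\epsilon)}}\sqrt{\log K}$ is then direct: it is exactly $\theta^{\frac1{1+\epsilon}}T\ge cS\sqrt{\log K}$.

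\textbf{Main obstacle.} The delicate step is Step 1 combined with the balancing. We must ensure spikes are truncated rather than absorbed, since absorption would move $p$ and possibly even help. This needs the spike size to exceed $b_t(i)/\beta$ at the time it occurs. The growing scale makes later spikes less likely to be truncated, so we would schedule all spikes in an early window, or use a geometrically increasing spike sequence, and verify that the truncation condition holds at each one. We must also ensure the $r_t$ on non-spiked coordinates, which is nonzero because $\inpr{p_t,\ell_t}$ shifts, is likewise truncated or harmlessly symmetric. The balanced half/half placement is meant to keep $p$ symmetric in expectation.
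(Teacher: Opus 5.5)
Your mechanism, in which a spike is removed from the surrogate loss yet inflates $b_t(i)$ on every coordinate and is followed by a constant-gap phase that the inflated scale makes slow to learn, is the paper's. However, the loss process you describe does not deliver the claimed rate, and the construction is where the real content lies.

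The first gap is probability accounting. Suppose the spike can occur only on one early round (or a short block) with probability $q$. On the complementary event the scale stays small and the gap is learned quickly, so the expected regret is essentially $q$ times the conditional regret on the spike event. The per-round moment bound lets the spike size, and hence the induced scale, grow only like $q^{-1/(1+\epsilon)}$, so this product behaves like $q^{\epsilon/(1+\epsilon)}$ and nothing is gained from $q<1$. At $q=\Theta(1)$, a spike confined to one round (or to $n=o(T)$ candidate rounds) produces a scale of order only $(\theta n)^{1/(1+\epsilon)}K^{\frac{1-\epsilon}{2(1+\epsilon)}}$, even in your sparse variant, so it loses the factor $T^{1/(1+\epsilon)}$. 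For the same reason your heuristic in terms of $\mathbb{E}[\bar V_T]$ is ill-posed: for $\epsilon<1$ it can be made arbitrarily large by sending $q\to0$ while the regret vanishes. What you need is a constant-probability event on which the scale is $\Omega(S)$ and the learner is still uniform when the gap phase begins. The paper lets each of the first $m=\lfloor T/2\rfloor$ rounds spike independently with probability $1/m$, and conditions on the event that exactly one spike occurs, which has probability at least $e^{-1}$. First-phase losses have conditional mean zero and second-phase regret is pathwise nonnegative, so the conditional bound transfers to $\mathcal{R}_T$. This also shows that accumulating many spikes is the wrong regime, so your worry about later spikes escaping truncation never arises.

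The second gap is the spike shape: each of your two proposals fails one requirement. The half/half placement hits each coordinate with probability of the same order as the spike itself. A constant-probability spike then has $A=O((\theta T)^{1/(1+\epsilon)})$ and $\sqrt{\bar v_t}\approx A/2$, with no factor $K^{\frac{1-\epsilon}{2(1+\epsilon)}}$. A single-coordinate spike does give that factor, but it shifts $\langle p_t,\ell_t\rangle$ to $A/K$. The other $K-1$ instantaneous regrets are then nonzero and, for OMD with $K-1\geq\log(KT)$, not truncated, so $p$ moves and must be controlled. The missing device is the \emph{balanced sparse pair}: $\ell_\tau(J^+)=B$ and $\ell_\tau(J^-)=-B$ on a uniformly random ordered pair, zero elsewhere, with $B=(\theta Km/2)^{1/(1+\epsilon)}$. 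This pair gives you everything at once:
\begin{itemize}
\item Each coordinate is hit with probability $2/K$, so the moment is exactly $\theta$.
\item Under $p_\tau=u$ one has $\langle u,\ell_\tau\rangle=0$, so all other regrets vanish.
\item $r_\tau(J^\pm)=\mp B$ while $b_\tau(J^\pm)=B$, so OMD truncates them because $\beta_{\mathrm{O}}>1$, and FTRL zeroes them through $b_{\tau-1}/b_\tau=0$.
\item $\bar v_\tau=2B^2/K$ gives $b_t(i)\geq b_{\mathrm{sp}}=B\sqrt{2/K}=\Theta(S)$ for all $i$ and $t\geq\tau$.
\end{itemize}
With this in hand, your Step 2 needs only $b_t(i)\geq b_{\mathrm{sp}}$; the floor $a=1/T$ plays no role. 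A log-ratio argument would still have to handle the coordinate-dependent rates $\beta/b_t(i)$ and the normalization multiplier. The paper sidesteps this with Pinsker's inequality for OMD. For FTRL it compares $D_{m+n}(p^{(n)};u)\leq n\delta$ against $\mathrm{KL}(p^{(n)};u)\geq\frac14\log K$, which holds whenever $p^{(n)}(i^\ast)\geq\frac12$. Finally, the $ST/\sqrt{\log(KT)}$ term arises from capping the gap at $\delta\leq c\,S/\sqrt{\log(KT)}$ so that no second-phase regret is truncated. It does not come from the time spikes take to arrive.
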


\begin{proof}
    Fix an arbitrary expert $i^\ast\in[K]$.
    Let $m=\lfloor T/2\rfloor$ and $B=(\theta Km/2)^{\frac{1}{1+\epsilon}}$.
    Fix a sufficiently small universal constant $c_{\mathrm{gap}}>0$ and set $\delta=c_{\mathrm{gap}}\min\set{\theta^{\frac{1}{1+\epsilon}},S/\sqrt{\log(KT)}}$.
    Let $u=\frac{1}{K}\mathbf{1}$ denote the uniform distribution.
    Define $b_{\mathrm{sp}}=B\sqrt{2/K}$.
    We call the following two-phase loss process the balanced sparse-spike instance.
    This instance is balanced because the two nonzero spike coordinates have opposite signs and sparse because all other coordinates are zero.
    During the first phase, for each $t=1,\ldots,m$, independently draw $Z_t\sim\operatorname{Bernoulli}(1/m)$ and an ordered pair $(J_t^+,J_t^-)$ uniformly from the $K(K-1)$ ordered pairs of distinct coordinates, independently of $Z_t$, and set
    \[
        \ell_t(i)=\begin{cases}
            Z_tB, & i=J_t^+,\\
            -Z_tB, & i=J_t^-,\\
            0, & \text{otherwise}.
        \end{cases}
    \]
    During the second phase, for $t=m+1,\ldots,T$, set $\ell_t(i^\ast)=0$ and $\ell_t(i)=\delta$ for every $i\neq i^\ast$.

    We first verify the assumptions.
    For every $t\leq m$ and $i\in[K]$, each coordinate is selected as one of the two spike coordinates with probability $2/K$, and hence
    \[
        \E_t\brk*{\abs{\ell_t(i)}^{1+\epsilon}}=\frac{1}{m}\frac{2}{K}B^{1+\epsilon}=\theta.
    \]
    For every $t>m$, the same conditional moment is at most $\delta^{1+\epsilon}\leq\theta$.
    All experts have zero conditional mean loss in the first phase, whereas in the second phase expert $i^\ast$ has loss zero and every other expert has loss $\delta$.
    Consequently, expert $i^\ast$ is uniquely optimal and
    \[
        \mathcal{R}_T=\delta\E\brk*{\sum_{t=m+1}^T\prn*{1-p_t(i^\ast)}}.
    \]

    We next use the second-order quantities defined in \cref{alg:loot-free-variants-lower-bound}.
    Let $\mathcal{E}$ be the event that exactly one spike occurs in the first phase.
    Its probability satisfies $\P(\mathcal{E})=\prn*{1-\frac{1}{m}}^{m-1}\geq e^{-1}$.
    On $\mathcal{E}$, let $\tau$ be the unique spike round.
    Since all losses before round $\tau$ are zero, for every $t\leq\tau$, we have $p_t=u$ and $V_{t-1}(i)=\bar V_{t-1}=b_{t-1}(i)=0$ for every $i\in[K]$.
    Therefore, we have $\inpr{p_\tau,\ell_\tau}=0$, $r_\tau(J_\tau^+)=-B$, and $r_\tau(J_\tau^-)=B$, whereas all other instantaneous regrets vanish, and
    \[
        \bar{v}_\tau=\frac{2B^2}{K}=b_{\mathrm{sp}}^2.
    \]
    It follows that $b_\tau(J_\tau^+)=b_\tau(J_\tau^-)=B$, $b_\tau(i)=b_{\mathrm{sp}}$ at every other coordinate, and $b_t(i)\geq b_{\mathrm{sp}}$ for every $t\geq\tau$ and $i\in[K]$.
    For OMD, since $\beta_{\mathrm{O}}=\sqrt{\log(KT)}$, we have $1/\eta_{\tau,J_\tau^+}=1/\eta_{\tau,J_\tau^-}=B/\sqrt{\log(KT)}<B$, so both nonzero instantaneous regrets are truncated and $\tilde{\ell}_\tau=0$.
    For FTRL, $b_{\tau-1}(i)=0$ for every $i\in[K]$, so the rescaling factor $b_{\tau-1}(i)/b_\tau(i)$ also gives $\tilde{\ell}_\tau=0$.
    Since $\tilde{\ell}_\tau=0$ and all losses after round $\tau$ in the first phase vanish on $\mathcal E$, neither algorithm updates its prediction during the first phase. Hence $p_{m+1}=u$, whereas $b_m(i)\geq b_{\mathrm{sp}}$ for every $i\in[K]$.

    The spike-induced scale satisfies
    \[
        b_{\mathrm{sp}}=2^{\frac{1}{2}-\frac{1}{1+\epsilon}}\theta^{\frac{1}{1+\epsilon}}m^{\frac{1}{1+\epsilon}}K^{\frac{1-\epsilon}{2(1+\epsilon)}}=\Theta\prn[\big]{(\theta T)^{\frac{1}{1+\epsilon}}K^{\frac{1-\epsilon}{2(1+\epsilon)}}}=\Theta(S).
    \]
    By choosing $c_{\mathrm{gap}}$ sufficiently small, we therefore have
    \[
        \delta\leq\frac{b_{\mathrm{sp}}}{\sqrt{\log(KT)}}\leq\frac{b_{\mathrm{sp}}}{\sqrt{\log K}}.
    \]

    We first consider OMD.
    Its learning rate is $\eta_{t,i}=\sqrt{\log(KT)}/b_t(i)$, and the Bregman divergence induced by its multi-scale entropy is
    \[
        D_t(p;q)=\frac{1}{\sqrt{\log(KT)}}\sum_{i=1}^Kb_t(i)\brk*{p(i)\log\frac{p(i)}{q(i)}-p(i)+q(i)}.
    \]
    During the second phase, we have $r_t(i^\ast)=\delta\prn*{1-p_t(i^\ast)}$ and $r_t(i)=-\delta p_t(i^\ast)$ for every $i\neq i^\ast$.
    Since $\delta\leq b_{\mathrm{sp}}/\sqrt{\log(KT)}\leq b_t(i)/\sqrt{\log(KT)}=1/\eta_{t,i}$, no second-phase instantaneous regret is truncated, and thus $\tilde{\ell}_t=-r_t$ and $\max_i\tilde{\ell}_t(i)-\min_i\tilde{\ell}_t(i)=\delta$.
    The optimality of the OMD update, the lower bound $b_t(i)\geq b_{\mathrm{sp}}$, and Pinsker's inequality give
    \begin{align*}
        \frac{b_{\mathrm{sp}}}{2\sqrt{\log(KT)}}\nrm{p_{t+1}-p_t}_1^2
        &\leq D_t(p_{t+1};p_t)\leq\inpr{p_t-p_{t+1},\tilde{\ell}_t}\leq\frac{\delta}{2}\nrm{p_{t+1}-p_t}_1.
    \end{align*}
    Consequently, we have
    \[
        \nrm{p_{t+1}-p_t}_1\leq\frac{\delta\sqrt{\log(KT)}}{b_{\mathrm{sp}}}\quad\text{and}\quad p_{t+1}(i^\ast)-p_t(i^\ast)\leq\frac{\delta\sqrt{\log(KT)}}{2b_{\mathrm{sp}}}.
    \]
    Starting from $p_{m+1}(i^\ast)=1/K$, at least $\frac{(1-2/K)b_{\mathrm{sp}}}{\delta\sqrt{\log(KT)}}$ second-phase updates are required for the optimal coordinate to reach $1/2$, up to integer rounding.
    Before the optimal coordinate reaches $1/2$, every round incurs regret at least $\delta/2$ against expert $i^\ast$.
    Since $K\geq16$, there is a universal constant $c_0>0$ such that, conditionally on $\mathcal{E}$, the second-phase regret of OMD is at least $c_0\min\set*{\delta(T-m),\frac{b_{\mathrm{sp}}}{\sqrt{\log(KT)}}}$.

    We now consider FTRL, for which $\beta_{\mathrm{F}}=\sqrt{\log K}$ and the Bregman divergence is
    \[
        D_t(p;q)=\frac{1}{\sqrt{\log K}}\sum_{i=1}^Kb_t(i)\brk*{p(i)\log\frac{p(i)}{q(i)}-p(i)+q(i)}.
    \]
    For every second-phase round, let $a_t(i)=b_{t-1}(i)/b_t(i)\in[0,1]$.
    Again, the condition $\delta\leq b_{\mathrm{sp}}/\sqrt{\log K}$ prevents truncation, and the FTRL surrogate loss satisfies $\tilde{\ell}_t(i^\ast)=-a_t(i^\ast)\delta\prn*{1-p_t(i^\ast)}$ and $\tilde{\ell}_t(i)=a_t(i)\delta p_t(i^\ast)$ for every $i\neq i^\ast$.
    Since $a_t(i)\in[0,1]$, we have $-\delta\prn*{1-p_t(i^\ast)}\leq\tilde{\ell}_t(i^\ast)\leq0\leq\tilde{\ell}_t(i)\leq\delta p_t(i^\ast)$ for every $i\neq i^\ast$, and hence $\max_i\tilde{\ell}_t(i)-\min_i\tilde{\ell}_t(i)\leq\delta\prn*{1-p_t(i^\ast)}+\delta p_t(i^\ast)=\delta$.
    For $n\geq0$, define $L_0=0$, $L_n=\sum_{s=m+1}^{m+n}\tilde{\ell}_s$, and $p^{(n)}=p_{m+n+1}$.
    The preceding one-step range bound implies $\max_iL_n(i)-\min_iL_n(i)\leq n\delta$.
    The FTRL update gives
    \[
        p^{(n)}\in\argmin_{p\in\Delta_K}\set*{\inpr{p,L_n}+D_{m+n}(p;u)}.
    \]
    Comparing its objective value with that of $u$ and using the range bound yields
    \[
        D_{m+n}(p^{(n)};u)\leq\inpr{u-p^{(n)},L_n}\leq n\delta.
    \]
    On the other hand, $x\log(x/y)-x+y\geq0$ for all $x,y\geq0$, with the standard boundary conventions.
    Since $p^{(n)},u\in\Delta_K$, we also have $\sum_{i=1}^K\prn*{p^{(n)}(i)\log\frac{p^{(n)}(i)}{u(i)}-p^{(n)}(i)+u(i)}=\KL(p^{(n)};u)$.
    Therefore, the bound $b_{m+n}(i)\geq b_{\mathrm{sp}}$ implies
    \[
        D_{m+n}(p^{(n)};u)\geq\frac{b_{\mathrm{sp}}}{\sqrt{\log K}}\KL(p^{(n)};u).
    \]
    For $K\geq16$, the log-sum inequality gives
    \[
        p^{(n)}(i^\ast)\geq\frac{1}{2}\quad\Longrightarrow\quad\KL(p^{(n)};u)\geq\frac{1}{4}\log K.
    \]
    Therefore, $p^{(n)}(i^\ast)$ cannot reach $1/2$ unless
    \[
        n\delta\geq\frac{b_{\mathrm{sp}}\sqrt{\log K}}{4}.
    \]
    Each preceding second-phase round incurs regret at least $\delta/2$, so, conditionally on $\mathcal{E}$, the second-phase regret of FTRL is at least
    \[
        c_0\min\set*{\delta(T-m),b_{\mathrm{sp}}\sqrt{\log K}}.
    \]

    Finally, the expected first-phase regret against expert $i^\ast$ is zero because every first-phase loss vector has conditional mean zero, whereas the second-phase regret is nonnegative on every sample path.
    Since $T-m=\Theta(T)$, $b_{\mathrm{sp}}=\Theta(S)$, and $\delta=c_{\mathrm{gap}}\min\set{\theta^{\frac{1}{1+\epsilon}},S/\sqrt{\log(KT)}}$, multiplying the two conditional lower bounds by $\P(\mathcal{E})\geq e^{-1}$ gives
    \[
        \mathcal{R}_T^{\mathrm{OMD}}=\Omega\prn*{\min\set*{\theta^{\frac{1}{1+\epsilon}}T,\frac{S}{\sqrt{\log(KT)}}}}
    \]
    and
    \[
        \mathcal{R}_T^{\mathrm{FTRL}}=\Omega\prn*{\min\set*{\theta^{\frac{1}{1+\epsilon}}T,\frac{ST}{\sqrt{\log(KT)}},S\sqrt{\log K}}}.
    \]
    Under the condition $T^{\frac{\epsilon}{1+\epsilon}}\geq cK^{\frac{1-\epsilon}{2(1+\epsilon)}}\sqrt{\log K}$ and for $K\geq16$, the preceding bounds yield the simplified bounds.
\end{proof}

\subsection{Upper bound by \texorpdfstring{\citet[Algorithm 2]{NEURIPS2025_Moulin_Expert_HTEXP}}{Moulin et al. (2025)}}
In this section, we analyze the regret upper bound obtained by the result of \citet{NEURIPS2025_Moulin_Expert_HTEXP} under \cref{asp:unknown-epsilon_wise}.
Although \citet{NEURIPS2025_Moulin_Expert_HTEXP} establish their guarantee only for $\epsilon=1$, the same algorithm can be analyzed under \cref{asp:unknown-epsilon_wise} to yield an upper bound that is sublinear in both $T$ and $K$ for every fixed $\epsilon\in(0,1)$.
However, the resulting bound is suboptimal in its dependence on $K$.

\begin{corollary}
    \label{cor:moulin-ftrl-componentwise}
    Suppose that $K\geq2$ and \cref{asp:unknown-epsilon_wise} holds.
    \citet[Algorithm 2]{NEURIPS2025_Moulin_Expert_HTEXP} with $\beta=\sqrt{\log K}$ achieves the regret upper bound
    \[
        \mathcal{R}_T
        \leq2(\theta T)^{\frac{1}{1+\epsilon}}\brk*{3\sqrt{\log K}+\prn*{\frac{5}{\sqrt{\log K}}+6\sqrt{\log K}}K^{\frac{1-\epsilon}{2(1+\epsilon)}}+\frac{1}{\sqrt{\log K}}}.
    \]
    In particular, the algorithm requires no prior knowledge of $\epsilon$ or $\theta$.
\end{corollary}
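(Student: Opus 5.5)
We will reuse the pathwise analysis of \citet{NEURIPS2025_Moulin_Expert_HTEXP} for LoOT-Free FTRL, which is stated without any distributional assumption. That analysis bounds the cumulative regret against any expert $i$ by a constant multiple of
\[
  \beta\,\E\brk{\sqrt{\bar V_T}}+\frac{\log K}{\beta}\,\E\brk{\sqrt{\bar V_T}}+\frac{1}{\beta}\E\brk{\sqrt{V_T(i)}}+(\text{truncation and averaged-coordinate terms}),
\]
with all terms expressed through $\sqrt{\bar V_T}$, $\sqrt{V_T(i)}$, and $\frac1K\sum_j\sqrt{V_T(j)}$. Their truncation terms are already bounded pathwise by these square-root scales, since a truncated entry satisfies $\abs{r_t(i)}>b_t(i)/\beta$ and hence $\abs{r_t(i)}\le\beta r_t(i)^2/b_t(i)$; a telescoping sum over the increments of $b_t(i)^2$ then gives $O(\beta\, b_T(i))$. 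So the first step is to take their pathwise inequality as given with $\beta=\sqrt{\log K}$. The only place $\epsilon=1$ entered their proof was the final expectation bound $\E[V_T(i)]\le 4\theta T$. What remains is to bound $\E\sqrt{V_T(i)}$ and $\E\sqrt{\bar V_T}$ under \cref{asp:unknown-epsilon_wise}.

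The key estimate uses $\nrm{x}_2\le\nrm{x}_{1+\epsilon}$ coordinatewise in time, i.e.
\[
  \sqrt{V_T(i)}=\Bigl(\sum_t r_t(i)^2\Bigr)^{1/2}\le\Bigl(\sum_t\abs{r_t(i)}^{1+\epsilon}\Bigr)^{\frac1{1+\epsilon}}.
\]
Jensen then gives $\E\sqrt{V_T(i)}\le(\E\sum_t\abs{r_t(i)}^{1+\epsilon})^{1/(1+\epsilon)}$. Since $\abs{r_t(i)}\le\abs{\inpr{p_t,\ell_t}}+\abs{\ell_t(i)}$ and $p_t$ is $\mathcal F_{t-1}$-measurable, convexity yields $\E_t\abs{r_t(i)}^{1+\epsilon}\le 2^{\epsilon}\cdot 2\theta$. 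Hence $\E\sqrt{V_T(i)}\le 2(\theta T)^{1/(1+\epsilon)}$, with no $K$ dependence.

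The main obstacle is $\bar V_T=\sum_t\sum_j p_t(j)r_t(j)^2$, which is a mixture, so the same trick does not apply directly. We handle it in three steps.
\begin{itemize}
  \item[(a)] Bound $\bar V_T\le\sum_j V_T(j)$, so that $\sqrt{\bar V_T}\le\sqrt{\sum_j V_T(j)}\le\sum_j\sqrt{V_T(j)}$. This step alone is too lossy, costing a factor of $K$.
  \item[(b)] Instead, apply the $\ell_2\le\ell_{1+\epsilon}$ inequality to the full array $(r_t(j)\sqrt{p_t(j)})_{t,j}$:
  \[
    \sqrt{\bar V_T}\le\Bigl(\sum_{t,j}p_t(j)^{\frac{1+\epsilon}2}\abs{r_t(j)}^{1+\epsilon}\Bigr)^{\frac1{1+\epsilon}}.
  \]
  \item[(c)] Use Hölder on $\sum_j p_t(j)^{(1+\epsilon)/2}$ with exponents matched to the simplex, $\sum_j p_t(j)^{(1+\epsilon)/2}\le K^{(1-\epsilon)/2}$. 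After taking $\E_t$ this gives $\E\sqrt{\bar V_T}\le 2(\theta T)^{1/(1+\epsilon)}K^{\frac{1-\epsilon}{2(1+\epsilon)}}$.
\end{itemize}
This is exactly the polynomial factor appearing in the statement, and it matches the lower bound of \cref{prop:moulin-balanced-sparse-spike}.

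Finally, we plug these bounds into the pathwise inequality with $\beta=\sqrt{\log K}$ and collect constants. The $\bar V_T$-terms carry $\sqrt{\log K}$ and $1/\sqrt{\log K}$ coefficients times $K^{\frac{1-\epsilon}{2(1+\epsilon)}}$, while the $V_T(i^\ast)$ and averaged $V_T(j)$ terms give the $3\sqrt{\log K}$ and $1/\sqrt{\log K}$ pieces. Since the algorithm never uses $\epsilon$ or $\theta$, no prior knowledge is needed.
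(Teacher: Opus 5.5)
Your proposal is correct and follows the paper's proof essentially step for step. You invoke the pathwise bound of \citet[Theorem C.1]{NEURIPS2025_Moulin_Expert_HTEXP}, bound $\E\sqrt{V_T(i)}$ via $\ell_2\le\ell_{1+\epsilon}$ and Jensen, and bound $\E\sqrt{\bar V_T}$ via $\sum_j p_t(j)^{\frac{1+\epsilon}{2}}\le K^{\frac{1-\epsilon}{2}}$; your single $\ell_2\le\ell_{1+\epsilon}$ step on the $(t,j)$ array is just the paper's subadditivity of $x\mapsto x^{\frac{1+\epsilon}{2}}$ followed by norm monotonicity in $t$.

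One thing to tighten: your schematic of the cited inequality is inaccurate. The stated constants come from its exact form $(\frac{\log K}{\beta}+2\beta)\sqrt{V_T(i^\ast)}+(\frac{5+\log K}{\beta}+5\beta)\sqrt{\bar V_T}+\frac{1}{\beta K}\sum_i\sqrt{V_T(i)}$, in which $\log K/\beta$ also multiplies $\sqrt{V_T(i^\ast)}$. Quote that form explicitly, as your final constant accounting already implicitly does.
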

This upper bound matches the simplified algorithm-specific lower bound for LoOT-Free FTRL in \cref{prop:moulin-balanced-sparse-spike} up to constant factors.
\begin{proof}
    Recall that $v_t(i)=r_t(i)^2$, $\bar{v}_t=\sum_{i=1}^Kp_t(i)v_t(i)$, $V_T(i)=\sum_{t=1}^Tv_t(i)$, and $\bar{V}_T=\sum_{t=1}^T\bar{v}_t$.
    \citet[Theorem C.1]{NEURIPS2025_Moulin_Expert_HTEXP} states that, for every $i^\ast\in[K]$ and every $\beta>0$, the following bound holds:
    \[
        \sum_{t=1}^T\inpr{p_t-e_{i^\ast},\ell_t}
        \leq\prn*{\frac{\log K}{\beta}+2\beta}\sqrt{V_T(i^\ast)}+\prn*{\frac{5+\log K}{\beta}+5\beta}\sqrt{\bar{V}_T}+\frac{1}{\beta K}\sum_{i=1}^K\sqrt{V_T(i)}.
    \]
    Since $1+\epsilon\leq2$, monotonicity of finite-dimensional norms, Jensen's inequality, and \cref{lem:known-parameter-centered-moments} imply, for every $i\in[K]$,
    \begin{equation*}
        \E\brk*{\sqrt{V_T(i)}}
        \leq\E\brk*{\prn*{\sum_{t=1}^T\abs{r_t(i)}^{1+\epsilon}}^{\frac{1}{1+\epsilon}}}
        \leq\prn*{\sum_{t=1}^T\E\brk*{\abs{r_t(i)}^{1+\epsilon}}}^{\frac{1}{1+\epsilon}}
        \leq2(\theta T)^{\frac{1}{1+\epsilon}}.
    \end{equation*}
    To control $\E[\sqrt{\bar{V}_T}]$, the subadditivity of $x\mapsto x^{\frac{1+\epsilon}{2}}$ and \cref{lem:known-parameter-centered-moments} give
    \begin{equation*}
        \E_t\brk*{\bar{v}_t^{\frac{1+\epsilon}{2}}}
        \leq\sum_{i=1}^Kp_t(i)^{\frac{1+\epsilon}{2}}\E_t\brk*{\abs{r_t(i)}^{1+\epsilon}}
        \leq2^{1+\epsilon}\theta\sum_{i=1}^Kp_t(i)^{\frac{1+\epsilon}{2}}
        \leq2^{1+\epsilon}\theta K^{\frac{1-\epsilon}{2}},
    \end{equation*}
    where we used Jensen's inequality in the last inequality.
    Applying norm monotonicity and Jensen's inequality once more yields
    \begin{equation*}
        \E\brk*{\sqrt{\bar{V}_T}}
        \leq\E\brk*{\prn*{\sum_{t=1}^T\bar{v}_t^{\frac{1+\epsilon}{2}}}^{\frac{1}{1+\epsilon}}}
        \leq\prn*{\sum_{t=1}^T\E\brk*{\bar{v}_t^{\frac{1+\epsilon}{2}}}}^{\frac{1}{1+\epsilon}}
        \leq2(\theta T)^{\frac{1}{1+\epsilon}}K^{\frac{1-\epsilon}{2(1+\epsilon)}}.
    \end{equation*}
    Taking expectations and choosing $i^\ast\in\argmax_{i\in[K]}\mathcal{R}_T(i)$ therefore give, for every $\beta>0$,
    \[
        \mathcal{R}_T
        \leq2(\theta T)^{\frac{1}{1+\epsilon}}\brk*{\frac{\log K}{\beta}+2\beta+\prn*{\frac{5+\log K}{\beta}+5\beta}K^{\frac{1-\epsilon}{2(1+\epsilon)}}+\frac{1}{\beta}}.
    \]
    Substituting $\beta=\sqrt{\log K}$ proves the explicit bound.
\end{proof}

\subsection{A Natural Extension of \texorpdfstring{\citet[Algorithm 2]{NEURIPS2025_Moulin_Expert_HTEXP}}{Moulin et al. (2025)}}
\label{app:subsec:epsilon_known_moulin}
In this section, we show how a natural extension of \citet[Algorithm 2]{NEURIPS2025_Moulin_Expert_HTEXP} achieves the optimal gap-independent regret bound when $\epsilon$ is known.

The suboptimal dependence on $K$ in \cref{cor:moulin-ftrl-componentwise} is caused by using quadratic scale estimates under a $(1+\epsilon)$-moment condition, rather than by the FTRL structure itself.
When $\epsilon$ is known, a direct extension of LoOT-Free FTRL removes this dependence.

Specifically, after computing the instantaneous regret $r_t(i)=\inpr{p_t,\ell_t}-\ell_t(i)$, replace the quadratic quantities in the original algorithm with
\[
    v_t(i)=\abs{r_t(i)}^{1+\epsilon},\qquad \bar{v}_t=\sum_{i=1}^Kp_t(i)v_t(i),\quad\text{and}\quad b_t(i)=\max\set*{\sum_{s=1}^t\bar{v}_s,\sum_{s=1}^tv_s(i)}^{\frac{1}{1+\epsilon}}.
\]
The learning rate retains the form $\eta_{t,i}=\beta_{\mathrm{F}}/b_t(i)$, but the new definition of $b_t(i)$ makes its scale appropriate for the general moment condition.
The truncation and rescaling step is otherwise unchanged: use $\tilde{\ell}_t(i)=-r_t(i)\frac{b_{t-1}(i)}{b_t(i)}\ind\brk*{\abs{r_t(i)}\leq1/\eta_{t,i}}$, and perform the same multi-scale entropic FTRL update as in \cref{alg:loot-free-variants-lower-bound}.
Thus, the modification only changes the empirical scale from a cumulative second moment to a cumulative $(1+\epsilon)$-moment and changes the corresponding root from $1/2$ to $1/(1+\epsilon)$.

We briefly outline the corresponding analysis.
The quadratic self-normalized sums in the original proof are replaced by $(1+\epsilon)$-moment sums, using bounds of the form $\sum_{t=1}^Tx_t/(\sum_{s=1}^tx_s)^{\frac{\epsilon}{1+\epsilon}}\leq(1+\epsilon)(\sum_{t=1}^Tx_t)^{\frac{1}{1+\epsilon}}$.
Moreover, \cref{asp:unknown-epsilon_wise} gives $\E_t[\abs{r_t(i)}^{1+\epsilon}]\leq2^{1+\epsilon}\theta$ for every expert $i$ and round $t$.
Therefore, for every $i\in[K]$, both $\E\brk{(\sum_{t=1}^T\bar{v}_t)^{\frac{1}{1+\epsilon}}}$ and $\E\brk{(\sum_{t=1}^Tv_t(i))^{\frac{1}{1+\epsilon}}}$ are $O\prn{(\theta T)^{\frac{1}{1+\epsilon}}}$.
Consequently, choosing $\beta_{\mathrm{F}}=(\log K)^{\frac{1}{1+\epsilon}}$ yields
\[
    \mathcal{R}_T=O\prn*{(\theta T)^{\frac{1}{1+\epsilon}}(\log K)^{\frac{\epsilon}{1+\epsilon}}}.
\]
This matches the gap-independent lower bound in \cref{cor:lower_bound_gap_independent}.
The algorithm requires knowledge of $\epsilon$ to form $v_t(i)$ and choose $\beta_{\mathrm{F}}$, but it does not require knowledge of $\theta$.

Thus, when $\epsilon$ is known, replacing the quadratic scale used in the original LoOT-Free FTRL algorithm with a scale adapted to the $(1+\epsilon)$-moment condition removes the additional polynomial dependence on $K$ appearing in \cref{cor:moulin-ftrl-componentwise}.
This shows that the additional dependence is caused by the mismatch between the quadratic scale and \cref{asp:unknown-epsilon_wise}, rather than by the FTRL framework itself.

\section{Lower Bounds}
\label{app:lower_bound}
In this section, we prove \cref{thm:lower_bound_gap_dependent,thm:lower_bound_self_bounded_simple} in \cref{sec:lower_bound}.
We first introduce a family of rare-spike instances and a common prefix lower bound used in the proofs of the theorems below.

For fixed $\epsilon\in(0,1]$ and $\theta>0$, define $q\colon[0,\infty)\to[0,\infty)$ by $q(x)=\theta^{-\frac{1}{\epsilon}}x^{1+\frac{1}{\epsilon}}$.
For $\delta>0$ satisfying $q(\delta)\leq1/8$ and $i^\ast\in[K]$, let $\mathsf{I}(\delta,i^\ast)$ denote the i.i.d.\ instance in which expert $i^\ast$ always incurs zero loss, whereas every expert $i\neq i^\ast$ independently incurs $B_\delta=(\theta/\delta)^{\frac{1}{\epsilon}}$ with probability $q(\delta)$ and zero otherwise.
Thus, every suboptimal expert has expected gap $\delta$ and $(1+\epsilon)$-moment $\theta$ under $\mathsf{I}(\delta,i^\ast)$.

\begin{lemma}
    \label{lem:heavy-tailed-prefix-lower-bound}
    Fix $\epsilon\in(0,1]$, $\theta>0$, $K\geq4$, and $\delta>0$.
    Suppose that $q(\delta)\leq1/8$ and that an integer $n\geq1$ satisfies $nq(\delta)\leq(\log K)/8$.
    Then, for any algorithm, there exists $i^\ast\in[K]$ such that its regret over the first $n$ rounds of $\mathsf{I}(\delta,i^\ast)$ satisfies $\mathcal{R}_n(i^\ast)\geq\delta n/8$.
\end{lemma}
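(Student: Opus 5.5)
Throughout, write $q=q(\delta)$. The argument is a Fano-type comparison with a symmetric baseline, following \citet{mourtada2019optimality_expert}.

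\textbf{Step 1 (reduction).} Under $\mathsf{I}(\delta,i^\ast)$ the losses are i.i.d.\ and independent of the learner, and every suboptimal expert has expected gap $\delta$. Hence
\[
\mathcal{R}_n(i^\ast)=\delta\sum_{t=1}^n\E_{i^\ast}[1-p_t(i^\ast)].
\]
It therefore suffices to find some $i^\ast$ with $\sum_{t=1}^n\E_{i^\ast}[p_t(i^\ast)]\le 7n/8$.

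\textbf{Step 2 (baseline and KL).} Let $\P_0$ be the symmetric environment in which every expert, including $i^\ast$, independently incurs $B_\delta$ with probability $q$. Each $p_t$ is a function of $\ell_1,\dots,\ell_{t-1}$ and the internal randomness. So for $t\le n$, the law of $p_t$ under each environment is a pushforward of the law of $\ell_{1:n}$ (together with the same internal randomization).

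Environment $i$ differs from $\P_0$ only in coordinate $i$, which is $\mathrm{Bernoulli}(0)$ instead of $\mathrm{Bernoulli}(q)$. Compute
\[
\KL(\P_i;\P_0)=n\log\frac{1}{1-q}\le 2nq \qquad (q\le 1/8).
\]
By the hypothesis, this is at most $(\log K)/4$.

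\textbf{Step 3 (Fano / data processing).} Pick a uniformly random time $\tau\in[n]$ and form $\bar p=\frac1n\sum_t p_t$. Under $\P_0$, symmetry among experts gives $\sum_i\E_0[\bar p(i)]=1$, so the average over $i$ of $\E_0[\bar p(i)]$ is $1/K$.

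I would apply a Fano-type inequality in the form used by Mourtada--Gaïffas, or alternatively the data-processing/Pinsker bound for the $[0,1]$-valued statistic $\bar p(i)$:
\[
\E_i[\bar p(i)]\le \E_0[\bar p(i)]+\sqrt{\KL(\P_i;\P_0)/2}.
\]
The Pinsker route gives only $\sqrt{\log K/8}$, which is too weak. I would therefore use the sharper Bernoulli data-processing bound
\[
\mathrm{kl}\bigl(\E_0\bar p(i),\,\E_i\bar p(i)\bigr)\le \KL(\P_0;\P_i).
\]
Here I must be careful about direction: $\KL(\P_0;\P_i)$ is infinite because $\P_i$ forbids a loss on $i$. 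So use $\mathrm{kl}(\E_i\bar p(i),\E_0\bar p(i))\le\KL(\P_i;\P_0)\le(\log K)/4$.

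Average over $i$ and use convexity of $\mathrm{kl}$ to get
\[
\mathrm{kl}(\bar a,\bar b)\le(\log K)/4,
\]
where $\bar a=\frac1K\sum_i\E_i\bar p(i)$ and $\bar b=1/K$. Suppose, for contradiction, that $\bar a>7/8$. Then
\[
\mathrm{kl}(\bar a,1/K)\ge \bar a\log(\bar aK)-\log 2\ge \tfrac78\log(7K/8)-\log2,
\]
which exceeds $(\log K)/4$ for $K\ge4$. This needs a quick numerical check at $K=4$; if the constant is tight there, I would adjust the threshold to $1-1/8$ via the exact $\mathrm{kl}$ bound. Hence some $i^\ast$ has $\E_{i^\ast}[\bar p(i^\ast)]\le7/8$, and Step 1 gives $\mathcal{R}_n(i^\ast)\ge\delta n/8$.

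\textbf{Main obstacle.} The main obstacle is getting the constants to close at small $K$ (for example $K=4$) with threshold $7/8$. The fallback is to use the Fano form $\mathrm{kl}(\bar a,1/K)\ge \bar a\log K-\log2$ and check it.
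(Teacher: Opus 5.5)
Your argument is correct. It shares the paper's reduction, the symmetric baseline, and the bound $n\log\frac{1}{1-q}\le 2nq\le\frac{\log K}{4}$, but it replaces the paper's information-theoretic step with a different one.

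The paper follows \citet{mourtada2019optimality_expert} literally. It adds auxiliary uniforms to sample actions $I_t\sim p_t$ and uses the pairwise disjoint events $\mathcal{E}_i=\{N_i>n/2\}$. Fano's inequality for disjoint events then gives $\frac1K\sum_i\tilde{\mathbb{P}}_i(\mathcal{E}_i)\le\frac14+\frac{\log 2}{\log K}\le\frac34$. Converting back through $\sum_t\mathbb{E}_i[1-p_t(i)]\ge\frac n2\bigl(1-\tilde{\mathbb{P}}_i(\mathcal{E}_i)\bigr)$ costs a factor $1/2$.

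You instead apply the data-processing inequality $\mathrm{kl}(\mathbb{E}_P Z,\mathbb{E}_Q Z)\le\mathrm{KL}(P;Q)$ for $[0,1]$-valued $Z$ (Garivier, M\'enard, and Stoltz) to $Z=\bar p(i)$, and then average using joint convexity of $\mathrm{kl}$. This works directly with the fractional weights, needs no sampled actions or disjoint events, and loses nothing when converting back, since $\sum_t\mathbb{E}_i[1-p_t(i)]=n(1-a_i)$ exactly.

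The constants you were unsure about close with room to spare. Drop the nonnegative term $(1-a)\log\frac{1}{1-1/K}$ and bound the binary entropy by $\log 2$ to get $\mathrm{kl}(a,1/K)\ge a\log K-\log 2$. Hence $\bar a\le\frac14+\frac{\log 2}{\log K}\le\frac34$ for $K\ge4$. Your $a\log(aK)-\log 2$ variant also suffices, since $\frac78\log\frac72-\log 2\approx0.40>\frac14\log 4\approx0.35$ and the difference increases in $K$.

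Your route therefore actually yields $\mathcal{R}_n(i^\ast)\ge\delta n/4$, twice the paper's constant. The paper's route has the advantage of invoking an off-the-shelf Fano statement exactly as in the cited work. Yours relies on the $[0,1]$-valued data-processing lemma and joint convexity of $\mathrm{kl}$, which are standard but should be cited.

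Minor points:
\begin{itemize}
\item $\sum_i\mathbb{E}_0[\bar p(i)]=1$ holds because $\bar p\in\Delta_K$, not by symmetry.
\item The random time $\tau$ is never used.
\item Adjusting the threshold to $1-1/8$ is the same threshold $7/8$.
\end{itemize}
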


\begin{proof}
    For notational simplicity, we consider a deterministic learner.
    An internal random seed for a randomized learner can be included in the common auxiliary distribution introduced below.
    For each $i^\ast\in[K]$, let $\P_{i^\ast}$ denote the distribution of the first $n$ rounds of $\mathsf{I}(\delta,i^\ast)$, and let $\mathbb{Q}$ denote the symmetric distribution under which every coordinate is independently equal to $B_\delta$ with probability $q(\delta)$ and zero otherwise.
    Since $q(\delta)B_\delta=\delta$, we have
    \[ \mathcal{R}_n(i^\ast)=\delta\sum_{t=1}^n\E_{i^\ast}\brk*{1-p_t(i^\ast)}. \]
    Following the proof of \citet[Proposition 2]{mourtada2019optimality_expert}, introduce independent uniform random variables $U_1,\dots,U_n$ on $[0,1]$ and, for every $p\in\Delta_K$, a measurable map $I_p\colon[0,1]\to[K]$ such that $I_p(U)=i$ with probability $p(i)$ when $U$ is uniform on $[0,1]$.
    Define $I_t=I_{p_t}(U_t)$, $N_i=\sum_{t=1}^n\ind\brk{I_t=i}$, and $\mathcal{E}_i=\set*{N_i>n/2}$, and let $\tilde{\P}_{i^\ast}$ and $\tilde{\mathbb{Q}}$ denote the respective distributions augmented by the variables $U_1,\dots,U_n$.
    The events $\mathcal{E}_1,\dots,\mathcal{E}_K$ are pairwise disjoint, and
    \[ \sum_{t=1}^n\E_{i^\ast}\brk*{1-p_t(i^\ast)}\geq\frac{n}{2}\prn*{1-\tilde{\P}_{i^\ast}\prn*{\mathcal{E}_{i^\ast}}}. \]
    Moreover, we have
    \begin{align*}
        \mathrm{KL}\prn{\tilde{\P}_{i^\ast},\tilde{\mathbb{Q}}}
        &=-n\log(1-q(\delta))
        \leq2nq(\delta)\leq\frac{\log K}{4}.
    \end{align*}
    The form of Fano's inequality used by \citet[Proposition 2]{mourtada2019optimality_expert} and $K\geq4$ therefore give
    \[ \frac{1}{K}\sum_{i^\ast=1}^K\tilde{\P}_{i^\ast}\prn*{\mathcal{E}_{i^\ast}}\leq\frac{3}{4}. \]
    Averaging the preceding regret lower bound over $i^\ast\in[K]$ proves that $\sup_{i^\ast\in[K]}\mathcal{R}_n(i^\ast)\geq\delta n/8$.
\end{proof}

\subsection{Proof of \cref{thm:lower_bound_gap_dependent}}
\label{app:subsec:lower_bound_gap_dependent}
We provide the proof of \cref{thm:lower_bound_gap_dependent}.
\begin{proof}
    Set
    \[ T_0=\floor*{\frac{\log K}{8q(\Delta_{\min})}}=\floor*{\frac{\theta^{\frac{1}{\epsilon}}\log K}{8\Delta_{\min}^{1+\frac{1}{\epsilon}}}}. \]
    The condition on $\Delta_{\min}$ gives $q(\Delta_{\min})\leq1/8$, and the condition on $T$ gives $T_0\leq T$.
    Moreover, since $K\geq4$, the quantity inside the floor is at least one, and hence
    \[ T_0\geq\frac{\log K}{16q(\Delta_{\min})}=\frac{\theta^{\frac{1}{\epsilon}}\log K}{16\Delta_{\min}^{1+\frac{1}{\epsilon}}}. \]
    Applying \cref{lem:heavy-tailed-prefix-lower-bound} with $\delta=\Delta_{\min}$ and $n=T_0$ yields an expert $i^\ast\in[K]$ such that
    \[ \mathcal{R}_{T_0}(i^\ast)\geq\frac{\Delta_{\min}T_0}{8}\geq\frac{\theta^{\frac{1}{\epsilon}}\log K}{128\Delta_{\min}^{\frac{1}{\epsilon}}}. \]
    Every summand of the fixed-comparator regret under $\mathsf{I}(\Delta_{\min},i^\ast)$ is nonnegative, so $\mathcal{R}_T(i^\ast)\geq\mathcal{R}_{T_0}(i^\ast)$.
    By the definition of $\mathsf{I}(\Delta_{\min},i^\ast)$, its losses are nonnegative, every suboptimal expert has expected gap $\Delta_{\min}$ and $(1+\epsilon)$-moment $\theta$, and $i^\ast$ is uniquely optimal because it always incurs zero loss.
\end{proof}

\subsection{Lower bound in a self-bounded environment}
\label{app:lower_bound_self_bounded}
We next extend the preceding construction to self-bounded instances by adapting the construction of \citet[Theorem~5]{NEURIPS2021_ito_robustness_adversarial_decision} to heavy-tailed losses.

We can obtain the following lower bound in a self-bounded environment.
This is the formal version of \cref{thm:lower_bound_self_bounded_simple}.
\begin{theorem}
    \label{thm:lower_bound_self_bounded}
    Fix $\epsilon\in(0,1]$ and $\theta>0$.
    Let $\Delta_{\min}\in(0,8^{-\frac{\epsilon}{1+\epsilon}}\theta^{\frac{1}{1+\epsilon}}]$, $K\geq4$, $T\geq\log K$, and $C\geq0$.
    Then, for any algorithm, there exist a unique optimal expert $i^\ast\in[K]$ and a distribution over nonnegative loss sequences satisfying \cref{asp:unknown-epsilon_wise,asp:self_bounded} with parameters $\epsilon$, $\theta$, $\Delta_{\min}$, and $C$ such that
    \[ \mathcal{R}_T\geq c\min\set*{\prn*{\frac{\theta}{\Delta_{\min}}}^{\frac{1}{\epsilon}}\log K+\prn*{\frac{\theta C}{\Delta_{\min}}}^{\frac{1}{1+\epsilon}}\prn*{\log K}^{\frac{\epsilon}{1+\epsilon}},\prn*{\theta T}^{\frac{1}{1+\epsilon}}\prn*{\log K}^{\frac{\epsilon}{1+\epsilon}}}, \]
    where $c>0$ is a universal constant.
\end{theorem}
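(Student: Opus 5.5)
We split the target into two regimes and handle each with \cref{lem:heavy-tailed-prefix-lower-bound}. The pure gap term $(\theta/\Delta_{\min})^{\frac{1}{\epsilon}}\log K$ needs no corruption. Use the instance $\mathsf{I}(\Delta_{\min},i^\ast)$ with $n=\min\{T,T_0\}$, where $T_0=\floor{\log K/(8q(\Delta_{\min}))}$. For this i.i.d.\ instance, \cref{asp:self_bounded} holds with $C=0$, hence for any $C\geq0$. The lemma gives regret at least $\Delta_{\min}n/8$. If $T\geq T_0$, this is $\Omega((\theta/\Delta_{\min})^{\frac{1}{\epsilon}}\log K)$, as in the proof of \cref{thm:lower_bound_gap_dependent}. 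If $T<T_0$, we instead choose $\delta\geq\Delta_{\min}$ so that $Tq(\delta)=(\log K)/8$, namely $\delta=\theta^{\frac{1}{1+\epsilon}}(\log K/(8T))^{\frac{\epsilon}{1+\epsilon}}$. Since $T\geq\log K$, we have $q(\delta)\leq1/8$. The instance $\mathsf{I}(\delta,i^\ast)$ has gaps $\delta\geq\Delta_{\min}$ and satisfies the self-bounded condition. It yields regret $\Omega(\delta T)=\Omega((\theta T)^{\frac{1}{1+\epsilon}}(\log K)^{\frac{\epsilon}{1+\epsilon}})$, which is the second branch of the minimum.

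For the corruption term, we follow \citet[Theorem~5]{NEURIPS2021_ito_robustness_adversarial_decision}. Use a prefix of length $n$ generated by $\mathsf{I}(\delta,i^\ast)$ with a smaller gap $\delta<\Delta_{\min}$. After the prefix, switch to $\mathsf{I}(\Delta_{\min},i^\ast)$ (or any instance with gap $\Delta_{\min}$ and $i^\ast$ at zero loss). The self-bounded inequality then requires, over the prefix, $\delta\sum_{t\leq n}\E[1-p_t(i^\ast)]\geq\Delta_{\min}\sum_{t\leq n}\E[1-p_t(i^\ast)]-C$. This holds whenever $(\Delta_{\min}-\delta)n\leq C$, so it suffices that $\Delta_{\min}n\leq C$; take $n\approx C/\Delta_{\min}$. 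The later rounds satisfy the inequality with exact equality, and $i^\ast$ remains the unique minimizer of expected cumulative loss. Within the prefix, choose $\delta$ so that $nq(\delta)=(\log K)/8$, that is, $\delta^{\frac{1+\epsilon}{\epsilon}}=\theta^{\frac{1}{\epsilon}}\Delta_{\min}\log K/(8C)$. The lemma then gives regret at least $\delta n/8$. Substituting yields
\[
\delta n\asymp\frac{C}{\Delta_{\min}}\Bigl(\frac{\theta^{\frac{1}{\epsilon}}\Delta_{\min}\log K}{C}\Bigr)^{\frac{\epsilon}{1+\epsilon}}=\Bigl(\frac{\theta C}{\Delta_{\min}}\Bigr)^{\frac{1}{1+\epsilon}}(\log K)^{\frac{\epsilon}{1+\epsilon}}.
\]
All post-prefix summands are nonnegative, so the prefix bound transfers to $\mathcal{R}_T(i^\ast)$.

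The main obstacle is verifying the side conditions of this prefix step. They are $\delta\leq\Delta_{\min}$, $q(\delta)\leq 1/8$, and $n\leq T$. If $\delta$ computed above exceeds $\Delta_{\min}$, then $C$ is small, namely $C\lesssim(\theta/\Delta_{\min})^{\frac{1}{\epsilon}}\Delta_{\min}\log K$. In that case the corruption term is dominated by the pure gap term, and the first paragraph already suffices. If $n>T$, we cap $n=T$, and the resulting $\delta$ gives the $(\theta T)^{\frac{1}{1+\epsilon}}(\log K)^{\frac{\epsilon}{1+\epsilon}}$ branch as before. The condition $q(\delta)\leq1/8$ follows from $\delta\leq\Delta_{\min}$ together with the hypothesis on $\Delta_{\min}$.

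The final bound is the maximum of the two constructions. Since a maximum is at least half the sum, this gives the stated minimum of the sum and the minimax term, with a universal constant $c$. The simplified \cref{thm:lower_bound_self_bounded_simple} then follows: under its condition on $T$, the sum branch is the smaller one up to constants.
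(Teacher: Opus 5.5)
Your proposal is correct and takes essentially the paper's route. It uses the rare-spike instances $\mathsf{I}(\delta,i^\ast)$ with \cref{lem:heavy-tailed-prefix-lower-bound}, and a smaller-gap prefix of length about $C/\Delta_{\min}$ followed by gap $\Delta_{\min}$, as in Ito et al.'s Theorem~5; your two constructions and their side cases reproduce the paper's Cases (i)--(iv). The only cosmetic difference is that, when both the pure-gap and corruption bounds are available, you take the better instance (a maximum is at least half the sum), whereas the paper shows directly that $C/\Delta_{\min}\geq\log K/(8q(\Delta_{\min}))$ already makes the pure-gap term at most a constant times the corruption term, so one instance suffices.
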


\begin{proof}
    For each instance constructed below, let $\delta_t$ denote the common expected gap between the distinguished expert $i^\ast$ and every other expert on round $t$, so that $\E\brk{\ell_t(i)-\ell_t(i^\ast)}=\delta_t$ for every $i\neq i^\ast$.
    Define ${R}_0=\log K\prn*{\frac{\theta}{\Delta_{\min}}}^{\frac{1}{\epsilon}}$, ${R}_C=\prn*{\log K}^{\frac{\epsilon}{1+\epsilon}}\prn*{\frac{\theta C}{\Delta_{\min}}}^{\frac{1}{1+\epsilon}}$, and ${R}_{\mathrm{wc}}=\prn*{\theta T}^{\frac{1}{1+\epsilon}}\prn*{\log K}^{\frac{\epsilon}{1+\epsilon}}$.
    A direct calculation gives ${R}_C^{1+\epsilon}/{R}_0^{1+\epsilon}=C/{R}_0$.
    Also set $H={\log K}/{(8q(\Delta_{\min}))}$.
    The condition on $\Delta_{\min}$ ensures that $q(\Delta_{\min})\leq1/8$ and hence $H\geq\log K$.

    Following the four-case decomposition in \citet[Theorem~5]{NEURIPS2021_ito_robustness_adversarial_decision}, we consider the following four cases: (i) $T<H$, (ii) $C/\Delta_{\min}<H\leq T$, (iii) $H\leq C/\Delta_{\min}\leq T$, and (iv) $H\leq T<C/\Delta_{\min}$.\vspace{3pt}

    \noindent{(i): $T<H$.}
    Define $\delta_T=\theta^{\frac{1}{1+\epsilon}}\prn*{\frac{\log K}{8T}}^{\frac{\epsilon}{1+\epsilon}}$.
    Substituting the definition of $\delta_T$ into $q$ gives $q(\delta_T)=(\log K)/(8T)$.
    Hence, $q(\delta_T)\leq1/8$ follows from $T\geq\log K$, and $Tq(\delta_T)=(\log K)/8$.
    Moreover, $T<H=(\log K)/(8q(\Delta_{\min}))$ implies $q(\delta_T)>q(\Delta_{\min})$.
    Since $q$ is strictly increasing on $[0,\infty)$, it follows that $\delta_T>\Delta_{\min}$.
    Applying \cref{lem:heavy-tailed-prefix-lower-bound} to $\mathsf{I}(\delta_T,i^\ast)$ with $n=T$ gives an instance with $\mathcal{R}_T\geq\frac{\delta_TT}{8}=\Omega\prn{R_{\mathrm{wc}}}$.
    By the definition of $\mathsf{I}(\delta_T,i^\ast)$, its suboptimal experts have $(1+\epsilon)$-moment $\theta$, and the expert $i^\ast$ supplied by the lemma is uniquely optimal because it always incurs zero loss.
    Since its gap is at least $\Delta_{\min}$ at every round, this instance satisfies \cref{asp:self_bounded} with corruption budget zero, and hence also with budget $C$.\vspace{3pt}

    \noindent{(ii): $C/\Delta_{\min}<H\leq T$.}
    The instance $\mathsf{I}(\Delta_{\min},i^\ast)$ used in \cref{thm:lower_bound_gap_dependent} satisfies \cref{asp:self_bounded} with corruption budget zero and gives $\mathcal{R}_T=\Omega(R_0)$.
    Since $\Delta_{\min}H=R_0/8$, the condition $C/\Delta_{\min}<H$ implies $C<R_0/8$.
    The identity $R_C^{1+\epsilon}/R_0^{1+\epsilon}=C/R_0$ therefore implies $R_C<8^{-\frac{1}{1+\epsilon}}R_0\leq R_0$.
    Consequently, this construction gives $\mathcal{R}_T=\Omega(\min\set*{R_0+R_C,R_{\mathrm{wc}}})$.\vspace{3pt}

    \noindent{(iii): $H\leq C/\Delta_{\min}\leq T$.}
    Set $\tilde{T}=\floor*{\frac{C}{\Delta_{\min}}}$ and $\delta_C=\theta^{\frac{1}{1+\epsilon}}\prn*{\frac{\log K}{16\tilde{T}}}^{\frac{\epsilon}{1+\epsilon}}$.
    Since $C/\Delta_{\min}\geq H\geq\log K$, we have $\tilde{T}\geq C/(2\Delta_{\min})\geq(\log K)/2$, whereas $C/\Delta_{\min}\leq T$ gives $\tilde{T}\leq T$.

    Substituting the definition of $\delta_C$ into $q$ gives $q(\delta_C)=(\log K)/(16\tilde{T})\leq1/8$ and $\tilde{T}q(\delta_C)=(\log K)/16$.
    Use $\mathsf{I}(\delta_C,i^\ast)$ for the first $\tilde{T}$ rounds and $\mathsf{I}(\Delta_{\min},i^\ast)$ thereafter.
    The resulting losses satisfy \cref{asp:unknown-epsilon_wise} because, at every round, the $(1+\epsilon)$-moment of each suboptimal expert is exactly $\theta$, whereas the loss of the optimal expert is zero.

    Because the learner is non-anticipating, the joint law of the losses and the learner's actions over the first $\tilde{T}$ rounds coincides with that under the stationary instance $\mathsf{I}(\delta_C,i^\ast)$, so we can use \cref{lem:heavy-tailed-prefix-lower-bound}.
    On every round after $\tilde{T}$, expert $i^\ast$ incurs zero loss while all losses are nonnegative, so every remaining instantaneous regret term is nonnegative and $\mathcal{R}_T(i^\ast)\geq\mathcal{R}_{\tilde{T}}(i^\ast)$.
    Consequently, there is a choice of $i^\ast$ for which
    \begin{align*}
        \mathcal{R}_T(i^\ast)
        &\geq\mathcal{R}_{\tilde{T}}(i^\ast)\geq\frac{\delta_C\tilde{T}}{8}
        =\Omega\prn*{\theta^{\frac{1}{1+\epsilon}}\prn*{\log K}^{\frac{\epsilon}{1+\epsilon}}\tilde{T}^{\frac{1}{1+\epsilon}}}
        =\Omega(R_C),
    \end{align*}
    where the last equality uses $\tilde{T}\geq C/(2\Delta_{\min})$.
    For this instance, $\delta_t=\delta_C$ for $t\leq\tilde{T}$ and $\delta_t=\Delta_{\min}$ for $t>\tilde{T}$.
    Since the optimal expert always has zero loss and every other expert has expected loss $\delta_t$, we have
    \begin{align*}
        \mathcal{R}_T(i^\ast)
        &=\sum_{t=1}^T\delta_t\E\brk*{1-p_t(i^\ast)}
        \geq\Delta_{\min}\sum_{t=1}^T\E\brk*{1-p_t(i^\ast)}-\tilde{T}\prn*{\Delta_{\min}-\delta_C}_+\\
        &\geq\Delta_{\min}\sum_{t=1}^T\E\brk*{1-p_t(i^\ast)}-C,
    \end{align*}
    where the last inequality follows from $\tilde{T}\Delta_{\min}\leq C$.
    Thus, \cref{asp:self_bounded} holds with the above settings.
    Moreover, $i^\ast$ is uniquely optimal because it always incurs zero loss and every other expert has strictly positive expected loss at every round.
    The inequalities $C/\Delta_{\min}\geq H$ and $\Delta_{\min}H=R_0/8$ also imply $C/R_0\geq1/8$.
    Thus, the identity $R_C^{1+\epsilon}/R_0^{1+\epsilon}=C/R_0$ gives $R_0=O(R_C)$, which proves the claimed bound in this case.

    \noindent{(iv): $H\leq T<C/\Delta_{\min}$.}
    Define $\delta_T=\theta^{\frac{1}{1+\epsilon}}\prn*{\frac{\log K}{8T}}^{\frac{\epsilon}{1+\epsilon}}$.
    As in Case~(i), $q(\delta_T)=(\log K)/(8T)\leq1/8$ and $Tq(\delta_T)=(\log K)/8$.
    Applying \cref{lem:heavy-tailed-prefix-lower-bound} to $\mathsf{I}(\delta_T,i^\ast)$ with $n=T$ gives an instance satisfying \cref{asp:unknown-epsilon_wise} and $\mathcal{R}_T\geq\frac{\delta_TT}{8}=\Omega(R_{\mathrm{wc}})$.
    Its optimal expert always incurs zero loss and every other expert has expected loss $\delta_T>0$, so the optimal expert is unique.
    Furthermore, it holds that
    \begin{align*}
        \mathcal{R}_T(i^\ast)
        &=\delta_T\sum_{t=1}^T\E\brk*{1-p_t(i^\ast)}
        \geq\Delta_{\min}\sum_{t=1}^T\E\brk*{1-p_t(i^\ast)}-T\prn*{\Delta_{\min}-\delta_T}_+\\
        &\geq\Delta_{\min}\sum_{t=1}^T\E\brk*{1-p_t(i^\ast)}-C,
    \end{align*}
    where the last inequality follows from $T\Delta_{\min}<C$.
    Thus, \cref{asp:self_bounded} holds, and the worst-case term proves the claimed lower bound.

    Combining Cases~(i)--(iv) completes the proof.
\end{proof}

\section{Details of Algorithms}
\label{app:analysis_of_algorithm}
This section provides the detailed analyses of the algorithms in \cref{sec:algorithms}.
We first establish moment, truncation, and scale bounds shared by the analyses of all algorithms.
Afterward, we provide the specific analyses of \cref{alg:adaptive-multiscale-ftrl-componentwise_simple,alg:parameter-free-heavy-tailed-omd,alg:known-parameter-shannon-ftrl}.

\subsection{Common Analysis}
\label{app:subsec:common_analysis}
The analysis in this section has three components.
We first transfer the moment assumption on the losses to the instantaneous regrets, then bound the truncation cost, including the residual introduced by rescaling for FTRL, and finally control the resulting adaptive scales.

Throughout this section, fix the value of $\epsilon$ in \cref{asp:unknown-epsilon_wise}, and define $w_t(i)=\abs{r_t(i)}^{1+\epsilon}$ and $\bar{w}_t=\sum_{j=1}^Kp_t(j)w_t(j)$.
We use $\mathrm{A}=\mathrm{F}$ for Adaptive Heavy-Tailed FTRL and $\mathrm{A}=\mathrm{O}$ for Adaptive Heavy-Tailed OMD.
The scales in each statement are those generated by the corresponding algorithm.
For $\mathrm{A}=\mathrm{F}$, set $\Lambda_{\mathrm{A}}=\Lambda_{\mathrm{F}}$.
For $\mathrm{A}=\mathrm{O}$, apply the truncation and scale bounds below to a prefix of a fixed epoch $m$ and set $\Lambda_{\mathrm{A}}=\Lambda_m$.
For OMD, round indices in this subsection are understood to be local to the epoch prefix, and the epoch superscript $(m)$ is omitted from the scales.
\begin{lemma}
    \label{lem:known-parameter-centered-moments}
    Under \cref{asp:unknown-epsilon_wise}, for every $t\in[T]$ and $i\in[K]$, it holds that
    \[
        \E_t\brk*{w_t(i)}\leq2^{1+\epsilon}\theta\quad\text{and}\quad \E_t\brk*{\bar{w}_t}\leq2^{1+\epsilon}\theta.
    \]
\end{lemma}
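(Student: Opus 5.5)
We need $\E_t[|r_t(i)|^{1+\epsilon}]\le 2^{1+\epsilon}\theta$, where $r_t(i)=\inpr{p_t,\ell_t}-\ell_t(i)$, and the analogous bound for $\bar w_t$. The plan is to combine elementary convexity inequalities with the fact that $p_t$ is $\mathcal{F}_{t-1}$-measurable.

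First, apply convexity of $x\mapsto|x|^{1+\epsilon}$ (valid since $1+\epsilon\ge1$) in the form $|a-b|^{1+\epsilon}\le 2^{\epsilon}(|a|^{1+\epsilon}+|b|^{1+\epsilon})$. This is Jensen applied to $\tfrac12(2a)+\tfrac12(-2b)$. It gives
\[
w_t(i)\le 2^{\epsilon}\bigl(|\inpr{p_t,\ell_t}|^{1+\epsilon}+|\ell_t(i)|^{1+\epsilon}\bigr).
\]

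Next, bound the first term by Jensen over the distribution $p_t$:
\[
|\inpr{p_t,\ell_t}|^{1+\epsilon}\le\sum_{j}p_t(j)|\ell_t(j)|^{1+\epsilon}.
\]
Take $\E_t$. Because $p_t$ is $\mathcal{F}_{t-1}$-measurable, it can be pulled out of the conditional expectation, and \cref{asp:unknown-epsilon_wise} then bounds the first term by $\sum_j p_t(j)\theta=\theta$. The second term is bounded by $\theta$ directly. Hence
\[
\E_t[w_t(i)]\le 2^{\epsilon}\cdot2\theta=2^{1+\epsilon}\theta.
\]

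Finally, $\bar w_t=\sum_j p_t(j)w_t(j)$, and again $p_t$ is $\mathcal{F}_{t-1}$-measurable. By linearity of conditional expectation,
\[
\E_t[\bar w_t]=\sum_j p_t(j)\E_t[w_t(j)]\le2^{1+\epsilon}\theta.
\]

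There is no real obstacle here. The only points needing care are using the constant $2^{\epsilon}$ rather than the cruder $2^{1+\epsilon}$ in the convexity step, so that the final constant is exactly $2^{1+\epsilon}$, and explicitly using the measurability of $p_t$ when interchanging $\E_t$ with the sums weighted by $p_t$.
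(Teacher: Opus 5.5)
Your proof is correct and follows the paper's argument essentially step for step: Jensen over $p_t$ plus $\mathcal{F}_{t-1}$-measurability gives $\E_t\brk{\abs{\inpr{p_t,\ell_t}}^{1+\epsilon}}\leq\theta$, the inequality $\abs{x-y}^{1+\epsilon}\leq2^{\epsilon}(\abs{x}^{1+\epsilon}+\abs{y}^{1+\epsilon})$ then yields the bound on $\E_t\brk{w_t(i)}$, and linearity with measurability of $p_t$ handles $\bar{w}_t$. Your justification of the $2^{\epsilon}$ constant via convexity at $\tfrac12(2a)+\tfrac12(-2b)$ is also correct.
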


\begin{proof}
    Since $p_t$ is $\mathcal{F}_{t-1}$-measurable, conditional Jensen's inequality and \cref{asp:unknown-epsilon_wise} give
    \[
        \E_t\brk*{\abs{\inpr{p_t,\ell_t}}^{1+\epsilon}}\leq\sum_{j=1}^Kp_t(j)\E_t\brk*{\abs{\ell_t(j)}^{1+\epsilon}}\leq\theta.
    \]
    The inequality $\abs{x-y}^{1+\epsilon}\leq2^\epsilon(\abs{x}^{1+\epsilon}+\abs{y}^{1+\epsilon})$ therefore yields, for every $i\in[K]$,
    \begin{equation*}
        \E_t\brk*{w_t(i)}
        \leq2^\epsilon\E_t\brk{\abs{\ell_t(i)}^{1+\epsilon}+\abs*{\inpr{p_t,\ell_t}}^{1+\epsilon}}
        \leq2^{1+\epsilon}\theta.
    \end{equation*}
    Using again the $\mathcal{F}_{t-1}$-measurability of $p_t$, we obtain
    \[
        \E_t\brk*{\bar{w}_t}=\sum_{j=1}^Kp_t(j)\E_t\brk*{w_t(j)}\leq2^{1+\epsilon}\theta.
    \]
\end{proof}

We next bound the truncation cost, which for FTRL also includes the residual introduced by rescaling.
\begin{lemma}
    \label{lem:multiscale-truncation-error}
    For $\mathrm{A}\in\set{\mathrm{F},\mathrm{O}}$, consider the quantities generated by \cref{alg:adaptive-multiscale-ftrl-componentwise_simple} when $\mathrm{A}=\mathrm{F}$ and by \cref{alg:parameter-free-heavy-tailed-omd} when $\mathrm{A}=\mathrm{O}$.
    Then, for every loss sequence and every $i\in[K]$, it holds that
    \[
        \trunc(p_{1:T})+\trunc(-e_i)\leq
        \begin{cases}
            \Lambda_{\mathrm{F}}\prn*{G_T+B_T(i)}+3G_T+B_T(i), & \mathrm{A}=\mathrm{F},\\
            \Lambda_m\prn*{G_T+B_T(i)}, & \mathrm{A}=\mathrm{O}.
        \end{cases}
    \]
\end{lemma}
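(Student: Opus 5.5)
The plan is to bound the two truncation terms coordinatewise and per round, and then charge each round's contribution to the increments of $G_t$ and $B_t(\cdot)$. Since $\bar{\ell}_t=-r_t$, we have $\trunc(p_{1:T})+\trunc(-e_i)\le\sum_{t=1}^T\bigl(\sum_{j}p_t(j)\abs{\bar{\ell}_t(j)-\tilde{\ell}_t(j)}+\abs{\bar{\ell}_t(i)-\tilde{\ell}_t(i)}\bigr)$. We analyze each coordinate difference separately. The key monotonicity fact is that $G_t$ and $B_t(j)$ are nondecreasing in $t$, hence so is $M_t(j)$, within the FTRL run or within an OMD epoch prefix.

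\emph{OMD case.} Here $\bar{\ell}_t(j)-\tilde{\ell}_t(j)=-r_t(j)\ind\brk{\abs{r_t(j)}>M_t(j)}$.
\begin{itemize}
\item If $\abs{r_t(j)}>M_t(j)$, then $\abs{r_t(j)}>M_{t-1}(j)$ by monotonicity.
\item The first regime of \hyperref[p:regime-lower-bounds]{(P3)} then gives $\abs{r_t(j)}\le\phi_{M_{t-1}(j)}(r_t(j))=d_t(j)=\Lambda_m\bigl(B_t(j)-B_{t-1}(j)\bigr)$.
\item This covers $M_{t-1}(j)=0$ as well, since $\phi_0=\abs{\cdot}$.
\end{itemize}
Weighting by $p_t$ gives $\sum_j p_t(j)\abs{\bar{\ell}_t(j)-\tilde{\ell}_t(j)}\le g_t=\Lambda_m(G_t-G_{t-1})$. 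Telescoping over the prefix then yields $\Lambda_m(G_T+B_T(i))$.

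\emph{FTRL case.} We split $\bar{\ell}_t(j)-\tilde{\ell}_t(j)=-r_t(j)\ind\brk{\abs{r_t(j)}>M_t(j)}-r_t(j)\bigl(1-\rho_t(j)\bigr)\ind\brk{\abs{r_t(j)}\le M_t(j)}$.
\begin{itemize}
\item The first (clipping) part is handled exactly as in the OMD case. It contributes at most $\Lambda_{\mathrm{F}}(G_T+B_T(i))$.
\item For the rescaling residual, if $M_t(j)>0$ and $\abs{r_t(j)}\le M_t(j)$, then $\abs{r_t(j)}(1-\rho_t(j))=\abs{r_t(j)}\frac{M_t(j)-M_{t-1}(j)}{M_t(j)}\le M_t(j)-M_{t-1}(j)$.
\item If $M_t(j)=0$, the indicator forces $r_t(j)=0$, so the residual vanishes.
\end{itemize}
Because $M$ is a maximum of two nondecreasing sequences, $M_t(j)-M_{t-1}(j)\le(G_t-G_{t-1})+(B_t(j)-B_{t-1}(j))$.

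For the comparator coordinate $i$, the residual telescopes directly to $M_T(i)\le G_T+B_T(i)$. For the learner-weighted term, the inequality above gives $\sum_jp_t(j)(M_t(j)-M_{t-1}(j))\le(G_t-G_{t-1})+\sum_jp_t(j)(B_t(j)-B_{t-1}(j))$. The last sum equals $g_t/\Lambda_{\mathrm{F}}=G_t-G_{t-1}$, so this term is at most $2(G_t-G_{t-1})$ and sums to $2G_T$. Together the residuals contribute $3G_T+B_T(i)$, matching the statement.

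The only delicate step is this residual bookkeeping. One must notice that the $p_t$-weighted increments of the expert-specific scales $B_t(j)$ collapse exactly to the increment of the global scale $G_t$, which is why the constant is $3G_T$ rather than a term involving all $B_T(j)$. Boundary cases with zero scales ($\rho_t=0$, $\phi_0=\abs{\cdot}$) must also be checked separately as above.
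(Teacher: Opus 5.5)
Your proposal is correct and follows the paper's proof essentially step for step: the same split of $\bar{\ell}_t(j)-\tilde{\ell}_t(j)$ into a clipping part and a rescaling residual, the bound $\abs{r_t(j)}(1-\rho_t(j))\le M_t(j)-M_{t-1}(j)$, the inequality $\max\set{a+x,b+y}-\max\set{a,b}\le x+y$, and the identity $\sum_j p_t(j)d_t(j)=g_t$, which yields $2G_T$ for the learner-weighted residual and $M_T(i)\le G_T+B_T(i)$ for the comparator. The only cosmetic difference is in the clipping step: you apply (P3) directly at scale $M_{t-1}(j)$ via $\abs{r_t(j)}>M_t(j)\ge M_{t-1}(j)$, whereas the paper applies (P3) at $M_t(j)$ and then (P1) to pass to $\phi_{M_{t-1}(j)}$; both routes are valid.
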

\begin{proof}
    Recall that $\trunc(p_{1:T})=\sum_{t=1}^T\inpr{p_t,\bar{\ell}_t-\tilde{\ell}_t}$, and $\trunc(-e_i)=\sum_{t=1}^T\inpr{-e_i,\bar{\ell}_t-\tilde{\ell}_t}$.
    We have
    \begin{equation}
        \label{eq:multiscale-truncation-decomposition}
        \trunc(p_{1:T})+\trunc(-e_i)
        \leq\sum_{t=1}^T\sum_{j=1}^K p_t(j)\abs{\bar{\ell}_t(j)-\tilde{\ell}_t(j)}+\sum_{t=1}^T\abs{\bar{\ell}_t(i)-\tilde{\ell}_t(i)}.
    \end{equation}
    To upper bound this, we evaluate $\abs{\bar{\ell}_t(j)-\tilde{\ell}_t(j)}$.
    For every $t\in[T]$ and $j\in[K]$, let $E_t(j)=\set*{\abs{r_t(j)}>M_t(j)}$.
    Since $M_t(j)\geq M_{t-1}(j)$, the definition of $E_t(j)$, \hyperref[p:scale-monotonicity]{(P1)}, and \hyperref[p:regime-lower-bounds]{(P3)} give
    \[
        \abs{r_t(j)}\ind\brk*{E_t(j)}\leq\phi_{M_{t}(j)}\prn{r_t(j)}\leq\phi_{M_{t-1}(j)}\prn{r_t(j)}.
    \]
    Using this inequality and the definitions of $g_t$ and $d_t(i)$, we obtain
    \begin{equation}
        \label{eq:multiscale-clipping-residual}
        \sum_{t=1}^T \sum_{j=1}^K p_t(j)\abs{r_t(j)}\ind\brk*{E_t(j)}+\sum_{t=1}^T \abs{r_t(i)}\ind\brk*{E_t(i)}
        \leq\sum_{t=1}^T \prn*{g_t + d_t(i)}
        \leq \Lambda_{\mathrm{A}}\prn*{G_T + B_T(i)}.
    \end{equation}
    $\abs{\bar{\ell}_t(j)-\tilde{\ell}_t(j)}=\abs{r_t(j)}\ind\brk*{E_t(j)}$ in \cref{alg:parameter-free-heavy-tailed-omd}, so \eqref{eq:multiscale-truncation-decomposition} and \eqref{eq:multiscale-clipping-residual} prove the claim for $\mathrm{A}=\mathrm{O}$.

    It remains to prove the claim for $\mathrm{A}=\mathrm{F}$.
    Since $M_t(j)$ is nondecreasing in $t$, we have $\rho_t(j)\in[0,1]$.
    The definition of $\tilde{\ell}_t(j)$ in \cref{alg:adaptive-multiscale-ftrl-componentwise_simple} gives $\abs{\bar{\ell}_t(j)-\tilde{\ell}_t(j)}\leq\abs{r_t(j)}\prn*{1-\rho_t(j)}\ind\brk*{E_t(j)^{\mathsf c}}+\abs{r_t(j)}\ind\brk*{E_t(j)}$.
    Thus, combining this with \eqref{eq:multiscale-truncation-decomposition} and \eqref{eq:multiscale-clipping-residual} gives
    \begin{align}
        \label{eq:multiscale-residual-decomposition}
        &\trunc(p_{1:T})+\trunc(-e_i)\nonumber\\
        &\leq \sum_{t=1}^T \sum_{j=1}^K p_t(j)\abs{r_t(j)}\prn*{1-\rho_t(j)}\ind\brk*{E_t(j)^{\mathsf c}}+\sum_{t=1}^T \abs{r_t(i)}\prn*{1-\rho_t(i)}\ind\brk*{E_t(i)^{\mathsf c}}+ \Lambda_{\mathrm{F}}\prn*{G_T + B_T(i)}.
    \end{align}

    If $M_t(j)=0$, then $E_t(j)^c$ implies $r_t(j)=0$, so the claim is trivial. 
    We consider the case where $M_t(j)>0$. On $E_t(j)^{\mathsf c}$, the definitions of $\rho_t(j)$ and $E_t(j)$ give
    \begin{equation}
        \label{eq:multiscale-scale-increment-trivial}
        \abs{r_t(j)}\prn{1-\rho_t(j)}
        \leq
        M_t(j)\prn*{1-\frac{M_{t-1}(j)}{M_t(j)}}
        =
        M_t(j)-M_{t-1}(j).
    \end{equation}
    Thus, on $E_t(j)^{\mathsf c}$, we have
    \begin{align}
        \label{eq:multiscale-scale-increment}
        &\abs{r_t(j)}\prn{1-\rho_t(j)}
        \leq M_t(j)-M_{t-1}(j)\nonumber\\
        &\leq
        \max\set*{
            G_{t-1}+\frac{g_t}{\Lambda_{\mathrm{F}}},
            B_{t-1}(j)+\frac{d_t(j)}{\Lambda_{\mathrm{F}}}
        }
        -
        \max\set*{G_{t-1},B_{t-1}(j)}
        \leq
        \frac{1}{\Lambda_{\mathrm{F}}}\prn*{g_t+d_t(j)},
    \end{align}
    where we used the fact that
    $\max\set{a+x,b+y}-\max\set{a,b}\leq x+y$
    for all $a,b\in\R$ and $x,y\geq0$.

    Since $g_t=\sum_{j=1}^Kp_t(j)d_t(j)$, multiplying \eqref{eq:multiscale-scale-increment} by $p_t(j)$ and summing over $j$ and $t$ gives
    \begin{equation}
        \label{eq:multiscale-learner-scale-increments}
        \sum_{t=1}^T \sum_{j=1}^K p_t(j)\abs{r_t(j)}\prn{1-\rho_t(j)}\ind\brk{E_t(j)^{\mathsf c}}\leq 2G_T.
    \end{equation}
    For the comparator coordinate, \eqref{eq:multiscale-scale-increment-trivial} and the definition of $M_t(i)$ give
    \begin{equation}
        \label{eq:multiscale-comparator-scale-increments}
        \sum_{t=1}^T \abs{r_t(i)}\prn{1-\rho_t(i)}\ind\brk{E_t(i)^{\mathsf c}}\leq \sum_{t=1}^T \prn{M_t(i)-M_{t-1}(i)} \leq G_T+B_T(i).
    \end{equation}
    Combining \eqref{eq:multiscale-residual-decomposition} with \eqref{eq:multiscale-learner-scale-increments} and \eqref{eq:multiscale-comparator-scale-increments} proves the claim for $\mathrm{A}=\mathrm{F}$.
\end{proof}

We next control the adaptive scales $G_t$ and $B_t(i)$.
\begin{lemma}
    \label{lem:multiscale-scale-recursion}
    Fix $\epsilon\in(0,1]$ and $\mathrm{A}\in\{\mathrm{F},\mathrm{O}\}$, and consider the quantities generated by Algorithm~1 when $\mathrm{A}=\mathrm{F}$ and by Algorithm~2 when $\mathrm{A}=\mathrm{O}$.
    Then, for every $i\in\brk{K}$, it holds that
    \begin{equation*}
        G_T^{1+\epsilon} \leq \frac{1+\epsilon}{\Lambda_{\mathrm{A}}}\prn*{1+\frac{1}{\Lambda_{\mathrm{A}}}}^{\epsilon} \sum_{t=1}^T\bar{w}_t, \quad \text{and} \quad
        B_T(i)^{1+\epsilon} \leq \frac{1+\epsilon}{\Lambda_{\mathrm{A}}}\prn*{1+\frac{1}{\Lambda_{\mathrm{A}}}}^{\epsilon} \sum_{t=1}^T w_t(i).
    \end{equation*}
\end{lemma}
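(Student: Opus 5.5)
The plan is to bound the per-round increment of $G_t^{1+\epsilon}$ (and of $B_t(i)^{1+\epsilon}$) by a constant multiple of $\bar w_t$ (resp.\ $w_t(i)$) and then telescope from $G_0=B_0(i)=0$. For OMD, the same argument applies within a fixed epoch prefix with $\Lambda_{\mathrm A}=\Lambda_m$, with indices local to the epoch. Write $\Lambda=\Lambda_{\mathrm A}$ and $G=G_{t-1}$. Since $x\mapsto x^{1+\epsilon}$ is convex and increasing on $[0,\infty)$ and $G_t\geq G_{t-1}$, the mean value theorem gives
\[
G_t^{1+\epsilon}-G_{t-1}^{1+\epsilon}\leq(1+\epsilon)G_t^{\epsilon}(G_t-G_{t-1})=\frac{1+\epsilon}{\Lambda}\Bigl(G+\frac{g_t}{\Lambda}\Bigr)^{\epsilon}g_t .
\]
So it suffices to show $(G+g_t/\Lambda)^{\epsilon}g_t\leq(1+1/\Lambda)^{\epsilon}\bar w_t$. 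The same inequality with $G\to B_{t-1}(i)$ and $g_t\to d_t(i)$ suffices for $B_T(i)$.

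\textbf{Scalar inequality.} For $s\geq0$ and $x\in\R$, I will show $(s+\phi_s(x)/\Lambda)^{\epsilon}\phi_s(x)\leq(1+1/\Lambda)^{\epsilon}\abs{x}^{1+\epsilon}$ by splitting into the two regimes of \hyperref[p:regime-lower-bounds]{(P3)}.
\begin{itemize}
\item If $\abs{x}\leq s$ (so $s>0$), then $\phi_s(x)=x^2/s\leq\abs{x}\leq s$. Hence $s+\phi_s(x)/\Lambda\leq s(1+1/\Lambda)$, and the left side is at most $(1+1/\Lambda)^{\epsilon}x^2s^{\epsilon-1}$. Since $\epsilon-1\leq0$ and $s\geq\abs{x}$, we have $x^2s^{\epsilon-1}\leq\abs{x}^{1+\epsilon}$.
\item If $\abs{x}>s$, then $\phi_s(x)=\abs{x}$ and $s+\abs{x}/\Lambda\leq\abs{x}(1+1/\Lambda)$, which gives the claim directly. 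This case includes $s=0$.
\end{itemize}

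For $B_T(i)$ the scalar inequality essentially finishes the job. We have $d_t(i)=\phi_{M_{t-1}(i)}(r_t(i))\leq\phi_{B_{t-1}(i)}(r_t(i))$ by \hyperref[p:scale-monotonicity]{(P1)}, because $M_{t-1}(i)\geq B_{t-1}(i)$. The map $a\mapsto(s+a/\Lambda)^{\epsilon}a$ is increasing, so the scalar inequality with $s=B_{t-1}(i)$ and $x=r_t(i)$ applies.

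\textbf{Main obstacle: $G_T$.} Here $g_t$ is a $p_t$-average of $a_j:=d_t(j)$, while the base $G$ is shared across coordinates. Set $h(a)=(G+a/\Lambda)^{\epsilon}a$. A direct computation shows that $h''(a)$ is a positive multiple of $2+(\epsilon-1)\frac{a/\Lambda}{G+a/\Lambda}\geq1>0$, so $h$ is convex on $[0,\infty)$. Jensen's inequality then gives $h(g_t)\leq\sum_jp_t(j)h(a_j)$. Next, $M_{t-1}(j)\geq G_{t-1}$ and \hyperref[p:scale-monotonicity]{(P1)} give $a_j\leq\phi_{G}(r_t(j))$. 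Since $h$ is increasing, the scalar inequality with $s=G$ yields $h(a_j)\leq(1+1/\Lambda)^{\epsilon}w_t(j)$. Summing with weights $p_t(j)$ gives the bound by $\bar w_t$.

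Telescoping over $t$ then yields both claimed inequalities. The delicate point I expect is the convexity and monotonicity of $h$, which lets the shared base $G_{t-1}$ be handled despite the coordinatewise scales $M_{t-1}(j)$. The degenerate case $G=0$ should be covered by the $s=0$ branch, where $\phi_0(x)=\abs{x}$.
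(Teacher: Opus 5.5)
Your proof is correct and gives exactly the paper's constant $\frac{1+\epsilon}{\Lambda}(1+1/\Lambda)^{\epsilon}$, but it organizes the argument differently.

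\textbf{Common starting point.} Both proofs begin with the same one-step convexity bound $S_t^{1+\epsilon}-S_{t-1}^{1+\epsilon}\leq(1+\epsilon)S_t^{\epsilon}(S_t-S_{t-1})$. The paper writes it as $(1+y)^{1+\epsilon}-1\leq(1+\epsilon)y(1+y)^{\epsilon}$.

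\textbf{The paper's route.} The paper aggregates over coordinates first. It shows that the increment satisfies two aggregate bounds:
\begin{itemize}
\item $g_t\leq\bar{w}_t^{1/(1+\epsilon)}$, from $\phi_s(x)\leq|x|$ and the power-mean form of Jensen;
\item $g_t\leq G_{t-1}^{-\epsilon}\bar{w}_t$, from (P2) together with $M_{t-1}(j)\geq G_{t-1}$.
\end{itemize}
It then feeds these into an abstract scalar recursion, splitting cases on the aggregate ratio $z=\bar{w}_t^{1/(1+\epsilon)}/G_{t-1}$.

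\textbf{Your route.} You split cases per coordinate, comparing $|r_t(j)|$ with $G_{t-1}$, and prove the pointwise inequality $(s+\phi_s(x)/\Lambda)^{\epsilon}\phi_s(x)\leq(1+1/\Lambda)^{\epsilon}|x|^{1+\epsilon}$. Only then do you aggregate, using convexity and monotonicity of $h(a)=(G_{t-1}+a/\Lambda)^{\epsilon}a$ plus Jensen.

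\textbf{Trade-off.} The paper's route is more modular: any nonnegative increment obeying the two aggregate bounds can be plugged in, and no convexity check on an auxiliary function is needed. The same recursion also handles $G$ and $B(i)$ verbatim. Your route makes the link to the two regimes of (P3) explicit and isolates a clean per-coordinate inequality. The cost is the extra second-derivative computation.

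\textbf{Your convexity check.} You did it correctly: the bracket $2+(\epsilon-1)\frac{a/\Lambda}{G_{t-1}+a/\Lambda}$ is at least $1+\epsilon$. At the single point $G_{t-1}=a=0$, where $h''$ is undefined for $\epsilon<1$, convexity on $[0,\infty)$ still holds. Either invoke continuity, or note directly that $h(a)=\Lambda^{-\epsilon}a^{1+\epsilon}$ when $G_{t-1}=0$.
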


\begin{proof}
    We first record a scalar argument.
    Let $\alpha>0$, and let $\set*{S_t}_{t=0}^T$, $\set*{h_t}_{t=1}^T$, and $\set*{V_t}_{t=1}^T$
    be nonnegative sequences such that $S_0=0$ and $S_t=S_{t-1}+\alpha h_t, $
    with $h_t\leq V_t^{\frac{1}{1+\epsilon}}$ for every $t$, and, whenever $S_{t-1}>0$,
    $h_t\leq S_{t-1}^{-\epsilon}V_t$.
    When $S_{t-1}>0$, set $z=\frac{V_t^{\frac{1}{1+\epsilon}}}{S_{t-1}}$ and $y=\frac{\alpha h_t}{S_{t-1}}$.
    Then, we have $y\geq0$ and $y \leq \alpha\min\set{z,z^{1+\epsilon}}$, where the first bound on $h_t$ gives $y\leq\alpha z$, and the second gives $y\leq\alpha z^{1+\epsilon}$.
    By the convexity of $x\mapsto x^{1+\epsilon}$, for all $y\geq0$, it holds that
    \[
        (1+y)^{1+\epsilon}-1 \leq (1+\epsilon)y(1+y)^\epsilon.
    \]
    Hence, this inequality and the definition of $y$ give
    \[
        S_t^{1+\epsilon}-S_{t-1}^{1+\epsilon}=S_{t-1}^{1+\epsilon} \prn*{(1+y)^{1+\epsilon}-1}\leq (1+\epsilon)y(1+y)^\epsilon S_{t-1}^{1+\epsilon}.
    \]
    If $z\leq1$, then $y\leq\alpha z^{1+\epsilon}$ and $1+y\leq1+\alpha$, which imply
    \[
        S_t^{1+\epsilon}-S_{t-1}^{1+\epsilon}
        \leq(1+\epsilon)\alpha(1+\alpha)^\epsilon S_{t-1}^{1+\epsilon}z^{1+\epsilon}
        =(1+\epsilon)\alpha(1+\alpha)^\epsilon V_t.
    \]

    If $z>1$, then $y\leq\alpha z$ and $1+y\leq1+\alpha z\leq(1+\alpha)z$, which imply
    \[
        S_t^{1+\epsilon}-S_{t-1}^{1+\epsilon}
        \leq(1+\epsilon)\alpha(1+\alpha)^\epsilon S_{t-1}^{1+\epsilon}z^{1+\epsilon}
        =(1+\epsilon)\alpha(1+\alpha)^\epsilon V_t.
    \]
    The same inequality holds when $S_{t-1}=0$, since then $S_t^{1+\epsilon}\leq\alpha^{1+\epsilon}V_t$
    and $\alpha^\epsilon\leq(1+\epsilon)(1+\alpha)^\epsilon$.
    Summing over $t$ gives
    \begin{equation}
        \label{eq:multiscale-scalar-recursion}
        S_T^{1+\epsilon} \leq (1+\epsilon)(1+\alpha)^\epsilon\alpha \sum_{t=1}^T V_t .
    \end{equation}

    For $g_t$, the inequalities $\phi_s(x)\leq\abs{x}$ from \hyperref[p:scale-monotonicity]{(P1)} and $\phi_s(x)\leq s^{-\epsilon}\abs{x}^{1+\epsilon}$ from \hyperref[p:moment-bound]{(P2)} for $s>0$, together with Jensen's inequality, imply $g_t\leq\bar{w}_t^{\frac{1}{1+\epsilon}}$ and $g_t\leq\sum_{i=1}^Kp_t(i)M_{t-1}(i)^{-\epsilon}w_t(i)\leq G_{t-1}^{-\epsilon}\bar{w}_t$ if $G_{t-1}>0$.
    Applying \eqref{eq:multiscale-scalar-recursion} with
    \[
        S_t=G_t,\quad h_t=g_t,\quad V_t=\bar{w}_t,\quad\text{and}\quad \alpha=\frac{1}{\Lambda_{\mathrm{A}}}
    \]
    gives
    \begin{equation*}
        G_T^{1+\epsilon} \leq \frac{1+\epsilon}{\Lambda_{\mathrm{A}}}\prn*{1+\frac{1}{\Lambda_{\mathrm{A}}}}^\epsilon\sum_{t=1}^T \bar{w}_t .
    \end{equation*}

    For each fixed $i\in\brk{K}$, the same scalar inequalities give $d_t(i)\leq w_t(i)^{\frac{1}{1+\epsilon}}$ and $d_t(i)\leq M_{t-1}(i)^{-\epsilon}w_t(i)\leq B_{t-1}(i)^{-\epsilon}w_t(i)$ if $B_{t-1}(i)>0$.
    Applying \eqref{eq:multiscale-scalar-recursion} with
    \[
        S_t=B_t(i),\quad h_t=d_t(i),\quad V_t=w_t(i),\quad\text{and}\quad \alpha=\frac{1}{\Lambda_{\mathrm{A}}}
    \]
    gives
    \begin{equation*}
        B_T(i)^{1+\epsilon} \leq \frac{1+\epsilon}{\Lambda_{\mathrm{A}}}\prn*{1+\frac{1}{\Lambda_{\mathrm{A}}}}^\epsilon\sum_{t=1}^T w_t(i) .
    \end{equation*}
\end{proof}

\begin{remark}[Tightness of the robust increment $\phi_s$]
    \label{rem:robust-scale-increment-tightness}
    The choice of $\phi_s$ in \cref{subsec:robust-scale-increment} is pointwise tight with respect to \hyperref[p:scale-monotonicity]{(P1)}--\hyperref[p:regime-lower-bounds]{(P3)}.
    Indeed, for $s>0$, \hyperref[p:scale-monotonicity]{(P1)} gives $\phi_s(x)\leq\abs{x}$, while \hyperref[p:moment-bound]{(P2)} with $\epsilon=1$ gives $\phi_s(x)\leq x^2/s$.
    Hence, any family satisfying \hyperref[p:scale-monotonicity]{(P1)} and \hyperref[p:moment-bound]{(P2)} must obey $\phi_s(x)\leq\min\set*{\frac{x^2}{s},\abs{x}}$.
    Conversely, when $\abs{x}\leq s$, \hyperref[p:regime-lower-bounds]{(P3)} implies $\phi_s(x)\geq x^2/s$, whereas when $\abs{x}>s$, it implies $\phi_s(x)\geq\abs{x}$.
    These two lower bounds imply that $\phi_s(x)\geq\min\set*{\frac{x^2}{s},\abs{x}}$ in both regimes.
    Thus, for $s>0$, properties \hyperref[p:scale-monotonicity]{(P1)}--\hyperref[p:regime-lower-bounds]{(P3)} uniquely determine $\phi_s(x)=\min\set*{\frac{x^2}{s},\abs{x}}$, while the case $s=0$ is determined by \hyperref[p:scale-monotonicity]{(P1)} and \hyperref[p:regime-lower-bounds]{(P3)} as $\phi_0(x)=\abs{x}$.
    More generally, suppose that \hyperref[p:regime-lower-bounds]{(P3)} is replaced by
    \[
        \abs{x}\leq c\phi_s(x)\quad\text{if $\abs{x}>s$}\quad\text{and}\quad x^2\leq cs\phi_s(x)\quad\text{if $\abs{x}\leq s$}
    \]
    for some uniform constant $c>0$.
    Compatibility with \hyperref[p:scale-monotonicity]{(P1)} and \hyperref[p:moment-bound]{(P2)} requires $c\geq1$, and the above choice of $\phi_s$ attains $c=1$.
    This pointwise tightness concerns only the constants in \hyperref[p:scale-monotonicity]{(P1)}--\hyperref[p:regime-lower-bounds]{(P3)} and does not imply optimality of the resulting regret constants.
\end{remark}

\subsection{Details of \cref{alg:adaptive-multiscale-ftrl-componentwise_simple}}
\label{app:subsec:ftrl}
This section provides the analyses related to \cref{alg:adaptive-multiscale-ftrl-componentwise_simple}.
\subsubsection{Lemma Used in the Proof of \cref{thm:upper-bound-multiscale-ftrl-componentwise_simple}}
Here, we bound the surrogate regret $\tilde{\mathcal{R}}_T(e_i)$.
Combining this bound with the results in \cref{app:subsec:common_analysis} yields the regret guarantee stated in \cref{thm:upper-bound-multiscale-ftrl-componentwise_simple}.
\begin{lemma}
    \label{lem:ftrl-term-componentwise_simple}
    For every loss sequence and every $i\in[K]$, \cref{alg:adaptive-multiscale-ftrl-componentwise_simple} satisfies
    \[
        \tilde{\mathcal{R}}_T(e_i)\leq \Lambda_{\mathrm{F}}\prn*{1+\frac{e}{2}}G_T + B_T(i)\log K +\frac{1}{K}\sum_{j=1}^K B_T(j).
    \]
\end{lemma}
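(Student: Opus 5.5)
We will use the standard FTRL regret analysis with the time-varying regularizer $\Psi_t$, written in terms of the Bregman divergence $D_t(\cdot;\pi)$. Let $F_t(p)=\inpr{\tilde L_{t},p}+D_t(p;\pi)$, so that $p_{t+1}=\argmin_{p\in\Delta_K}F_t(p)$ whenever some $M_t(j)>0$.

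\textbf{Step 1: standard decomposition.} The usual "be-the-leader" decomposition gives
\[
\tilde{\mathcal R}_T(e_i)\le D_T(e_i;\pi)+\sum_{t=1}^T\bigl[F_{t-1}(p_t)-F_t(p_{t+1})+\inpr{p_t,\tilde\ell_t}\bigr],
\]
up to the comparator-side term $D_T(e_i;\pi)-D_0(p_1;\pi)$. Rounds with $M_{t-1}\equiv0$ need separate care. On such rounds $p_t=\pi$, and the rescaling factor $\rho_t(j)=0$ forces $\tilde\ell_t=0$, so these rounds contribute nothing.

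The penalty at the comparator is
\[
D_T(e_i;\pi)=\sum_j M_T(j)\bigl(e_i(j)\log(Ke_i(j))-e_i(j)+1/K\bigr)=M_T(i)(\log K-1)+\tfrac1K\sum_j M_T(j).
\]
Since $M_T(j)\le G_T+B_T(j)$, this is at most $B_T(i)\log K+\frac1K\sum_jB_T(j)$ plus terms of order $G_T\log K$. The $G_T$ terms will be absorbed into $\Lambda_{\mathrm F}(1+e/2)G_T$, using $\Lambda_{\mathrm F}=1+\log K$ and the fact that the $-M_T(i)$ term cancels one $G_T$.

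\textbf{Step 2: drift of the regularizer.} The change from $D_{t-1}$ to $D_t$ at a fixed point is
\[
\sum_j\bigl(M_t(j)-M_{t-1}(j)\bigr)\bigl(p(j)\log(Kp(j))-p(j)+1/K\bigr).
\]
Because $M_t$ is nondecreasing and the bracket can be negative, we bound it using the optimality of $p_{t+1}$ together with the bracket being at least $-p(j)+1/K$. The resulting drift terms involve $\sum_j p(j)(M_t(j)-M_{t-1}(j))$ at the learner's side. By \eqref{eq:multiscale-scale-increment} each such term is at most $\frac1{\Lambda_{\mathrm F}}(g_t+d_t(j))$, so after weighting by $p$ they sum to at most $2G_T$. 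Any leftover $\frac1K\sum_j$ drift telescopes into the $\frac1K\sum_jM_T(j)$ term already present.

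\textbf{Step 3: stability.} We bound the stability term with the $M_{t-1}$-weighted local norm. Since $|\tilde\ell_t(j)|\rho$-truncation gives $|\tilde\ell_t(j)|\le|r_t(j)|M_{t-1}(j)/M_t(j)$ and $|r_t(j)|\le M_t(j)$, the normalized loss satisfies $|\tilde\ell_t(j)|/M_{t-1}(j)\le1$. The entropic stability bound with the inequality $e^{-x}-1+x\le \frac e2x^2$ for $|x|\le1$ then gives
\[
\text{stab}_t\le\frac e2\sum_jp_t(j)\frac{\tilde\ell_t(j)^2}{M_{t-1}(j)}.
\]
In the untruncated regime, $|r_t(j)|\le M_t(j)$. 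Using $\tilde\ell_t(j)^2/M_{t-1}(j)\le r_t(j)^2\rho_t(j)/M_t(j)$ and splitting on $|r_t(j)|\le M_{t-1}(j)$ versus not, (P3) gives a bound of $\phi_{M_{t-1}(j)}(r_t(j))=d_t(j)$ in both cases. Weighting by $p_t$ yields $\frac e2g_t=\frac e2\Lambda_{\mathrm F}(G_t-G_{t-1})$, and summing gives $\frac e2\Lambda_{\mathrm F}G_T$.

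\textbf{Main obstacle.} The delicate part is bookkeeping the time-varying multiscale regularizer. Specifically, we must show that the drift of $\Psi_t$ plus the comparator penalty produce exactly $\Lambda_{\mathrm F}G_T+B_T(i)\log K+\frac1K\sum_jB_T(j)$ rather than $(G_T+B_T(i))\log K$ terms, without a factor $\log K$ on $G_T$ beyond $\Lambda_{\mathrm F}$. We would try to handle this by writing $M_T(i)\log K\le (G_T+B_T(i))\log K\le\Lambda_{\mathrm F}G_T+B_T(i)\log K$.
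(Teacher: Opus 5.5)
Your Steps 1 and 3 follow the paper's argument. The penalty is $D_T(e_i;\pi)$, and the entropic stability term is measured at scale $M_{t-1}$ and bounded by $\frac{e}{2}\Lambda_{\mathrm{F}}G_T$. Your two-case use of (P3) in Step 3 is a valid alternative to the paper's route through $\phi_{M_t(j)}\le\phi_{M_{t-1}(j)}$.

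\textbf{The gap is in Step 2.}
\begin{itemize}
\item \emph{False premise.} The bracket $p(j)\log(Kp(j))-p(j)+1/K$ equals $h(p(j),1/K)$ with $h(x,y)=x\log(x/y)-x+y\ge0$, so it is never negative.
\item \emph{False substitute bound.} The lower bound you use instead, bracket $\ge 1/K-p(j)$, fails for $0<p(j)<1/K$, because the difference $p(j)\log(Kp(j))$ is negative there.
\item \emph{Wrong weights.} In your Step 1 decomposition the drift is evaluated at $p_{t+1}$. The increment bound $\sum_j p(j)\bigl(M_t(j)-M_{t-1}(j)\bigr)\le 2g_t/\Lambda_{\mathrm{F}}$ holds only with weights $p_t$, since $g_t=\sum_j p_t(j)d_t(j)$.
\end{itemize}
So Step 2 does not go through as written, and the bookkeeping you flag as the main obstacle is left unresolved. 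The $2G_T$-type drift bound you are reaching for is the one needed in the paper's OMD analysis, where the drift is evaluated at the comparator; it is not needed here.

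\textbf{The missing observation.} Because $h\ge0$ and $M_t$ is nondecreasing, $D_t(p;\pi)\ge D_{t-1}(p;\pi)$ for every $p$. In other words, the regularizer $\varphi_t=D_{t-1}(\cdot;\pi)$ is pointwise nondecreasing in $t$, and this is exactly the property the paper invokes. Each summand of your Step 1 sum then equals the stability term $F_{t-1}(p_t)-F_{t-1}(p_{t+1})+\langle p_t-p_{t+1},\tilde{\ell}_t\rangle$ minus $D_t(p_{t+1};\pi)-D_{t-1}(p_{t+1};\pi)\ge0$. There is therefore no drift to control; it can simply be dropped.

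The penalty is then handled as you suggest at the end. Drop the $-M_T(i)$ term and use $M_T(j)\le G_T+B_T(j)$ to get $D_T(e_i;\pi)\le M_T(i)\log K+\frac{1}{K}\sum_j M_T(j)\le\Lambda_{\mathrm{F}}G_T+B_T(i)\log K+\frac{1}{K}\sum_j B_T(j)$. Adding your stability bound $\frac{e}{2}\Lambda_{\mathrm{F}}G_T$ gives the claim with exactly the stated constants.
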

\begin{proof}
    Rounds $t\in[T]$ such that $M_s(j)=0$ for all $j\in[K]$ and $s\leq t$ can be ignored.
    Indeed, whenever $\max_{j\in[K]} M_{t-1}(j)=0$, we have $\tilde{\ell}_t=0$. 
    If $M_T(j)=0$ for all $j\in[K]$, then $\tilde{\mathcal{R}}_T(e_i)=0$ and the claim is trivial. 
    After discarding these initial rounds and relabeling the remaining rounds, we may assume that $M_1(j)>0$ for every $j\in[K]$.

    Set $\varphi_t(p)=D_{t-1}(p;\pi)$.
    For $t\geq 2$, by the definition of $\rho_t(j)$ and the truncation rule, it holds that $\frac{|\tilde{\ell}_t(j)|}{M_{t-1}(j)}=\frac{|r_t(j)|}{M_t(j)}\ind[|r_t(j)|\leq M_t(j)]\leq 1.$
    Since $\varphi_t(p)=D_{t-1}(p;\pi)$ is nonnegative and  twice-differentiable on $\R_{>0}^K$, and satisfies $\varphi_{t+1}(p)\geq\varphi_t(p)$ for every $p\in\R_{\geq0}^K$, applying the time-varying FTRL argument of \citet[proof of Theorem C.1, Eq. (10)]{NEURIPS2025_Moulin_Expert_HTEXP} to rounds $t=2,\dots,T$ gives
    \begin{equation}
        \label{eq:ftrl-penalty-stability-componentwise_simple}
        \tilde{\mathcal{R}}_T(e_i)\leq D_T(e_i;\pi)+\frac{1}{2}\sum_{t=2}^T \nrm{\tilde{\ell}_t}^2_{(\nabla^2\varphi_t(z_t))^{-1}},
    \end{equation}
    where $z_t(j)$ lies between $p_t(j)$ and $\hat{p}_{t+1}(j)=p_t(j)\exp\prn*{-\frac{\tilde{\ell}_t(j)}{M_{t-1}(j)}}$.

    Using $\frac{|\tilde{\ell}_t(j)|}{M_{t-1}(j)}\leq 1$, we have $z_t(j)\leq e\,p_t(j)$ for every $j\in[K]$.
    Consequently, the second term in \eqref{eq:ftrl-penalty-stability-componentwise_simple} is at most
    \[
        \frac{1}{2}\sum_{t=2}^T \nrm{\tilde{\ell}_t}_{(\nabla^2\varphi_t(z_t))^{-1}}^2
        =\frac{1}{2}\sum_{t=2}^T \sum_{j=1}^K \frac{z_t(j)\tilde{\ell}_t(j)^2}{M_{t-1}(j)}
        \leq \frac{e}{2}\sum_{t=2}^T \sum_{j=1}^K \frac{p_t(j)\tilde{\ell}_t(j)^2}{M_{t-1}(j)},
    \]
    where the first equality follows from $\nabla^2\varphi_t(z_t)=\operatorname{Diag}\prn*{\frac{M_{t-1}(j)}{z_t(j)}}_{j=1}^K$.

    Since $M_{t-1}(j)\leq M_t(j)$, it follows that
    \begin{align*}
        \frac{\tilde{\ell}_t(j)^2}{M_{t-1}(j)}
        &=\frac{r_t(j)^2M_{t-1}(j)}{M_t(j)^2}\ind\brk*{\abs{r_t(j)}\leq M_t(j)}
        \leq \frac{r_t(j)^2}{M_t(j)}\ind\brk*{\abs{r_t(j)}\leq M_t(j)}
        \\
        &\leq\phi_{M_t(j)}\prn{r_t(j)}
        \leq\phi_{M_{t-1}(j)}\prn{r_t(j)}
        .
    \end{align*}
    The second inequality follows from \hyperref[p:regime-lower-bounds]{(P3)}, and the last inequality follows from \hyperref[p:scale-monotonicity]{(P1)}.
    Thus, it holds that
    \begin{equation}
        \label{eq:ftrl-stability-bound-componentwise_simple}
        \frac{1}{2}\sum_{t=2}^T \nrm{\tilde{\ell}_t}_{(\nabla^2\varphi_t(z_t))^{-1}}^2\leq \frac{e}{2}\Lambda_{\mathrm{F}}G_T.
    \end{equation}

    It remains to bound the first term in \eqref{eq:ftrl-penalty-stability-componentwise_simple}.
    Since $\pi(j)=1/K$, direct evaluation gives
    \begin{align*}
        D_T(e_i;\pi)
        &=\prn*{\log K-1}M_T(i)+\frac{1}{K}\sum_{j=1}^K M_T(j)
        \leq M_T(i)\log K+\frac{1}{K}\sum_{j=1}^K M_T(j)
        \\
        &\leq \Lambda_{\mathrm{F}}G_T+B_T(i)\log K+\frac{1}{K}\sum_{j=1}^K B_T(j),
    \end{align*}
    where the last inequality uses $M_T(j)=\max\set*{G_T,B_T(j)}\leq G_T+B_T(j)$.
    Combining this inequality with \eqref{eq:ftrl-penalty-stability-componentwise_simple} and \eqref{eq:ftrl-stability-bound-componentwise_simple} proves the claim.
\end{proof}

\begin{remark}[Why we need the truncation]
    For simplicity, we assume that $M_t(i)>0$ for all $t\in[T]$ and $i\in[K]$ in this remark.
    Suppose that the indicator in the surrogate loss is removed and we use the rescaled loss $\check{\ell}_t(j)=-r_t(j)\frac{M_{t-1}(j)}{M_t(j)}$.
    Then the unconstrained one-step FTRL update would satisfy $\hat{p}_{t+1}(j)=p_t(j)\exp\prn*{\frac{r_t(j)}{M_t(j)}}$.
    The proof above uses the truncation to obtain $\abs{\tilde{\ell}_t(j)}/M_{t-1}(j)\leq1$, and hence $z_t(j)\leq\max\crl{p_t(j),\hat{p}_{t+1}(j)}\leq e p_t(j)$.
    Without truncation, the normalized increment is $\abs{r_t(j)}/M_t(j)$, which is not bounded by a universal constant under the present scale recursion.
    Thus the standard FTRL stability argument would introduce a factor depending on $\exp\prn*{\abs{r_t(j)}/M_t(j)}$ instead of the absolute constant $e$.
\end{remark}

\subsubsection{A horizon-uniform self-bounded counterexample for Adaptive Heavy-Tailed FTRL}
\label{app:subsubsec:adaptive-ftrl-self-bounded-counterexample}
We next show that Adaptive Heavy-Tailed FTRL does not in general achieve the optimal gap-dependent rate.
For $\epsilon\in(0,1)$, the algorithm fails even up to logarithmic factors.

\begin{proposition}
    \label{prop:adaptive-ftrl-self-bounded-counterexample}
    For every $\epsilon\in(0,1]$, there exists a loss process satisfying \cref{asp:unknown-epsilon_wise,asp:self_bounded} with $\theta=1$, $\Delta_{\min}=1/64$, and $C=0$ such that, for every integer $T\geq2$, \cref{alg:adaptive-multiscale-ftrl-componentwise_simple} satisfies
    \begin{equation}
        \label{eq:adaptive-ftrl-fixed-process-lower-bound}
        \mathcal{R}_T\geq\frac{\epsilon}{4096}\sum_{\tau=1}^{\floor{T/2}}\tau^{-\epsilon}.
    \end{equation}
    In particular, the regret is $\Omega(T^{1-\epsilon})$ when $\epsilon\in(0,1)$ and $\Omega(\log T)$ when $\epsilon=1$.
\end{proposition}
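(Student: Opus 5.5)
The plan is to build a single two-expert process in which rare, large, truncated spikes on the suboptimal expert inflate that expert's scale $B_t(2)$. This keeps the FTRL iterate from concentrating on the optimal expert for a period proportional to the spike time. Throughout, $K=2$, $i^\ast=1$, $\theta=1$ and $\Delta_{\min}=1/64$.

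\textbf{Construction.} Set $\ell_t(1)=0$ for all $t$. Set $\ell_t(2)=\delta+Z_tS_t$, where:
\begin{itemize}
\item $\delta$ is a small constant, of order $1/64$;
\item the $Z_t$ are independent with $Z_t\sim\mathrm{Bernoulli}(q_t)$;
\item $S_t=c_1t$ and $q_t=c_2t^{-(1+\epsilon)}$, with small absolute constants chosen so that $\E_t[\abs{\ell_t(2)}^{1+\epsilon}]\leq1$.
\end{itemize}
Checking the assumptions is routine. Expert 1 always has zero loss and expert 2 has expected loss at least $\delta\geq\Delta_{\min}$, so \cref{asp:self_bounded} holds with $C=0$ and $\mathcal{R}_T(1)=\sum_t\E[p_t(2)\E_t\ell_t(2)]\geq\Delta_{\min}\sum_t\E[1-p_t(1)]$. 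Moreover $r_t(1)=p_t(2)\ell_t(2)\geq0$ pathwise, so every round's contribution is nonnegative. The process does not depend on $T$, which gives the required horizon uniformity. It therefore suffices to show that, for each $\tau\leq\floor{T/2}$, a spike at $\tau$ forces $\sum_{t=\tau+1}^{2\tau}p_t(2)\gtrsim\tau$.

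\textbf{Effect of a spike at $\tau$.} I would show two things. First, the spike is truncated: $r_\tau(2)=-p_\tau(1)(\delta+S_\tau)$, and its magnitude exceeds $M_\tau(2)$ because all earlier increments total $O(\tau)$ with a smaller constant, by tuning $c_1$. Hence the spike gives no learning signal ($\tilde\ell_\tau(2)=0$). Second, the spike raises the expert-2 scale: by \hyperref[p:regime-lower-bounds]{(P3)}, $d_\tau(2)=\abs{r_\tau(2)}\geq p_\tau(1)S_\tau$. To make this $\gtrsim\tau$ I also need $p_\tau(1)\geq1/2$. This should hold because otherwise $p_\tau(2)$ is already large, and the pre-spike rounds then contribute the needed regret directly; splitting into these two cases handles it. In the main case we get $M_t(2)\geq B_t(2)\geq c_1\tau/(2\Lambda_{\mathrm{F}})$ for all $t\geq\tau$.

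\textbf{Bounding $p_t(2)$ from below after the spike.} For $K=2$, the FTRL first-order conditions give $M_{t-1}(1)\log p_t(1)+\tildeL_{t-1}(1)=M_{t-1}(2)\log p_t(2)+\tildeL_{t-1}(2)+\text{const}$. Two facts feed into this:
\begin{itemize}
\item Non-spike rounds have $\abs{r_t}\leq\delta$. Since truncated and rescaled surrogates satisfy $\abs{\tilde\ell_t}\leq\abs{r_t}$, we get $\abs{\tildeL_{t}(j)}\leq\delta t$ for $t\leq2\tau$, apart from earlier spikes, which are truncated and contribute zero.
\item $M_{t-1}(2)\gtrsim\tau$.
\end{itemize}
Substituting these in should give $p_t(2)\geq c_3>0$ for $t\in(\tau,2\tau]$, after choosing $\delta\Lambda_{\mathrm{F}}/c_1$ small. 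This is the step I expect to be the main obstacle. The scale $M(1)$ may also be large, since $G_t$ inherits part of the spike. Handling both regularizer weights requires a monotonicity argument: I would compare the FTRL objective at $p_t$ against the point moved toward $(1/2,1/2)$, and use that the penalty weight on coordinate $2$ dominates the linear term $\delta t$. If needed, I would replace $c_3$ by a more careful bound such as $p_t(2)\geq\exp(-O(\delta t\Lambda_{\mathrm{F}}/\tau))/2$.

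\textbf{Summation.} Let $A_t$ be the event that some spike occurs in $[\ceil{t/2},t)$. On $A_t$, the previous step gives $p_t(2)\geq c_3$. Thus $\mathcal{R}_T\geq\Delta_{\min}c_3\sum_{t\leq T}\P(A_t)$. For each $t$, $\P(A_t)\gtrsim\sum_{\tau\in[t/2,t)}q_\tau$, since the probabilities are small and the $Z_\tau$ are independent. Exchanging the order of summation gives $\sum_t\P(A_t)\gtrsim\sum_{\tau\leq T/2}\tau\cdot q_\tau=c_2\sum_{\tau\leq\floor{T/2}}\tau^{-\epsilon}$. Finally I would track the constants, including the factor $\epsilon$ that comes from the moment normalization of $q_t$, to reach $\epsilon/4096$. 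The asymptotic forms $\Omega(T^{1-\epsilon})$ for $\epsilon<1$ and $\Omega(\log T)$ for $\epsilon=1$ then follow from elementary integral comparison.
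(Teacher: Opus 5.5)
\textbf{Gap: the summation step over $A_t$.} Your one-sided construction ($\ell_t(1)\equiv0$, spikes only on expert~2) is a legitimate alternative to the paper's symmetric spikes $\ell_t(1)=Z_t\sigma_t t$, $\ell_t(2)=\Delta-Z_t\sigma_t t$. Your single-spike analysis follows the paper's. The problem is that you assert $p_t(2)\ge c_3$ on \emph{all} of $A_t=\{\text{some spike in }[\lceil t/2\rceil,t)\}$, relying on ``earlier spikes are truncated and contribute zero''. That is false once the scale is already inflated.
\begin{itemize}
\item After a spike at $\tau$, $M_\tau(2)\approx\tau/\Lambda_{\mathrm{F}}$. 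A second spike of size about $\tau$ shortly afterwards then has $\lvert r_s(2)\rvert\le M_s(2)$, so it is \emph{not} truncated.
\item It therefore adds a rescaled surrogate of order $M_{s-1}(2)$ to $H=\tilde L(2)-\tilde L(1)$. Meanwhile $B(2)$ grows only by $r_s(2)^2/(M_{s-1}(2)\Lambda_{\mathrm{F}})$.
\item With $k$ such spikes, $H$ grows linearly in $k$ but $M(2)$ only like $\sqrt{k}$. A short computation (e.g.\ $S_t=t$, $\tau$ large) shows that three spikes in consecutive rounds already push $p_t(2)$ below $1/4$. More spikes push it below any fixed $c_3$.
\end{itemize}
So the pathwise bound is only available on single-spike paths. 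You must sum over events such as the paper's disjoint $\mathcal{E}_\tau$ (``the unique spike through round $T$ is at $\tau$''). Their probability is $q_\tau\prod_{s\neq\tau}(1-q_s)$, and keeping it at least a constant times $q_\tau$ requires $\sum_s q_s=O(1)$. Since $\sum_s s^{-1-\epsilon}\approx 1+1/\epsilon$, this forces $q_t=\epsilon/(8t^{1+\epsilon})$, and that is where the factor $\epsilon$ in the bound comes from. It does not come from the moment normalization, which alone allows $q_t=c\,t^{-1-\epsilon}$ with $c$ independent of $\epsilon$. With such $c$, single-spike events have probability only about $e^{-\Theta(c/\epsilon)}q_\tau$.

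\textbf{Smaller points.}
\begin{itemize}
\item Your case split for $p_\tau(1)\ge1/2$ is unnecessary, and its justification fails: a single round with large $p_\tau(2)$ does not give $\Omega(\tau)$ regret. The fact holds pathwise. In your process every round has $\tilde\ell_s(1)\le0\le\tilde\ell_s(2)$, so $H_{t-1}\ge0$. The derivative of the FTRL objective in $p(2)$ at $1/2$ equals $H_{t-1}$, which gives $p_t(2)\le1/2$.
\item A large $M(1)$ is not an obstacle. The derivative at $p(2)=1/4$ is $H_{t-1}-M_{t-1}(2)\log2-M_{t-1}(1)\log(3/2)$, so $M(1)$ only helps.
\item Do not rely on truncation of coordinate~1 at the spike round; it can fail when $p_\tau(2)$ is small. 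Instead use $\lvert\tilde\ell_\tau(i)\rvert\le M_{\tau-1}(i)\le\tau\delta/\Lambda_{\mathrm{F}}$, which the rescaling factor supplies.
\end{itemize}
With these changes and, e.g., $\delta=1/64$, $S_t=t$, $q_t=\epsilon/(8t^{1+\epsilon})$, your route yields the stated bound, in fact with constant $3\epsilon/8192$.
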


For every $\epsilon\in(0,1)$, the failure of Adaptive Heavy-Tailed FTRL to attain the optimal gap-dependent rate under \cref{asp:self_bounded} is intrinsic to the algorithm rather than an artifact of our upper-bound analysis.
For $\epsilon=1$, Adaptive Heavy-Tailed FTRL cannot match the gap-dependent lower bound in \cref{thm:lower_bound_self_bounded_simple}, which has no explicit dependence on $T$.
However, it remains open whether the $\Omega(\log T)$ dependence is tight for Adaptive Heavy-Tailed FTRL.

\begin{proof}
    Fix $\epsilon\in(0,1]$, and set $K=2$ and $\Delta=1/64$.
    Let $(Z_t,\sigma_t)_{t\geq1}$ be independent across rounds, where we take $Z_t\sim\operatorname{Bernoulli}(q_t)$, $q_t=\frac{\epsilon}{8t^{1+\epsilon}}$, and $\P(\sigma_t=1)=\P(\sigma_t=-1)=\frac{1}{2}$, and let $Z_t$ and $\sigma_t$ be independent.
    Define
    \begin{equation}
        \label{eq:adaptive-ftrl-fixed-process-instance}
        \ell_t(1)=Z_t\sigma_t t\quad\text{and}\quad\ell_t(2)=\Delta-Z_t\sigma_t t.
    \end{equation}
    We call an occurrence of $Z_t=1$ a spike.

    We first verify \cref{asp:unknown-epsilon_wise,asp:self_bounded}.
    The first coordinate satisfies
    \[
        \E_t\brk{\abs{\ell_t(1)}^{1+\epsilon}}=q_tt^{1+\epsilon}=\frac{\epsilon}{8}\leq\frac{1}{8}.
    \]
    Since $t\geq1>\Delta$, the second coordinate satisfies
    \begin{equation*}
        \E_t\brk{\abs{\ell_t(2)}^{1+\epsilon}}
        \leq\Delta^{1+\epsilon}+q_t(t+\Delta)^{1+\epsilon}
        \leq\frac{1}{64}+\frac{\epsilon 2^{1+\epsilon}}{8}<1.
    \end{equation*}
    Thus, \cref{asp:unknown-epsilon_wise} holds with $\theta=1$.
    The symmetry of $\sigma_t$ gives $\E_t[\ell_t(1)]=0$ and $\E_t[\ell_t(2)]=\Delta$.
    Hence, expert $1$ is uniquely optimal and it holds that
    \begin{equation}
        \label{eq:adaptive-ftrl-fixed-process-self-bounded-identity}
        \mathcal{R}_T=\mathcal{R}_T(1)=\Delta\sum_{t=1}^T\E[p_t(2)].
    \end{equation}
    Therefore, \cref{asp:self_bounded} holds with $i^\ast=1$, $\Delta_{\min}=\Delta$, and $C=0$.

    We next consider the lower bound for the algorithm.
    Let $\Lambda_{\mathrm{F}}=1+\log2$ and $H_t=\tilde{L}_t(2)-\tilde{L}_t(1)$ for $t\in[T]$.
    On every round with $Z_t=0$, we have $r_t(1)=p_t(2)\Delta$ and $r_t(2)=-p_t(1)\Delta$.
    For $t\geq2$, let $f_{t-1}$ denote the FTRL objective $\inpr{\tilde{L}_{t-1},p}+D_{t-1}(p;\pi)$ viewed as a function of $p(2)$.
    Since $d_1(i)>0$ for both $i\in\set{1,2}$, we have $M_{t-1}(i)>0$ for every $t\geq2$, which implies that $f_{t-1}$ is strictly convex.
    In addition, its derivative is
    \begin{equation}
        \label{eq:adaptive-ftrl-fixed-process-derivative}
        f_{t-1}^\prime(p(2))=H_{t-1}+M_{t-1}(2)\log(2p(2))-M_{t-1}(1)\log(2(1-p(2))).
    \end{equation}

    Suppose that the first spike occurs at round $\tau$.
    Then, for $t\in\set{0,\dots,\tau-1}$ and $i\in\set{1,2}$, we have
    \begin{equation}
        \label{eq:adaptive-ftrl-fixed-process-pre-spike}
        0\leq H_t\leq t\Delta\quad\text{and}\quad M_t(i)\leq\frac{t\Delta}{\Lambda_{\mathrm{F}}},
    \end{equation}
    where we used $\tilde{\ell}_s(1)\leq0\leq\tilde{\ell}_s(2)$, $\abs{\tilde{\ell}_s(i)}\leq\abs{r_s(i)}$, $d_s(i)\leq\abs{r_s(i)}\leq\Delta$, and $g_s\leq\Delta$ for every $s\leq t$.
    For every $2\leq t\leq\tau$, \eqref{eq:adaptive-ftrl-fixed-process-pre-spike} gives $f_{t-1}^\prime(1/2)=H_{t-1}\geq0$, so strict convexity places the minimizer at or below $1/2$.
    Together with $p_1(2)=1/2$, this shows that $p_t(2)\leq1/2$ for every $t\leq\tau$.
    Then, $p_\tau(2)\leq1/2$ gives $p_\tau(1)\geq1/2$, and hence
    \[
        \abs{r_\tau(2)}=p_\tau(1)\abs{\ell_\tau(1)-\ell_\tau(2)}\geq\tau-\frac{\Delta}{2}\geq\frac{\tau}{2}.
    \]
    Combining this inequality with \eqref{eq:adaptive-ftrl-fixed-process-pre-spike} gives $M_{\tau-1}(2)\leq \tau\Delta/\Lambda_{\mathrm{F}}<\tau/2\leq \abs{r_\tau(2)}$.
    Thus, \hyperref[p:regime-lower-bounds]{(P3)} and the definition of $B_\tau(2)$ give
    \begin{equation}
        \label{eq:adaptive-ftrl-fixed-process-large-scale}
        M_\tau(2)\geq B_\tau(2)\geq\frac{{\phi_{M_{\tau-1}(2)}(r_\tau(2))}}{\Lambda_{\mathrm{F}}}\geq\frac{\abs{r_\tau(2)}}{\Lambda_{\mathrm{F}}}\geq\frac{\tau}{2\Lambda_{\mathrm{F}}}\geq\frac{\tau}{4\Lambda_{\mathrm{F}}}.
    \end{equation}
    Moreover, since the truncation and rescaling rules give $\abs{\tilde{\ell}_\tau(i)}\leq M_{\tau-1}(i)$ for both coordinates, \eqref{eq:adaptive-ftrl-fixed-process-pre-spike} yields 
    \begin{equation}
        \label{eq:adaptive-ftrl-fixed-process-pre-spike-2}
        H_\tau\leq H_{\tau-1}+M_{\tau-1}(1)+M_{\tau-1}(2)\leq\tau\Delta+\frac{2\tau\Delta}{\Lambda_{\mathrm{F}}}.
    \end{equation}

    If no further spike occurs through round $2\tau$, then it holds that $H_s-H_{s-1}\leq\Delta$ for every $s\in\set{\tau+1,\ldots,2\tau}$, while $M_s(2)$ remains nondecreasing.
    Hence, every $t\in\set{\tau+1,\ldots,2\tau}$ satisfies
    \begin{equation}
        \label{eq:adaptive-ftrl-fixed-process-post-spike}
        H_{t-1}
        \leq2\tau\Delta\prn*{1+\frac{1}{\Lambda_{\mathrm{F}}}}
        <\frac{\tau\log2}{4\Lambda_{\mathrm{F}}}
        \leq M_{t-1}(2)\log2,
    \end{equation}
    where the first inequality follows from \eqref{eq:adaptive-ftrl-fixed-process-pre-spike-2}, the second inequality follows from $8\Delta(\Lambda_{\mathrm{F}}+1)<\log2$, which is equivalent to $2<7\log2$ when $\Delta=1/64$, and the last inequality follows from \eqref{eq:adaptive-ftrl-fixed-process-large-scale}.
    Thus, $f_{t-1}^\prime(1/4)=H_{t-1}-M_{t-1}(2)\log2-M_{t-1}(1)\log(3/2)<0$.
    Strict convexity therefore yields $p_t(2)>1/4$ for every $t\in\set{\tau+1,\ldots,2\tau}$.

    It remains to sum over disjoint single-spike events.
    For every $\tau\in\brk{\floor{T/2}}$, let $\mathcal{E}_\tau$ denote the event that the unique spike through round $T$ occurs at round $\tau$.
    Integral comparison gives
    \[
        \sum_{s=1}^\infty q_s\leq\frac{\epsilon}{8}\prn*{1+\int_1^\infty x^{-1-\epsilon}\,\mathrm{d}x}=\frac{1+\epsilon}{8}\leq\frac{1}{4}.
    \]
    Since $\prod_s(1-q_s)\geq1-\sum_s q_s$ for every finite product, we obtain
    \begin{equation}
        \label{eq:adaptive-ftrl-fixed-process-spike-probability}
        \P(\mathcal{E}_\tau)=q_\tau\prod_{\substack{s\leq T\\s\neq\tau}}(1-q_s)\geq\frac{3}{4}q_\tau\geq\frac{1}{2}q_\tau.
    \end{equation}
    The events $(\mathcal{E}_\tau)_{\tau=1}^{\floor{T/2}}$ are pairwise disjoint.
    Using \eqref{eq:adaptive-ftrl-fixed-process-self-bounded-identity} and the nonnegativity of $p_t(2)$, we conclude that
    \begin{equation*}
        \mathcal{R}_T
        \geq\Delta\sum_{\tau=1}^{\floor{T/2}}\E\brk*{\ind\brk{\mathcal{E}_\tau}\sum_{t=\tau+1}^{2\tau}p_t(2)}
        \geq\frac{\Delta}{8}\sum_{\tau=1}^{\floor{T/2}}\tau q_\tau
        =\frac{\epsilon}{4096}\sum_{\tau=1}^{\floor{T/2}}\tau^{-\epsilon},
    \end{equation*}
    where we used $p_t(2)>1/4$ on $\mathcal{E}_\tau$ for every $t\in\set{\tau+1,\ldots,2\tau}$ in the second inequality and \eqref{eq:adaptive-ftrl-fixed-process-spike-probability} in the last inequality.
    This proves \eqref{eq:adaptive-ftrl-fixed-process-lower-bound}.

    When $\epsilon=1$, the harmonic sum is at least $\log(\floor{T/2}+1)$, and when $\epsilon\in(0,1)$, it is $\Omega(T^{1-\epsilon})$.
\end{proof}

\subsection{Details of \cref{subsec:omd}}
\label{app:subsec:omd}
We provide the details of the analysis of \cref{alg:parameter-free-heavy-tailed-omd}.

We first analyze the regret over a prefix of a fixed epoch.
Fix an epoch $m$ and a prefix endpoint $T_m\in I_m$, and let $J_m=\set{\min I_m,\ldots,T_m}$.
Fix $\epsilon\in(0,1]$.
For an arbitrary comparator $u\in\Delta_K$, write $\tilde{\mathcal{R}}_{J_m}(u)=\sum_{t\in J_m}\inpr{p_t-u,\tilde{\ell}_t}$.
Also define $\trunc_{J_m}(p)=\sum_{t\in J_m}\inpr{p_t,\bar{\ell}_t-\tilde{\ell}_t}$ and $\trunc_{J_m}(-e_i)=\sum_{t\in J_m}\inpr{-e_i,\bar{\ell}_t-\tilde{\ell}_t}$.
For each $i\in[K]$, let
\[
    u_i^{(m)}=(1-a_m)e_i+\frac{a_m}{K}\bm{1}\in\mathcal{P}_{a_m},
\]
where $\bm{1}\in\R^K$ denotes the all-ones vector.

The analysis proceeds by first bounding $\tilde{\mathcal{R}}_{J_m}(u_i^{(m)})$, then accounting for comparator smoothing and truncation, and finally controlling the adaptive scales.

\begin{lemma}
    \label{lem:omd-smoothed-comparator}
    For every loss sequence, every epoch $m$, every prefix endpoint $T_m\in I_m$, and every $i\in[K]$, \cref{alg:parameter-free-heavy-tailed-omd} satisfies
    \[
        \tilde{\mathcal{R}}_{J_m}(u_i^{(m)})\leq\prn*{2+\frac{e}{2}}\Lambda_mG_{T_m}^{(m)}+\Lambda_mB_{T_m}^{(m)}(i),
    \]
    where $J_m=\set{\min I_m,\ldots,T_m}$.
\end{lemma}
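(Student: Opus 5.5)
The approach is the standard one-step OMD analysis with a time-varying multi-scale entropic regularizer, applied within the prefix $J_m$. For each round $t\in J_m$ after the first nonzero-scale round, the update $p_{t+1}=\argmin_{p\in\mathcal{P}_{a_m}}\{\inpr{\tilde{\ell}_t,p}+D_t^{(m)}(p;p_t)\}$ and its first-order optimality over the convex set $\mathcal{P}_{a_m}$ give the usual three-point inequality. Applied with the comparator $u=u_i^{(m)}\in\mathcal{P}_{a_m}$, it reads
\[
\inpr{p_t-u,\tilde{\ell}_t}\leq D_t^{(m)}(u;p_t)-D_t^{(m)}(u;p_{t+1})+\Bigl[\inpr{p_t-p_{t+1},\tilde{\ell}_t}-D_t^{(m)}(p_{t+1};p_t)\Bigr].
\]
Rounds with $\max_jM_{t-1}^{(m)}(j)=0$ have $\tilde{\ell}_t=0$ (since $|r_t(j)|\leq M_t^{(m)}(j)$ is then required for a nonzero entry, and these rounds cost nothing), so they can be discarded as in \cref{lem:ftrl-term-componentwise_simple}.

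\textbf{Stability term.} I will bound the bracket by the unconstrained maximizer and a local-norm argument. Coordinatewise,
\[
\sup_{x>0}\bigl\{(p_t(j)-x)\tilde{\ell}_t(j)-M_t(j)\bigl(x\log\tfrac{x}{p_t(j)}-x+p_t(j)\bigr)\bigr\}
\leq\tfrac{1}{2}\,\frac{z(j)\,\tilde{\ell}_t(j)^2}{M_t(j)},
\]
where $z(j)\leq p_t(j)e^{|\tilde{\ell}_t(j)|/M_t(j)}\leq e\,p_t(j)$ by the truncation $|\tilde{\ell}_t(j)|\leq M_t^{(m)}(j)$. No rescaling factor is needed, precisely as noted after \cref{alg:parameter-free-heavy-tailed-omd}. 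Then \hyperref[p:regime-lower-bounds]{(P3)} and \hyperref[p:scale-monotonicity]{(P1)} give $\tilde{\ell}_t(j)^2/M_t(j)\leq\phi_{M_t(j)}(r_t(j))\leq d_t(j)$. Summing $p_t(j)d_t(j)$ over $j$ yields $g_t$, so the stability terms total at most $\frac{e}{2}\Lambda_mG_{T_m}^{(m)}$.

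\textbf{Penalty term, the main obstacle.} Telescoping with time-varying $M_t$ produces
\[
D_{t_0}^{(m)}(u;p_{t_0})+\sum_t\bigl(D_t^{(m)}(u;p_{t+1})\text{-type drift}\bigr),
\]
namely $\sum_t\sum_j(M_t(j)-M_{t-1}(j))\bigl(u(j)\log\frac{u(j)}{p_t(j)}-u(j)+p_t(j)\bigr)$. Here the constraint $p_t(j)\geq a_m/K$ and the smoothing of $u_i^{(m)}$ are used.

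For $j\neq i$, $u(j)=a_m/K\leq p_t(j)$, so the bracket is at most $p_t(j)$. Its weighted increments satisfy $\sum_j p_t(j)(M_t(j)-M_{t-1}(j))\leq\sum_jp_t(j)(g_t+d_t(j))/\Lambda_m=2g_t/\Lambda_m$, by the inequality $\max\{a+x,b+y\}-\max\{a,b\}\leq x+y$ used in \cref{lem:multiscale-truncation-error}. This contributes $2G^{(m)}_{T_m}$, which is at most $2\Lambda_mG^{(m)}_{T_m}$.

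For $j=i$, the bracket is at most $\log(K/a_m)+1\leq\Lambda_m$ (recall $a_m=1/H_m$ and $\Lambda_m=1+\log(KH_m)$). Since $M_t(i)$ telescopes to $M_{T_m}(i)\leq G_{T_m}+B_{T_m}(i)$, this contributes $\Lambda_m(G+B(i))$.

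The initial term with $p_{t_0}$ uniform contributes similarly, at most $M(i)\log K$ plus a uniform-mass term. I will absorb it into the same accounting, since the scales start at zero and the first update can be treated as drift from zero.

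The delicate part is making the constants add up to exactly $(2+e/2)\Lambda_mG+\Lambda_mB(i)$. I will split $M(i)$ increments using $\max\{G,B(i)\}$ rather than the sum, charging the $G$-part to the $2\Lambda_mG$ budget where possible. The final step collects the three pieces.
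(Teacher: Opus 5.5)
Your route is the paper's. You use the three-point inequality for the one-step update over $\mathcal{P}_{a_m}$. Stability comes from the unconstrained minimizer together with $|\tilde\ell_t(j)|\le M_t^{(m)}(j)$, then (P3) and (P1). The drift is telescoped with $D^{(m)}_{\min I_m-1}=0$, so the initial term is drift from zero, and it is bounded using $h(u(j),p_t(j))\le p_t(j)$ for $j\ne i$ and increments at most $(g_t+d_t(j))/\Lambda_m$. Two steps need fixing.

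\textbf{Discarding rule.} Your claim that rounds with $\max_j M_{t-1}^{(m)}(j)=0$ have $\tilde\ell_t=0$ is false for the OMD algorithm. That holds for FTRL only because of the factor $\rho_t(j)=M_{t-1}(j)/M_t(j)$. Here truncation is at the current scale $M_t^{(m)}(j)$, which is already positive on the first round $t_1$ with nonzero regrets. Take $K=3$, $p_{t_1}$ uniform, and $\ell_{t_1}=(-L,c,L)$ with $0<c\le L/(2\Lambda_m)$. Then $r_{t_1}(2)=-2c/3$, while $M^{(m)}_{t_1}(2)\ge G^{(m)}_{t_1}\ge 2L/(3\Lambda_m)$, so $\tilde\ell_{t_1}(2)\neq0$. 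Discarding that round drops the possibly positive term $\langle p_{t_1}-u_i^{(m)},\tilde\ell_{t_1}\rangle$. Discard only rounds with $\max_j M_t^{(m)}(j)=0$, where truncation at zero really forces $\tilde\ell_t=0$. The three-point argument then covers $t_1$: your stability bound already uses $M_t$, and $p_{t_1}$ is still uniform, which is exactly your ``initial term as drift from zero''.

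\textbf{Constants.} The final assembly is left open, and splitting $M^{(m)}_{T_m}(i)=\max\{G^{(m)}_{T_m},B^{(m)}_{T_m}(i)\}$ does not close it. With your bound $h(u(i),p_t(i))\le\log(K/a_m)+1=\Lambda_m$, the total is $(\tfrac{e}{2}\Lambda_m+\Lambda_m+2)G^{(m)}_{T_m}+\Lambda_m B^{(m)}_{T_m}(i)$. When $B^{(m)}_{T_m}(i)=0$ you need $\Lambda_m+2\le 2\Lambda_m$, i.e.\ $\Lambda_m\ge2$, however you split. This does hold: $H_m\ge H_0=K$ gives $\Lambda_m=1+\log(KH_m)\ge1+\log(K^2)\ge1+\log 4>2$. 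State this and the bound follows. The paper instead uses the sharper $h(u(i),p_t(i))\le\log(K/a_m)+p_t(i)$. The $p_t(i)$ piece joins the full $p_t$-weighted sum, which totals $2G^{(m)}_{T_m}$. The logarithmic piece gives $(\Lambda_m-1)(G^{(m)}_{T_m}+B^{(m)}_{T_m}(i))$, so only $\Lambda_m\ge1$ is needed.
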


\begin{proof}
    Within this proof, let $p_{T_m+1}$ denote the point obtained by the one-step mirror update in epoch $m$, even when $T_m=\max I_m$.
    The standard three-point identity for the OMD update gives, for every $t\in J_m$,
    \begin{equation*}
        \inpr{p_t-u_i^{(m)},\tilde{\ell}_t}\leq D_t^{(m)}(u_i^{(m)};p_t)-D_t^{(m)}(u_i^{(m)};p_{t+1})+\operatorname{Stab}_t,
    \end{equation*}
    where we define $\operatorname{Stab}_t=\inpr{p_t-p_{t+1},\tilde{\ell}_t}-D_t^{(m)}(p_{t+1};p_t)$.

    We first bound the stability term.
    Rounds with $\max_{j\in[K]}M_t^{(m)}(j)=0$ have $\tilde{\ell}_t=0$ and contribute nothing.
    If $\max_{j\in[K]}M_t^{(m)}(j)>0$, then $G_t^{(m)}>0$, since any positive increment of some $B_t^{(m)}(j)$ also yields a positive increment of $G_t^{(m)}$ by $p_t(j)\geq a_m/K>0$.
    Hence, we have $M_t^{(m)}(j)\geq G_t^{(m)}>0$ for every $j\in[K]$.
    Define the unconstrained point $\hat{p}_{t+1}=\argmin_{p\in\R_{>0}^K}\set{\inpr{p,\tilde{\ell}_t}+D_t^{(m)}(p;p_t)}$.
    Its first-order optimality condition gives
    \begin{equation}
        \label{eq:omd-stability-auxiliary}
        \hat{p}_{t+1}(j)=p_t(j)\exp\prn*{-\frac{\tilde{\ell}_t(j)}{M_t^{(m)}(j)}}
    \end{equation}
    for every $j\in[K]$.
    Since $p_{t+1}$ minimizes the same objective over $\mathcal{P}_{a_m}$, comparison with $\hat{p}_{t+1}$ yields
    \begin{align*}
        \operatorname{Stab}_t
        &\leq \inpr{p_t,\tilde{\ell}_t}-\prn{\inpr{\hat{p}_{t+1},\tilde{\ell}_t}+D_t^{(m)}(\hat{p}_{t+1};p_t)}\\
        &=\sum_{j=1}^KM_t^{(m)}(j)p_t(j)\prn*{\exp\prn*{-\frac{\tilde{\ell}_t(j)}{M_t^{(m)}(j)}}-1+\frac{\tilde{\ell}_t(j)}{M_t^{(m)}(j)}}\\
        &\leq\frac{e}{2}\sum_{j=1}^K\frac{p_t(j)\tilde{\ell}_t(j)^2}{M_t^{(m)}(j)},
    \end{align*}
    where the equality follows from \eqref{eq:omd-stability-auxiliary} and the definition of $D_t^{(m)}$, and the final inequality follows from $e^{-x}-1+x\leq ex^2/2$ for $\abs{x}\leq1$, which applies because
    \[
        \frac{\abs{\tilde{\ell}_t(j)}}{M_t^{(m)}(j)}=\frac{\abs{r_t(j)}}{M_t^{(m)}(j)}\ind\brk*{\abs{r_t(j)}\leq M_t^{(m)}(j)}\leq1.
    \]
    Moreover, \hyperref[p:scale-monotonicity]{(P1)}, \hyperref[p:regime-lower-bounds]{(P3)}, and $M_{t-1}^{(m)}(j)\leq M_t^{(m)}(j)$ give
    \begin{equation*}
        \frac{\tilde{\ell}_t(j)^2}{M_t^{(m)}(j)}
        =\frac{r_t(j)^2}{M_t^{(m)}(j)}\ind\brk*{\abs{r_t(j)}\leq M_t^{(m)}(j)}
        \leq\phi_{M_t^{(m)}(j)}\prn{r_t(j)}
        \leq\phi_{M_{t-1}^{(m)}(j)}\prn{r_t(j)}.
    \end{equation*}
    Consequently, it holds that
    \begin{equation}
        \label{eq:omd-stability-bound}
        \sum_{t\in J_m}\operatorname{Stab}_t\leq\frac{e}{2}\sum_{t\in J_m} \sum_{j=1}^K p_t(j)\phi_{M_{t-1}^{(m)}(j)}(r_t(j)) \leq \frac{e}{2}\sum_{t\in J_m}g_t=\frac{e}{2}\Lambda_mG_{T_m}^{(m)}.
    \end{equation}

    We next bound the drift of the time-varying regularizer $\sum_{t\in J_m}\prn{D_t^{(m)}(u_i^{(m)};p_t)-D_t^{(m)}(u_i^{(m)};p_{t+1})}$.
    Set $D_{\min I_m-1}^{(m)}=0$ and $h(x,y)=x\log(x/y)-x+y$.
    Telescoping the Bregman terms gives
    \begin{align*}
        &\sum_{t\in J_m}\prn*{D_t^{(m)}(u_i^{(m)};p_t)-D_t^{(m)}(u_i^{(m)};p_{t+1})}\\
        &=\sum_{t\in J_m}\prn*{D_t^{(m)}(u_i^{(m)};p_t)-D_{t-1}^{(m)}(u_i^{(m)};p_t)} - D_{T_m}^{(m)}(u_i^{(m)};p_{T_m+1})\\
        &\leq\sum_{t\in J_m}\sum_{j=1}^K\prn*{M_t^{(m)}(j)-M_{t-1}^{(m)}(j)}h\prn{u_i^{(m)}(j),p_t(j)}.
    \end{align*}
    For $j\neq i$, the inequalities $u_i^{(m)}(j)=a_m/K\leq p_t(j)$ and $h(x,y)\leq y$ for $x\leq y$ imply $h(u_i^{(m)}(j),p_t(j))\leq p_t(j)$.
    For coordinate $i$, the bounds $u_i^{(m)}(i)\leq1$ and $p_t(i)\geq a_m/K$ give $h\prn{u_i^{(m)}(i),p_t(i)}\leq\log\frac{K}{a_m}+p_t(i)$.
    By the definition of $M_t^{(m)}(j)$ and the update rules of \cref{alg:parameter-free-heavy-tailed-omd}, we have
    \[
        M_t^{(m)}(j)-M_{t-1}^{(m)}(j)\leq\frac{1}{\Lambda_m}\prn{g_t+d_t(j)}.
    \]
    Using $g_t=\sum_{j=1}^Kp_t(j)d_t(j)$, we obtain
    \[
        \sum_{t\in J_m}\sum_{j=1}^Kp_t(j)\prn{M_t^{(m)}(j)-M_{t-1}^{(m)}(j)}\leq\frac{1}{\Lambda_m}\sum_{t\in J_m}\prn*{g_t+\sum_{j=1}^Kp_t(j)d_t(j)}=2G_{T_m}^{(m)}.
    \]
    Moreover,
    \[
        \sum_{t\in J_m}\prn{M_t^{(m)}(i)-M_{t-1}^{(m)}(i)}=M_{T_m}^{(m)}(i)\leq G_{T_m}^{(m)}+B_{T_m}^{(m)}(i).
    \]
    Therefore, we have
    \begin{align}
        \label{eq:omd-regularizer-drift-bound}
        &\sum_{t\in J_m}\prn*{D_t^{(m)}(u_i^{(m)};p_t)-D_t^{(m)}(u_i^{(m)};p_{t+1})}\nonumber\\
        &\leq\sum_{t\in J_m}\sum_{j=1}^Kp_t(j)\prn*{M_t^{(m)}(j)-M_{t-1}^{(m)}(j)}+\log\frac{K}{a_m}\sum_{t\in J_m}\prn*{M_t^{(m)}(i)-M_{t-1}^{(m)}(i)}\nonumber\\
        &\leq2G_{T_m}^{(m)}+(\Lambda_m-1)\prn*{G_{T_m}^{(m)}+B_{T_m}^{(m)}(i)}.
    \end{align}
    Combining \eqref{eq:omd-stability-bound} and \eqref{eq:omd-regularizer-drift-bound}, and using $\Lambda_m\geq1$, proves the claim.
\end{proof}

We next return from the surrogate losses to the original losses while accounting for comparator smoothing and truncation.
\begin{lemma}
    \label{lem:omd-regret-scales}
    For every loss sequence, every epoch $m$, every prefix endpoint $T_m\in I_m$, and every $i\in[K]$, \cref{alg:parameter-free-heavy-tailed-omd} satisfies
    \[
        \sum_{t\in J_m} r_t(i)\leq {(7+e)\Lambda_m^{\frac{\epsilon}{1+\epsilon}}\prn*{\sum_{t\in J_m}\bar{w}_t}^{\frac{1}{1+\epsilon}}+5\Lambda_m^{\frac{\epsilon}{1+\epsilon}}\prn*{\sum_{t\in J_m}w_t(i)}^{\frac{1}{1+\epsilon}}},
    \]
    where $J_m=\set{\min I_m,\ldots,T_m}$.
\end{lemma}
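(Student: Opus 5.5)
The plan is to decompose the pathwise regret against expert $i$ over the prefix $J_m$ into three parts: the surrogate regret against the smoothed comparator $u_i^{(m)}$, the comparator-smoothing error, and the truncation cost. Each part will be bounded by the scales $G_{T_m}^{(m)}$ and $B_{T_m}^{(m)}(i)$, and these scales will then be converted into moment sums via \cref{lem:multiscale-scale-recursion}.

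First, since $\inpr{p_t,r_t}=0$ and $\bar{\ell}_t=-r_t$, we have
\[
\sum_{t\in J_m}r_t(i)=\sum_{t\in J_m}\inpr{p_t-e_i,\bar{\ell}_t}=\tilde{\mathcal{R}}_{J_m}(u_i^{(m)})+\sum_{t\in J_m}\inpr{u_i^{(m)}-e_i,\tilde{\ell}_t}+\trunc_{J_m}(p)+\trunc_{J_m}(-e_i).
\]
\begin{itemize}
\item The first term is at most $(2+e/2)\Lambda_mG_{T_m}^{(m)}+\Lambda_mB_{T_m}^{(m)}(i)$ by \cref{lem:omd-smoothed-comparator}.
\item The truncation terms are at most $\Lambda_m(G_{T_m}^{(m)}+B_{T_m}^{(m)}(i))$ by \cref{lem:multiscale-truncation-error} with $\mathrm{A}=\mathrm{O}$, applied to the epoch prefix.
\item For the smoothing term, $u_i^{(m)}-e_i=a_m(\bm{1}/K-e_i)$ and $\abs{\tilde{\ell}_t(j)}\leq M_t^{(m)}(j)$. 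This gives a bound of $a_m\sum_{t\in J_m}(\frac1K\sum_jM_t^{(m)}(j)+M_t^{(m)}(i))$. Since $M_t^{(m)}\leq M_{T_m}^{(m)}$ and $\abs{J_m}\leq H_m=1/a_m$, the bound becomes $\frac1K\sum_jM_{T_m}^{(m)}(j)+M_{T_m}^{(m)}(i)$.
\end{itemize}
The term $\frac1K\sum_jB(j)$ is undesirable, because it does not appear in the target bound. I would remove it by using $\abs{\tilde{\ell}_t(j)}\leq\min\{\abs{r_t(j)},M_t^{(m)}(j)\}$. Since $\abs{r_t(j)}\le K\,p_t(j)\abs{r_t(j)}/a_m$ is too crude, I would instead bound $\sum_t\frac{a_m}{K}\sum_j\abs{\tilde\ell_t(j)}\le\sum_t\sum_jp_t(j)\abs{\tilde\ell_t(j)}$, which uses $p_t(j)\ge a_m/K$. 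Then I would bound $\abs{\tilde{\ell}_t(j)}$ per round via \hyperref[p:regime-lower-bounds]{(P3)}.

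This per-round bound is the main obstacle. A cleaner route is to bound $\abs{\tilde\ell_t(j)}\le M_{T_m}^{(m)}(j)$ and sum only $\sum_t a_m/K\le 1/K$ per coordinate. With $M(j)\le G+B(j)$, this still leaves $\frac1K\sum_jB(j)$. So I would handle $\sum_t\frac{a_m}{K}\abs{\tilde\ell_t(j)}$ for $j\ne i$ by comparing with $p_t(j)$ and using that $\abs{\tilde\ell_t(j)}\le \sqrt{M_t(j)\phi_{M_{t-1}(j)}(r_t(j))}$ on the untruncated event, giving at most a constant times $G_{T_m}^{(m)}$ plus $M_{T_m}^{(m)}(i)$. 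The resulting constants should total $(6+e/2)\Lambda_m$-type multiples of $G$ and $(2\Lambda_m+1)$ of $B(i)$, which is consistent with the stated $7+e$ and $5$ after the next step.

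Finally, I would apply \cref{lem:multiscale-scale-recursion} with $\Lambda_{\mathrm{A}}=\Lambda_m\geq1$. This gives $G_{T_m}^{(m)}\leq(2(1+1)^{\epsilon}/\Lambda_m)^{\frac1{1+\epsilon}}(\sum_{J_m}\bar{w}_t)^{\frac1{1+\epsilon}}\leq2\Lambda_m^{-\frac1{1+\epsilon}}(\sum\bar{w}_t)^{\frac1{1+\epsilon}}$, and similarly for $B_{T_m}^{(m)}(i)$. Multiplying by $\Lambda_m$ yields the factor $\Lambda_m^{\frac{\epsilon}{1+\epsilon}}$, and the terms without $\Lambda_m$ are absorbed using $\Lambda_m\ge1$. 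Collecting the constants then gives the stated coefficients.
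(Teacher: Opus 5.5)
Your decomposition and your use of \cref{lem:omd-smoothed-comparator,lem:multiscale-truncation-error,lem:multiscale-scale-recursion} match the paper. However, you never actually bound the comparator-smoothing term $\sum_{t\in J_m}\inpr{u_i^{(m)}-e_i,\tilde{\ell}_t}$. The target you set for it, a constant (or $\Lambda_m$) multiple of $G_{T_m}^{(m)}$ plus $M_{T_m}^{(m)}(i)$, is false in general.

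Your two suggested routes both fail:
\begin{itemize}
\item Replacing $a_m/K$ by $p_t(j)$ outside any moment discards the budget $a_m\abs{J_m}\leq1$. It leaves the first-order sum $\sum_t\sum_jp_t(j)\abs{\tilde{\ell}_t(j)}$, which can be of order $\abs{J_m}$ times the typical regret and is not controlled by $G_{T_m}^{(m)}$.
\item Splitting $\abs{\tilde{\ell}_t(j)}\leq\sqrt{M_t^{(m)}(j)\phi_{M_{t-1}^{(m)}(j)}(r_t(j))}$ by AM--GM or Cauchy--Schwarz either reintroduces $\frac{1}{K}\sum_jM_{T_m}^{(m)}(j)$ or loses the factor $a_m$ again.
\end{itemize}

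Here is a concrete obstruction (my sketch, not in the paper):
\begin{itemize}
\item Use a short warm-up of $O(\Lambda_m)$ rounds with loss $L/\sqrt{H_m}$ on a bad expert $j\neq i$ to drive it to the floor $p_t(j)=a_m/K$.
\item Then give $j$ loss $L$, and all other experts zero, for the rest of the epoch.
\item After $O(\Lambda_m)$ further rounds, $r_t(j)\approx-L$ is untruncated.
\item Both the smoothing term and $\sum_{t\in J_m}r_t(i)$ itself are then of order $L/K$.
\item Meanwhile $G_{T_m}^{(m)}$ and $M_{T_m}^{(m)}(i)$ are only $O(L/(K\sqrt{H_m}))$.
\end{itemize}
So this term cannot be bounded through the scales alone.

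Your constant bookkeeping also does not close. Even granting your intermediate bound, $(2\Lambda_m+1)B_{T_m}^{(m)}(i)$ converts to a coefficient $6$ rather than $5$. Likewise $(6+e/2)\Lambda_mG_{T_m}^{(m)}$ converts to $12+e$ rather than $7+e$.

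The missing idea is to bound the smoothing term directly by the moment sums, bypassing the scales. Use $\abs{\tilde{\ell}_t(j)}\leq\abs{r_t(j)}$ and apply H\"older's inequality with weights $a_m/K$ over the $K\abs{J_m}$ pairs $(t,j)$. Their total mass $a_m\abs{J_m}$ is at most $1$. Invoke $a_m/K\leq p_t(j)$ only inside the $(1+\epsilon)$-moment factor. This gives
\[
\frac{a_m}{K}\sum_{t\in J_m}\sum_{j=1}^K\abs{r_t(j)}\leq\prn*{\sum_{t\in J_m}\bar{w}_t}^{\frac{1}{1+\epsilon}},\qquad a_m\sum_{t\in J_m}\abs{r_t(i)}\leq\prn*{\sum_{t\in J_m}w_t(i)}^{\frac{1}{1+\epsilon}}.
\]
\cref{lem:omd-smoothed-comparator,lem:multiscale-truncation-error} together give $(3+e/2)\Lambda_mG_{T_m}^{(m)}+2\Lambda_mB_{T_m}^{(m)}(i)$. \cref{lem:multiscale-scale-recursion} turns this into coefficients $6+e$ and $4$ on $\Lambda_m^{\frac{\epsilon}{1+\epsilon}}$ times the moment roots. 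The two H\"older terms supply the remaining $+1$ in each coefficient, using $\Lambda_m\geq1$.
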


\begin{proof}
    Fix $i\in[K]$ and recall that $\bar{\ell}_t=-r_t$.
    The definitions of $u_i^{(m)}$ and $\tilde{\mathcal{R}}_{J_m}$, together with $\abs{\tilde{\ell}_t(j)}\leq\abs{r_t(j)}$, give
    \[
        \tilde{\mathcal{R}}_{J_m}(e_i)-\tilde{\mathcal{R}}_{J_m}(u_i^{(m)})\leq\frac{a_m}{K}\sum_{t\in J_m}\sum_{j=1}^K\abs{r_t(j)}+a_m\sum_{t\in J_m}\abs{r_t(i)}.
    \]
    Since $a_m\abs{J_m}\leq1$ and $p_t(j)\geq a_m/K$, H\"older's inequality yields
    \begin{align*}
        \frac{a_m}{K}\sum_{t\in J_m}\sum_{j=1}^K\abs{r_t(j)}
        &\leq(a_m\abs{J_m})^{\frac{\epsilon}{1+\epsilon}}\prn*{\frac{a_m}{K}\sum_{t\in J_m}\sum_{j=1}^K w_t(j)}^{\frac{1}{1+\epsilon}}
        \leq\prn*{\sum_{t\in J_m}\bar{w}_t}^{\frac{1}{1+\epsilon}},\\
        a_m\sum_{t\in J_m}\abs{r_t(i)}
        &\leq(a_m\abs{J_m})^{\frac{\epsilon}{1+\epsilon}}\prn*{a_m\sum_{t\in J_m}w_t(i)}^{\frac{1}{1+\epsilon}}
        \leq \prn*{\sum_{t\in J_m}w_t(i)}^{\frac{1}{1+\epsilon}}.
    \end{align*}
    Since $\inpr{p_t,r_t}=0$, the pathwise cumulative regret decomposes as
    \[
        \sum_{t\in J_m} r_t(i)=\tilde{\mathcal{R}}_{J_m}(e_i)+\trunc_{J_m}(p)+\trunc_{J_m}(-e_i).
    \]
    Applying \cref{lem:omd-smoothed-comparator,lem:multiscale-truncation-error} with $\mathrm{A}=\mathrm{O}$ therefore gives
    \[
        \sum_{t\in J_m} r_t(i)\leq\prn*{3+\frac{e}{2}}\Lambda_mG_{T_m}^{(m)}+2\Lambda_mB_{T_m}^{(m)}(i)+\prn*{\sum_{t\in J_m}\bar{w}_t}^{\frac{1}{1+\epsilon}}+\prn*{\sum_{t\in J_m}w_t(i)}^{\frac{1}{1+\epsilon}}.
    \]
    By \cref{lem:multiscale-scale-recursion} and $\prn{(1+\epsilon)(1+1/\Lambda_m)^\epsilon}^{\frac{1}{1+\epsilon}}\leq2$, we have
    \[
        G_{T_m}^{(m)}\leq2\prn*{\frac{\sum_{t\in J_m}\bar{w}_t}{\Lambda_m}}^{\frac{1}{1+\epsilon}},\quad\text{and}\quad B_{T_m}^{(m)}(i)\leq2\prn*{\frac{\sum_{t\in J_m}w_t(i)}{\Lambda_m}}^{\frac{1}{1+\epsilon}}.
    \]
    Substituting these bounds and using $\Lambda_m\geq1$ proves the claim.
\end{proof}

We now combine the regret bounds across epochs.
\begin{lemma}
    \label{lem:omd-anytime-pathwise}
    For every loss sequence, $\epsilon\in(0,1]$, integer $T\geq2$, and $i\in[K]$, \cref{alg:parameter-free-heavy-tailed-omd} satisfies
    \[
        \sum_{t=1}^T r_t(i)\leq\prn{8\log(KT)}^{\frac{\epsilon}{1+\epsilon}}\prn*{(7+e)\prn*{\sum_{t=1}^T\bar{w}_t}^{\frac{1}{1+\epsilon}}+5\prn*{\sum_{t=1}^T w_t(i)}^{\frac{1}{1+\epsilon}}}.
    \]
\end{lemma}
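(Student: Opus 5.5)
The plan is to sum the per-epoch prefix bounds of \cref{lem:omd-regret-scales} over the epochs that intersect $[T]$. Then H\"older's inequality will aggregate the $(1+\epsilon)$-moment sums across epochs. Finally, the exponential epoch growth $H_{m+1}=KH_m^2$ will control both the number of epochs and the largest $\Lambda_m$.

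Let $m^\ast$ be the last epoch started by round $T$. For each $m\leq m^\ast$, take the prefix endpoint $T_m=\min\{\max I_m,T\}$, so that the sets $J_0,\dots,J_{m^\ast}$ partition $[T]$. By \cref{lem:omd-regret-scales},
\[
\sum_{t=1}^T r_t(i)\leq\sum_{m=0}^{m^\ast}\Lambda_m^{\frac{\epsilon}{1+\epsilon}}\Bigl((7+e)\bigl(\textstyle\sum_{t\in J_m}\bar w_t\bigr)^{\frac{1}{1+\epsilon}}+5\bigl(\textstyle\sum_{t\in J_m}w_t(i)\bigr)^{\frac{1}{1+\epsilon}}\Bigr).
\]
Apply H\"older's inequality with exponents $\frac{1+\epsilon}{\epsilon}$ and $1+\epsilon$ to each of the two sums over $m$:
\[
\sum_m\Lambda_m^{\frac{\epsilon}{1+\epsilon}}X_m^{\frac{1}{1+\epsilon}}\leq\Bigl(\sum_m\Lambda_m\Bigr)^{\frac{\epsilon}{1+\epsilon}}\Bigl(\sum_mX_m\Bigr)^{\frac{1}{1+\epsilon}}.
\]
Since the $J_m$ partition $[T]$, we have $\sum_mX_m$ equal to $\sum_{t=1}^T\bar w_t$ in the first sum and to $\sum_{t=1}^Tw_t(i)$ in the second. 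It therefore suffices to show $\sum_{m=0}^{m^\ast}\Lambda_m\leq8\log(KT)$.

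Bounding $\sum_m\Lambda_m$ is the main step. From $H_0=K$ and $H_{m+1}=KH_m^2$, induction gives $H_m=K^{2^{m+1}-1}$, so $\log(KH_m)=2^{m+1}\log K$. Hence
\[
\sum_{m=0}^{m^\ast}\Lambda_m=(m^\ast+1)+\sum_{m=0}^{m^\ast}2^{m+1}\log K\leq(m^\ast+1)+2^{m^\ast+2}\log K.
\]
If $m^\ast=0$, this equals $1+2\log K$, which is at most $8\log(KT)$ because $K,T\geq2$. If $m^\ast\geq1$, epoch $m^\ast-1$ was completed before round $T$, so $T>H_{m^\ast-1}=K^{2^{m^\ast}-1}$. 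This gives $\log T\geq(2^{m^\ast}-1)\log K\geq2^{m^\ast-1}\log K$, and therefore $2^{m^\ast+2}\log K\leq8\log T$. The additive term $m^\ast+1$ also needs care: since $2^{m^\ast}\leq2\log T/\log K+1$, it is $O(\log\log T)$ and is absorbed into the slack. The exact constant $8$ requires combining the two pieces tightly; for instance, $\log(KT)=\log K+\log T$ supplies an extra $8\log K\geq 8\log2$ to cover $m^\ast+1$. I expect this constant bookkeeping, including the rounding when $T$ lies mid-epoch, to be the only delicate point.

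Substituting the bound $\sum_m\Lambda_m\leq8\log(KT)$ into the two H\"older estimates gives the stated inequality.
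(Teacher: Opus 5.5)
Your proposal is correct and is essentially the paper's proof: apply \cref{lem:omd-regret-scales} on each executed prefix $J_m=I_m\cap[T]$, aggregate across epochs with H\"older's inequality (exponents $\frac{1+\epsilon}{\epsilon}$ and $1+\epsilon$), and show $\sum_m\Lambda_m\le 8\log(KT)$ using $\log(KH_{m+1})=2\log(KH_m)$ and $T>H_{m^\ast-1}$. The one loose end is the additive $m^\ast+1$. Your remark that $8\log K$ covers it does not by itself handle large $m^\ast$ when $K$ is small, although the slack you discarded in $\log T\ge(2^{m^\ast}-1)\log K$ does. The paper avoids this term altogether by bounding $\Lambda_m\le 2\log(KH_m)$, which is valid since $\log(KH_m)\ge\log 4>1$, and this gives $\sum_m\Lambda_m\le 4\log(KH_{m^\ast})\le 8\log(KT)$ directly.
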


\begin{proof}
    Let $M$ be the index of the last epoch started by round $T$, and write $J_m=I_m\cap[T]$ for the executed prefix of epoch $m$.
    Recall that $\Lambda_m=1+\log(KH_m)$.
    By \cref{lem:omd-regret-scales}, for every $m\in\set{0,\dots,M}$, we have
    \[
        \sum_{t\in J_m}r_t(i)\leq {(7+e)\Lambda_m^{\frac{\epsilon}{1+\epsilon}}\prn*{\sum_{t\in J_m}\bar{w}_t}^{\frac{1}{1+\epsilon}}+5\Lambda_m^{\frac{\epsilon}{1+\epsilon}}\prn*{\sum_{t\in J_m}w_t(i)}^{\frac{1}{1+\epsilon}}}.
    \]
    The definition of $H_m$ gives $\log(KH_{m+1})=2\log(KH_m)$ and $\log(KH_m)\geq\log4>1$.
    Thus, it holds that
    \[
    \sum_{m=0}^M \Lambda_m\leq2\sum_{m=0}^M\log(KH_m)=2\log(KH_M)\sum_{m=0}^M2^{m-M}\leq4\log(KH_M).
    \]
    If $M\geq1$, epoch $M-1$ has ended, and hence $T>H_{M-1}$ gives $\log(KH_M)=2\log(KH_{M-1})\leq2\log(KT)$.
    If $M=0$, the same inequality follows from $\log(KH_0)=2\log K\leq2\log(KT)$.
    Therefore, we have
    \begin{equation}
        \label{eq:omd-epoch-log-sum}
        \sum_{m=0}^M \Lambda_m\leq8\log(KT).
    \end{equation}
    Combining these bounds and applying H\"older's inequality give
    \begin{align*}
        \sum_{t=1}^T r_t(i)
        &\leq\prn*{\sum_{m=0}^M \Lambda_m}^{\frac{\epsilon}{1+\epsilon}}\prn*{(7+e)\prn*{\sum_{m=0}^M\sum_{t\in J_m}\bar{w}_t}^{\frac{1}{1+\epsilon}}+5\prn*{\sum_{m=0}^M \sum_{t\in J_m}w_t(i)}^{\frac{1}{1+\epsilon}}}\\
        &\leq\prn{8\log(KT)}^{\frac{\epsilon}{1+\epsilon}}\prn*{(7+e)\prn*{\sum_{t=1}^T\bar{w}_t}^{\frac{1}{1+\epsilon}}+5\prn*{\sum_{t=1}^T w_t(i)}^{\frac{1}{1+\epsilon}}}.
    \end{align*}
    This proves the claim.
\end{proof}

We can now prove \cref{thm:omd-regret-adversarial}.
\begin{proof}[Proof of \cref{thm:omd-regret-adversarial}]
    Fix $i\in[K]$.
    By \cref{lem:known-parameter-centered-moments}, we have
    \[
        \E\brk*{\sum_{t=1}^T\bar{w}_t}\leq2^{1+\epsilon}\theta T,\qquad \E\brk*{\sum_{t=1}^T w_t(i)}\leq2^{1+\epsilon}\theta T.
    \]
    Taking expectations in \cref{lem:omd-anytime-pathwise} and applying Jensen's inequality gives
    \[
        \mathcal{R}_T(i)=O\prn[\big]{(\theta T)^{\frac{1}{1+\epsilon}}\prn{\log(KT)}^{\frac{\epsilon}{1+\epsilon}}}.
    \]
    Taking the maximum over $i\in[K]$ proves the claim.
\end{proof}

For the self-bounded analysis, we need to bound $\bar{w}_t$ and $w_t(i^\ast)$ by using $\prn{1-p_t(i^\ast)}$.
\begin{lemma}
    \label{lem:omd-localized-centered-moments}
    Under \cref{asp:unknown-epsilon_wise}, for every distinguished expert $i^\ast\in[K]$ and $t\in[T]$, it holds that
    \[
        \E_t\brk{\bar{w}_t+w_t(i^\ast)}\leq12\theta\prn{1-p_t(i^\ast)}.
    \]
\end{lemma}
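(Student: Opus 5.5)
Write $q=1-p_t(i^\ast)=\sum_{j\neq i^\ast}p_t(j)$ and $x_j=\ell_t(j)-\ell_t(i^\ast)$ for $j\in[K]$, so $x_{i^\ast}=0$. Note $r_t(j)=\inpr{p_t,\ell_t}-\ell_t(j)=\sum_k p_t(k)x_k - x_j$, and in particular $r_t(i^\ast)=\sum_{k\neq i^\ast}p_t(k)x_k$. The plan is to express everything through the differences $x_j$, whose $(1+\epsilon)$-moments are at most $2^{1+\epsilon}\theta\le 4\theta$ (by $\abs{a-b}^{1+\epsilon}\le 2^\epsilon(\abs a^{1+\epsilon}+\abs b^{1+\epsilon})$ and \cref{asp:unknown-epsilon_wise}), and to extract a factor $q$ from each term.

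\textbf{Step 1: the comparator term.} By convexity of $\abs{\cdot}^{1+\epsilon}$ applied to $r_t(i^\ast)=q\sum_{k\neq i^\ast}\frac{p_t(k)}{q}x_k$, we get $w_t(i^\ast)\le q^{1+\epsilon}\sum_{k\neq i^\ast}\frac{p_t(k)}{q}\abs{x_k}^{1+\epsilon}\le \sum_{k\neq i^\ast}p_t(k)\abs{x_k}^{1+\epsilon}$ (using $q^{\epsilon}\le1$; if $q=0$ both sides vanish). Taking $\E_t$ and using $\mathcal F_{t-1}$-measurability of $p_t$ gives $\E_t[w_t(i^\ast)]\le 4\theta q$.

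\textbf{Step 2: the learner-weighted term.} Split $\bar w_t=p_t(i^\ast)w_t(i^\ast)+\sum_{j\neq i^\ast}p_t(j)w_t(j)$. The first piece is at most $w_t(i^\ast)$, handled by Step 1. For $j\neq i^\ast$, $r_t(j)=r_t(i^\ast)-x_j$, so $w_t(j)\le 2^\epsilon(w_t(i^\ast)+\abs{x_j}^{1+\epsilon})\le 2w_t(i^\ast)+2\abs{x_j}^{1+\epsilon}$. Hence $\sum_{j\neq i^\ast}p_t(j)w_t(j)\le 2q\,w_t(i^\ast)+2\sum_{j\neq i^\ast}p_t(j)\abs{x_j}^{1+\epsilon}$, whose conditional expectation is at most $2q\cdot4\theta q+2\cdot4\theta q\le 16\theta q$ — slightly too large against the constant $12$, so the constants need tightening: instead of the crude $2^\epsilon\le2$ and $2^{1+\epsilon}\le4$, I would bound the moment of $x_k$ more carefully or combine the pieces before applying the triangle inequality. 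For instance, use $\E_t\abs{x_k}^{1+\epsilon}\le 2^{\epsilon}\cdot2\theta$ together with the exact decomposition $\bar w_t + w_t(i^\ast)\le (1+p_t(i^\ast))w_t(i^\ast)+\sum_{j\ne i^\ast}p_t(j)w_t(j)$ and then optimize.

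\textbf{Main obstacle.} The only delicate point is the constant bookkeeping to land exactly at $12\theta$; the structural fact that every term carries a factor $q$ is straightforward from $x_{i^\ast}=0$. If the direct bound comes out as, say, $c\theta q$ with $c$ marginally above $12$, I would refine Step 2 by using $\abs{a-b}^{1+\epsilon}\le 2^\epsilon(\abs a^{1+\epsilon}+\abs b^{1+\epsilon})$ with $q\,w_t(i^\ast)\le q\sum_k p_t(k)\abs{x_k}^{1+\epsilon}$ (so that this part contributes only $O(\theta q^2)\le O(\theta q)$ with a small coefficient), and bound $\E_t\abs{x_k}^{1+\epsilon}$ via $\E_t\abs{\ell_t(k)}^{1+\epsilon}+\E_t\abs{\ell_t(i^\ast)}^{1+\epsilon}$ times $2^\epsilon$, tallying coefficients so that the total is at most $12\theta q$.
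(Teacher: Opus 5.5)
Your Step~1 is correct; it is the paper's bound $\E_t[w_t(i^\ast)]\le 2^{1+\epsilon}\theta(1-p_t(i^\ast))$. Your structural observation, that every term carries a factor $q=1-p_t(i^\ast)$, is also right. The gap is in Step~2, and it is more than bookkeeping.

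Writing $r_t(j)=r_t(i^\ast)-x_j$ and then applying $\abs{a-b}^{1+\epsilon}\le 2^\epsilon(\abs{a}^{1+\epsilon}+\abs{b}^{1+\epsilon})$ charges a second factor $2^\epsilon$. This comes on top of the $2^{1+\epsilon}$ already paid in $\E_t\abs{x_j}^{1+\epsilon}\le 2^{1+\epsilon}\theta$, so the dominant term has coefficient $2^{1+2\epsilon}$. Take $\epsilon=1$ and $p_t(i^\ast)=0$, so $q=1$. Even keeping the factor $q^\epsilon$ you dropped in Step~1, your chain gives $\E_t[w_t(i^\ast)]+2\,\E_t[w_t(i^\ast)]+2\sum_{j\ne i^\ast}p_t(j)\E_t\abs{x_j}^{1+\epsilon}\le 4\theta+8\theta+8\theta=20\theta$, which exceeds $12\theta$.

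The refinements you list do not change this:
\begin{itemize}
\item $2^\epsilon\cdot 2\theta$ is the same quantity as $2^{1+\epsilon}\theta$.
\item The ``exact decomposition'' $(1+p_t(i^\ast))w_t(i^\ast)+\sum_{j\ne i^\ast}p_t(j)w_t(j)$ is the one you already used.
\item The term $2^\epsilon q\,\E_t[w_t(i^\ast)]$ has coefficient $2^{1+2\epsilon}$ (that is, $8$ at $\epsilon=1$), which is not small.
\end{itemize}
So your argument only yields $\E_t[\bar w_t+w_t(i^\ast)]\le c\,\theta(1-p_t(i^\ast))$ with $c$ around $20$ when $\epsilon$ is near $1$. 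That would still suffice for the $O(\cdot)$ use in the proof of \cref{thm:omd-regret-self-bounded}, but it does not prove the lemma as stated.

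The missing idea is to stop routing through $r_t(i^\ast)$ and apply Jensen to every coordinate symmetrically. Since $r_t(j)=\sum_k p_t(k)(\ell_t(k)-\ell_t(j))$ for every $j$, you get $w_t(j)\le\sum_k p_t(k)\abs{\ell_t(k)-\ell_t(j)}^{1+\epsilon}$. Hence $\bar w_t\le\sum_{j,k}p_t(j)p_t(k)\abs{\ell_t(k)-\ell_t(j)}^{1+\epsilon}$. All diagonal terms vanish, and the off-diagonal weight is at most $1-p_t(i^\ast)^2\le 2(1-p_t(i^\ast))$. Each pairwise difference has conditional $(1+\epsilon)$-moment at most $2^{1+\epsilon}\theta$, so $\E_t[\bar w_t]\le 2^{2+\epsilon}\theta(1-p_t(i^\ast))$ with only a single factor $2^{1+\epsilon}$. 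Adding your Step~1 gives $3\cdot 2^{1+\epsilon}\theta(1-p_t(i^\ast))\le 12\theta(1-p_t(i^\ast))$, which is exactly the paper's argument.
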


\begin{proof}
    Conditional Jensen's inequality gives
    \[
        w_t(i^\ast)\leq\sum_{j=1}^Kp_t(j)\abs{\ell_t(j)-\ell_t(i^\ast)}^{1+\epsilon}.
    \]
    The summand with $j=i^\ast$ vanishes, whereas \cref{asp:unknown-epsilon_wise} implies $\E_t\brk*{\abs{\ell_t(j)-\ell_t(i^\ast)}^{1+\epsilon}}\leq2^{1+\epsilon}\theta$.
    Hence $\E_t\brk{w_t(i^\ast)}\leq2^{1+\epsilon}\theta\prn{1-p_t(i^\ast)}$.
    Applying conditional Jensen's inequality once more gives
    \[
        \bar{w}_t\leq\sum_{i=1}^K\sum_{j=1}^Kp_t(i)p_t(j)\abs{\ell_t(j)-\ell_t(i)}^{1+\epsilon}.
    \]
    The $(i^\ast,i^\ast)$ term vanishes, the remaining weights sum to $1-p_t(i^\ast)^2\leq2\prn{1-p_t(i^\ast)}$, and every pairwise difference has conditional $(1+\epsilon)$-moment at most $2^{1+\epsilon}\theta$.
    Combining the two bounds and using $1+\epsilon\leq2$ proves the claim.
\end{proof}

Now, we are ready to prove \cref{thm:omd-regret-self-bounded}.
\begin{proof}[Proof of \cref{thm:omd-regret-self-bounded}]
    Define $Q_T=\sum_{t=1}^T\E\brk{1-p_t(i^\ast)}$.
    By \cref{lem:omd-localized-centered-moments}, we have
    \[
        \E\brk*{\sum_{t=1}^T\prn{\bar{w}_t+w_t(i^\ast)}}\leq12\theta Q_T.
    \]
    Taking expectations in \cref{lem:omd-anytime-pathwise} and applying Jensen's inequality yields
    \begin{equation}
        \label{eq:omd-self-bounded-upper}
        \mathcal{R}_T\leq c_{\mathrm{O}}Q_T^{\frac{1}{1+\epsilon}},
    \end{equation}
    where we set $c_{\mathrm{O}}=O\prn{\theta^{\frac{1}{1+\epsilon}}\prn{\log(KT)}^{\frac{\epsilon}{1+\epsilon}}}$ and used $\mathcal{R}_T=\mathcal{R}_T(i^\ast)$ by \cref{asp:self_bounded}.
    For every $\lambda\in(0,1]$, \cref{asp:self_bounded} and \eqref{eq:omd-self-bounded-upper} imply
    \begin{equation*}
        \mathcal{R}_T
        =(1+\lambda)\mathcal{R}_T-\lambda\mathcal{R}_T
        \leq(1+\lambda)c_{\mathrm{O}}Q_T^{\frac{1}{1+\epsilon}}-\lambda\Delta_{\min}Q_T+\lambda C.
    \end{equation*}
    The maximization formula $\sup_{x\geq0}\set{bx^{\frac{1}{1+\epsilon}}-cx}=O(b^{\frac{1+\epsilon}{\epsilon}}c^{-\frac{1}{\epsilon}})$ gives
    \[
        \mathcal{R}_T=O\prn*{X\lambda^{-\frac{1}{\epsilon}}+\lambda C}\quad\text{and}\quad X=c_{\mathrm{O}}^{\frac{1+\epsilon}{\epsilon}}\Delta_{\min}^{-\frac{1}{\epsilon}}=O\prn*{\log(KT)\prn*{\frac{\theta}{\Delta_{\min}}}^{\frac{1}{\epsilon}}}.
    \]
    If $C=0$, take $\lambda=1$.
    If $C>0$, choosing $\lambda$ of order $\min\set{1,(X/C)^{\frac{\epsilon}{1+\epsilon}}}$ gives
    \[
        \mathcal{R}_T=O\prn{X+X^{\frac{\epsilon}{1+\epsilon}}C^{\frac{1}{1+\epsilon}}}.
    \]
    Substituting the definition of $X$ therefore proves both claims.
\end{proof}

\begin{remark}[Additional logarithmic factor under geometric doubling]
    \label{rem:additional-logarithmic-factor-geometric-doubling}
    Suppose that the epoch length in \cref{alg:parameter-free-heavy-tailed-omd} is replaced by geometric doubling, with $H_0=K$ and $H_{m+1}=2H_m$, while the updates within each epoch are unchanged.
    Let $M$ be the index of the last epoch started by round $T$.
    Then $H_m=K2^m$, and the epoch sum in \eqref{eq:omd-epoch-log-sum} instead satisfies
    \[
        \sum_{m=0}^M \Lambda_m=(M+1)(1+2\log K)+\frac{M(M+1)}{2}\log2=O\prn{\log(KT)+\log^2 T},
    \]
    where we used the fact that $M=0$ when $T\leq K$ and the bounds $M\leq1+\log_2(T/K)$ and $\log K\leq\log T$ when $T>K$.
    Consequently, the same self-bounded analysis yields a gap-dependent regret bound with a logarithmic factor of $O\prn{\log(KT)+\log^2 T}$ rather than $O\prn{\log(KT)}$.
    In contrast, $H_{m+1}=KH_m^2$ gives $\log(KH_{m+1})=2\log(KH_m)$ and hence $\sum_{m=0}^M \Lambda_m=O\prn{\log(KT)}$, as shown above.
\end{remark}

\subsection{Details of \cref{subsec:non-adaptive-bobw}}
\label{app:subsec:non-adaptive}
\subsubsection{Common lemmas}
We introduce and prove some lemmas which we use in the analysis of both the adversarial and self-bounded regret bounds.
\begin{lemma}
    \label{lem:known-parameter-stability-clipping}
    Consider \cref{alg:known-parameter-shannon-ftrl}.
    For every round $t\in[T]$, it holds that
    \begin{equation*}
        \frac{1}{\beta_t}\sum_{j=1}^Kp_t(j)\tildeell_t(j)^2\leq\beta_t^{-\epsilon}\bar{w}_t.
    \end{equation*}
    Moreover, for any $i\in[K]$, the one-round truncation residual satisfies
    \begin{equation*}
        \abs{\inpr{p_t-e_i,-r_t-\tildeell_t}}\leq\beta_t^{-\epsilon}\prn{\bar{w}_t+w_t(i)}.
    \end{equation*}
\end{lemma}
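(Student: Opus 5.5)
Both inequalities are pointwise consequences of the truncation rule $\tildeell_t(j)=-r_t(j)\ind\brk{\abs{r_t(j)}\leq\beta_t}$, with $w_t(j)=\abs{r_t(j)}^{1+\epsilon}$ and $\bar{w}_t=\sum_j p_t(j)w_t(j)$ as in \cref{app:subsec:common_analysis}. No expectation is needed. The plan is to prove one coordinatewise inequality for each claim and then sum with weights $p_t(j)$, or add the comparator coordinate.

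For the stability bound, fix $j$. If $\abs{r_t(j)}>\beta_t$, then $\tildeell_t(j)=0$ and there is nothing to prove. Otherwise $\abs{r_t(j)}/\beta_t\leq1$, and since $1-\epsilon\geq0$,
\[
\frac{\tildeell_t(j)^2}{\beta_t}=\frac{\abs{r_t(j)}^{1+\epsilon}\abs{r_t(j)}^{1-\epsilon}}{\beta_t}\leq\beta_t^{-\epsilon}\abs{r_t(j)}^{1+\epsilon}=\beta_t^{-\epsilon}w_t(j).
\]
This is exactly \hyperref[p:moment-bound]{(P2)} combined with \hyperref[p:regime-lower-bounds]{(P3)} at scale $s=\beta_t$. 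Multiplying by $p_t(j)$ and summing over $j$ gives the first claim.

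For the truncation residual, note that $-r_t(j)-\tildeell_t(j)=-r_t(j)\ind\brk{\abs{r_t(j)}>\beta_t}$. On that event $\abs{r_t(j)}/\beta_t>1$, so
\[
\abs{r_t(j)}\leq\abs{r_t(j)}\prn{\abs{r_t(j)}/\beta_t}^{\epsilon}=\beta_t^{-\epsilon}w_t(j).
\]
Thus every coordinate satisfies $\abs{-r_t(j)-\tildeell_t(j)}\leq\beta_t^{-\epsilon}w_t(j)$. By the triangle inequality,
\[
\abs{\inpr{p_t-e_i,-r_t-\tildeell_t}}\leq\sum_j p_t(j)\abs{-r_t(j)-\tildeell_t(j)}+\abs{-r_t(i)-\tildeell_t(i)}\leq\beta_t^{-\epsilon}\prn{\bar{w}_t+w_t(i)}.
\]

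There is no real obstacle here. The only care points are the edge case $\beta_t>0$, which always holds since $\theta>0$ and $t\geq1$, and the direction of the exponent $1-\epsilon\geq0$ in each regime.
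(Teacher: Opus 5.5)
Your proof is correct and is essentially the paper's argument. The stability term uses the coordinatewise bound $x^2\le\beta_t^{1-\epsilon}|x|^{1+\epsilon}$ for $|x|\le\beta_t$, summed with weights $p_t(j)$. The residual uses $|x|\le\beta_t^{-\epsilon}|x|^{1+\epsilon}$ for $|x|>\beta_t$, followed by the triangle inequality over $p_t$ and the comparator coordinate $i$.
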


\begin{proof}
    Fix $t\in[T]$.
    If $\abs{x}\leq \beta_t$, then $x^2\leq \beta_t^{1-\epsilon}\abs{x}^{1+\epsilon}$.
    The definition of $\tildeell_t$ therefore gives
    \begin{align*}
        \frac{1}{\beta_t}\sum_{j=1}^Kp_t(j)\tildeell_t(j)^2
        &=\frac{1}{\beta_t}\sum_{j=1}^Kp_t(j)r_t(j)^2\ind\brk*{\abs{r_t(j)}\leq \beta_t}
        \leq\frac{\beta_t^{1-\epsilon}}{\beta_t}\sum_{j=1}^Kp_t(j)\abs{r_t(j)}^{1+\epsilon}
        =\beta_t^{-\epsilon}\bar{w}_t,
    \end{align*}
    which proves the first claim.
    Moreover, we have $-r_t(j)-\tildeell_t(j)=-r_t(j)\ind\brk*{\abs{r_t(j)}> \beta_t}$ and $\abs{x}\ind\brk*{\abs{x}> \beta_t}\leq \beta_t^{-\epsilon}\abs{x}^{1+\epsilon}$.
    Hence, for any $i\in[K]$, it holds that
    \begin{align*}
        \abs{\inpr{p_t-e_i,-r_t-\tildeell_t}}
        &\leq\sum_{j=1}^Kp_t(j)\abs{r_t(j)}\ind\brk{\abs{r_t(j)}> \beta_t}+\abs{r_t(i)}\ind\brk{\abs{r_t(i)}> \beta_t}\\
        &\leq \beta_t^{-\epsilon}\prn{\bar{w}_t+w_t(i)},
    \end{align*}
    which proves the second claim.
\end{proof}

The following lemma is a standard second-order bound for entropy-FTRL with a time-varying regularization scale (\textit{e.g.}, \citet[lemma 6]{NEURIPS2020_amir_prediction_expert}, \citet[lemma 1]{NEURIPS2021_ito_robustness_adversarial_decision}). 
We state the entropy-drift term explicitly, as this form will be needed in the self-bounded analysis.
\begin{lemma}
    \label{lem:known-parameter-refined-ftrl-formal}
    Let $\pi$ be the uniform distribution on $[K]$ and $0<\beta_1\leq\cdots\leq\beta_T$.
    Suppose that $\tildeell_1,\ldots,\tildeell_T\in\R^K$ satisfy $\max_j\abs{\tildeell_t(j)}\leq\beta_t$.
    Set $\tildeL_0=0$ and $\tildeL_t=\sum_{s=1}^t\tildeell_s$, and define $p_t\in\argmin_{p\in\Delta_K}\set{\inpr{\tildeL_{t-1},p}+\beta_t\KL(p;\pi)}$.
    Then, for every $u\in\Delta_K$, it holds that
    \begin{equation}
        \label{eq:known-parameter-refined-ftrl-target}
        \sum_{t=1}^T\inpr{p_t-u,\tildeell_t}
        \leq\beta_1\log K+\sum_{t=1}^{T-1}(\beta_{t+1}-\beta_t)H(p_{t+1})+\frac{e}{2}\sum_{t=1}^T\sum_{j=1}^K \frac{p_t(j)\tildeell_t(j)^2}{\beta_t},
    \end{equation}
    where $H(p)=-\sum_jp(j)\log p(j)$.
    Consequently, it holds that
    \begin{equation}
        \label{eq:known-parameter-ftrl-master-target}
        \sum_{t=1}^T\inpr{p_t-e_i,\tildeell_t}\leq\beta_T\log K+\frac{e}{2}\sum_{t=1}^T\sum_{j=1}^K \frac{p_t(j)\tildeell_t(j)^2}{\beta_t}.
    \end{equation}
\end{lemma}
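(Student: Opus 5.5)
The approach is the standard potential-based FTRL analysis with a time-varying Shannon regularizer. We bound the one-step regret against the FTRL potential, and the non-decreasing scale $\beta_t$ produces the entropy-drift term.

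Define $F_t(p)=\inpr{\tildeL_{t-1},p}+\beta_t\KL(p;\pi)$, so $p_t=\argmin_{p\in\Delta_K}F_t(p)$. Since $\pi$ is uniform, $\KL(p;\pi)=\log K-H(p)$. Write $\Phi_t=\min_p F_t(p)=F_t(p_t)$. The proof rests on the identity
\[
\sum_{t=1}^T\inpr{p_t-u,\tildeell_t}
=\sum_{t=1}^T\bigl(\inpr{p_t,\tildeell_t}+\Phi_t-\Phi_{t+1}'\bigr)+\sum_{t=1}^{T}\bigl(\Phi_{t+1}'-\Phi_t\bigr)-\inpr{u,\tildeL_T}.
\]
Here $\Phi_{t+1}'=\min_p\{\inpr{\tildeL_t,p}+\beta_t\KL(p;\pi)\}$ is the next-round potential computed with the old scale.

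\textbf{Penalty part.} First I would show
\[
\Phi_{t+1}-\Phi_{t+1}'\geq(\beta_{t+1}-\beta_t)\KL(p_{t+1};\pi)\geq0\quad\text{is the wrong direction.}
\]
So I will instead compare at the minimizer $p_{t+1}$ of $F_{t+1}$: $\Phi_{t+1}'\leq F_{t+1}(p_{t+1})-(\beta_{t+1}-\beta_t)\KL(p_{t+1};\pi)$. Telescoping $\Phi'_{t+1}-\Phi_t$ then leaves $\Phi'_{T+1}-\Phi_1$ plus the cumulative sum of $-(\beta_{t+1}-\beta_t)\KL(p_{t+1};\pi)$. Using $\Phi_1=0$, $\Phi'_{T+1}\leq\inpr{\tildeL_T,u}+\beta_T\KL(u;\pi)\leq\inpr{\tildeL_T,u}+\beta_T\log K$, and $\beta_T\log K-\sum_t(\beta_{t+1}-\beta_t)\KL(p_{t+1};\pi)=\beta_1\log K+\sum_{t}(\beta_{t+1}-\beta_t)H(p_{t+1})$, the first two terms of \eqref{eq:known-parameter-refined-ftrl-target} follow.

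\textbf{Stability part.} Each term $\inpr{p_t,\tildeell_t}+\Phi_t-\Phi'_{t+1}$ has a fixed scale $\beta_t$. The entropic regularizer gives the closed form $\Phi'_{t+1}-\Phi_t=-\beta_t\log\sum_jp_t(j)\exp(-\tildeell_t(j)/\beta_t)$. The hypothesis $|\tildeell_t(j)|\leq\beta_t$ lets us apply $e^{-x}\leq1-x+\frac e2x^2$ for $|x|\leq1$, together with $\log(1+y)\leq y$. This bounds the term by $\frac e2\sum_jp_t(j)\tildeell_t(j)^2/\beta_t$.

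\textbf{Consequence.} For \eqref{eq:known-parameter-ftrl-master-target}, take $u=e_i$ and use $H(p_{t+1})\leq\log K$. The drift terms then telescope: $\beta_1\log K+(\beta_T-\beta_1)\log K=\beta_T\log K$.

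The main obstacle is handling the scale change cleanly. The key move is evaluating $\Phi'_{t+1}$ at $p_{t+1}$ so that the gap appears as $-(\beta_{t+1}-\beta_t)\KL(p_{t+1};\pi)$, and then converting it to entropy via $\KL=\log K-H$. The remaining bookkeeping, with the boundary potential at round $T+1$ defined using $\beta_T$, is routine.
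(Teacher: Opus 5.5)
Your proposal is correct and is essentially the paper's proof: your $\Phi'_{t+1}-\Phi_t$ is exactly the paper's mix loss $m_t$. The stability term is bounded by the same $e^{-x}\le 1-x+\frac{e}{2}x^2$ and $\log(1+y)\le y$ argument, and the scale change is handled by the same comparison of the old-scale potential with the objective evaluated at $p_{t+1}$, followed by $\KL(p;\pi)=\log K-H(p)$. One cosmetic point: the inequality $\Phi_{t+1}-\Phi'_{t+1}\geq(\beta_{t+1}-\beta_t)\KL(p_{t+1};\pi)$ that you call ``the wrong direction'' is precisely the inequality you then derive and use, so that aside should be deleted.
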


\begin{proof}
    For $\beta>0$ and $y\in\R^K$, let $\Phi_\beta(y)$ denote the optimal value of the entropy-regularized objective with cumulative loss vector $y$ and regularization scale $\beta$.
    We can write $\Phi_\beta(y)$ as
    \[
        \Phi_\beta(y)=\min_{p\in\Delta_K}\set*{\inpr{y,p}+\beta\KL(p;\pi)}=-\beta\log\prn*{\sum_{j=1}^K\pi(j)e^{-\frac{y(j)}{\beta}}}.
    \]
    At round $t$, define the mix loss $m_t$ as
    \[
        m_t=\Phi_{\beta_t}(\tildeL_t)-\Phi_{\beta_t}(\tildeL_{t-1})=-\beta_t\log\prn*{\sum_{j=1}^Kp_t(j)e^{-\frac{\tildeell_t(j)}{\beta_t}}},
    \]
    where the second equality follows from the definition of $p_t$ as the minimizer associated with $\tildeL_{t-1}$ and $\beta_t$.

    For each round, the loss difference against $u$ is the sum of the one-step mixability gap and the difference between the mix loss and the loss of $u$.
    Summing this decomposition gives
    \[
        \sum_{t=1}^T\inpr{p_t-u,\tildeell_t}=\sum_{t=1}^T\prn{\inpr{p_t,\tildeell_t}-m_t}+\sum_{t=1}^T\prn{m_t-\inpr{u,\tildeell_t}}.
    \]

    First, we bound the first term $\sum_{t=1}^T\prn{\inpr{p_t,\tildeell_t}-m_t}$.
    Set $z_j=-\tildeell_t(j)/\beta_t$, which satisfies $\abs{z_j}\leq1$ by the definition.
    Using $\log u\leq u-1$ for $u>0$ and $e^z\leq1+z+\frac{e}{2}z^2$ for every $z\in[-1,1]$, we obtain
    \begin{equation}
        \label{eq:known-parameter-formal-one-step-mix-loss}
        \sum_{t=1}^T\prn{\inpr{p_t,\tildeell_t}-m_t}
        =\sum_{t=1}^T\beta_t\prn*{\log\prn*{\sum_{j=1}^Kp_t(j)e^{z_j}}-\sum_{j=1}^Kp_t(j)z_j}
        \leq \frac{e}{2}\sum_{t=1}^T\sum_{j=1}^K \frac{p_t(j)\tildeell_t(j)^2}{\beta_t}.
    \end{equation}
    We next bound the second term $\sum_{t=1}^T\prn{m_t-\inpr{u,\tildeell_t}}$.
    From the definition of $m_t$, we have
    \begin{equation*}
        \sum_{t=1}^Tm_t=\Phi_{\beta_T}(\tildeL_T)-\Phi_{\beta_1}(0)+\sum_{t=1}^{T-1}\prn{\Phi_{\beta_t}(\tildeL_t)-\Phi_{\beta_{t+1}}(\tildeL_t)}.
    \end{equation*}
    For every $t\in[T-1]$, the definition of the FTRL update shows that $p_{t+1}$ minimizes the objective defining $\Phi_{\beta_{t+1}}(\tildeL_t)$.
    Thus, it holds that
    \begin{equation*}
        \Phi_{\beta_t}(\tildeL_t)
        \leq\inpr{\tildeL_t,p_{t+1}}+\beta_t\KL(p_{t+1};\pi)
        =\Phi_{\beta_{t+1}}(\tildeL_t)-(\beta_{t+1}-\beta_t)\KL(p_{t+1};\pi).
    \end{equation*}
    In addition, we have
    \begin{equation*}
        \Phi_{\beta_T}(\tildeL_T)\leq\inpr{\tildeL_T,u}+\beta_T\KL(u;\pi).
    \end{equation*}
    Substituting these inequalities into the telescoping identity gives
    \begin{equation}
        \label{eq:known-parameter-formal-cumulative-mix-loss}
        \sum_{t=1}^T\prn{m_t-\inpr{u,\tildeell_t}}\leq\beta_T\KL(u;\pi)-\sum_{t=1}^{T-1}(\beta_{t+1}-\beta_t)\KL(p_{t+1};\pi).
    \end{equation}
    
    Under the convention $0\log0=0$, the uniformity of $\pi$ implies $\KL(p;\pi)=\log K-H(p)\leq \log K$.
    Using \eqref{eq:known-parameter-formal-one-step-mix-loss}, \eqref{eq:known-parameter-formal-cumulative-mix-loss}, and the identity $\beta_T=\beta_1+\sum_{t=1}^{T-1}(\beta_{t+1}-\beta_t)$ proves \eqref{eq:known-parameter-refined-ftrl-target}.
    Finally, replacing each $H(p_{t+1})$ by its upper bound $\log K$ and using the same identity for $\beta_T$ proves \eqref{eq:known-parameter-ftrl-master-target}.
\end{proof}

For every $t\in[T]$ and $i\in[K]$, we define
\begin{equation}
    \label{eq:known-parameter-gamma-def}
    W_t(i)=\prn*{\frac{e}{2}+1}\bar{w}_t+w_t(i).
\end{equation}

\subsubsection{Adversarial}
\label{subsec:known-parameter-formal-adversarial}
In this section, we prove \cref{thm:known-parameter-adversarial}.
First, we state \cref{thm:known-parameter-adversarial} in a more formal way.
\begin{theorem}[Formal version of \cref{thm:known-parameter-adversarial}]
    \label{thm:known-parameter-adversarial-formal}
    Suppose that \cref{asp:unknown-epsilon_wise} holds.
    Define $C_\epsilon=2^\epsilon e+2^{2+\epsilon}$.
    Then, for any $T\geq1$ and $i\in[K]$, \cref{alg:known-parameter-shannon-ftrl} satisfies
    \begin{equation}
        \label{eq:known-parameter-formal-adversarial-intermediate}
        \mathcal{R}_T(i)\leq\beta_T\log K+C_\epsilon\theta\sum_{t=1}^T\beta_t^{-\epsilon}.
    \end{equation}
    In particular, it holds that
    \begin{equation}
        \label{eq:known-parameter-formal-adversarial-rate}
        \mathcal{R}_T(i)\leq\prn*{1+(1+\epsilon)C_\epsilon}(\theta T)^{\frac{1}{1+\epsilon}}(1+\log K)^{\frac{\epsilon}{1+\epsilon}}.
    \end{equation}
\end{theorem}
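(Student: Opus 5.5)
The plan is to follow the decomposition used for the adaptive algorithms, but with the deterministic scale $\beta_t$. Fix $i\in[K]$. Since $\inpr{p_t,r_t}=0$, we have pathwise
\[
\sum_{t=1}^T r_t(i)=\sum_{t=1}^T\inpr{p_t-e_i,\tildeell_t}+\sum_{t=1}^T\inpr{p_t-e_i,-r_t-\tildeell_t}.
\]
The truncation rule gives $\abs{\tildeell_t(j)}\leq\beta_t$, and $\beta_t$ is positive and nondecreasing in $t$. Hence \eqref{eq:known-parameter-ftrl-master-target} of \cref{lem:known-parameter-refined-ftrl-formal} applies and bounds the first sum by $\beta_T\log K+\frac{e}{2}\sum_t\beta_t^{-1}\sum_jp_t(j)\tildeell_t(j)^2$. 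By the first claim of \cref{lem:known-parameter-stability-clipping}, the stability part is at most $\frac{e}{2}\sum_t\beta_t^{-\epsilon}\bar w_t$. By the second claim of the same lemma, the truncation residual is at most $\sum_t\beta_t^{-\epsilon}(\bar w_t+w_t(i))$. Together these give the pathwise bound
\[
\sum_t r_t(i)\leq\beta_T\log K+\sum_t\beta_t^{-\epsilon}W_t(i),
\]
with $W_t(i)$ as in \eqref{eq:known-parameter-gamma-def}.

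Next I take expectations. Since $\beta_t$ is deterministic, I can condition on $\mathcal{F}_{t-1}$ and apply \cref{lem:known-parameter-centered-moments}, which gives $\E_t[W_t(i)]\leq(\frac e2+2)2^{1+\epsilon}\theta$. This constant equals $2^\epsilon e+2^{2+\epsilon}$, which is exactly $C_\epsilon$. So \eqref{eq:known-parameter-formal-adversarial-intermediate} follows. There is no Jensen or concavity issue here, because the scale is not data-dependent.

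For the rate \eqref{eq:known-parameter-formal-adversarial-rate}, substitute $\beta_t=(\theta t)^{\frac1{1+\epsilon}}(1+\log K)^{-\frac1{1+\epsilon}}$. The first term is $\beta_T\log K\leq(\theta T)^{\frac1{1+\epsilon}}(1+\log K)^{\frac{\epsilon}{1+\epsilon}}$, using $\log K\leq 1+\log K$. For the second term, use $\sum_{t=1}^T t^{-\frac{\epsilon}{1+\epsilon}}\leq(1+\epsilon)T^{\frac1{1+\epsilon}}$, by integral comparison. Then
\[
\theta\sum_t\beta_t^{-\epsilon}=\theta^{\frac1{1+\epsilon}}(1+\log K)^{\frac{\epsilon}{1+\epsilon}}\sum_t t^{-\frac\epsilon{1+\epsilon}}\leq(1+\epsilon)(\theta T)^{\frac1{1+\epsilon}}(1+\log K)^{\frac\epsilon{1+\epsilon}}.
\]
Adding the two terms yields the stated constant $1+(1+\epsilon)C_\epsilon$.

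The only delicate point is checking the hypotheses of \cref{lem:known-parameter-refined-ftrl-formal}: that the algorithm's $p_t$ is exactly the FTRL iterate defined there with the same $\beta_t$, and that the bound $\abs{\tildeell_t(j)}\leq\beta_t$ uses the current round's $\beta_t$ rather than $\beta_{t+1}$. Both hold directly from \cref{alg:known-parameter-shannon-ftrl}. The rest is bookkeeping of constants.
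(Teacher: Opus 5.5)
Your proposal is correct and follows the paper's proof essentially step by step. It uses the same pathwise split into surrogate regret plus truncation residual, the same combination of the entropy-FTRL lemma with the stability/clipping lemma to reach $\beta_T\log K+\sum_t\beta_t^{-\epsilon}W_t(i)$, the same conditional moment bound $\E_t[W_t(i)]\le(\frac{e}{2}+2)2^{1+\epsilon}\theta=C_\epsilon\theta$, and the same integral comparison for $\sum_{t=1}^T t^{-\frac{\epsilon}{1+\epsilon}}$, which together yield the constant $1+(1+\epsilon)C_\epsilon$.
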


\begin{proof}
    The truncation rule gives $\abs{\tildeell_t(j)}\leq\beta_t$, so \cref{lem:known-parameter-refined-ftrl-formal} applies to every realized loss sequence.
    Using \cref{lem:known-parameter-stability-clipping,lem:known-parameter-refined-ftrl-formal} gives
    \begin{equation*}
        \sum_{t=1}^T\prn*{\inpr{p_t,\ell_t}-\ell_t(i)}\leq\beta_T\log K+\sum_{t=1}^T\beta_t^{-\epsilon}W_t(i).
    \end{equation*}

    Hence, the definition of $W_t(i)$ and \cref{lem:known-parameter-centered-moments} give
    \[
        \E_t\brk*{W_t(i)}\leq2^{1+\epsilon}\theta\prn*{\frac{e}{2}+2}=C_\epsilon\theta.
    \]
    Since the quantities $p_t$ and $\beta_t$ are $\mathcal{F}_{t-1}$-measurable, taking expectations in the pathwise bound gives
    \begin{equation*}
        \mathcal{R}_T(i)\leq\beta_T\log K+C_\epsilon\theta\sum_{t=1}^T\beta_t^{-\epsilon}.
    \end{equation*}

    Since $\log K\leq1+\log K$, we have
    \begin{align*}
        \beta_T\log K
        &=\prn{\theta T}^{\frac{1}{1+\epsilon}}(1+\log K)^{-\frac{1}{1+\epsilon}}\log K
        \leq(\theta T)^{\frac{1}{1+\epsilon}}(1+\log K)^{\frac{\epsilon}{1+\epsilon}},
    \end{align*}
    where we recall $\beta_T=\prn{\theta T}^{\frac{1}{1+\epsilon}}(1+\log K)^{-\frac{1}{1+\epsilon}}$.
    
    Moreover, we can upper bound $\sum_{t=1}^T\beta_t^{-\epsilon}$ as follows:
    \begin{align*}
        \theta\sum_{t=1}^T\beta_t^{-\epsilon}
        &=\theta^{\frac{1}{1+\epsilon}}(1+\log K)^{\frac{\epsilon}{1+\epsilon}}\sum_{t=1}^Tt^{-\frac{\epsilon}{1+\epsilon}}
        \leq(1+\epsilon)(\theta T)^{\frac{1}{1+\epsilon}}(1+\log K)^{\frac{\epsilon}{1+\epsilon}},
    \end{align*}
    where the last inequality follows from
    \begin{equation}
        \label{eq:known-parameter-sum-bound}
        \sum_{t=1}^Tt^{-\frac{\epsilon}{1+\epsilon}}\leq1+\int_1^Tx^{-\frac{\epsilon}{1+\epsilon}} \mathrm{d}x=(1+\epsilon)T^{\frac{1}{1+\epsilon}}-\epsilon\leq(1+\epsilon)T^{\frac{1}{1+\epsilon}}.
    \end{equation}

    Combining these bounds proves \eqref{eq:known-parameter-formal-adversarial-rate}.
\end{proof}

\subsubsection{Self-bounded}
\label{subsec:stochastic}
Here, we provide the proof of \cref{thm:known-parameter-stochastic}.

For convenience, define $M_\epsilon=1+\epsilon+\frac{(1+\epsilon)2^{\frac{1}{1+\epsilon}}\log2}{\prn*{1-2^{-\frac{\epsilon}{1+\epsilon}}}^2}$, $A_\epsilon=3\cdot2^\epsilon e+3\cdot2^{1+\epsilon}$ and $B_\epsilon=\frac{M_\epsilon}{1+\epsilon}+(1+\epsilon)A_\epsilon$.
First, we restate the theorem formally.
\begin{theorem}[Formal version of \cref{thm:known-parameter-stochastic}]
    \label{thm:known-parameter-stochastic-formal}
    Suppose that \cref{asp:unknown-epsilon_wise,asp:self_bounded} hold.
    Then, for any $T\geq1$, \cref{alg:known-parameter-shannon-ftrl} satisfies
    \begin{equation*}
        \mathcal{R}_T\leq 2\kappa_\epsilon (B_\epsilon+2^{\frac{1}{1+\epsilon}})^{\frac{1+\epsilon}{\epsilon}}\prn*{\frac{\theta}{\Delta_{\min}}}^{\frac{1}{\epsilon}}(1+\log K)+2(B_\epsilon+2^{\frac{1}{1+\epsilon}})\prn*{\frac{\theta C}{\Delta_{\min}}}^{\frac{1}{1+\epsilon}}(1+\log K)^{\frac{\epsilon}{1+\epsilon}},
    \end{equation*}
    where $\kappa_\epsilon=\frac{\epsilon}{1+\epsilon}\prn*{\frac{2}{1+\epsilon}}^{\frac{1}{\epsilon}}$.
    In particular, when $C=0$,
    \[
        \mathcal{R}_T\leq 2\kappa_\epsilon (B_\epsilon+2^{\frac{1}{1+\epsilon}})^{\frac{1+\epsilon}{\epsilon}}\prn*{\frac{\theta}{\Delta_{\min}}}^{\frac{1}{\epsilon}}(1+\log K).
    \]
\end{theorem}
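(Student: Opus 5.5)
We start from the pathwise bound established in the adversarial analysis, but keep the entropy-drift term from \eqref{eq:known-parameter-refined-ftrl-target}. Applying \cref{lem:known-parameter-refined-ftrl-formal} with $u=e_{i^\ast}$ and \cref{lem:known-parameter-stability-clipping}, we obtain
\[
\sum_{t=1}^T r_t(i^\ast)\leq\beta_1\log K+\sum_{t=1}^{T-1}(\beta_{t+1}-\beta_t)H(p_{t+1})+\sum_{t=1}^T\beta_t^{-\epsilon}W_t(i^\ast).
\]
Both remaining sums must then be localized to $1-p_t(i^\ast)$. For the stability and clipping part, we argue as in \cref{lem:omd-localized-centered-moments}: $\E_t[\bar w_t]$ and $\E_t[w_t(i^\ast)]$ are each bounded by a constant times $\theta(1-p_t(i^\ast))$, so $\E[\sum_t\beta_t^{-\epsilon}W_t(i^\ast)]\leq A_\epsilon\theta\sum_t\beta_t^{-\epsilon}\E[1-p_t(i^\ast)]$. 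For the entropy term, we use the standard bound $H(p)\leq(1-p(i^\ast))\log\frac{eK}{1-p(i^\ast)}$ and define $x_t=1-p_t(i^\ast)$. Since $\beta_{t+1}-\beta_t\leq\frac{1}{1+\epsilon}\beta_t/t$, the drift term is at most $\sum_t\frac{\beta_t}{(1+\epsilon)t}x_t\log\frac{eK}{x_t}$.

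Next we turn these sums into a single concave function of $Q_T=\sum_t\E[x_t]$. Write $\beta_t=c\,t^{\frac{1}{1+\epsilon}}$ with $c=\theta^{\frac{1}{1+\epsilon}}(1+\log K)^{-\frac{1}{1+\epsilon}}$. The clipping term is $\theta c^{-\epsilon}\sum_t t^{-\frac{\epsilon}{1+\epsilon}}x_t$. Because $t^{-\frac{\epsilon}{1+\epsilon}}$ is decreasing and $x_t\in[0,1]$, this sum is maximized, for a fixed total $\sum_t x_t=Q$, by putting all the mass on the first rounds. It is therefore at most $(1+\epsilon)Q^{\frac{1}{1+\epsilon}}$, which gives $O((\theta Q)^{\frac{1}{1+\epsilon}}(1+\log K)^{\frac{\epsilon}{1+\epsilon}})$. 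The entropy term is handled the same way after splitting $\log\frac{eK}{x_t}=\log(eK)+\log\frac1{x_t}$. The $\log K$ part is again a decreasing-weight sum $\frac{c\log(eK)}{1+\epsilon}\sum t^{-\frac{\epsilon}{1+\epsilon}}x_t$. The $\log(1/x_t)$ part is the main obstacle, and we would control it by dyadic blocking: on blocks $t\in[2^k,2^{k+1})$, concavity of $x\log(1/x)$ bounds the block contribution by $\beta_{2^k}\bar x_k\log(1/\bar x_k)$, with $\bar x_k$ the block average. Summing the resulting geometric series produces the constant $M_\epsilon$, whose $(1-2^{-\frac{\epsilon}{1+\epsilon}})^{-2}$ factor comes from this series. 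Jensen's inequality moves the expectations inside, since $x\mapsto x\log(1/x)$ is concave. Combined with $\beta_1\log K\leq c(1+\log K)$, the target is
\[
\mathcal{R}_T\leq (B_\epsilon+2^{\frac{1}{1+\epsilon}})(\theta Q_T)^{\frac{1}{1+\epsilon}}(1+\log K)^{\frac{\epsilon}{1+\epsilon}}+(\text{lower order}),
\]
where any leftover additive constant such as $\beta_1\log K$ is absorbed, via the $2^{\frac{1}{1+\epsilon}}$ slack, into the first term whenever $Q_T\geq1$; the case $Q_T<1$ is trivial.

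Finally we apply the self-bounding trick exactly as in the proof of \cref{thm:omd-regret-self-bounded}. Writing $\mathcal{R}_T=2\mathcal{R}_T-\mathcal{R}_T$ and using \cref{asp:self_bounded} gives $\mathcal{R}_T\leq 2bQ_T^{\frac{1}{1+\epsilon}}-\Delta_{\min}Q_T+C$, with $b=(B_\epsilon+2^{\frac{1}{1+\epsilon}})\theta^{\frac{1}{1+\epsilon}}(1+\log K)^{\frac{\epsilon}{1+\epsilon}}$. Maximizing over $Q_T$ gives the $\kappa_\epsilon$ term. For $C>0$, we use the weight $(1+\lambda)\mathcal{R}_T-\lambda\mathcal{R}_T$ and optimize $\lambda$, or equivalently split on whether $\Delta_{\min}Q_T\leq 2C$; in that case $\mathcal{R}_T\leq b(2C/\Delta_{\min})^{\frac{1}{1+\epsilon}}$, which yields the second term with factor $2$.
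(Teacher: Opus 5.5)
Your plan has the same skeleton as the paper's proof. Both keep the entropy-drift term in the pathwise bound and localize $W_t(i^\ast)$ to $1-p_t(i^\ast)$. Both use $H(p)\le(1-p(i^\ast))\log\frac{eK}{1-p(i^\ast)}$ with Jensen, derive a weighted-sum estimate of order $Q_T^{1/(1+\epsilon)}$, absorb $\beta_1\log K$, and finish by self-bounding.

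\textbf{The gap.} The one step that does not hold as written is ``the case $Q_T<1$ is trivial.'' Without a lower bound on $Q_T$ you cannot absorb $\beta_1\log K\le\theta^{1/(1+\epsilon)}(1+\log K)^{\epsilon/(1+\epsilon)}$. The stated bound has no additive term of that form to fall back on; you would need an a priori estimate such as $\Delta_{\min}=O(\theta^{1/(1+\epsilon)})$, which you have not derived. Likewise, the ``(lower order)'' term in your target has no counterpart in the theorem. The missing observation is that $p_1=\pi$ is uniform, so $Q_T\ge 1-p_1(i^\ast)=1-1/K\ge 1/2$ always. That is exactly the condition for $\beta_1\log K\le 2^{1/(1+\epsilon)}\theta^{1/(1+\epsilon)}(1+\log K)^{\epsilon/(1+\epsilon)}Q_T^{1/(1+\epsilon)}$, which is where the $2^{1/(1+\epsilon)}$ in the statement comes from. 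With it, no case split is needed.

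\textbf{Where your sub-steps differ.} Your rearrangement bound for $\sum_t t^{-\epsilon/(1+\epsilon)}x_t$ is equivalent to the paper's layer-cake argument.

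For the $\log(1/x_t)$ part, the paper decomposes by the \emph{value} of $z_t$, using level sets $2^{-(m+1)}<z_t\le 2^{-m}$ with $|I_m|\le 2^{m+1}Z$. You instead block by \emph{time}. Your route works, but the block sum is not a single geometric series. With block mass $Q_k$, the block term is at most a constant times $2^{-k\epsilon/(1+\epsilon)}Q_k\log(2^k/Q_k)$, and you must handle two regimes separately:
\begin{itemize}
\item for $2^k\le eQ$, use $Q_k\log(2^k/Q_k)\le 2^k/e$;
\item for $2^k>eQ$, use $Q_k\le Q$ and the monotonicity of $y\mapsto y\log(2^k/y)$ on $[0,2^k/e]$.
\end{itemize}
The resulting constant has the same $(1-2^{-\epsilon/(1+\epsilon)})^{-2}$ behavior but is not literally $M_\epsilon$. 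To land on the stated $B_\epsilon$, you must check that it is no larger, or use the paper's decomposition.

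Also attach the weight $t^{-\epsilon/(1+\epsilon)}$ to $z_t=\E[1-p_{t+1}(i^\ast)]$, since the drift pairs $\beta_{t+1}-\beta_t$ with $H(p_{t+1})$; this shift is harmless.

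Finally, your $\lambda$-optimization is a valid alternative to the paper's route. The paper uses $\mathcal{R}_T\le a(\mathcal{R}_T+C)^{1/(1+\epsilon)}\le a\mathcal{R}_T^{1/(1+\epsilon)}+aC^{1/(1+\epsilon)}$ followed by Young's inequality with coefficient $1/2$. Carried out with $1+\lambda\le 2$, your version reproduces exactly $2\kappa_\epsilon b^{(1+\epsilon)/\epsilon}\Delta_{\min}^{-1/\epsilon}+2b(C/\Delta_{\min})^{1/(1+\epsilon)}$.
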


To prove this theorem, we bound the regret using $1-p_t(i^\ast)$.
This analysis follows the general strategy of \citet{NEURIPS2021_ito_robustness_adversarial_decision}, while the technical details are adapted to the $(1+\epsilon)$-moment setting.
\begin{lemma}
    \label{lem:known-parameter-local-regret-formal}
    Suppose that \cref{asp:unknown-epsilon_wise} holds.
    Fix an expert $i\in[K]$.
    Then, \cref{alg:known-parameter-shannon-ftrl} satisfies
    \begin{equation*}
        \mathcal{R}_T(i)\leq\beta_1\log K+B_\epsilon\theta^{\frac{1}{1+\epsilon}}(1+\log K)^{\frac{\epsilon}{1+\epsilon}}\prn*{\sum_{t=1}^T\E\brk*{1-p_t(i)}}^{\frac{1}{1+\epsilon}}.
    \end{equation*}
\end{lemma}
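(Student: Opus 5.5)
Start from the refined bound \eqref{eq:known-parameter-refined-ftrl-target} of \cref{lem:known-parameter-refined-ftrl-formal} with $u=e_i$. This is legitimate because the truncation rule gives $\abs{\tildeell_t(j)}\leq\beta_t$. Combine it with \cref{lem:known-parameter-stability-clipping}, which handles the stability and truncation residuals, exactly as in the adversarial proof. Pathwise, this gives
\[
\sum_{t=1}^T r_t(i)\leq\beta_1\log K+\sum_{t=1}^{T-1}(\beta_{t+1}-\beta_t)H(p_{t+1})+\sum_{t=1}^T\beta_t^{-\epsilon}W_t(i).
\]
Both remaining terms must now be localized in terms of $1-p_t(i)$.

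\textbf{Stability and clipping sum.} By \cref{lem:omd-localized-centered-moments}, whose proof applies to any distinguished expert and any $\mathcal{F}_{t-1}$-measurable $p_t$, we get $\E_t[W_t(i)]\leq A_\epsilon\theta(1-p_t(i))$; the constant $A_\epsilon$ should come out of the pairwise-difference argument. Writing $x_t=\E[1-p_t(i)]\in[0,1]$, we then have $\sum_t\beta_t^{-\epsilon}x_t=\theta^{-\frac{\epsilon}{1+\epsilon}}(1+\log K)^{\frac{\epsilon}{1+\epsilon}}\sum_t t^{-\frac{\epsilon}{1+\epsilon}}x_t$. Since $t^{-\frac{\epsilon}{1+\epsilon}}$ is decreasing and $x_t\leq1$, a rearrangement argument bounds $\sum_t t^{-\frac{\epsilon}{1+\epsilon}}x_t$ by $\sum_{t\leq X}t^{-\frac{\epsilon}{1+\epsilon}}$ with $X=\sum_tx_t$. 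By \eqref{eq:known-parameter-sum-bound} this is at most $(1+\epsilon)X^{\frac{1}{1+\epsilon}}$, which yields the $(1+\epsilon)A_\epsilon$ part of $B_\epsilon$.

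\textbf{Entropy drift.} Use $H(p)\leq(1-p(i))\log\frac{eK}{1-p(i)}$ type bounds, i.e.\ $H(p)\leq(1-p(i))\bigl(\log K+\log\frac{1}{1-p(i)}+1\bigr)$. Also use $\beta_{t+1}-\beta_t\leq\frac{1}{1+\epsilon}\theta^{\frac{1}{1+\epsilon}}(1+\log K)^{-\frac{1}{1+\epsilon}}t^{-\frac{\epsilon}{1+\epsilon}}$ by concavity. The $\log K$ part then reduces to the same weighted sum as above. The troublesome part is $\sum_t t^{-\frac{\epsilon}{1+\epsilon}}y_t\log(1/y_t)$ with $y_t=1-p_{t+1}(i)$, which after expectation needs concavity (Jensen) of $y\log(1/y)$.

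This is the step I expect to be the main obstacle, and it is where $M_\epsilon$ with the factor $(1-2^{-\frac{\epsilon}{1+\epsilon}})^{-2}$ and $\log 2$ arises. To handle it, I plan a dyadic decomposition of time into blocks $[2^k,2^{k+1})$. On each block, apply Jensen to the block average $\bar y_k$ and bound $\bar y_k\log(1/\bar y_k)$ against $\bar y_k^{\frac{1}{1+\epsilon}}$ times a factor that grows at most linearly in $k$. The weights $2^{k/(1+\epsilon)}$ summed geometrically should give $\sum_k k\,2^{-k\frac{\epsilon}{1+\epsilon}}$-type series, hence the squared denominator. Finally, use Hölder across blocks to recombine into $(\sum_tx_t)^{\frac{1}{1+\epsilon}}$. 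The index shift $p_{t+1}$ versus $p_t$ costs only constants, absorbable via $x_t\le1$ and the $\beta_1$ term.

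Collecting both pieces with the constants $M_\epsilon/(1+\epsilon)$ and $(1+\epsilon)A_\epsilon$ gives the claimed bound.
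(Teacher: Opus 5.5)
Your pathwise starting inequality, the localization $\E_t[W_t(i)]\le A_\epsilon\theta(1-p_t(i))$, and the weighted-mass estimate match the paper. Your rearrangement argument is an equivalent substitute for the paper's layer-cake proof of \eqref{eq:known-parameter-weighted-mass-formal}. For non-integer $X$, though, the extremal sum also has a fractional term, so argue via $\int_0^X\lceil s\rceil^{-\frac{\epsilon}{1+\epsilon}}\,ds\le(1+\epsilon)X^{\frac{1}{1+\epsilon}}$. Quoting \cref{lem:omd-localized-centered-moments} verbatim only gives $(6e+12)\theta=A_1\theta$, so you must redo the pairwise-difference computation keeping $2^{1+\epsilon}$ to get $A_\epsilon$. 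The index shift costs nothing, since $\sum_{t=1}^{T-1}\E[1-p_{t+1}(i)]\le\sum_{t=1}^{T}\E[1-p_t(i)]$. The statement has no slack to absorb anything into $\beta_1\log K$ anyway.

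The genuine gap is the entropy-drift step you flagged. Write $z_t$ for your $y_t$ and $Z=\sum_t z_t$. With time blocks $[2^k,2^{k+1})$ and Jensen inside each block, a full block $k$ contributes at most
$2^{\frac{k}{1+\epsilon}}\bar y_k\log(1/\bar y_k)=Y_k^{\frac{1}{1+\epsilon}}\cdot\bar y_k^{\frac{\epsilon}{1+\epsilon}}\log(1/\bar y_k)$, where $Y_k=2^k\bar y_k$ is the block mass. The second factor is at most $\frac{1+\epsilon}{e\epsilon}$, but it admits no bound that decays in $k$: it equals this value whenever $\bar y_k=e^{-(1+\epsilon)/\epsilon}$. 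Meanwhile the weights $2^{\frac{k}{1+\epsilon}}$ grow, so no geometric series arises. Hölder across the roughly $\log_2T$ blocks then yields only $(\log_2T)^{\frac{\epsilon}{1+\epsilon}}(\sum_tx_t)^{\frac{1}{1+\epsilon}}$, and a factor linear in $k$ only makes this worse. That reintroduces exactly the $\log T$ dependence the known-parameter analysis is designed to avoid.

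What is missing is a use of the global mass constraint beyond per-block averages. The paper decomposes dyadically in the \emph{values} rather than in time. On $I_m=\{t:2^{-(m+1)}<z_t\le2^{-m}\}$ one has $z_t\log(1/z_t)\le(m+1)2^{-m}\log2$, and counting gives $|I_m|\le2^{m+1}Z$. Since the weights decrease, $\sum_{t\in I_m}t^{-\frac{\epsilon}{1+\epsilon}}\le(1+\epsilon)|I_m|^{\frac{1}{1+\epsilon}}$. The product $2^{-m}(2^{m+1}Z)^{\frac{1}{1+\epsilon}}$ supplies the decay $2^{-\frac{m\epsilon}{1+\epsilon}}$, and $\sum_{m\ge0}(m+1)2^{-\frac{m\epsilon}{1+\epsilon}}=(1-2^{-\frac{\epsilon}{1+\epsilon}})^{-2}$ is where $M_\epsilon$ comes from.

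A time-dyadic proof can be salvaged only with a $Z$-dependent threshold $k_0\approx\log_2Z$. Bound blocks $k\le k_0$ by $2^{\frac{k}{1+\epsilon}}$. For $k>k_0$, use $Y_k\le Z$ and the monotonicity of $Y\mapsto Y\log(e2^k/Y)$ on $[0,2^k]$ to get terms of order $Z^{\frac{1}{1+\epsilon}}(k-k_0+1)2^{-\frac{(k-k_0)\epsilon}{1+\epsilon}}$. That is not the argument you sketched.
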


\begin{proof}
    Combining \cref{lem:known-parameter-refined-ftrl-formal} with \cref{lem:known-parameter-stability-clipping} gives
    \begin{align}
        \label{eq:known-parameter-local-regret-pathwise}
        \sum_{t=1}^T{\inpr{p_t-e_i,\ell_t}}
        &=\sum_{t=1}^T\inpr{p_t-e_i,\tildeell_t}+\sum_{t=1}^T\inpr{p_t-e_i,-r_t-\tildeell_t}\nonumber\\
        &\leq\beta_1\log K+\sum_{t=1}^{T-1}\prn{\beta_{t+1}-\beta_t}H(p_{t+1})+\sum_{t=1}^T\beta_t^{-\epsilon}W_t(i),
    \end{align}
    where $W_t(i)$ is defined in \eqref{eq:known-parameter-gamma-def}.
    For every $j,k\in[K]$, the inequality $\abs{x-y}^{1+\epsilon}\leq2^\epsilon\prn*{\abs{x}^{1+\epsilon}+\abs{y}^{1+\epsilon}}$ and \cref{asp:unknown-epsilon_wise} imply
    \[
        \E_t\brk{\abs*{\ell_t(j)-\ell_t(k)}^{1+\epsilon}}\leq2^{1+\epsilon}\theta.
    \]
    Since $r_t(i)=\sum_{k=1}^Kp_t(k)\prn*{\ell_t(k)-\ell_t(i)}$, Jensen's inequality and the vanishing of the $k=i$ term give
    \[
        \E_t\brk*{w_t(i)}\leq\sum_{k\neq i}p_t(k)\E_t\brk*{\abs*{\ell_t(i)-\ell_t(k)}^{1+\epsilon}}\leq2^{1+\epsilon}\theta\prn*{1-p_t(i)}.
    \]
    Similarly, Jensen's inequality gives $\bar{w}_t\leq\sum_{j=1}^K\sum_{k=1}^Kp_t(j)p_t(k)\abs*{\ell_t(j)-\ell_t(k)}^{1+\epsilon}$.
    The $(i,i)$ term is zero, and the remaining weights sum to $1-p_t(i)^2\leq2(1-p_t(i))$, so we have
    \[
        \E_t\brk*{\bar{w}_t}\leq2^{1+\epsilon}\theta\prn*{1-p_t(i)^2}\leq2^{2+\epsilon}\theta\prn*{1-p_t(i)}.
    \]
    Hence, by the definition of $W_t(i)$, we have
    \begin{equation*}
        \E_t\brk*{W_t(i)}
        \leq\prn*{\frac{e}{2}+1}\E_t\brk*{\bar{w}_t+w_t(i)}
        \leq A_\epsilon\theta\prn*{1-p_t(i)}.
    \end{equation*}
    Taking expectations of \eqref{eq:known-parameter-local-regret-pathwise} and using the tower property give
    \begin{equation}
        \label{eq:known-parameter-local-regret-expected}
        \mathcal{R}_T(i)\leq\beta_1\log K+\sum_{t=1}^{T-1}\prn{\beta_{t+1}-\beta_t}\E\brk*{H(p_{t+1})}+A_\epsilon\theta\sum_{t=1}^T\beta_t^{-\epsilon}\E\brk*{1-p_t(i)}.
    \end{equation}

    First, we bound the second term.
    Set $x_t=\E[1-p_t(i)]$.
    By the binary entropy decomposition, we have $H(p_t)\leq h\prn{1-p_t(i)}+\prn{1-p_t(i)}\log(K-1),$ where $h(u)=-u\log u-(1-u)\log(1-u)$ for $u\in[0,1]$, under the convention $0\log0=0$.
    Since $h(u)\leq u\log(e/u)$ for every $u\in(0,1]$ and $K-1\leq K$, we obtain
    \[
        H(p_t)\leq\prn*{1-p_t(i)}\log\prn*{\frac{eK}{1-p_t(i)}}.
    \]
    When $p_t(i)=1$, the right-hand side is defined as zero by continuity.
    Since $x\mapsto x\log(eK/x)$ is concave on $[0,1]$, Jensen's inequality gives
    \[
        \E\brk*{H(p_{t+1})}\leq x_{t+1}\log\prn*{\frac{eK}{x_{t+1}}}.
    \]
    Concavity of $t\mapsto t^{\frac{1}{1+\epsilon}}$ implies that
    \[
        \beta_{t+1}- \beta_t =\theta^{\frac{1}{1+\epsilon}}(1+\log K)^{-\frac{1}{1+\epsilon}}\prn*{(t+1)^{\frac{1}{1+\epsilon}}-t^{\frac{1}{1+\epsilon}}}\leq\frac{\theta^{\frac{1}{1+\epsilon}}}{(1+\epsilon)(1+\log K)^{\frac{1}{1+\epsilon}}}t^{-\frac{\epsilon}{1+\epsilon}}.
    \]
    If $T=1$, the entropy-drift sum is zero.
    If $T\geq2$, using \eqref{eq:known-parameter-weighted-entropy-formal} in \cref{lem:known-parameter-weighted-entropy-formal} with $z_t=\E[1-p_{t+1}(i)]$ yields
    \begin{equation*}
        \sum_{t=1}^{T-1}\prn{\beta_{t+1}- \beta_t}\E\brk*{H(p_{t+1})}
        \leq\frac{M_\epsilon}{1+\epsilon}\theta^{\frac{1}{1+\epsilon}}(1+\log K)^{\frac{\epsilon}{1+\epsilon}}\prn*{\sum_{t=1}^T\E\brk*{1-p_t(i)}}^{\frac{1}{1+\epsilon}}.
    \end{equation*}

    Next, we bound the third term.
    We have $\theta\beta_t^{-\epsilon}=\theta^{\frac{1}{1+\epsilon}}(1+\log K)^{\frac{\epsilon}{1+\epsilon}}t^{-\frac{\epsilon}{1+\epsilon}}$.
    Hence, using \eqref{eq:known-parameter-weighted-mass-formal} in \cref{lem:known-parameter-weighted-entropy-formal} gives
    \begin{equation*}
        A_\epsilon\theta\sum_{t=1}^T\beta_t^{-\epsilon}\E\brk{1-p_t(i)}\leq(1+\epsilon)A_\epsilon\theta^{\frac{1}{1+\epsilon}}(1+\log K)^{\frac{\epsilon}{1+\epsilon}}\prn*{\sum_{t=1}^T\E\brk*{1-p_t(i)}}^{\frac{1}{1+\epsilon}}.
    \end{equation*}
    Substituting these two inequalities into \eqref{eq:known-parameter-local-regret-expected} proves the claim.
\end{proof}

The following lemma provides the key analytical bounds used in the proof of \cref{lem:known-parameter-local-regret-formal}.
\begin{lemma}
    \label{lem:known-parameter-weighted-entropy-formal}
    Let $N\geq1$, let $z_1,\ldots,z_N\in[0,1]$, and set $Z=\sum_{t=1}^Nz_t$.
    Then, it holds that
    \begin{align}
        \sum_{t=1}^Nt^{-\frac{\epsilon}{1+\epsilon}}z_t &\leq(1+\epsilon)Z^{\frac{1}{1+\epsilon}},
        \label{eq:known-parameter-weighted-mass-formal}\\
        \sum_{t=1}^Nt^{-\frac{\epsilon}{1+\epsilon}}z_t\log\prn*{\frac{eK}{z_t}} &\leq M_\epsilon(1+\log K)Z^{\frac{1}{1+\epsilon}},
        \label{eq:known-parameter-weighted-entropy-formal}
    \end{align}
    where the summand in \eqref{eq:known-parameter-weighted-entropy-formal} is interpreted as zero when $z_t=0$.
\end{lemma}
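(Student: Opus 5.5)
For \eqref{eq:known-parameter-weighted-mass-formal} I will use a rearrangement argument. Write $a=\frac{\epsilon}{1+\epsilon}$, so the weight $t^{-a}$ is decreasing in $t$. Among all $z\in[0,1]^N$ with $\sum_t z_t=Z$, the sum $\sum_t t^{-a}z_t$ is maximized by putting all mass on the earliest indices: $z_t=1$ for $t\leq\lfloor Z\rfloor$, the fractional remainder at index $\lfloor Z\rfloor+1$, and zero afterwards. (A mass-transfer or exchange argument justifies this.) The resulting sum is at most $\int_0^{Z}x^{-a}\,\mathrm{d}x=(1+\epsilon)Z^{\frac{1}{1+\epsilon}}$. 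This holds because $t^{-a}\leq\int_{t-1}^{t}x^{-a}\,\mathrm{d}x$, and the fractional last term is handled in the same way. This yields \eqref{eq:known-parameter-weighted-mass-formal}.

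For \eqref{eq:known-parameter-weighted-entropy-formal} I will split the logarithm as $\log(eK/z_t)=(1+\log K)+\log(1/z_t)$. The first part contributes $(1+\log K)(1+\epsilon)Z^{\frac{1}{1+\epsilon}}$ by \eqref{eq:known-parameter-weighted-mass-formal}. This accounts for the $1+\epsilon$ summand in $M_\epsilon$. It remains to show
\[
\sum_{t=1}^N t^{-a}z_t\log(1/z_t)\leq c_\epsilon Z^{\frac{1}{1+\epsilon}},
\]
with $c_\epsilon=\frac{(1+\epsilon)2^{\frac{1}{1+\epsilon}}\log2}{(1-2^{-a})^2}$; this suffices since $1+\log K\geq1$. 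The form of this constant suggests a dyadic decomposition of the $z_t$ by level: let $S_k=\{t: 2^{-k-1}<z_t\leq2^{-k}\}$ for $k\geq0$. On $S_k$ we have $\log(1/z_t)<(k+1)\log2$, and the set has at most $\min\{N,2^{k+1}Z\}$ elements. Applying the rearrangement bound within $S_k$ gives
\[
\sum_{t\in S_k}t^{-a}z_t\leq 2^{-k}(1+\epsilon)|S_k|^{\frac{1}{1+\epsilon}}\leq(1+\epsilon)2^{-k}(2^{k+1}Z)^{\frac{1}{1+\epsilon}}=(1+\epsilon)2^{\frac{1}{1+\epsilon}}Z^{\frac{1}{1+\epsilon}}2^{-ak}.
\]
Multiplying by $(k+1)\log2$ and summing uses $\sum_{k\geq0}(k+1)2^{-ak}=(1-2^{-a})^{-2}$. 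This gives exactly $c_\epsilon Z^{\frac{1}{1+\epsilon}}$, matching the constant in $M_\epsilon$.

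The main obstacle is the within-level step. I need $\sum_{t\in S_k}t^{-a}\leq(1+\epsilon)|S_k|^{\frac{1}{1+\epsilon}}$ for an arbitrary index set, which follows since the weights are decreasing and the sum is largest when $S_k=\{1,\ldots,|S_k|\}$. I also need the count bound $|S_k|\cdot2^{-k-1}<Z$. Both are elementary, but I will check the edge cases. Levels with $S_k$ empty contribute nothing, and $z_t=0$ is excluded by convention.
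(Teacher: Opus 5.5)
Your proposal is correct. For the entropy bound it follows the paper's proof exactly. The paper uses the same dyadic levels $\{t:2^{-(k+1)}<z_t\le 2^{-k}\}$, the same count bound $|S_k|\le 2^{k+1}Z$, and the same per-level estimate from decreasing weights. It also sums the same series $\sum_{k\ge0}(k+1)2^{-\frac{k\epsilon}{1+\epsilon}}=(1-2^{-\frac{\epsilon}{1+\epsilon}})^{-2}$, so $M_\epsilon$ comes out identically.

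The only real difference is in the mass bound. You identify the extremal configuration: greedily fill the earliest indices, which is the fractional-knapsack optimum for decreasing weights. You then compare that configuration with $\int_0^Z x^{-\frac{\epsilon}{1+\epsilon}}\,\mathrm{d}x$.

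The paper instead uses the layer-cake identity $\sum_t t^{-\frac{\epsilon}{1+\epsilon}}z_t=\int_0^1\sum_{t:z_t\ge u}t^{-\frac{\epsilon}{1+\epsilon}}\,\mathrm{d}u$. It bounds each level-set sum by $(1+\epsilon)n(u)^{\frac{1}{1+\epsilon}}$, where $n(u)=|\{t:z_t\ge u\}|$. It then applies Jensen to the concave map $n\mapsto n^{\frac{1}{1+\epsilon}}$, using $\int_0^1 n(u)\,\mathrm{d}u=Z$.

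The paper's route needs no optimization or exchange step. It reduces both inequalities to one fact: any $n$ indices carry total weight at most $(1+\epsilon)n^{\frac{1}{1+\epsilon}}$. You use exactly this fact inside each $S_k$ anyway.

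Your route gives the exact maximizer and a clean termwise comparison $t^{-\frac{\epsilon}{1+\epsilon}}\le\int_{t-1}^{t}x^{-\frac{\epsilon}{1+\epsilon}}\,\mathrm{d}x$. The cost is that you must write out the exchange argument: moving mass from a later index to an earlier unsaturated one cannot decrease the sum. Both routes give the same constant $1+\epsilon$.
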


\begin{proof}
    Let $\omega_t=t^{-\frac{\epsilon}{1+\epsilon}}$ and, for $u\in[0,1]$, define $n(u)=\abs*{\set*{t\in[N]:z_t\geq u}}$.
    Since $(\omega_t)_{t\geq1}$ is nonincreasing, every set $I\subseteq[N]$ with $\abs{I}=n$ satisfies
    \begin{equation*}
        \sum_{t\in I}\omega_t\leq\sum_{t=1}^nt^{-\frac{\epsilon}{1+\epsilon}}\leq(1+\epsilon)n^{\frac{1}{1+\epsilon}},
    \end{equation*}
    where the last inequality follows from \eqref{eq:known-parameter-sum-bound}.
    Jensen's inequality gives
    \begin{equation*}
        \sum_{t=1}^N\omega_tz_t
        =\int_0^1\sum_{t:z_t\geq u}\omega_t \mathrm{d}u
        \leq(1+\epsilon)\int_0^1n(u)^{\frac{1}{1+\epsilon}}\mathrm{d}u
        \leq(1+\epsilon)\prn*{\int_0^1n(u) \mathrm{d}u}^{\frac{1}{1+\epsilon}}
        =(1+\epsilon)Z^{\frac{1}{1+\epsilon}}.
    \end{equation*}
    This proves \eqref{eq:known-parameter-weighted-mass-formal}.

    For every integer $m\geq0$, define $I_m=\set{t\in[N]:2^{-(m+1)}<z_t\leq2^{-m}}$.
    Since every $t\in I_m$ satisfies $2^{m+1}z_t>1$, summing over $t\in I_m$ gives $\abs{I_m}\leq2^{m+1}Z$.
    Moreover, it holds that $z_t\log(1/z_t)\leq2^{-m}(m+1)\log2$ on $I_m$.
    Using the above inequalities, we obtain
    \begin{align*}
        \sum_{t=1}^N\omega_tz_t\log\frac{1}{z_t}
        &\leq \sum_{m=0}^\infty2^{-m}(m+1)\log2\sum_{t\in I_m}\omega_t
        \leq (1+\epsilon)\log2\sum_{m=0}^\infty 2^{-m}(m+1)\abs{I_m}^{\frac{1}{1+\epsilon}}\\
        &\leq(1+\epsilon)2^{\frac{1}{1+\epsilon}}Z^{\frac{1}{1+\epsilon}}\log2 \sum_{m=0}^\infty(m+1)2^{-\frac{m\epsilon}{1+\epsilon}}
        =\frac{(1+\epsilon)2^{\frac{1}{1+\epsilon}}\log2}{\prn{1-2^{-\frac{\epsilon}{1+\epsilon}}}^2}Z^{\frac{1}{1+\epsilon}}.
    \end{align*}
    Since $\log(eK/z_t)=1+\log K+\log(1/z_t)$ and $1+\log K\geq1$, combining this bound with \eqref{eq:known-parameter-weighted-mass-formal} proves \eqref{eq:known-parameter-weighted-entropy-formal}.
\end{proof}

Now, we prove \cref{thm:known-parameter-stochastic-formal}.
\begin{proof}[Proof of \cref{thm:known-parameter-stochastic-formal}]
    Set $S_T=\sum_{t=1}^T\E\brk*{1-p_t(i^\ast)}$.
    Since $p_1$ is uniform and $K\geq2$, we have $S_T\geq1-1/K\geq1/2$.
    Thus, we have
    \begin{equation*}
        \beta_1\log K
        =\theta^{\frac{1}{1+\epsilon}}(1+\log K)^{-\frac{1}{1+\epsilon}}\log K
        \leq2^{\frac{1}{1+\epsilon}}\theta^{\frac{1}{1+\epsilon}}(1+\log K)^{\frac{\epsilon}{1+\epsilon}}S_T^{\frac{1}{1+\epsilon}}.
    \end{equation*}
    Combining this inequality with \cref{lem:known-parameter-local-regret-formal} gives
    \[
        \mathcal{R}_T\leq\prn{B_\epsilon+2^{\frac{1}{1+\epsilon}}}(1+\log K)^{\frac{\epsilon}{1+\epsilon}}\theta^{\frac{1}{1+\epsilon}}S_T^{\frac{1}{1+\epsilon}},
    \]
    where $\mathcal{R}_T=\mathcal{R}_T(i^\ast)$ by \cref{asp:self_bounded}.
    If $\mathcal{R}_T\leq0$, the claim holds immediately, so we suppose that $\mathcal{R}_T>0$.
    By \cref{asp:self_bounded}, it holds that
    \[
        S_T\leq\frac{\mathcal{R}_T+C}{\Delta_{\min}}.
    \]
    Set $a=\prn{B_\epsilon+2^{\frac{1}{1+\epsilon}}}(1+\log K)^{\frac{\epsilon}{1+\epsilon}}\theta^{\frac{1}{1+\epsilon}}\Delta_{\min}^{-\frac{1}{1+\epsilon}}$.
    From the subadditivity of $x\mapsto x^{\frac{1}{1+\epsilon}}$, we obtain
    \begin{equation*}
        \mathcal{R}_T
        \leq a\prn*{\mathcal{R}_T+C}^{\frac{1}{1+\epsilon}}
        \leq a\mathcal{R}_T^{\frac{1}{1+\epsilon}}+aC^{\frac{1}{1+\epsilon}}.
    \end{equation*}
    For every $x\geq0$, Young's inequality gives $ax^{\frac{1}{1+\epsilon}}\leq\kappa_\epsilon a^{\frac{1+\epsilon}{\epsilon}}+\frac{x}{2}$, where we recall that $\kappa_\epsilon=\frac{\epsilon}{1+\epsilon}\prn*{\frac{2}{1+\epsilon}}^{\frac{1}{\epsilon}}$.
    Applying Young's inequality with $x=\mathcal{R}_T$ yields
    \begin{align*}
        \mathcal{R}_T
        &\leq2\kappa_\epsilon a^{\frac{1+\epsilon}{\epsilon}}+2aC^{\frac{1}{1+\epsilon}}\\
        &=2\kappa_\epsilon\prn{B_\epsilon+2^{\frac{1}{1+\epsilon}}}^{\frac{1+\epsilon}{\epsilon}}(1+\log K)\prn*{\frac{\theta}{\Delta_{\min}}}^{\frac{1}{\epsilon}}+2\prn{B_\epsilon+2^{\frac{1}{1+\epsilon}}}(1+\log K)^{\frac{\epsilon}{1+\epsilon}}\prn*{\frac{\theta C}{\Delta_{\min}}}^{\frac{1}{1+\epsilon}}.
    \end{align*}
    This is the claimed bound.
\end{proof}

\section{Parameter Adaptation under the i.i.d.\ Environment}
\label{app:iid-adaptation}

This section gives an additional result under the i.i.d.\ environment.
In contrast to the algorithms analyzed in \cref{sec:algorithms}, the algorithm below is designed only for i.i.d.\ loss vectors.
Adversarial losses and general self-bounded environments are outside its scope.
The result nevertheless shows that adaptation to the moment parameters does not require a logarithmic dependence on $T$ in the gap-dependent regret.

\subsection{Setting and Algorithm}
\label{app:subsec:iid-adaptation-setting}

\begin{algorithm}[t]
    \caption{Parameter-Adaptive Median-of-Means Leader}
    \label{alg:parameter-adaptive-mom-leader}
    \begin{algorithmic}[1]
        \Require {Number of experts $K\geq2$.}
        \State {Fix an expert $i_0\in[K]$ and a deterministic tie-breaking rule.}
        \For{$t=1,2,\ldots$}
            \If{$t=1$}
                \State {Initialize $I_t\gets i_0$ and $p_t\gets e_{I_t}$.}
            \Else
                \State {$n\gets t-1$, $q_n\gets\ceil*{\log(2K)+\log(1+\log(n+1))}+1$, and $b_n\gets2q_n-1$.}
                \If{$n<b_n$}
                    \State {$I_t\gets i_0$ and $p_t\gets e_{I_t}$.}
                \Else
                    \State {$m_n\gets\floor{n/b_n}$ and $\mathcal{B}_{n,s}\gets\set*{(s-1)m_n+1,\ldots,sm_n}$ for every $s\in[b_n]$.}
                    \State {$\bar\mu_{n,s}(i)\gets m_n^{-1}\sum_{u\in\mathcal{B}_{n,s}}\ell_u(i)$ for every $s\in[b_n]$ and $i\in[K]$.}
                    \State {$\hat\mu_{n,b_n}(i)\gets\operatorname{median}\set*{\bar\mu_{n,s}(i):s\in[b_n]}$ for every $i\in[K]$.}
                    \State {$I_t\gets\argmin_{i\in[K]}\hat\mu_{n,b_n}(i)$ using the fixed tie-breaking rule, and $p_t\gets e_{I_t}$.}
                \EndIf
            \EndIf
            \State {Predict $p_t$, incur loss $\inpr{p_t,\ell_t}$, and observe $\ell_t$.}
        \EndFor
    \end{algorithmic}
\end{algorithm}

Let the loss vectors $\ell_1,\ell_2,\dots,\ell_T\in\R^K$ be drawn i.i.d.\ independently of the learner's internal randomization.
The coordinates of each loss vector may be dependent.
We consider \cref{asp:unknown-epsilon_wise}.
Let $\mu_i=\E[\ell_1(i)]$, and write $\mu_\ast=\min_{i\in[K]}\mu_i$ and $\Delta_i=\mu_i-\mu_\ast$.
If there is at least one suboptimal expert, define the minimum positive gap by $\Delta_{\min}=\min\set*{\Delta_i:\Delta_i>0}$.
The optimal expert need not be unique for the results in this section.
Note that Jensen's inequality gives $\abs{\mu_i}\leq\theta^{\frac{1}{1+\epsilon}}$, and therefore we have $0\leq\Delta_i\leq2\theta^{\frac{1}{1+\epsilon}}$ for every $i\in[K]$.

\cref{alg:parameter-adaptive-mom-leader} uses a coordinatewise median-of-means estimate.
Given $n$ observations and an odd integer $b=2q-1\leq n$, partition the first $bm$ observations into $b$ consecutive blocks of equal size $m=\floor{n/b}$ and discard the remaining $(n-bm)$ observations.
For each $i\in[K]$, let $\hat\mu_{n,b}(i)$ be the median of the $b$ empirical block means of coordinate $i$.
This construction limits the influence of outliers, since an extreme observation affects only its own block mean.

Because $I_t$ depends only on past observations, it is independent of $\ell_t$ in the i.i.d.\ setting.
For any $i^\ast\in\argmin_i\mu_i$, the regret of \cref{alg:parameter-adaptive-mom-leader} therefore satisfies $\mathcal{R}_T=\sum_{t=1}^T\E\brk{\Delta_{I_t}}$.
If every expert is optimal, the regret is zero, so below we consider the case $\Delta_{\min}>0$ when stating gap-dependent bounds.

Median-of-means estimators are a classical tool for robust mean estimation under weak moment assumptions (see \citet{lugosi2019meanestimation_ht} for an overview). 
Median-of-means estimators have also been used in sequential learning with heavy-tailed observations, for example in heavy-tailed bandits \citep{Bubeck2013BanditsWithHeavyTail}.

\paragraph{Computational complexity.}
A direct implementation of \cref{alg:parameter-adaptive-mom-leader} takes $O(KT^2)$ time, but an equivalent implementation requires only $O(KT)$ total time.
Store the prefix sums $S_n(i)=\sum_{u=1}^n\ell_u(i)$, with $S_0(i)=0$, so that each block mean is computed as $\bar\mu_{n,s}(i)=(S_{sm_n}(i)-S_{(s-1)m_n}(i))/m_n$ in constant time.
The block means remain unchanged whenever $(b_n,m_n)$ is unchanged, so the estimates and the selected expert can be reused on these rounds.
Each recomputation takes $O(Kb_n)$ time using a linear-time selection algorithm for the medians.
When $b_n=b$ is fixed, $m_n$ changes once every $b$ rounds.
Moreover, the values of $n$ at which $b_n$ changes grow at least geometrically, and changes occurring when $n<b_n$ do not require recomputation.
Consequently, including the $O(K)$ cost per round of updating the prefix sums, the total time is $O(KT)$ with $O(KT)$ memory.

\subsection{Median-of-Means Bounds}
\label{app:subsec:iid-adaptation-mom-bounds}
We first show the following technical lemma.

\begin{lemma}
    \label{lem:iid-adaptation-mom-tail}
    Let $q$ and $n$ be positive integers satisfying $n\geq2q-1$, set $b=2q-1$, and let $I_{n+1}$ minimize the coordinatewise median-of-means estimates $\hat\mu_{n,b}(i)$.
    Then, for every $x>0$, it holds that
    \begin{equation}
        \label{eq:iid-adaptation-selected-gap-tail}
        \P\prn*{\Delta_{I_{n+1}}>x}
        \leq\min\set*{1,\prn*{\frac{128\theta K^{\frac{1}{q}}(b/n)^\epsilon}{x^{1+\epsilon}}}^q}.
    \end{equation}
\end{lemma}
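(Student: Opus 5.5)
The plan is to reduce the event $\set{\Delta_{I_{n+1}}>x}$ to one-sided deviations of individual median-of-means estimates, and then bound each deviation with a binomial-tail argument over independent blocks. Fix an optimal expert $i^\ast\in\argmin_i\mu_i$. If $\Delta_{I_{n+1}}>x$, then $I_{n+1}=i$ for some $i$ with $\Delta_i>x$, and $\hat\mu_{n,b}(i)\leq\hat\mu_{n,b}(i^\ast)$. Hence either $\hat\mu_{n,b}(i^\ast)\geq\mu_\ast+x/2$ or $\hat\mu_{n,b}(i)\leq\mu_i-x/2$. A union bound therefore controls $\P(\Delta_{I_{n+1}}>x)$ by at most $K$ one-sided deviation probabilities: the upper one for $i^\ast$ and the lower ones for the suboptimal $i$ with $\Delta_i>x$. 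Writing each of them as at most $P^q$ gives $KP^q=(K^{1/q}P)^q$, which explains the factor $K^{1/q}$. The $\min\set{1,\cdot}$ is trivial.

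For a single coordinate $i$, the median of $b=2q-1$ block means can lie at or above $\mu_i+x/2$ only if at least $q$ of the block means do. The blocks are disjoint and the losses are i.i.d., so the block means are independent. With $p$ a bound on the probability that one block mean deviates by at least $x/2$, the chance of at least $q$ such deviations is at most $\binom{b}{q}p^q\leq 2^{2q-1}p^q\leq(4p)^q$. The lower deviation is handled symmetrically.

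To bound $p$, I will use a moment bound for centered sums under a $(1+\epsilon)$-moment condition, namely the von Bahr--Esseen inequality $\E\abs{\sum_{u}(X_u-\mu)}^{1+\epsilon}\leq 2\sum_u\E\abs{X_u-\mu}^{1+\epsilon}$. Combined with $\E\abs{\ell_u(i)-\mu_i}^{1+\epsilon}\leq 2^{1+\epsilon}\theta$, which follows from the same elementary inequality used in \cref{lem:known-parameter-centered-moments} together with $\abs{\mu_i}\leq\theta^{\frac{1}{1+\epsilon}}$, it gives
\[
\E\abs{\bar\mu_{n,s}(i)-\mu_i}^{1+\epsilon}\leq 2^{2+\epsilon}\theta m^{-\epsilon}.
\]
Markov's inequality at level $x/2$ then yields $p\leq 2^{3+2\epsilon}\theta m^{-\epsilon}x^{-(1+\epsilon)}$. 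Since $n\geq b$, we have $m=\floor{n/b}\geq n/(2b)$, and therefore $m^{-\epsilon}\leq 2^\epsilon(b/n)^\epsilon$. This gives $p=O(\theta(b/n)^\epsilon/x^{1+\epsilon})$.

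I expect the main obstacle to be tracking the numerical constants so that the final constant is $128$ and not something larger such as $256$ or $512$. The crude steps above give roughly $4\cdot 2^{3+3\epsilon}\leq 256$, so I will need to tighten at least one of them. One option is to apply a one-sided Markov bound, halving the von Bahr--Esseen constant for each single-sided event. Another is a sharper binomial estimate such as $\binom{2q-1}{q}\leq 4^q/2$. A third is a slightly better lower bound on $m$. If these are not enough, the constant-$2$ moment inequality for martingale differences can be replaced by the sharper $2^{1-\epsilon}$-type bound, $\E\abs{\sum_u Y_u}^{1+\epsilon}\leq 2^{1-\epsilon}\sum_u\E\abs{Y_u}^{1+\epsilon}$ for independent mean-zero $Y_u$. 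With that bound, the total constant $2\cdot 4\cdot 2^{3+\epsilon}$, or something similar, fits within $128$.
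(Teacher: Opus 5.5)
Your proof is correct once you commit to your last tightening option. It differs from the paper mainly in how the block-mean moment is bounded.

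\textbf{Your route.} Use $\E\abs{\sum_u Y_u}^{1+\epsilon}\leq2^{1-\epsilon}\sum_u\E\abs{Y_u}^{1+\epsilon}$ for independent mean-zero $Y_u$. This is a valid known inequality, obtained by induction from $\abs{a+b}^{p}\leq\abs{a}^{p}+p\operatorname{sgn}(a)\abs{a}^{p-1}b+2^{2-p}\abs{b}^{p}$ for $p\in[1,2]$. It bounds the centered block mean's $(1+\epsilon)$-moment by $4\theta m^{-\epsilon}$. Hence $p\leq2^{3+2\epsilon}\theta(b/n)^\epsilon x^{-(1+\epsilon)}$, and $4p\leq2^{5+2\epsilon}\theta(b/n)^\epsilon x^{-(1+\epsilon)}\leq128\,\theta(b/n)^\epsilon x^{-(1+\epsilon)}$, as required.

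\textbf{The paper's route.}
\begin{itemize}
\item It uses the cruder reduction $\Delta_{I_{n+1}}\leq2\max_i\abs{\hat\mu_{n,b}(i)-\mu_i}$ with two-sided deviations. This also gives $K$ terms, so your one-sided reduction gains nothing for the stated bound.
\item More substantively, it never uses the centered moment $\E\abs{X-\mu}^{1+\epsilon}$ or a von Bahr--Esseen-type inequality. Conditional Jensen with an independent copy gives $\E\abs{\sum_j(X_j-\E X_j)}^{1+\epsilon}\leq\E\abs{\sum_j(X_j-X_j')}^{1+\epsilon}$. Rademacher signs, Jensen for $z\mapsto z^{\frac{1+\epsilon}{2}}$, and subadditivity then bound this by $\sum_j\E\abs{X_j-X_j'}^{1+\epsilon}\leq2^{1+\epsilon}\theta m$.
\end{itemize}
In effect, the constant-one inequality for symmetric summands is proved in two lines and applied to $X_j-X_j'$, whose moment is controlled directly by $\theta$. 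This keeps the proof self-contained. It also gives the total constant $2^{4+3\epsilon}$, equal to your $2^{5+2\epsilon}$ at $\epsilon=1$ and smaller for $\epsilon<1$. Your route relies on an external inequality but is otherwise equally short.

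\textbf{Your other fixes.} None of them would close the factor-$2$ gap in your crude (constant-$256$) route.
\begin{itemize}
\item The bound $\binom{2q-1}{q}\leq4^q/2$ is exactly the $2^{2q-1}$ you already used. Any saving outside the $q$-th power cannot be absorbed, since $\frac12(256c)^q\leq(128c)^q$ fails for $q\geq2$.
\item The bound $m=\floor{n/b}\geq n/(2b)$ is essentially tight when $n$ is just below $2b$.
\item Halving the von Bahr--Esseen constant for one-sided events is not justified as stated. For skewed losses the one-sided moment need not be half the two-sided one, so this would need a separate argument.
\end{itemize}
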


\begin{proof}
    Fix a coordinate $i$ and a block of size $m$, and write $X_1,\dots,X_m$ for the losses of expert $i$ in that block. Then $X_1,\dots,X_m$ are i.i.d.\ and satisfy $\E|X_j|^{1+\varepsilon}\le\theta$.
    Let $X'_1,\dots,X'_m$ be independent copies of $X_1,\dots,X_m$, and let $\sigma_1,\dots,\sigma_m$ be independent Rademacher random variables that are also independent of these samples.
    Conditional Jensen's inequality gives
    \[
        \E\brk*{\abs[\Bigg]{\sum_{j=1}^m\prn*{X_j-\E[X_j]}}^{1+\epsilon}}\leq\E\brk*{\abs[\Bigg]{\sum_{j=1}^m\prn*{X_j-X'_j}}^{1+\epsilon}}.
    \]
    Since the differences $X_j-X'_j$ are independent and symmetric, the random vectors $(X_j-X'_j)_{j=1}^m$ and $(\sigma_j(X_j-X'_j))_{j=1}^m$ have the same distribution, and conditional Jensen's inequality applied to the concave function $z\mapsto z^{\frac{1+\epsilon}{2}}$, using $1+\epsilon\leq2$, yields
    \begin{align*}
        \E\brk*{\abs[\Bigg]{\sum_{j=1}^m\prn*{X_j-X'_j}}^{1+\epsilon}}
        &=\E\brk*{\E_\sigma\brk*{\abs[\Bigg]{\sum_{j=1}^m\sigma_j\prn*{X_j-X'_j}}^{1+\epsilon}}}\\
        &\leq\E\brk*{\prn[\Bigg]{\sum_{j=1}^m\abs{X_j-X'_j}^2}^{\frac{1+\epsilon}{2}}}
        \leq\sum_{j=1}^m\E\brk*{\abs{X_j-X'_j}^{1+\epsilon}},
    \end{align*}
    where $\E_\sigma$ denotes expectation with respect to the $\set{\sigma_j}_{j=1}^m$.
    The inequality $\abs{x-y}^{1+\epsilon}\leq2^\epsilon(\abs{x}^{1+\epsilon}+\abs{y}^{1+\epsilon})$ implies $\E[\abs{X_j-X'_j}^{1+\epsilon}]\leq2^{1+\epsilon}\theta$.
    Dividing the resulting bound by $m^{1+\epsilon}$ gives 
    \begin{equation}
        \label{eq:iid-adaptation-mean-moment}
        \E\brk*{\abs[\Bigg]{\frac{1}{m}\sum_{j=1}^mX_j-\E[X_1]}^{1+\epsilon}}\leq\frac{2^{1+\epsilon}\theta}{m^\epsilon}.
    \end{equation}

    We next prove \eqref{eq:iid-adaptation-selected-gap-tail}.
    Fix a coordinate $i$ and let $\bar X_s(i)$ be its empirical mean in block $s\in[b]$.
    Markov's inequality and \eqref{eq:iid-adaptation-mean-moment} give, for every $u>0$,
    \[
        \P\prn*{\abs{\bar X_s(i)-\mu_i}>u}\leq \frac{2^{1+\epsilon}\theta}{m^\epsilon u^{1+\epsilon}}.
    \]
    We set $a=\frac{2^{1+\epsilon}\theta}{m^\epsilon u^{1+\epsilon}}$.
    If the median $\hat\mu_{n,b}(i)$ differs from $\mu_i$ by more than $u$, at least $q$ of the $b$ block means differ from $\mu_i$ by more than $u$.
    The blocks are independent, so a union bound over all subsets of $q$ blocks gives
    \[
        \P\prn*{\abs{\hat\mu_{n,b}(i)-\mu_i}>u}\leq\binom{b}{q}a^q\leq2^ba^q\leq(4a)^q.
    \]
    Since $n/b\geq1$, we have $m=\floor{n/b}\geq n/(2b)$.
    Using $1+\epsilon\leq2$ therefore yields
    \[
        \P\prn*{\abs{\hat\mu_{n,b}(i)-\mu_i}>u}\leq\prn*{\frac{32\theta(b/n)^\epsilon}{u^{1+\epsilon}}}^q.
    \]

    Let $i^\ast\in\argmin_i\mu_i$.
    Since $I_{n+1}$ minimizes the estimated means, we have
    \begin{equation*}
        \Delta_{I_{n+1}}
        =\mu_{I_{n+1}}-\mu_{i^\ast}\leq\abs{\mu_{I_{n+1}}-\hat\mu_{n,b}(I_{n+1})}+\abs{\hat\mu_{n,b}(i^\ast)-\mu_{i^\ast}}
        \leq2\max_{i\in[K]}\abs{\hat\mu_{n,b}(i)-\mu_i}.
    \end{equation*}
    Applying a union bound to the preceding inequality with $u=x/2$, and using $2^{1+\epsilon}\leq4$, gives
    \[
        \P\prn*{\Delta_{I_{n+1}}>x}\leq K\prn*{\frac{128\theta(b/n)^\epsilon}{x^{1+\epsilon}}}^q=\prn*{\frac{128\theta K^{\frac{1}{q}}(b/n)^\epsilon}{x^{1+\epsilon}}}^q.
    \]
    Combining this bound with the trivial bound $\P(\Delta_{I_{n+1}}>x)\leq1$ proves the claim.
\end{proof}

The following lemma integrates the tail bound in \cref{lem:iid-adaptation-mom-tail} to bound the expected gap $\E[\Delta_{I_{n+1}}]$.

\begin{lemma}
    \label{lem:iid-adaptation-integrated-gap}
    In the setting of \cref{lem:iid-adaptation-mom-tail}, define $a_n=\prn{128\theta K^{\frac{1}{q}}}^{\frac{1}{1+\epsilon}}\prn{{b}/{n}}^{\frac{\epsilon}{1+\epsilon}}$.
    If $n\geq b$, then we have $\E[\Delta_{I_{n+1}}]\leq\frac{(1+\epsilon)q}{(1+\epsilon)q-1}a_n$.
    If $\Delta_{\min}>0$, then we also have $\E[\Delta_{I_{n+1}}]\leq\frac{(1+\epsilon)q}{(1+\epsilon)q-1}\min\set{a_n,a_n^{(1+\epsilon)q}\Delta_{\min}^{1-(1+\epsilon)q}}$.
\end{lemma}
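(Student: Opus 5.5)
We will integrate the tail bound of \cref{lem:iid-adaptation-mom-tail} via the layer-cake formula $\E[\Delta_{I_{n+1}}]=\int_0^\infty\P(\Delta_{I_{n+1}}>x)\,\mathrm{d}x$. Observe first that the quantity inside the power in \eqref{eq:iid-adaptation-selected-gap-tail} is exactly $(a_n/x)^{1+\epsilon}$. Indeed,
\[
    \frac{128\theta K^{\frac{1}{q}}(b/n)^\epsilon}{x^{1+\epsilon}}=\prn*{\frac{a_n}{x}}^{1+\epsilon},
\]
so the tail bound reads $\P(\Delta_{I_{n+1}}>x)\leq\min\set{1,(a_n/x)^{(1+\epsilon)q}}$. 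Write $r=(1+\epsilon)q$. Since $q\geq1$ and $\epsilon>0$, we have $r>1$, and all the integrals below converge.

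For the first claim, split the integral at $x=a_n$. The bound $1$ on $[0,a_n]$ contributes $a_n$. The bound $(a_n/x)^r$ on $(a_n,\infty)$ contributes $a_n^r\,a_n^{1-r}/(r-1)=a_n/(r-1)$. The total is $a_n\prn{1+\frac{1}{r-1}}=\frac{r}{r-1}a_n$, which is exactly $\frac{(1+\epsilon)q}{(1+\epsilon)q-1}a_n$.

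For the refined bound, use that $\Delta_{I_{n+1}}$ takes values in $\set{0}\cup[\Delta_{\min},\infty)$. Hence $\P(\Delta_{I_{n+1}}>x)=\P(\Delta_{I_{n+1}}\geq\Delta_{\min})$ for every $x<\Delta_{\min}$, and this probability is at most $\lim_{x\uparrow\Delta_{\min}}\min\set{1,(a_n/x)^r}\leq\min\set{1,(a_n/\Delta_{\min})^r}$. Taking the left limit is justified by continuity of the tail bound in $x$ together with $\set{\Delta\geq\Delta_{\min}}=\bigcap_{x<\Delta_{\min}}\set{\Delta>x}$. The case $a_n\geq\Delta_{\min}$ is covered by the first claim, since then $\min\set{a_n,a_n^r\Delta_{\min}^{1-r}}=a_n$. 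When $a_n<\Delta_{\min}$, split the integral at $\Delta_{\min}$:
\[
    \E[\Delta_{I_{n+1}}]\leq\Delta_{\min}\prn*{\frac{a_n}{\Delta_{\min}}}^r+\int_{\Delta_{\min}}^\infty\prn*{\frac{a_n}{x}}^r\mathrm{d}x=a_n^r\Delta_{\min}^{1-r}\prn*{1+\frac{1}{r-1}}=\frac{r}{r-1}a_n^r\Delta_{\min}^{1-r}.
\]
In this regime the right-hand side is at most $\frac{r}{r-1}a_n$, so it equals $\frac{r}{r-1}\min\set{a_n,a_n^r\Delta_{\min}^{1-r}}$, as claimed.

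The only delicate step is the discreteness argument near $\Delta_{\min}$, namely handling the strict versus non-strict inequality by the left limit described above. Everything else is an elementary calculation.
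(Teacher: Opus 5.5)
Your proposal is correct and follows the paper's proof essentially step for step. Both integrate the tail bound $\min\{1,(a_n/x)^{(1+\epsilon)q}\}$ by splitting at $a_n$. For the gap-dependent claim, both use that $\Delta_{I_{n+1}}\in\{0\}\cup[\Delta_{\min},\infty)$, take the left limit $x\uparrow\Delta_{\min}$ to bound the probability of a positive gap, and reduce the case $a_n\ge\Delta_{\min}$ to the first claim.
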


\begin{proof}
    By \cref{lem:iid-adaptation-mom-tail}, $\Delta_{I_{n+1}}$ satisfies $\P(\Delta_{I_{n+1}}>x)\leq\min\set*{1,\prn{a_n/x}^{(1+\epsilon)q}}$ for every $x>0$.
    Then, we have
    \[
        \E[\Delta_{I_{n+1}}]=\int_0^\infty\P(\Delta_{I_{n+1}}>x)\,\mathrm{d}x\leq a_n+\int_{a_n}^\infty\prn*{\frac{a_n}{x}}^{(1+\epsilon)q}\,\mathrm{d}x=\frac{(1+\epsilon)q}{(1+\epsilon)q-1}a_n,
    \]
    which proves the first claim.

    Suppose now that $\Delta_{\min}>0$.
    The selected gap takes values in $\set{0}\cup[\Delta_{\min},\infty)$, so
    \[
        \E[\Delta_{I_{n+1}}]=\Delta_{\min}\P(\Delta_{I_{n+1}}>0)+\int_{\Delta_{\min}}^\infty\P(\Delta_{I_{n+1}}>x)\,\mathrm{d}x.
    \]
    If $a_n\leq\Delta_{\min}$, taking the limit of the tail bound as $x$ increases to $\Delta_{\min}$ gives $\P(\Delta_{I_{n+1}}>0)\leq(a_n/\Delta_{\min})^{(1+\epsilon)q}$.
    Consequently, we have
    \begin{equation*}
        \E[\Delta_{I_{n+1}}]
        \leq a_n^{(1+\epsilon)q}\Delta_{\min}^{1-(1+\epsilon)q}+\int_{\Delta_{\min}}^\infty\prn*{\frac{a_n}{x}}^{(1+\epsilon)q}\,\mathrm{d}x
        =\frac{(1+\epsilon)q}{(1+\epsilon)q-1}a_n^{(1+\epsilon)q}\Delta_{\min}^{1-(1+\epsilon)q}.
    \end{equation*}
    If $a_n>\Delta_{\min}$, then $a_n<a_n^{(1+\epsilon)q}\Delta_{\min}^{1-(1+\epsilon)q}$.
    This proves the second claim.
\end{proof}

\subsection{Regret Guarantees}
\label{app:subsec:iid-adaptation-regret}

We now show that increasing the number of blocks slowly with the sample size yields simultaneous adaptation to every $\epsilon\in(0,1]$, while $\theta$, $\Delta_{\min}$, and the horizon are unknown to the algorithm.

\begin{theorem}
    \label{thm:iid-full-moment-adaptation}
    For every fixed $\epsilon\in(0,1]$, when $\Delta_{\min}>0$, \cref{alg:parameter-adaptive-mom-leader} satisfies
    \[
        \mathcal{R}_T=O\prn*{\prn*{\frac{\theta}{\Delta_{\min}}}^{\frac{1}{\epsilon}}\prn*{\log K+\log\log\prn*{e+\frac{\theta}{\Delta_{\min}^{1+\epsilon}}}}}.
    \]
\end{theorem}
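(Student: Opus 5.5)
We start from the identity $\mathcal{R}_T=\sum_{t=1}^T\E[\Delta_{I_t}]$ and split the rounds into three groups. The first group consists of the rounds with $n=t-1<b_n$. On these rounds the algorithm plays $i_0$, so each one costs at most $\max_i\Delta_i\leq2\theta^{\frac{1}{1+\epsilon}}$. Since $q_n=O(\log K+\log\log n)$, there are only $O(\log K+\log\log\log K)$ such rounds. This contribution is absorbed into the target bound after comparing $\theta^{\frac{1}{1+\epsilon}}$ with $(\theta/\Delta_{\min})^{\frac{1}{\epsilon}}$. The comparison works because $\Delta_{\min}\leq2\theta^{\frac{1}{1+\epsilon}}$ gives $(\theta/\Delta_{\min})^{\frac{1}{\epsilon}}\geq2^{-\frac{1}{\epsilon}}\theta^{\frac{1}{1+\epsilon}}$.

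For every remaining round, \cref{lem:iid-adaptation-integrated-gap} with $q=q_n$ and $b=b_n$ gives
\[
\E[\Delta_{I_{n+1}}]\leq 2\min\set{a_n,\ a_n^{(1+\epsilon)q_n}\Delta_{\min}^{1-(1+\epsilon)q_n}},
\]
where $a_n^{1+\epsilon}=128\theta K^{1/q_n}(b_n/n)^\epsilon$. The prefactor is at most $2$ because $(1+\epsilon)q_n\geq2$. Since $q_n\geq\log(2K)$, we have $K^{1/q_n}\leq e$, and therefore $a_n^{1+\epsilon}\leq c\,\theta(b_n/n)^\epsilon$ for a universal constant $c$.

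Next define the threshold $n_0$ as the smallest $n$ with $a_n\leq\Delta_{\min}/2$. Up to constants this means
\[
n_0\asymp b_{n_0}\prn*{\frac{\theta}{\Delta_{\min}^{1+\epsilon}}}^{\frac{1}{\epsilon}}.
\]
We solve this self-consistently using $b_n=O(\log K+\log\log(n+1))$. Because $n_0$ is polynomial in $K\theta/\Delta_{\min}^{1+\epsilon}$, we get $\log\log n_0=O(\log\log K+\log\log(e+\theta/\Delta_{\min}^{1+\epsilon}))$, and the $\log\log K$ part is absorbed into $\log K$. The rounds with $b_n\leq n<n_0$ form the second group, and we bound each one by the first option, $2a_n$. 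Summing $a_n\propto(b/n)^{\frac{\epsilon}{1+\epsilon}}$ up to $n_0$ gives
\[
O\prn*{\theta^{\frac{1}{1+\epsilon}}b_{n_0}^{\frac{\epsilon}{1+\epsilon}}n_0^{\frac{1}{1+\epsilon}}}=O\prn*{b_{n_0}(\theta/\Delta_{\min})^{\frac{1}{\epsilon}}}.
\]
Here we use that $b_n$ is nondecreasing in $n$. Substituting the bound on $b_{n_0}$ yields the claimed rate.

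The third group consists of the rounds with $n\geq n_0$, and we use the second option there. We have $(a_n/\Delta_{\min})^{(1+\epsilon)q_n}\leq\prn{C(n_0b_n/(nb_{n_0}))^{\frac{\epsilon}{1+\epsilon}}}^{(1+\epsilon)q_n}$. Both $b_n$ and $b_{n_0}$ are $\Theta(q_n)$ up to constants, and $q_n$ grows only like $\log\log n$, so this quantity behaves like $(n_0/n)^{\epsilon q_n}$ times a geometric factor. This is the step I expect to be the main obstacle. For $\epsilon$ small, $\epsilon q_n$ may not exceed $1$, so the tail sum need not obviously converge. That is exactly why $q_n$ contains the $\log(1+\log(n+1))$ term.

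I would handle this obstacle with dyadic blocks $n\in[2^kn_0,2^{k+1}n_0)$. To make the base of the power small, I first enlarge $n_0$ by a constant (possibly $\epsilon$-dependent) factor, so that the base $C2^{-k\epsilon/(1+\epsilon)}$ is at most $2^{-k\epsilon/(2(1+\epsilon))}$. The block sum is then at most $\Delta_{\min}2^kn_0\cdot2^{-k\epsilon q/2}$ with $q\geq\log(1+k)$ growing. This is summable once $\epsilon q/2>1$, which holds for $k$ beyond an $\epsilon$-dependent constant, because $q$ grows unboundedly in $k$. The finitely many earlier blocks cost $O_\epsilon(\Delta_{\min}n_0)$. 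Since $\Delta_{\min}n_0$ is of the same order as the main term, the third group also contributes $O\prn*{(\theta/\Delta_{\min})^{\frac{1}{\epsilon}}b_{n_0}}$. Adding the three groups finishes the proof.
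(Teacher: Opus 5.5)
Your proposal is correct and follows essentially the paper's route. The paper likewise charges the pre-median rounds (at most $b_N$ of them, each costing at most $2\theta^{\frac{1}{1+\epsilon}}$) and all rounds below a threshold $N$ to the gap-free bound $2a_n$, uses the gap-dependent bound above $N$, and relies on the $\log\log n$ growth of $q_n$ for summability, at a cost doubly exponential in $1/\epsilon$. The only difference is where that warm-up cost is charged: the paper inflates the threshold upfront to $N=\lceil C_\epsilon h\Lambda\rceil$ (with $h\asymp(\theta/\Delta_{\min}^{1+\epsilon})^{1/\epsilon}$, $\Lambda$ the target logarithmic factor, and $\log(1+\log C_\epsilon)\geq4/\epsilon$), so that $\epsilon q_n\geq4$ and $hb_n/n\leq\sqrt{N/n}$ give $\E[\Delta_{I_{n+1}}]\leq2\Delta_{\min}(N/n)^2$ for every $n\geq N$, whereas you pay for it through your first $k_\epsilon$ dyadic blocks above $n_0$ (there $b_n/b_{n_0}$ is not uniformly bounded, but it grows only logarithmically in $n/n_0$ and is absorbed when you halve the exponent).
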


Although a doubly logarithmic factor remains, this bound does not contain the $\log T$ factor appearing in the gap-dependent bounds of \cref{thm:omd-regret-self-bounded}.
However, the present proof requires a hidden constant that can be doubly exponential in $1/\epsilon$.
We do not establish whether this dependence is necessary.

\begin{proof}
    Define $h=\prn*{\frac{128e\theta}{\Delta_{\min}^{1+\epsilon}}}^{\frac{1}{\epsilon}}$ and $\Lambda=\log(2K)+\log\prn*{1+\log(e+h)}$.
    The inequality $\Delta_{\min}\leq2\theta^{\frac{1}{1+\epsilon}}$ and the definition of $h$ imply $h>1$.
    Choose a constant $C_\epsilon\geq1$ such that $\log(1+\log C_\epsilon)\geq4/\epsilon$ and $C_\epsilon\geq6\prn*{\log(1+\log(3C_\epsilon))+4}$.
    Such a constant exists because $\log\log x=o(x)$ as $x\to\infty$.
    Set $N=\ceil*{C_\epsilon h\Lambda}$.
    The first condition on $C_\epsilon$ and the inequality $N\geq C_\epsilon$ imply $q_N\geq4/\epsilon$.
    Since $C_\epsilon\geq1$, $h>1$, and $\Lambda\geq1$, we have $N+1\leq3C_\epsilon h\Lambda$ and $1+\log(N+1)\leq(1+\log(3C_\epsilon))(1+\log h)(1+\log\Lambda)$.
    Using the definition of $\Lambda$ and the inequality $1+\log\Lambda\leq \Lambda$, the definition of $b_N$ therefore gives $b_N\leq2\prn*{\log(1+\log(3C_\epsilon))+\Lambda+\log\Lambda}+3$.
    This inequality, $\log\Lambda\leq \Lambda$, and $\Lambda\geq1$ imply $b_N\leq2\prn*{\log(1+\log(3C_\epsilon))+4}\Lambda$.
    The second condition on $C_\epsilon$ therefore implies $N\geq C_\epsilon h\Lambda\geq3hb_N$.
    The definition of $q_n$ also gives $q_n\geq\log(2K)$ for every $n$, and hence $K^{\frac{1}{q_n}}\leq e$ and $\frac{(1+\epsilon)q_n}{(1+\epsilon)q_n-1}\leq2$.

    First, we consider $n\geq N$.
    The inequality $1+\log(n+1)\leq\frac{n}{N}(1+\log(N+1))$ implies $b_n\leq b_N+2\log(n/N)+2$.
    Since $\log(n/N)\leq2(\sqrt{n/N}-1)$ and $b_N\geq5$, it follows that $b_n\leq3b_N\sqrt{n/N}$.
    Combining this estimate with $N\geq3hb_N$ gives
    \begin{equation}
        \label{eq:iid-adaptation-large-n-scale}
        \frac{hb_n}{n}\leq\sqrt{\frac{N}{n}}\leq1.
    \end{equation}
    Since $h>1$, \eqref{eq:iid-adaptation-large-n-scale} also implies $n\geq b_n$, so the algorithm computes the median for every $n\geq N$.
    Moreover, the monotonicity of $q_n$ and $q_N\geq4/\epsilon$ give $\epsilon q_n\geq \epsilon q_N\geq4$.

    Let $a_n$ be the quantity in \cref{lem:iid-adaptation-integrated-gap} with $q=q_n$ and $b=b_n$.
    Since $K^{\frac{1}{q_n}}\leq e$ and $h^\epsilon=128e\theta/\Delta_{\min}^{1+\epsilon}$, we have
    $
        \frac{a_n}{\Delta_{\min}}\leq\prn*{\frac{hb_n}{n}}^{\frac{\epsilon}{1+\epsilon}}.
    $
    Applying the second bound in \cref{lem:iid-adaptation-integrated-gap}, using $\frac{(1+\epsilon)q_n}{(1+\epsilon)q_n-1}\leq2$, and then using \eqref{eq:iid-adaptation-large-n-scale} gives
    \begin{equation*}
        \E[\Delta_{I_{n+1}}]
        \leq 2\Delta_{\min}\prn*{\frac{a_n}{\Delta_{\min}}}^{(1+\epsilon)q_n}
        \leq2\Delta_{\min}\prn*{\frac{hb_n}{n}}^{\epsilon q_n}
        \leq2\Delta_{\min}\prn*{\frac{N}{n}}^2,
    \end{equation*}
    where the last inequality uses $\epsilon q_n\geq4$.
    Therefore, we have
    \[
        \sum_{n=N}^{T-1} \E[\Delta_{I_{n+1}}]\leq2\Delta_{\min}N^2\sum_{n=N}^\infty n^{-2}\leq4\Delta_{\min}N.
    \]

    It remains to control the regret over the rounds with $n<N$.
    Since $(b_n)_{n\geq1}$ is nondecreasing, at most $b_N$ such rounds do not compute the median, and their total regret is at most $2\theta^{\frac{1}{1+\epsilon}}b_N$.
    Using the first bound in \cref{lem:iid-adaptation-integrated-gap}, $K^{\frac{1}{q_n}}\leq e$, and $b_n\leq b_N$ gives
    \begin{equation}
        \label{eq:iid-adaptation-small-n-regret}
        \sum_{n=0}^{N-1}\E[\Delta_{I_{n+1}}]\leq2\theta^{\frac{1}{1+\epsilon}}b_N+2(1+\epsilon)(128e\theta)^{\frac{1}{1+\epsilon}}b_N^{\frac{\epsilon}{1+\epsilon}}N^{\frac{1}{1+\epsilon}}.
    \end{equation}
    By the inequality $\Delta_{\min}\leq2\theta^{\frac{1}{1+\epsilon}}$ and the definition of $h$, we have
    $
        2\theta^{\frac{1}{1+\epsilon}}
        \leq (64e)^{\frac{1}{\epsilon}}\theta^{\frac{1}{1+\epsilon}}
        \leq \prn*{\frac{128e\theta}{\Delta_{\min}}}^{\frac{1}{\epsilon}}
        =
        \Delta_{\min}h.
    $
    Together with $hb_N\leq N$, this bounds the first term of \eqref{eq:iid-adaptation-small-n-regret} by $\Delta_{\min}N$.
    Moreover, using $(128e\theta)^{\frac{1}{1+\epsilon}}=\Delta_{\min}h^{\frac{\epsilon}{1+\epsilon}}$, $hb_N\leq N$, and $1+\epsilon\leq2$, we obtain
    \begin{equation*}
        2(1+\epsilon)(128e\theta)^{\frac{1}{1+\epsilon}}b_N^{\frac{\epsilon}{1+\epsilon}}N^{\frac{1}{1+\epsilon}}
        =2(1+\epsilon)\Delta_{\min}N\prn*{\frac{hb_N}{N}}^{\frac{\epsilon}{1+\epsilon}}
        \leq4\Delta_{\min}N.
    \end{equation*}
    Thus, the regret accumulated over the rounds with $n<N$ is at most $5\Delta_{\min}N$.
    Combining this estimate with the bound for $n\geq N$ yields $\mathcal{R}_T\leq9\Delta_{\min}N=O(\Delta_{\min}h\Lambda)$ for every $T\geq1$.
    Finally, substituting $\Delta_{\min}h=(128e)^{\frac{1}{\epsilon}}(\theta/\Delta_{\min})^{\frac{1}{\epsilon}}$ and the definition of $\Lambda$ proves the claim.
\end{proof}

\section{Experiments}
\label{app:experiments}

\begin{figure}[t]
    \centering
    \includegraphics[width=\textwidth]{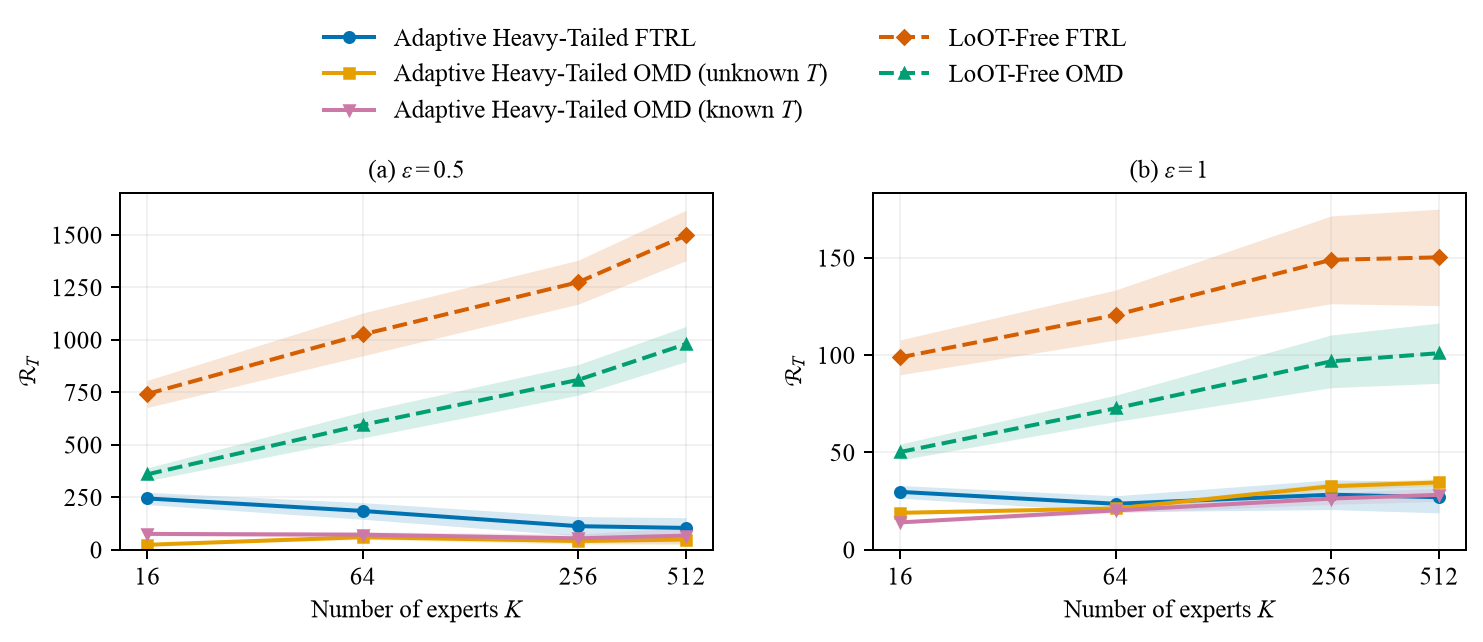}
    \caption{Regret on the balanced sparse-spike instance with $T=10000$, $\theta=1$, and $\Delta=1$.
    Both panels vary $K$.
    Panel (a) uses $\epsilon=0.5$, whereas panel (b) uses $\epsilon=1$.
    Curves show means over $100$ paired repetitions, and shaded regions indicate plus or minus one standard error.}
    \label{fig:balanced-sparse-spike-main}
\end{figure}

We conduct experiments to evaluate the performance of our proposed algorithms.
We compare our adaptive algorithms with the LoOT-Free algorithms of \citet{NEURIPS2025_Moulin_Expert_HTEXP} on a balanced sparse-spike instance from \cref{prop:moulin-balanced-sparse-spike} and a stochastic instance.
The first experiment examines whether the polynomial dependence on the number of experts established in \cref{app:subsec:alg_specific_lower_bound} appears at finite horizons and whether our methods avoid this dependence.
The second experiment evaluates performance when a constant mean loss gap and rare spikes coexist throughout the horizon.
These experiments are intended as a finite-horizon illustration of the mechanism identified by the theory, rather than as an estimate of asymptotic rates.
We conducted the experiments on a MacBook Air with 16 GB of RAM and an Apple M3 chip.

\paragraph{Setup.}
We conduct experiments on two instances.
For the first instance, we use the balanced sparse-spike instance with $\theta=1$ and $T=10000$.
Let $m=\lfloor T/2\rfloor$ and $B=(\theta Km/2)^{\frac{1}{1+\epsilon}}$.
In each round $t\leq m$, independently across rounds, with probability $1/m$, we draw an ordered pair $(J_t^+,J_t^-)$ uniformly from the set of ordered pairs of distinct coordinates and set $\ell_t(J_t^+)=B$, $\ell_t(J_t^-)=-B$, and $\ell_t(i)=0$ for every $i\notin\set{J_t^+,J_t^-}$.
With the remaining probability $1-1/m$, we set $\ell_t(i)=0$ for every $i\in[K]$.
In each round $t>m$, we set $\ell_t(i^\ast)=0$ for the optimal expert $i^\ast=1$ and $\ell_t(i)=\Delta$ for every $i\neq i^\ast$.

The second instance is inspired by the experiments of \citet[Appendix~F]{NEURIPS2025_Moulin_Expert_HTEXP}, with a modified spike amplitude.
Let $T=10000$ and $\Delta=1$.
We set $\ell_t(i)=\Delta\ind\brk{i\neq i^\ast}+X_t(i)$ for all $t\in[T]$ and $i\in[K]$, where $i^\ast=1$.
Independently across rounds and experts, $X_t(i)$ takes the values $\pm T^{\frac{1}{1+\epsilon}}$ with probability $1/(2T)$ each, and is zero otherwise.
Then, the losses satisfy $\E\brk{\abs{\ell_t(i)}^{1+\epsilon}}\leq2$.
Unlike the first instance, this instance maintains a constant mean loss gap and permits spikes throughout the horizon.

We consider $K\in\set{16,64,256,512}$, $\epsilon\in\set{0.5,1}$, and $\Delta=1$.
For each configuration, we perform $100$ paired repetitions, in which all algorithms receive the same loss sequence.
We compare Adaptive Heavy-Tailed FTRL (\cref{alg:adaptive-multiscale-ftrl-componentwise_simple}), Adaptive Heavy-Tailed OMD (\cref{alg:parameter-free-heavy-tailed-omd}), and the LoOT-Free FTRL and OMD algorithms.
To examine the effect of the doubling trick and match the horizon information available to LoOT-Free OMD, we also include a version of Adaptive Heavy-Tailed OMD that sets $H_0=T$ and $I_0=[T]$ in \cref{alg:parameter-free-heavy-tailed-omd} and runs without the doubling trick.
None of the algorithms receives the data-generating values of $\epsilon$ or $\theta$.
The LoOT-Free algorithms use their second-moment versions, corresponding to $\epsilon=1$, in both $\epsilon$ settings.
In our experiments, we compute the FTRL and OMD updates using bisection.

Our metric is regret defined as in \eqref{eq:def_regret}.
Unless stated otherwise, curves show means over the paired repetitions, and shaded regions indicate plus or minus one standard error.

\paragraph{Results.}
\begin{figure}[t]
    \centering
    \includegraphics[width=\textwidth]{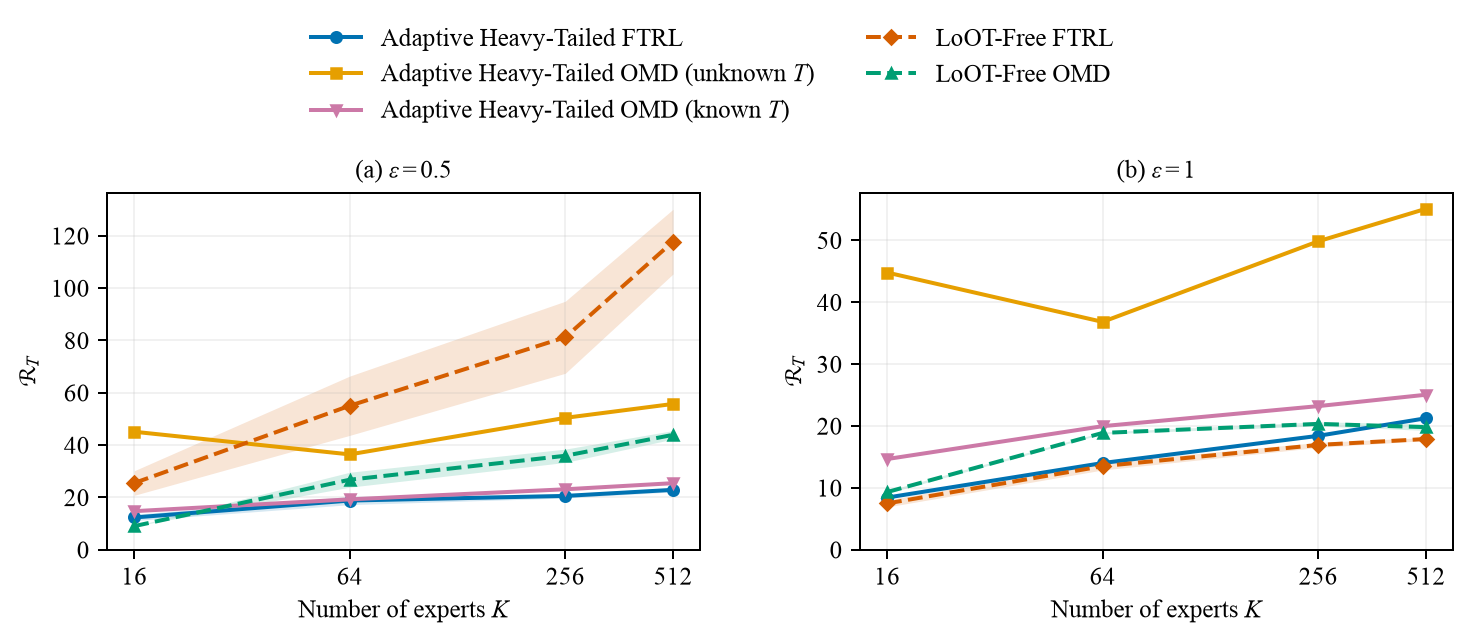}
    \caption{Regret on the stochastic instance with $T=10000$ and $\Delta=1$.
    Both panels vary $K$.
    Panel (a) uses $\epsilon=0.5$, whereas panel (b) uses $\epsilon=1$.
    Curves show means over $100$ paired repetitions, and shaded regions indicate plus or minus one standard error.}
    \label{fig:stochastic-heuristic-main}
\end{figure}

\Cref{fig:balanced-sparse-spike-main} compares regret on the first instance as the number of experts varies, with the horizontal axis on a logarithmic scale.
For $\epsilon=0.5$, the regrets of both LoOT-Free algorithms increase with $K$, whereas both of our algorithms achieve lower regret and do not exhibit a comparable increase.
For $\epsilon=1$, both of our algorithms remain comparatively flat, while the LoOT-Free algorithms show a more modest increase than in the $\epsilon=0.5$ setting.
In addition, on this instance, the versions of Adaptive Heavy-Tailed OMD with and without known $T$ show a relatively small difference in regret.
This experimental result is consistent with our theoretical results, which establish a polynomial dependence on $K$ for the LoOT-Free algorithms under $\epsilon<1$.
The polynomial factor in the lower bound disappears in the second-moment case $\epsilon=1$.

\Cref{fig:stochastic-heuristic-main} compares regret on the second instance as the number of experts varies, with the horizontal axis on a logarithmic scale.
For $\epsilon=0.5$, Adaptive Heavy-Tailed FTRL and Adaptive Heavy-Tailed OMD with known $T$ achieve lower regret than the LoOT-Free algorithms except at $K=16$, and their regret increases more slowly with $K$.
For $\epsilon=1$, their regret shows a similar dependence on $K$ to that of the LoOT-Free algorithms.
Adaptive Heavy-Tailed OMD with unknown $T$ incurs higher regret than its known-horizon counterpart in both $\epsilon$ settings.
As in the first experiment, the LoOT-Free algorithms exhibit a stronger increase in regret with $K$ for $\epsilon=0.5$ than for $\epsilon=1$.

\end{document}